
\documentclass{article}

\usepackage[icml]{optional}
\usepackage[accepted]{icml_style/icml2021}

\usepackage[hidelinks]{hyperref}
\icmltitlerunning{Online Learning with Optimism and Delay}

\usepackage{microtype}
\usepackage{graphicx}
\usepackage{subfigure}
\usepackage{booktabs} %

\usepackage{graphicx}
\usepackage{amsthm,amsmath,amssymb,amsfonts}
\usepackage{mathtools}
\usepackage{algorithm,algorithmic}
\usepackage{xspace}
\usepackage{url}
\usepackage{enumitem}
\usepackage[dvipsnames]{xcolor}
\usepackage[font=small,skip=0pt]{caption} %
\usepackage{quoting}
\usepackage{subfigure}
\quotingsetup{vskip=0pt}
\usepackage[capitalize]{cleveref}  
\crefname{assumption}{Assumption}{Assumptions}
\creflabelformat{name}{#2\textup{#1}#3} %
\crefname{name}{}{} %
\crefname{equation}{}{}
\crefname{corollary}{Cor.}{Cors.}
\crefname{lemma}{Lem.}{Lems.}
\crefname{theorem}{Thm.}{Thms.}
\crefname{proposition}{Prop.}{Props.}
\crefname{assumption}{Assump.}{Assumps.}
\crefname{equation}{}{}
\crefname{section}{Sec.}{Secs.}
\crefname{appendix}{App.}{Apps.}
\usepackage{autonum} 
\newcommand{\lam}{\lambda}

\def\balign#1\ealign{\begin{align}#1\end{align}}
\def\baligns#1\ealigns{\begin{align*}#1\end{align*}}
\def\balignat#1\ealign{\begin{alignat}#1\end{alignat}}
\def\balignats#1\ealigns{\begin{alignat*}#1\end{alignat*}}
\def\bitemize#1\eitemize{\begin{itemize}#1\end{itemize}}
\def\benumerate#1\eenumerate{\begin{enumerate}#1\end{enumerate}}

\newenvironment{talign*}
 {\let\displaystyle\textstyle\csname align*\endcsname}
 {\endalign}
\newenvironment{talign}
 {\let\displaystyle\textstyle\csname align\endcsname}
 {\endalign}

\def\balignst#1\ealignst{\begin{talign*}#1\end{talign*}}
\def\balignt#1\ealignt{\begin{talign}#1\end{talign}}
\newcommand{\qtext}[1]{\quad\text{#1}\quad}

\let\originalleft\left
\let\originalright\right
\renewcommand{\left}{\mathopen{}\mathclose\bgroup\originalleft}
\renewcommand{\right}{\aftergroup\egroup\originalright}

\def\Holder{H\"older\xspace}

\def\tinycitep*#1{{\tiny\citep*{#1}}}
\def\tinycitealt*#1{{\tiny\citealt*{#1}}}
\def\tinycite*#1{{\tiny\cite*{#1}}}
\def\smallcitep*#1{{\scriptsize\citep*{#1}}}
\def\smallcitealt*#1{{\scriptsize\citealt*{#1}}}
\def\smallcite*#1{{\scriptsize\cite*{#1}}}

\def\mbf#1{\mathbf{#1}}
\def\mbb#1{\mathbb{#1}}

\def\tbf#1{\textbf{#1}}

\def\reals{\mathbb{R}} %
\def\R{\mathbb{R}}
\def\<{\left\langle} %
\def\>{\right\rangle}

\def\defeq{\triangleq} %
\def\half{\frac{1}{2}}
\def\texthalf{{\textstyle\frac{1}{2}}}

\newcommand{\boldone}{\mbf{1}} %
\newcommand{\boldzero}{\mbf{0}} %
\newcommand{\norm}[1]{\left\|{#1}\right\|} %
\newcommand{\onenorm}[1]{\norm{#1}_1} %
\newcommand{\infnorm}[1]{\norm{#1}_{\infty}} %
\newcommand{\dualnorm}[1]{\norm{#1}_{*}} %
\newcommand{\pnorm}[1]{\norm{#1}_{p}} %
\newcommand{\qnorm}[1]{\norm{#1}_{q}} %
\def\staticnorm#1{\|{#1}\|} %
\newcommand{\inner}[2]{\langle{#1},{#2}\rangle} %
\def\indic#1{\mbb{I}\left[{#1}\right]} %

\def\bigO#1{\mathcal{O}(#1)} %
\newcommand{\grad}{\nabla} %
\newcommand{\iid}{\textrm{i.i.d.}\xspace}

\newcommand{\subdiff}{\partial} %

\providecommand{\argmax}{\mathop\mathrm{arg max}} %
\providecommand{\argmin}{\mathop\mathrm{arg min}}

\providecommand{\sign}{\mathop\mathrm{sign}}
\providecommand{\conv}{\mathop\mathrm{conv}} %

\ifdefined\nonewproofenvironments\else
\ifdefined\ispres\else
\fi
\newcommand{\para}[1]{\tbf{#1\quad}}
\newcommand{\bi}{\begin{itemize}}
\newcommand{\ei}{\end{itemize}}

\newcommand{\be}{\begin{enumerate}}
\newcommand{\ee}{\end{enumerate}}

\newcommand{\ww}{\mathbf{w}}
\newcommand{\aww}{\ww^*} %
\newcommand{\wbar}{\bar{\ww}}

\newcommand{\worth}{\tilde{\ww}}

\newcommand{\omegaorth}{\tilde{\omega}}

\newcommand{\uu}{\mathbf{u}}

\newcommand{\wset}{\mathbf{W}} %
\newcommand{\uset}{\mathbf{U}} %
\newcommand{\vset}{\mathbf{V}} %
\newcommand{\g}{\mathbf{g}}
\newcommand{\h}{\mathbf{h}}
\newcommand{\rr}{\mathbf{r}}
\newcommand{\bb}{\mathbf{b}} %
\newcommand{\ab}{\mathbf{a}} %
\newcommand{\bbomd}[1]{\bb_{#1, O}} %
\newcommand{\bbftrl}[1]{\bb_{#1, F}} %
\newcommand{\abftrl}[1]{\ab_{#1, F}} %
\DeclareMathOperator{\diameter}{diam} %
\newcommand{\diam}[1]{\diameter({#1})} %

\newcommand{\M}{\mathcal{M}}

\newcommand{\x}{\mathbf{x}}
\newcommand{\y}{\mathbf{y}}
\newcommand{\X}{\mathbf{X}}
\newcommand{\vv}{\mathbf{v}}

\newcommand{\reg}{\psi}
\newcommand{\basis}{\mathbf{e}} %
\newcommand{\loss}{\ell} 
\newcommand{\surrloss}{\hat{\ell}} %
\newcommand{\pseudoloss}{\tilde{\ell}} %
\newcommand{\tildeg}{\tilde{\g}} %
\newcommand{\atildeg}{\tilde{\g}^*} %
\newcommand{\tildeq}{\tilde{q}} 
\newcommand{\metagrad}{\gamma} %

\newcommand{\simplex}{\triangle}
\newcommand{\simplexd}{\triangle_{d-1}}
\newcommand{\simplexm}{\triangle_{m-1}}
\newcommand{\orthantd}{\reals_+^d}

\newcommand{\obj}{F} %
\newcommand{\oobj}{\tilde{\obj}} %
\newcommand{\aoobj}{\tilde{\obj}^*} %
\newcommand{\regret}{\textup{Regret}}
\newcommand{\hregret}{\textup{HintRegret}}

\newcommand{\Breg}{\mathcal{B}}

\renewcommand{\norm}[1]{\left\Vert #1 \right\Vert}             %
\newcommand{\normt}[1]{\left\Vert #1 \right\Vert_2}             %
\newcommand{\maxentrynorm}[1]{\infnorm{#1}} %

\newcommand{\huber}{\textup{huber}}

\newtheorem{theorem}{Theorem}
\newtheorem{lemma}[theorem]{Lemma}
\newtheorem{corollary}[theorem]{Corollary}
\newtheorem{proposition}[theorem]{Proposition}

\newtheorem{assumption}{Assumption}

\theoremstyle{definition}
\newcommand{\SOOMD}{\cref{soomd}\xspace}
\newcommand{\ASOOMD}{\cref{asoomd}\xspace}
\newcommand{\DOOMD}{\cref{doomd}\xspace}
\newcommand{\varDOOMD}{\cref{var-doomd}\xspace}

\newcommand{\DORMP}{\cref{dorm+}\xspace}
\newcommand{\DORM}{\cref{dorm}\xspace}
\newcommand{\ODAFTRL}{\cref{odaftrl}\xspace}
\newcommand{\varODAFTRL}{\cref{var-odaftrl}\xspace}
\newcommand{\OAFTRL}{\cref{oaftrl}\xspace}
\newcommand{\ODFTRL}{\cref{odftrl}\xspace}
\newcommand{\OFTRL}{\cref{oftrl}\xspace}
\newcommand{\AdaHedgeD}{\cref{adahedged}\xspace}
\newcommand{\varAdaHedgeD}{\cref{var-adahedged}\xspace}
\newcommand{\DUB}{\cref{dub}\xspace}
\newcommand{\varDUB}{\cref{var-dub}\xspace}
\renewcommand{\norm}[1]{\|{#1}\|} %
\begin{document}
\twocolumn[
\icmltitle{Online Learning with Optimism and Delay}

\icmlsetsymbol{equal}{*}

\begin{icmlauthorlist}
\icmlauthor{Genevieve Flaspohler}{mit,whoi}
\icmlauthor{Francesco Orabona}{bu}
\icmlauthor{Judah Cohen}{aer}
\icmlauthor{Soukayna Mouatadid}{toronto}
\icmlauthor{Miruna Oprescu}{msr}
\icmlauthor{Paulo Orenstein}{impa}
\icmlauthor{Lester Mackey}{msr}
\end{icmlauthorlist}

\icmlaffiliation{mit}{%
Dept.\ of EECS, %
Massachusetts Institute of Technology
}
\icmlaffiliation{whoi}{%
Dept.\ of AOSE, 
Woods Hole Oceanographic Institution
}
\icmlaffiliation{bu}{%
Dept.\ of ECE, %
Boston University%
}
\icmlaffiliation{aer}{%
Atmospheric and Environmental Research%
}
\icmlaffiliation{toronto}{%
Dept.\ of CS, %
University of Toronto%
}
\icmlaffiliation{msr}{%
Microsoft Research New England%
}
\icmlaffiliation{impa}{%
Instituto de Matem\'{a}tica Pura e Aplicada%
}

\icmlcorrespondingauthor{Genevieve Flaspohler}{geflaspo@mit.edu}

\icmlkeywords{Online learning}

\vskip 0.3in
]

\printAffiliationsAndNotice{}  %

\begin{abstract}

Inspired by the demands of real-time climate and weather forecasting, we develop optimistic online learning algorithms that require no parameter tuning and have optimal regret guarantees under delayed feedback. Our algorithms---\DORM, \DORMP, and \AdaHedgeD---arise from a novel reduction of delayed online learning to optimistic online learning that reveals how optimistic hints can mitigate the regret penalty caused by delay. We pair this delay-as-optimism perspective with a new analysis of optimistic learning that exposes its robustness to hinting errors and a new meta-algorithm for learning effective  hinting strategies in the presence of delay. We conclude by benchmarking our algorithms on four subseasonal climate forecasting tasks, demonstrating low regret relative to state-of-the-art forecasting models.

\end{abstract}

\section{Introduction}
Online learning is a sequential decision-making paradigm in which a learner is pitted against a potentially adversarial environment \cite{shalev2007online,Orabona2019AMI}. At time $t$, the learner must select a play $\ww_t$ from some set of possible plays $\wset$. The environment then reveals the loss function $\loss_t$ and the learner pays the cost $\loss_t(\ww_t)$. The learner uses information collected in previous rounds to improve its plays in subsequent rounds. \textit{Optimistic} online learners additionally make use of side-information or ``hints'' about expected future losses to improve their plays. Over a period of length $T$, the goal of the learner is to minimize \textit{regret}, an objective that quantifies the performance gap between the learner and the best possible constant play in retrospect in some competitor set $\uset$: $\regret_T = \sup_{\uu \in \uset} \sum_{t=1}^{T} \loss_t(\ww_t) - \loss_t(\uu)$. Adversarial online learning algorithms provide robust performance in many complex real-world online prediction problems such as climate or weather forecasting.

In traditional online learning paradigms, the loss for round $t$ is revealed to the learner immediately at the end of round $t$. %
However, many real-world applications produce delayed feedback, i.e., the loss for round $t$ is not available until round $t+D$ for some delay period $D.$\footnote{Our initial presentation will assume constant delay $D$, but we provide extensions to variable and unbounded delays in \cref{sec:variable_delays}.}
Existing delayed online learning algorithms achieve optimal worst-case regret rates against adversarial loss sequences,  %
but each has drawbacks when deployed for real applications with short horizons $T$.
Some use only a small fraction of the data to train each learner \cite{weinberger2002delayed,joulani2013online}; others tune their parameters using uniform bounds on future gradients that are often challenging to obtain or overly conservative in applications \cite{mcmahan2014delay,quanrud2015online,joulani2016delay,korotin2020adaptive,hsieh2020multi}.
Only the concurrent work of \citet[Thm.~13]{hsieh2020multi} 
can make use of optimistic hints and only for the special case of unconstrained online gradient descent.

In this work, we aim to develop robust and practical algorithms for real-world delayed online learning.
To this end, we introduce three novel algorithms---\DORM, \DORMP, and \AdaHedgeD---that 
use every observation to train the learner, 
have no parameters to tune, 
exhibit optimal worst-case regret rates under delay, 
\emph{and} enjoy improved performance 
when accurate hints for unobserved losses are available.
We begin by formulating delayed online learning as a special case of optimistic online learning and use this ``delay-as-optimism'' perspective to develop: 
\begin{enumerate}[itemsep=-0.5ex]
    \item A formal reduction of delayed online learning to optimistic online learning (\cref{odftrl_is_oftrl,doomd_is_soomd}),
    \item The first optimistic tuning-free and self-tuning algorithms with optimal regret guarantees under delay (\DORM, \DORMP, and \AdaHedgeD),
    \item A tightening of standard optimistic online learning regret bounds that reveals the robustness of optimistic algorithms to inaccurate hints (\cref{oftrl_regret,soomd_regret}),
    \item The first general analysis of follow-the-regularized-leader (\cref{odftrl_regret,odaftrl_regret}) and online mirror descent algorithms (\cref{doomd_regret}) with optimism and delay, and
    \item The first meta-algorithm for learning a low-regret optimism strategy under delay (\cref{base-and-hinter-regret}).
\end{enumerate}

We validate our algorithms on the problem of subseasonal forecasting in \cref{sec:experiments}. Subseasonal forecasting---predicting precipitation and temperature 2-6 weeks in advance---is a crucial task for allocating water resources and preparing for weather extremes \citep{white2017potential}. Subseasonal forecasting presents several challenges for online learning algorithms.
First, real-time subseasonal forecasting suffers from delayed feedback: multiple forecasts are issued before receiving feedback on the first. Second, the regret horizons are short: a common evaluation period for semimonthly forecasting is one year, resulting in 26 total forecasts. Third, forecasters cannot have difficult-to-tune parameters in real-time, practical deployments. We demonstrate that our algorithms \DORM, \DORMP, and \AdaHedgeD sucessfully overcome these challenges and achieve consistently low regret compared to the best forecasting models. 

Our Python library for Optimistic Online Learning under Delay (PoolD) and experiment code are 
available at\\  \textcolor{blue}{\url{https://github.com/geflaspohler/poold}}.

\para{Notation} 
For integers $a,b$, we use the shorthand $[b] \defeq \{1,\dots, b\}$
and $\g_{a:b} \defeq \sum_{i=a}^b \g_i$.
We say a function $f$ is \emph{proper} if it is somewhere finite and never $-\infty$.
We let $\partial f(\ww) = \{\g \in \R^d :  f(\uu)\geq f(\ww) + \langle \g, \uu-\ww\rangle, \ \forall \uu \in \R^d\}$ denote the set of \emph{subgradients} of $f$ at $\ww \in \R^d$ 
and say $f$ %
is \emph{$\mu$-strongly convex} over a convex set $\wset \subseteq \mathop{\mathrm{int}} \mathop{\mathrm{dom}} f$ with respect to $\norm{\cdot}$ with dual norm $\dualnorm{\cdot}$ if $\forall \ww, \uu \in \wset$ and $\g \in \partial f(\ww)$, we have $f(\uu) \geq f(\ww) + \langle \g , \uu - \ww \rangle + \frac{\mu}{2} \| \ww - \uu \|^2$.
For differentiable $\psi$, we define the Bregman divergence 
$\Breg_{\reg}(\ww,\uu) \defeq \psi(\ww) - \psi(\uu) - \inner{\grad \psi(\uu)}{\ww - \uu}$.
We define $\diam{\wset}=\inf_{\ww,\ww'\in\wset}\norm{\ww-\ww'}$, $(r)_+ \defeq \max(r,0)$, and $\min(r,s)_+ \defeq (\min(r,s))_+$.

\section{Preliminaries: Optimistic Online Learning}
Standard online learning algorithms, such as follow the regularized leader (FTRL) and online mirror descent (OMD) achieve optimal worst-case regret against adversarial loss sequences \citep{Orabona2019AMI}. However, many loss sequences encountered in applications are not truly adversarial. \textit{Optimistic} online learning algorithms aim to improve performance when loss sequences are partially predictable, while remaining robust to adversarial sequences \citep[see, e.g.,][]{azoury2001relative, pmlr-v23-chiang12,rakhlin2013optimization, steinhardt2014adaptivity}.  In optimistic online learning, the learner is provided with a ``hint'' in the form of a pseudo-loss $\pseudoloss_t$ at the start of round $t$ that represents a guess for the true unknown loss. The online learner can incorporate this hint before making play $\ww_t$. 

In standard formulations of optimistic online learning, the convex pseudo-loss $\pseudoloss_t(\ww_t)$ is added to the standard FTRL or OMD regularized objective function and leads to optimistic variants of these algorithms: optimistic FTRL \citep[\OFTRL,][]{rakhlin2013online} and single-step optimistic OMD \citep[\SOOMD,][Sec.~7.2]{joulani2017modular}. Let $\tildeg_{t} \in \subdiff \pseudoloss_{t}(\ww_{t-1})$ and $\g_{t} \in \subdiff \loss_{t}(\ww_{t})$ denote subgradients of the pseudo-loss and true loss respectively. The inclusion of an optimistic hint leads to the following linearized update rules for play $\ww_{t+1}$: 
\begin{align}
    \ww_{t+1}& = \argmin_{\ww\in\wset} \,\inner{ \g_{1:t} + \tildeg_{t+1}}{\ww} + \lam \reg(\ww), \label[name]{oftrl}\tag{OFTRL} \\
    \ww_{t+1}& = \argmin_{\ww\in\wset} \,\inner{\g_t+\tildeg_{t+1}-\tildeg_t}{\ww} + \Breg_{\lam\reg}(\ww,\ww_t) \\
    \label[name]{soomd}\tag{SOOMD}
    &\qtext{with} \tildeg_0 = \boldzero \qtext{and arbitrary} \ww_0    
\end{align}
where $\tildeg_{t+1} \in \R^d$ is the hint subgradient, $\lam \geq 0$ is a regularization parameter, and $\reg$ is proper regularization function that is $1$-strongly convex with respect to a norm $\norm{\cdot}$. 
The optimistic learner enjoys reduced regret whenever the hinting error $\staticnorm{\g_{t+1}-\tildeg_{t+1}}_*$ is small 
\citep{rakhlin2013online,joulani2017modular}.
Common choices of optimistic hints include the last observed subgradient or average of previously observed subgradients~\citep{rakhlin2013online}. We note that the standard FTRL and OMD updates can be recovered by setting the optimistic hints to zero.
\section{Online Learning with Optimism and Delay} \label{odftrl_doomd}
In the delayed feedback setting with constant delay of length $D$, the learner only observes $(\loss_i)_{i=1}^{t-D}$ before making play $\ww_{t+1}$. In this setting, we propose counterparts of the OFTRL and SOOMD online learning algorithms, which we call \textit{optimistic delayed FTRL (\ODFTRL)} and \emph{delayed optimistic online mirror descent (\DOOMD)} respectively:
\begin{align}
    &\ww_{t+1} = \argmin_{\ww\in\wset}\, \inner{\g_{1:t-D} + \h_{t+1}}{\ww} + \lam \reg(\ww) \label[name]{odftrl}\tag{ODFTRL} \\
    &\ww_{t+1} = \argmin_{\ww\in\wset} \,\inner{\g_{t-D}+\h_{t+1}-\h_t}{\ww} + \Breg_{\lam\reg}(\ww,\ww_t) \\
    \label[name]{doomd}\tag{DOOMD}
    &\qtext{with} \h_0 \defeq \boldzero \qtext{and arbitrary} \ww_0,
\end{align}
for hint vector $\h_{t+1}$. Our use of the notation $\h_{t+1}$ instead of $\tildeg_{t+1}$ for the optimistic hint here is suggestive. Our regret analysis in \cref{odftrl_regret,doomd_regret} reveals that, instead of hinting only for the ``future`` missing loss $\g_{t+1}$, delayed online learners should uses hints $\h_t$ that guess at the summed subgradients of all delayed and future losses: $\h_t = \sum_{s=t-D}^t \tildeg_s$.

\subsection{Delay as Optimism} \label{sec:delay_as_opt}
To analyze the regret of the \ODFTRL and \DOOMD algorithms, we make use of the first key insight of this paper:
\begin{quoting}
    \emph{Learning with delay is a special case of learning with optimism.}
\end{quoting}
In particular, \ODFTRL and \DOOMD are instances of \OFTRL and \SOOMD respectively with a particularly ``bad'' choice of optimistic hint $\tildeg_{t+1}$  that deletes the unobserved loss subgradients $\g_{t-D+1:t}$. 

\begin{lemma}[\ODFTRL is \OFTRL with a bad hint]\label{odftrl_is_oftrl}
\ODFTRL is \OFTRL with $\tildeg_{t+1} = \h_{t+1} - \sum_{s=t-D+1}^t\g_s$.
\end{lemma}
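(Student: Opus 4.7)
The plan is to prove the identity by direct substitution: plug the claimed hint $\tildeg_{t+1} = \h_{t+1} - \sum_{s=t-D+1}^{t} \g_s$ into the \OFTRL update rule and verify that the resulting objective is term-for-term identical to the \ODFTRL objective. Since both updates are $\argmin$'s over the same set $\wset$ with the same regularizer $\lam \reg(\ww)$, it suffices to show the linear parts agree.

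The key step is the telescoping cancellation in the linear term. Starting from the \OFTRL linear coefficient $\g_{1:t} + \tildeg_{t+1}$ and expanding $\g_{1:t} = \g_{1:t-D} + \sum_{s=t-D+1}^{t} \g_s$, I substitute the claimed $\tildeg_{t+1}$ to obtain
\[
\g_{1:t} + \tildeg_{t+1} = \g_{1:t-D} + \sum_{s=t-D+1}^{t}\g_s + \h_{t+1} - \sum_{s=t-D+1}^{t}\g_s = \g_{1:t-D} + \h_{t+1},
\]
which is exactly the linear coefficient appearing in \ODFTRL. Hence the two $\argmin$ problems coincide and produce the same iterate $\ww_{t+1}$.

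There is essentially no obstacle here: the lemma is a definitional identity, and the whole content lies in recognizing that the ``missing'' delayed subgradients $\g_{t-D+1:t}$ can be absorbed into the optimistic hint slot of \OFTRL. The only minor subtlety worth flagging in the write-up is a boundary check for small $t$ (i.e., $t < D$), where the sum $\sum_{s=t-D+1}^{t}\g_s$ should be interpreted as including only indices in $[1,t]$ (equivalently, treating $\g_s = \boldzero$ for $s \le 0$), so that $\g_{1:t-D}$ is the empty sum when $t \le D$; with this convention the identity above holds verbatim for every $t \ge 0$.
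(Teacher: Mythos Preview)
Your proposal is correct and matches the paper's approach: the paper treats this lemma as immediate by inspection (it is never given a separate proof, and in \cref{proof_odaftrl_regret} the identity is simply asserted), and your direct-substitution argument is exactly the content that makes the inspection valid.
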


\begin{lemma}[\DOOMD is \SOOMD with a bad hint]\label{doomd_is_soomd}
\DOOMD is \SOOMD with 
$\tildeg_{t+1} 
        = \tildeg_t + \g_{t-D} - \g_t + \h_{t+1}-\h_t
        = \h_{t+1} - \sum_{s=t-D+1}^t\g_s.$
\end{lemma}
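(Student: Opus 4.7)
The plan is to verify both equalities in the statement: first that substituting the recursive hint $\tildeg_{t+1} = \tildeg_t + \g_{t-D} - \g_t + \h_{t+1} - \h_t$ into the \SOOMD update recovers \DOOMD exactly, and second that this recursion admits the claimed closed form $\h_{t+1} - \sum_{s=t-D+1}^t \g_s$. Both reductions are algebraic, so the main obstacle is not conceptual but bookkeeping around boundary conditions ($\tildeg_0 = \boldzero$ from \SOOMD, $\h_0 = \boldzero$ from \DOOMD, and the convention that $\g_s = \boldzero$ for $s \leq 0$, since no losses have been received before the game starts).

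For the first reduction, I would start from the \SOOMD update
\[
\ww_{t+1} = \argmin_{\ww\in\wset}\, \inner{\g_t + \tildeg_{t+1} - \tildeg_t}{\ww} + \Breg_{\lam\reg}(\ww,\ww_t),
\]
plug in the recursive definition of $\tildeg_{t+1}$, and observe that $\tildeg_{t+1} - \tildeg_t = \g_{t-D} - \g_t + \h_{t+1} - \h_t$, so that the linear term collapses to $\g_t + (\g_{t-D} - \g_t + \h_{t+1} - \h_t) = \g_{t-D} + \h_{t+1} - \h_t$. This is precisely the linear term of \DOOMD, and the Bregman term is identical, so the two updates coincide.

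For the second equality, I would unroll the recursion by telescoping. Writing
\[
\tildeg_{t+1} = \tildeg_0 + \sum_{s=0}^{t}(\tildeg_{s+1} - \tildeg_s) = \sum_{s=0}^{t}\bigl(\g_{s-D} - \g_s + \h_{s+1} - \h_s\bigr),
\]
the $\h$-terms telescope to $\h_{t+1} - \h_0 = \h_{t+1}$, and a reindexing of the $\g$-sums gives $\sum_{s=0}^{t}\g_{s-D} - \sum_{s=0}^{t}\g_s = \sum_{s=-D}^{t-D}\g_s - \sum_{s=0}^{t}\g_s$. Using the convention $\g_s = \boldzero$ for $s \leq 0$, the first sum reduces to $\sum_{s=1}^{t-D}\g_s$, and the remaining difference collapses to $-\sum_{s=t-D+1}^{t}\g_s$, yielding $\tildeg_{t+1} = \h_{t+1} - \sum_{s=t-D+1}^{t}\g_s$ as claimed.

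The only subtlety to state explicitly is the boundary convention for $\g_s$ at $s \leq 0$ and the matching of initial conditions between the two algorithms; once these are fixed, both steps are one-line calculations. An equivalent and slightly cleaner presentation would be to verify the closed form by induction on $t$, with the base case $\tildeg_1 = \h_1 - \g_{-D+1:0} = \h_1$ matching the recursion from $\tildeg_0 = \boldzero$, and the inductive step reducing to the identity $\h_{t+1} - \sum_{s=t-D+1}^t \g_s = (\h_t - \sum_{s=t-D}^{t-1}\g_s) + \g_{t-D} - \g_t + \h_{t+1} - \h_t$, which is immediate.
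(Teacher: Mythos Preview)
Your proposal is correct and matches the paper's approach: the paper treats this lemma as essentially self-evident (no dedicated proof section is given), relying on exactly the algebraic observation you spell out---that matching the linear terms of the two updates forces $\tildeg_{t+1} - \tildeg_t = \g_{t-D} - \g_t + \h_{t+1} - \h_t$, and that telescoping this recursion with the initial conditions $\tildeg_0 = \h_0 = \boldzero$ and the convention $\g_s = \boldzero$ for $s \leq 0$ yields the closed form. Your treatment of the boundary conventions is exactly what is needed and is more explicit than the paper itself.
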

The implication of this reduction of delayed online learning to optimistic online learning is that \emph{any} regret bound shown for undelayed \OFTRL or \SOOMD immediately yields a regret bound for \ODFTRL and \DOOMD under delay. As we demonstrate in the remainder of the paper, this novel connection between delayed and optimistic online learning allows us to bound the regret of optimistic, self-tuning, and tuning-free algorithms for the first time under delay.

Finally, it is worth reflecting on the key property of \OFTRL and \SOOMD that enables the delay-to-optimism reduction: each algorithm depends on $\g_t$ and $\tildeg_{t+1}$ only through the sum  $\g_{1:t} + \tildeg_{t+1}$.\footnote{For \SOOMD, $\g_t+\tildeg_{t+1}-\tildeg_t=\g_{1:t} + \tildeg_{t+1}-(\g_{1:t-1} + \tildeg_{t})$.} For the ``bad'' hints of  \cref{odftrl_is_oftrl,doomd_is_soomd}, these sums are observable even though $\g_{t}$ and $\tildeg_{t+1}$ are not separately observable at time $t$ due to delay.
A number of alternatives to \SOOMD have been proposed for optimistic OMD \citep{pmlr-v23-chiang12,rakhlin2013online,rakhlin2013optimization,kamalaruban2016improved}.  Unlike \SOOMD, these procedures all incorporate optimism in two steps, as in the updates
\begin{talign}
    &\ww_{t+1/2} = \argmin_{\ww\in\wset} \,\inner{\g_t}{\ww} + \Breg_{\lam\reg}(\ww,\ww_{t-1/2})
    \qtext{and}\\
    &\ww_{t+1} = \argmin_{\ww\in\wset} \,\inner{\tildeg_{t+1}}{\ww} + \Breg_{\lam\reg}(\ww,\ww_{t+1/2})\label{eq:two-step-oomd}
\end{talign}
described in \citet[Sec. 2.2]{rakhlin2013online}.
It is unclear how to reduce delayed OMD to an instance of one of these two-step procedures, as knowledge of the unobserved $\g_t$ is needed to carry out the first step. 

\subsection{Delayed and Optimistc Regret Bounds}

To demonstrate the utility of our delay-as-optimism perspective, we first present the following new regret bounds for \OFTRL and \SOOMD, proved in \cref{proof_oftrl_regret,proof_soomd_regret} respectively.
\begin{theorem}[\OFTRL regret] \label{oftrl_regret}
If $\reg$ is nonnegative, then, for all $\uu\in\wset$, the \OFTRL iterates $\ww_t$ satisfy
\begin{talign}
    &\regret_T(\uu) 
        \leq \lam \reg(\uu)+ 
        \frac{1}{\lam} \sum_{t=1}^T \huber(\dualnorm{\g_t - \tildeg_t}, \dualnorm{\g_t}).
\end{talign}
\end{theorem}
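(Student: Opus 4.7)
The plan is to reduce the proof to the standard optimistic FTRL stability analysis and then sharpen the per-round penalty by bounding it in two complementary ways whose pointwise minimum yields the Huber function.

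\emph{Step 1: Stability decomposition.} Let $F_t(\ww) \defeq \inner{\g_{1:t-1} + \tildeg_t}{\ww} + \lam\reg(\ww)$, so that $\ww_t = \argmin_{\ww\in\wset} F_t(\ww)$ and $F_t$ is $\lam$-strongly convex with respect to $\norm{\cdot}$. A standard ``be-the-optimistic-leader'' telescoping argument, combined with convexity of $\loss_t$ (so $\loss_t(\ww_t)-\loss_t(\uu) \le \inner{\g_t}{\ww_t - \uu}$) and nonnegativity of $\reg$, produces the decomposition
\begin{equation*}
\regret_T(\uu) \le \lam\reg(\uu) + \sum_{t=1}^T \Bigl[\inner{\g_t - \tildeg_t}{\ww_t - \ww_{t+1}} - \tfrac{\lam}{2}\norm{\ww_t - \ww_{t+1}}^2\Bigr].
\end{equation*}
I will denote the bracketed summand by $\Delta_t$. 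This part is essentially the Rakhlin--Sridharan / Joulani et al.\ template, which I would verify by induction on $T$.

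\emph{Step 2: Two bounds on $\Delta_t$.} The first bound is immediate Fenchel--Young applied to $\Delta_t$ viewed as a concave quadratic in $\norm{\ww_t - \ww_{t+1}}$:
\begin{equation*}
\Delta_t \le \tfrac{1}{2\lam}\dualnorm{\g_t - \tildeg_t}^2,
\end{equation*}
which is the ``good hint'' regime. For the ``bad hint'' regime I would split $\inner{\g_t - \tildeg_t}{\ww_t - \ww_{t+1}} = \inner{\g_t}{\ww_t - \ww_{t+1}} - \inner{\tildeg_t}{\ww_t - \ww_{t+1}}$, compare $\ww_{t+1}$ to the pure-FTRL iterate $\ww_{t+1}^{\circ} \defeq \argmin_{\ww \in \wset}\inner{\g_{1:t}}{\ww} + \lam\reg(\ww)$, and use strong convexity of the corresponding regularized objectives to absorb the $\tildeg_t$-dependent piece, yielding a bound of the form
\begin{equation*}
\Delta_t \le \tfrac{1}{\lam}\dualnorm{\g_t - \tildeg_t}\,\dualnorm{\g_t} - \tfrac{1}{2\lam}\dualnorm{\g_t}^2.
\end{equation*}

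\emph{Step 3: Assembly.} Taking the pointwise minimum of the two bounds recovers exactly $\tfrac{1}{\lam}\huber(\dualnorm{\g_t - \tildeg_t},\dualnorm{\g_t})$ under the natural two-argument Huber convention $\huber(a,b) = \min\bigl(\tfrac{a^2}{2},\, ab - \tfrac{b^2}{2}\bigr)$, and summing over $t$ closes the bound.

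\emph{Main obstacle.} The delicate step is the ``bad hint'' inequality. Naked Fenchel--Young only gives $\dualnorm{\g_t - \tildeg_t}^2$, which is useless when $\tildeg_t$ is adversarially large; so one must exploit that the OFTRL iterate $\ww_{t+1}$ cannot stray too far from the hint-free iterate $\ww_{t+1}^{\circ}$, and then carefully track constants so that the crossover between the quadratic and the linear regime lands at $\dualnorm{\g_t - \tildeg_t} = \dualnorm{\g_t}$ rather than at some shifted threshold. Matching these constants to the exact definition of $\huber$ used elsewhere in the paper is where most of the bookkeeping lives.
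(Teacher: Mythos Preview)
Your assembly step has a genuine error: under the paper's definition $\huber(a,b)=\tfrac12 a^2-\tfrac12(a-b)_+^2$, one has $\huber(a,b)=\tfrac12 a^2$ when $a\le b$ and $\huber(a,b)=ab-\tfrac12 b^2$ when $a>b$. Since $ab-\tfrac12 b^2\le \tfrac12 a^2$ for all $a,b\ge 0$, your $\min(\tfrac12 a^2,\,ab-\tfrac12 b^2)$ equals $ab-\tfrac12 b^2$ identically, which is \emph{strictly smaller} than $\huber(a,b)$ whenever $a<b$. So Step~3 does not recover the Huber function; it would claim something strictly stronger than the theorem.

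That stronger claim is false, which is why your Step~2 ``bad hint'' inequality cannot hold in general. Take $\tildeg_t=\g_t$ (perfect hint). Then the optimistic iterate $\ww_t$ coincides with the hint-free leader, the per-round stability term is exactly $0$, yet your bound~2 reads $-\tfrac{1}{2\lam}\dualnorm{\g_t}^2<0$. So bound~2 fails whenever $\dualnorm{\g_t-\tildeg_t}$ is small relative to $\dualnorm{\g_t}$; it can only be expected to hold in the regime $\dualnorm{\g_t-\tildeg_t}\ge\dualnorm{\g_t}$, and a case split is not the same as a pointwise minimum.

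The paper does not take a min of two fixed bounds. Instead it introduces an \emph{auxiliary} hint $\atildeg_t$ and auxiliary \OAFTRL iterate $\aww_t$, splits regret into a drift term $\inner{\g_t}{\ww_t-\aww_t}$ and an auxiliary-regret term, bounds these respectively by $\tfrac{1}{\lam}\dualnorm{\g_t}\,\dualnorm{\tildeg_t-\atildeg_t}$ and $\tfrac{1}{2\lam}\dualnorm{\g_t-\atildeg_t}^2$, and then \emph{optimizes over} $\atildeg_t$ along the segment $\hat\g_t=\g_t+c(\tildeg_t-\g_t)$, $c\in[0,1]$. The optimal $c_*=\min(\dualnorm{\g_t}/\dualnorm{\g_t-\tildeg_t},1)$ yields exactly $\tfrac{1}{\lam}\huber(\dualnorm{\g_t-\tildeg_t},\dualnorm{\g_t})$: when $c_*=1$ you recover the quadratic, when $c_*<1$ you recover $ab-\tfrac12 b^2$. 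This interpolation is the missing idea; your sketch of ``compare $\ww_{t+1}$ to $\ww_{t+1}^\circ$ and use strong convexity'' does not supply it.

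A smaller point: the standard FTRL stability decomposition puts the be-the-leader iterate $\bar\ww_t=\argmin_{\ww\in\wset}\lam\reg(\ww)+\inner{\g_{1:t}}{\ww}$ in the role you assign to $\ww_{t+1}$; the latter depends on $\tildeg_{t+1}$, which makes your $\Delta_t$ awkward to control uniformly.
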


\begin{theorem}[\SOOMD regret]\label{soomd_regret}
If $\reg$ is differentiable and $\tildeg_{T+1} \defeq \boldzero$, then, $\forall\uu\in\wset$, the \SOOMD iterates $\ww_t$ satisfy
\begin{talign}\label{soomd_regret_bound}
\regret_T(\uu) 
    &\leq \Breg_{\lam \reg}(\uu,\ww_0) \,+ \\
    \frac{1}{\lam}&\sum_{t=1}^T
    \huber(\norm{\g_t - \tildeg_t}_*,
    \norm{\g_t + \tildeg_{t+1} - \tildeg_t}_*).
\end{talign}
\end{theorem}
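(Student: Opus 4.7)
The plan is to combine the standard primal analysis of optimistic mirror descent with a careful telescoping argument that delivers exactly the Huber-type per-step penalty. The core ingredient is a one-step optimality inequality for the \SOOMD update: from the first-order (normal cone) optimality of $\ww_{t+1}$ combined with the three-point identity
\[
\lam\inner{\grad\reg(\ww_{t+1}) - \grad\reg(\ww_t)}{\uu - \ww_{t+1}} = \Breg_{\lam\reg}(\uu,\ww_t) - \Breg_{\lam\reg}(\uu,\ww_{t+1}) - \Breg_{\lam\reg}(\ww_{t+1},\ww_t),
\]
I obtain, for every $\uu\in\wset$,
\[
\inner{\g_t + \tildeg_{t+1} - \tildeg_t}{\ww_{t+1} - \uu} \leq \Breg_{\lam\reg}(\uu,\ww_t) - \Breg_{\lam\reg}(\uu,\ww_{t+1}) - \Breg_{\lam\reg}(\ww_{t+1},\ww_t).
\]

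Next I would rewrite the per-step linearized regret via the algebraic identity
\begin{align*}
\inner{\g_t}{\ww_t - \uu}
&= \inner{\g_t - \tildeg_t}{\ww_t - \ww_{t+1}} \\
&\quad{} + \inner{\tildeg_t}{\ww_t - \uu} - \inner{\tildeg_{t+1}}{\ww_{t+1} - \uu} \\
&\quad{} + \inner{\g_t + \tildeg_{t+1} - \tildeg_t}{\ww_{t+1} - \uu}.
\end{align*}
Summing $t = 1, \dots, T$, the second line telescopes; its boundary term at $t=T+1$ vanishes by the convention $\tildeg_{T+1} = \boldzero$, and the leftover $\inner{\tildeg_1}{\ww_1 - \uu}$ can be absorbed by applying the one-step optimality inequality also at $t=0$ (with the conventions $\g_0 = \tildeg_0 = \boldzero$, so that the virtual ``$t=0$ update'' is simply $\ww_1 = \argmin_\ww \inner{\tildeg_1}{\ww} + \Breg_{\lam\reg}(\ww,\ww_0)$). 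Collapsing the Bregman telescope then yields
\[
\regret_T(\uu) \leq \Breg_{\lam\reg}(\uu,\ww_0) + \sum_{t=1}^T\Big[\inner{\g_t - \tildeg_t}{\ww_t - \ww_{t+1}} - \Breg_{\lam\reg}(\ww_{t+1},\ww_t)\Big].
\]

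It remains to show each summand is at most $\tfrac{1}{\lam}\huber(\norm{\g_t - \tildeg_t}_*,\norm{\g_t + \tildeg_{t+1} - \tildeg_t}_*)$, which I would do with two complementary bounds. A quadratic bound follows from \Holder, the strong-convexity inequality $\Breg_{\lam\reg}(\ww_{t+1},\ww_t) \geq \tfrac{\lam}{2}\norm{\ww_{t+1} - \ww_t}^2$, and Young's inequality, giving $\tfrac{1}{2\lam}\norm{\g_t - \tildeg_t}_*^2$. A linear bound follows by first specializing the one-step optimality inequality to $\uu = \ww_t$ and using strong convexity to derive the step-size control $\norm{\ww_{t+1} - \ww_t} \leq \tfrac{1}{\lam}\norm{\g_t + \tildeg_{t+1} - \tildeg_t}_*$, and then applying \Holder, giving $\tfrac{1}{\lam}\norm{\g_t - \tildeg_t}_*\,\norm{\g_t + \tildeg_{t+1} - \tildeg_t}_*$. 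Taking the minimum of the two estimates reproduces the Huber term.

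The main obstacle will be the boundary bookkeeping: landing on $\Breg_{\lam\reg}(\uu,\ww_0)$ rather than on $\Breg_{\lam\reg}(\uu,\ww_1)$ plus an uncontrolled residual hinges on the clean interaction of $\tildeg_0 = \boldzero$, $\tildeg_{T+1} = \boldzero$, and the virtual $t=0$ application of the one-step inequality. Beyond that, the argument is a standard strong-convexity/Fenchel--Young mixture, with the only subtlety being that the linear branch of the Huber bound relies on the single-step mirror-descent control $\norm{\ww_{t+1} - \ww_t} \leq \tfrac{1}{\lam}\norm{\g_t + \tildeg_{t+1} - \tildeg_t}_*$, which is precisely the feature of \SOOMD (as opposed to the two-step optimistic OMD variants) that makes this regret bound attainable.
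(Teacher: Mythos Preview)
Your decomposition and telescoping are correct and match the paper's intermediate inequality
\[
\regret_T(\uu) \leq \Breg_{\lam\reg}(\uu,\ww_0) + \sum_{t=1}^T\Big[\inner{\g_t - \tildeg_t}{\ww_t - \ww_{t+1}} - \tfrac{\lam}{2}\norm{\ww_{t+1}-\ww_t}^2\Big]
\]
(the paper reaches the same line by invoking the Ada-MD regret inequality of \citet{joulani2017modular} rather than by your direct three-point argument, but the content is identical). The step-size control $\norm{\ww_{t+1}-\ww_t}\leq \tfrac{1}{\lam}\norm{\g_t+\tildeg_{t+1}-\tildeg_t}_*$ is also correct and is exactly the paper's \cref{omd_iterate_bound}.

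The gap is in your last step. Writing $x=\norm{\g_t-\tildeg_t}_*$ and $y=\norm{\g_t+\tildeg_{t+1}-\tildeg_t}_*$, your two separate bounds give the summand $\leq \tfrac{1}{\lam}\min\!\big(\tfrac{1}{2}x^2,\, xy\big)$. But $\huber(x,y)=\tfrac{1}{2}x^2-\tfrac{1}{2}(x-y)_+^2$ equals $xy-\tfrac{1}{2}y^2$ when $x>y$, which is \emph{strictly smaller} than $\min(\tfrac{1}{2}x^2, xy)$ in that regime. So ``taking the minimum'' does \emph{not} reproduce the Huber term; it yields a strictly weaker bound than the one stated in \cref{soomd_regret}. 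The paper closes this gap by using the two ingredients simultaneously rather than separately: since $\norm{\ww_t-\ww_{t+1}}\leq y/\lam$, one has
\[
\inner{\g_t-\tildeg_t}{\ww_t-\ww_{t+1}}-\tfrac{\lam}{2}\norm{\ww_t-\ww_{t+1}}^2
\ \leq\ \sup_{\norm{\vv}\leq y/\lam}\ \inner{\g_t-\tildeg_t}{\vv}-\tfrac{\lam}{2}\norm{\vv}^2,
\]
and this constrained one-dimensional optimization (the paper's \cref{norm_constrained_conjugate}) evaluates exactly to $\tfrac{1}{\lam}\huber(x,y)$. Replace your final ``take the minimum'' step with this constrained-conjugate computation and the proof is complete.
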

Both results feature the robust Huber penalty \citep{huber1964}
\begin{talign}
\huber(x,y) 
    \defeq \half x^2 - \half(|x| - |y|)_+^2
    \leq \min(\half x^2, |y| |x|)
\end{talign}
in place of the more common squared error term $\half \dualnorm{\g_t - \tildeg_t}^2$.
As a result, 
\cref{oftrl_regret,soomd_regret} 
strictly improve the rate-optimal \OFTRL and \SOOMD regret bounds of %
\citet[Thm. 7.28]{rakhlin2013online,mohri2016accelerating,Orabona2019AMI} and \citet[Sec. 7.2]{joulani2017modular} 
by revealing a previously undocumented robustness to inaccurate hints $\tildeg_t$.
We will use this robustness to large hint error $\norm{\g_t - \tildeg_t}_*$ to establish optimal regret bounds under delay.

As an immediate consequence of this regret analysis and our delay-as-optimism perspective, we obtain the first general analyses of FTRL and OMD with optimism and delay.
\begin{theorem}[\ODFTRL regret] \label{odftrl_regret}
If $\reg$ is nonnegative, then, for all $\uu\in\wset$, the \ODFTRL iterates $\ww_t$ satisfy
\begin{talign}
&\regret_T(\uu)
    \leq \lam \reg(\uu) 
    +  \frac{1}{\lam} \sum_{t=1}^T \bbftrl{t} \qtext{for} \\
    &
    \bbftrl{t}
    \defeq \huber(\staticnorm{\h_t - \sum_{s=t-D}^t\g_s}_*, \dualnorm{\g_{t}}).  
    \label{bbf_def}
\end{talign}
\end{theorem}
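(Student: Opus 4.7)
The plan is to exploit the delay-as-optimism reduction already established in \cref{odftrl_is_oftrl} and then plug in the sharpened \OFTRL regret bound of \cref{oftrl_regret}. So this is really a two-line proof that pushes almost all of the work onto the earlier theorems, with the only genuine content being careful bookkeeping of the index shift.

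First I would recall from \cref{odftrl_is_oftrl} that \ODFTRL is exactly \OFTRL run with the hint
\begin{talign*}
\tildeg_{t+1} = \h_{t+1} - \sum_{s=t-D+1}^{t}\g_s,
\end{talign*}
and consequently (shifting $t\mapsto t-1$) $\tildeg_t = \h_t - \sum_{s=t-D}^{t-1}\g_s$. Because $\reg$ is nonnegative and this is an instance of \OFTRL, \cref{oftrl_regret} applies and gives, for every $\uu\in\wset$,
\begin{talign*}
\regret_T(\uu) \leq \lam\reg(\uu) + \frac{1}{\lam}\sum_{t=1}^T \huber(\dualnorm{\g_t - \tildeg_t}, \dualnorm{\g_t}).
\end{talign*}

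Next I would simplify the first argument of the Huber term by substituting the hint. Writing
\begin{talign*}
\g_t - \tildeg_t = \g_t - \h_t + \sum_{s=t-D}^{t-1}\g_s = \sum_{s=t-D}^{t}\g_s - \h_t,
\end{talign*}
and using symmetry of the dual norm gives $\dualnorm{\g_t - \tildeg_t} = \staticnorm{\h_t - \sum_{s=t-D}^{t}\g_s}_*$, which is exactly the first argument in the definition of $\bbftrl{t}$. Thus each summand equals $\bbftrl{t}$ and the claimed bound follows immediately.

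The only subtlety I anticipate is making sure the index range in the reduced hint ($s$ from $t-D+1$ to $t$ at time $t+1$) lines up correctly with the $s$ from $t-D$ to $t$ appearing in $\bbftrl{t}$, and handling the boundary cases where $t-D \leq 0$ (for which any convention that treats the missing $\g_s$ as $\boldzero$ is consistent with the reduction lemma). Beyond this routine reindexing, no additional argument is needed: the robustness of \OFTRL to inaccurate hints exposed by \cref{oftrl_regret} is precisely what makes the ``bad hint'' produced by delay tolerable here.
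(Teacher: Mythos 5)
Your proof is correct and is essentially the same two-step argument the paper uses (invoke \cref{odftrl_is_oftrl} to recast \ODFTRL as \OFTRL with $\tildeg_t = \h_t - \sum_{s=t-D}^{t-1}\g_s$, then apply \cref{oftrl_regret} and simplify $\dualnorm{\g_t - \tildeg_t}$). The index bookkeeping and the boundary convention you flag (empty sums and $\g_s = \boldzero$ for $s\le 0$) are exactly the right things to check, and both line up.
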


\begin{theorem}[\DOOMD regret]\label{doomd_regret}
If $\reg$ is differentiable
and
$\h_{T+1} 
\defeq \g_{T-D+1:T}$, 
then, for all $\uu\in\wset$, the \DOOMD iterates $\ww_t$ satisfy
\begin{talign}
&\regret_T(\uu) 
    \leq \Breg_{\lam \reg}(\uu,\ww_0)
    + \frac{1}{\lam} \sum_{t=1}^T \bbomd{t} 
 \qtext{for} \\
\label{bbo_def}
&\bbomd{t}
\defeq\huber(\staticnorm{\h_t - \sum_{s=t-D}^t\g_s}_*, \dualnorm{\g_{t-D}+\h_{t+1}-\h_t}). 
\end{talign}
\end{theorem}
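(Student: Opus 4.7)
The plan is to prove \cref{doomd_regret} as a direct corollary of the delay-as-optimism reduction (\cref{doomd_is_soomd}) and the tightened SOOMD regret bound (\cref{soomd_regret}); no new analytic content is required beyond careful bookkeeping of the substitution.

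First I would invoke \cref{doomd_is_soomd} to recast \DOOMD as an instance of \SOOMD with hint sequence
\begin{talign*}
\tildeg_{t+1} = \h_{t+1} - \textstyle\sum_{s=t-D+1}^t \g_s,
\end{talign*}
equivalently $\tildeg_t = \h_t - \sum_{s=t-D}^{t-1}\g_s$. Under the stated boundary assumption $\h_{T+1}\defeq\g_{T-D+1:T}$, this immediately gives $\tildeg_{T+1}=\boldzero$, which is precisely the hypothesis required by \cref{soomd_regret}.

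Next I would compute the two arguments of $\huber$ appearing in the \SOOMD bound under this substitution. A short calculation yields $\g_t-\tildeg_t = \sum_{s=t-D}^t\g_s - \h_t$; since $\huber$ depends on its first argument only through its dual norm, this matches the first argument of $\bbomd{t}$. For the second argument, the alternative expression $\tildeg_{t+1}-\tildeg_t = \g_{t-D}-\g_t+\h_{t+1}-\h_t$ supplied by \cref{doomd_is_soomd} collapses $\g_t+\tildeg_{t+1}-\tildeg_t$ to $\g_{t-D}+\h_{t+1}-\h_t$, matching the second argument of $\bbomd{t}$ exactly.

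Substituting these two identities into the conclusion of \cref{soomd_regret} then produces the claimed bound. I anticipate no real obstacle: the only delicate moment is verifying the boundary condition $\tildeg_{T+1}=\boldzero$, and the hypothesis $\h_{T+1}\defeq\g_{T-D+1:T}$ is tailored precisely for that purpose, while the strict improvement over the more common $\tfrac{1}{2}\dualnorm{\cdot}^2$ penalty is inherited ``for free'' from \cref{soomd_regret}. The entire proof should therefore amount to stating the reduction, checking the terminal hint vanishes, and reporting the two algebraic simplifications above.
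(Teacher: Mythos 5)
Your proposal is correct and follows exactly the route the paper takes: invoke the delay-as-optimism reduction (\cref{doomd_is_soomd}), verify $\tildeg_{T+1}=\boldzero$ from the terminal hint convention, and substitute $\g_t-\tildeg_t = \sum_{s=t-D}^t\g_s - \h_t$ and $\g_t+\tildeg_{t+1}-\tildeg_t=\g_{t-D}+\h_{t+1}-\h_t$ into \cref{soomd_regret}. The paper presents this theorem as an immediate consequence of those two results (analogously to its one-line proof of \cref{odaftrl_regret}), and your bookkeeping reproduces that argument faithfully.
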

Our results show a compounding of regret due to delay: 
the $\bbftrl{t}$ term of \cref{odftrl_regret} is of size $\bigO{D+1}$ whenever $\dualnorm{\h_t} = \bigO{D+1}$, and the same holds for $\bbomd{t}$ of \cref{doomd_regret} if $\dualnorm{\h_{t+1}-\h_t}=\bigO{1}$.
An optimal setting of $\lambda$ therefore delivers $\bigO{\sqrt{(D+1)T}}$ regret, yielding the minimax optimal rate for adversarial learning under delay \citep{weinberger2002delayed}.
\cref{odftrl_regret,doomd_regret} also reveal the heightened value of optimism in the presence of delay: in addition to providing an effective guess of the future subgradient $\g_t$, an optimistic hint can approximate the missing delayed feedback ($\sum_{s=t-D}^{t-1} \g_s$) and thereby significantly reduce the penalty of delay. If, on the other hand, the hints are a poor proxy for the missing loss subgradients, the novel $\huber$ term ensures that we still only pay the minimax optimal $\sqrt{D+1}$ penalty for delayed feedback.

\para{Related work}  
A classical approach to delayed feedback in online learning is the so-called ``replication'' strategy in which $D+1$ distinct learners take turns observing and responding to feedback  \citep{weinberger2002delayed,joulani2013online,agarwal2012distributed,mesterharm2005line}. While minimax optimal in adversarial settings, this strategy has the disadvantage that each learner only sees $\frac{T}{D+1}$ losses and is completely isolated from the other replicates, exacerbating the problem of short prediction horizons. In contrast, we develop and analyze non-replicated delayed online learning strategies that use a combination of optimistic hinting and self-tuned regularization to mitigate the effects of delay while retaining optimal worst-case behavior. %

We are not aware of prior analyses of \DOOMD, and, 
to our knowledge, \cref{odftrl_regret} and its adaptive generalization \cref{odaftrl_regret} provide the first general analysis of delayed FTRL, apart from the concurrent work of \citet[Thm.~1]{hsieh2020multi}.
\citet[Thm.~13]{hsieh2020multi} and \citet[Thm.~2.1]{quanrud2015online} focus only on delayed gradient descent, \citet{korotin2020adaptive} study General Hedging, and
\citet[Thm.~4]{joulani2016delay} and \citet[Thm.~A.5]{quanrud2015online} study non-optimistic OMD under delay. 
\cref{odftrl_regret,doomd_regret,odaftrl_regret} strengthen these results from the literature which feature a sum of subgradient norms ($\sum_{s=t-D}^{t-1}\staticnorm{\g_s}_*$ or $D\staticnorm{\g_t}_*$) in place of $\staticnorm{\h_t - \sum_{s=t-D}^{t-1}\g_s}_*$.  Even in the absence of optimism, the latter can be significantly smaller: e.g., if the gradients $\g_s$ are \iid mean-zero vectors, the
former has size $\Omega(D)$ while the latter has expectation $\bigO{\sqrt{D}}$. 
In the absence of optimism, \citet{mcmahan2014delay} obtain a bound comparable to \cref{odftrl_regret} for the special case of one-dimensional unconstrained online gradient descent.

In the absence of delay, \citet{Cutkosky19} introduces meta-algorithms for imbuing learning procedures with optimism while remaining robust to inaccurate hints; however, unlike \OFTRL and \SOOMD, 
the procedures of \citeauthor{Cutkosky19} require separate observation of $\tildeg_{t+1}$ and each $\g_t$, making them unsuitable for our delay-to-optimism reduction.

\subsection{Tuning Regularizers with Optimism and Delay} \label{sec:tuning}
The online learning algorithms introduced so far all include a regularization parameter $\lam$. In theory and in practice, these algorithms only achieve low regret if the regularization parameter $\lam$ is chosen appropriately.  In standard FTRL, for example, one such setting that achieves optimal regret is $\lam = \sqrt{\frac{\sum_{t=1}^T \dualnorm{\g_t}^2}{\sup_{\uu\in\uset} \reg(\uu)}}$. This choice, however, cannot be used in practice as it relies on knowledge of all future unobserved loss subgradients. To make use of online learning algorithms, the tuning parameter $\lam$ is often set using coarse upper bounds on, e.g., the maximum possible subgradient norm. 
However, these bounds are often very conservative and lead to poor real-world performance. 

In the following sections, we introduce two strategies for tuning 
regularization with optimism and delay.
\cref{sec:dorm_dormp} introduces the \DORM and \DORMP algorithms, variants of \ODFTRL and \DOOMD that are \textit{entirely tuning-free}. \cref{sec:adahedged} introduces the \AdaHedgeD algorithm, an adaptive variant of \ODFTRL that is \textit{self-tuning}; a sequence of regularization parameters $\lam_t$ are set automatically using new, tighter bounds on algorithm regret. All three algorithms achieve the minimax optimal regret rate under delay, support optimism, and have strong real-world performance as shown in \cref{sec:experiments}. 

\section{Tuning-free Learning with Optimism \hspace{\columnwidth} and Delay} \label{sec:dorm_dormp}
Regret matching (RM) \citep{blackwell1956analog,hart2000simple} and regret matching+ (RM+) \citep{tammelin2015solving} are online learning algorithms that have strong empirical performance. RM was developed to find correlated equilibria in two-player games and is commonly used to minimize regret over the simplex. RM+ is a modification of RM designed to accelerate convergence and used to effectively solve the game of Heads-up Limit Texas Hold'em poker~\citep{Bowling145}.  
RM and RM+ support neither optimistic hints nor delayed feedback, and known regret bounds have a suboptimal scaling with respect to the problem dimension $d$ \citep{cesa2006prediction,orabona2015optimal}. To extend these algorithms to the delayed and optimistic setting and recover the optimal regret rate, we introduce our generalizations, \emph{delayed optimistic regret matching} (\DORM) 
\begin{talign} 
    \label[name]{dorm}\tag{DORM}
\ww_{t+1} 
    &= \worth_{t+1}/\inner{\boldone}{\worth_{t+1}} 
    \qtext{for} \\
\worth_{t+1}
    &\defeq \max(\mathbf{0}, (\rr_{1:t-D} + \h_{t+1}) / \lam)^{q-1}
\end{talign}      
and \emph{delayed optimistic regret matching+} (\DORMP) 
\begin{talign}
\label[name]{dorm+}\tag{DORM+}
\ww_{t+1} 
    &= \worth_{t+1}/\inner{\boldone}{\worth_{t+1}} 
    \text{\ for \ } 
\h_0 
    = %
\worth_0 
    \defeq \boldzero, 
    \\
\worth_{t+1}
    &\defeq \max\big(\mathbf{0}, \worth_{t}^{p-1} +  (\rr_{t-D}+\h_{t+1} - \h_t)/\lam\big)^{q-1},  
\end{talign}
Each algorithm makes use of an {instantaneous regret} vector $\rr_{t} \defeq \boldone\inner{\g_{t}}{\ww_{t}} - \g_{t}$ that quantifies the relative performance of each expert with respect to the play $\ww_t$ and the linearized loss subgradient $\g_t$.
The updates also include a parameter $q \geq 2$ and its conjugate exponent $p = q/(q-1)$ that is set to recover the minimax optimal scaling of regret with the number of experts (see \cref{dorm_dorm+_regret}). 
We note that \DORM and \DORMP recover the standard RM and RM+ algorithms when $D=0$, $\lam=1$, $q=2$, and $\h_t = \boldzero, \ \forall t$. 

\subsection{Tuning-free Regret Bounds}
To bound the regret of the \DORM and \DORMP plays, we prove that \DORM is an instance of \ODFTRL and \DORMP is an instance of \DOOMD. This connection enables us to immediately provide regret guarantees for these regret-matching algorithms under delayed feedback and with optimism.
We first highlight a remarkable property of \DORM and \DORMP that is the basis of their tuning-free nature. Under mild conditions: 
\begin{quote}
    The normalized \DORM and \DORMP iterates $\ww_t$ are \emph{independent} of the choice of regularization parameter $\lam$. 
\end{quote}
\begin{lemma}[\DORM and \DORMP are independent of $\lam$]
\label{dorm_dorm+_lambda_independent}
If the subgradient $\g_t$ and hint $\h_{t+1}$ only depend on $\lam$ through $(\ww_s,\lam^{q-1}\worth_s,\g_{s-1},\h_{s})_{s\leq t}$
and $(\ww_s,\lam^{q-1}\worth_s,\g_{s},\h_{s})_{s\leq t}$ respectively, 
then the \DORM and \DORMP iterates $(\ww_t)_{t\geq 1}$ are independent of the choice of $\lambda > 0$.
\end{lemma}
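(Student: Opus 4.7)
The plan is to prove the claim by strong induction on $t$, jointly establishing that $\ww_t$, the rescaled iterate $\Worth_t \defeq \lambda^{q-1}\worth_t$, and (via the lemma's hypotheses) $\g_t$ and $\h_t$ are all independent of $\lambda > 0$. The reparameterization $\Worth_t$ is the central device: both updates have precisely enough positive homogeneity to absorb the $1/\lambda$ factor once we pass to $\Worth_t$. For \DORM, pulling $1/\lambda$ out of the max and raising to the $(q-1)$ power gives $\worth_{t+1} = \max(\boldzero, \rr_{1:t-D}+\h_{t+1})^{q-1}/\lambda^{q-1}$, so $\Worth_{t+1} = \max(\boldzero, \rr_{1:t-D}+\h_{t+1})^{q-1}$ carries no dependence on $\lambda$ whatsoever. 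The normalization step then yields $\ww_{t+1} = \worth_{t+1}/\inner{\boldone}{\worth_{t+1}} = \Worth_{t+1}/\inner{\boldone}{\Worth_{t+1}}$, which is automatically $\lambda$-invariant.

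For \DORMP the same argument works once we exploit the algebraic identity $(q-1)(p-1) = 1$ implied by $p = q/(q-1)$: substituting $\worth_t = \Worth_t/\lambda^{q-1}$ into the update gives $\worth_t^{p-1} = \Worth_t^{p-1}/\lambda^{(q-1)(p-1)} = \Worth_t^{p-1}/\lambda$, hence $\worth_t^{p-1} + (\rr_{t-D}+\h_{t+1}-\h_t)/\lambda = (\Worth_t^{p-1}+\rr_{t-D}+\h_{t+1}-\h_t)/\lambda$. Factoring $1/\lambda$ out of the max and through the outer $(q-1)$ power yields $\Worth_{t+1} = \max(\boldzero, \Worth_t^{p-1}+\rr_{t-D}+\h_{t+1}-\h_t)^{q-1}$, again $\lambda$-free, and normalization strips off any remaining positive scaling.

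The inductive scaffolding is then routine. In the base case, either $\worth_0 = \h_0 = \boldzero$ (for \DORMP) or the first iterate is built from $\h_1$ alone with an empty delayed regret sum (for \DORM), both $\lambda$-independent because the history is empty. For the inductive step, suppose $\ww_s, \Worth_s, \g_s, \h_s$ are $\lambda$-independent for all $s \leq t$. Then each instantaneous regret $\rr_s = \boldone\inner{\g_s}{\ww_s}-\g_s$ inherits $\lambda$-independence, and by the lemma's hypothesis $\h_{t+1}$ depends on $\lambda$ only through the tuple $(\ww_s, \lambda^{q-1}\worth_s, \g_{s-1}, \h_s)_{s \leq t} = (\ww_s, \Worth_s, \g_{s-1}, \h_s)_{s \leq t}$, so $\h_{t+1}$ is $\lambda$-independent as well. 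Plugging into the rewritten \DORM or \DORMP update establishes $\lambda$-independence of $\Worth_{t+1}$ and hence of $\ww_{t+1}$, and the hypothesis on $\g_{t+1}$ then closes the induction.

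The only genuine difficulty is bookkeeping: verifying that the homogeneity degrees match so that $\lambda^{q-1}$ factors out cleanly. In \DORM this is transparent, but in \DORMP it rests on the identity $(q-1)(p-1) = 1$, which is precisely what makes the nonlinear inner term $\worth_t^{p-1}$ rescale by exactly $1/\lambda$ rather than by some incompatible power of $\lambda$. Once this observation is made, the induction itself is mechanical.
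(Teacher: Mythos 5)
Your argument reproduces the paper's own proof almost verbatim: induction on $t$ showing that the rescaled iterate $\lambda^{q-1}\worth_t$ carries no $\lambda$ dependence, with the identity $(p-1)(q-1)=1$ supplying exactly the homogeneity needed to absorb the $1/\lambda$ factor in the \DORMP update, and $\ww_t \propto \lambda^{q-1}\worth_t$ closing the loop. (One tiny slip: the hypothesis for $\h_{t+1}$ lets it depend on $(\ww_s,\lambda^{q-1}\worth_s,\g_s,\h_s)_{s\leq t}$, with $\g_s$ rather than $\g_{s-1}$, but this does not affect the argument.)
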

\cref{dorm_dorm+_lambda_independent}, proved in \cref{proof_dorm_dorm+_lambda_independent}, implies that \DORM and \DORMP are \textit{automatically} optimally tuned with respect to $\lam$, even when run with a default value of $\lam = 1$. Hence, these algorithms are  tuning-free, a very appealing property for real-world deployments of online learning. %

To show that \DORM and \DORMP also achieve optimal regret scaling under delay, we connect them to \ODFTRL and \DOOMD operating on the nonnegative orthant with a special surrogate loss $\surrloss_{t}$ (see \cref{proof_dorm_is_odftrl_dorm+_is_doomd} for our proof):
\begin{lemma}[\DORM is \ODFTRL and \DORMP is \DOOMD]\label{dorm_is_odftrl_dorm+_is_doomd}
The \DORM and \DORMP iterates are proportional to 
\ODFTRL and \DOOMD iterates respectively with $\wset \defeq \orthantd$, $\psi(\worth) = \half \pnorm{\worth}^2$, and loss $\surrloss_{t}(\worth) = \inner{\worth}{-\rr_{t}}$. 
\end{lemma}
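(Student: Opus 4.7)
The plan is to apply the \ODFTRL and \DOOMD updates directly to the prescribed surrogate problem and verify in closed form that the resulting iterates agree with the \DORM and \DORMP formulas up to a positive scalar, so that $\ell_1$-normalization yields identical plays. Throughout, I treat the ``reward hint'' $\h_{t+1}$ of \DORM/\DORMP as the negation of the surrogate gradient hint, since $\grad \surrloss_t = -\rr_t$ while \DORM accumulates $\rr_{1:t-D}+\h_{t+1}$. The two computational ingredients I will use repeatedly are $\partial_j \tfrac{1}{2}\pnorm{\worth}^2 = \pnorm{\worth}^{2-p}\worth_j^{p-1}$ on the interior of $\orthantd$, and the identity $(q-1)(p-1)=1$ coming from the conjugate-exponent relation.

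For the \DORM claim, instantiating \ODFTRL with $\wset=\orthantd$, $\reg=\tfrac{1}{2}\pnorm{\cdot}^2$, and surrogate subgradients $-\rr_s$ produces the strictly convex problem $\min_{\worth\geq\boldzero}\inner{-\rr_{1:t-D}-\h_{t+1}}{\worth}+\tfrac{\lam}{2}\pnorm{\worth}^2$. KKT yields, coordinatewise, $\lam \pnorm{\worth_{t+1}}^{2-p}\worth_{t+1,j}^{p-1}=(\rr_{1:t-D,j}+\h_{t+1,j})_+$, and raising to the $(q-1)$-th power while absorbing the common factor $\pnorm{\worth_{t+1}}^{-(2-p)(q-1)}$ gives $\worth_{t+1}\propto \max(\boldzero,(\rr_{1:t-D}+\h_{t+1})/\lam)^{q-1}$, which is the \DORM formula.

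For \DORMP, let $\Worth_t$ denote the \DOOMD iterate (to be distinguished from the \DORMP iterate $\worth_t$). Expanding $\Breg_{\frac{\lam}{2}\pnorm{\cdot}^2}(\worth,\Worth_t)$ and applying KKT to the \DOOMD subproblem yields, coordinatewise,
\begin{talign*}
\pnorm{\Worth_{t+1}}^{2-p}\Worth_{t+1,j}^{p-1} = \bigl(\pnorm{\Worth_t}^{2-p}\Worth_{t,j}^{p-1} + (\rr_{t-D,j}+\h_{t+1,j}-\h_{t,j})/\lam\bigr)_+.
\end{talign*}
I then prove by induction on $t$ the invariant $\pnorm{\Worth_t}^{2-p}\Worth_{t,j}^{p-1}=\worth_{t,j}^{p-1}$ for every coordinate, which is equivalent to $\Worth_t = \pnorm{\worth_t}^{p-2}\worth_t$ and therefore implies $\Worth_t\propto\worth_t$. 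The base case $\Worth_0=\worth_0=\boldzero$ is immediate. For the induction step, substituting the invariant at time $t$ into the KKT identity turns the right-hand side into $(\worth_{t,j}^{p-1}+(\rr_{t-D,j}+\h_{t+1,j}-\h_{t,j})/\lam)_+$, which equals $\worth_{t+1,j}^{p-1}$ by the defining formula for \DORMP together with $(q-1)(p-1)=1$. This re-establishes the invariant at $t+1$ and concludes the proof.

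The main obstacle is recognizing that, unlike the \DORM case where proportionality can be read off a single KKT system, \DOOMD iterates are \emph{not} equal to \DORMP iterates whenever $p\neq 2$; they differ by the varying scalar $\pnorm{\worth_t}^{p-2}$, and so a naive comparison fails because the mirror-step ``anchor'' $\nabla\psi(\Worth_t)=\pnorm{\Worth_t}^{2-p}\Worth_t^{p-1}$ does not, at face value, match the additive form $\worth_t^{p-1}$ used by \DORMP. Finding the correct invariant $\pnorm{\Worth_t}^{2-p}\Worth_{t,j}^{p-1}=\worth_{t,j}^{p-1}$ is exactly what reconciles these expressions across the induction; once it is in hand, the remaining bookkeeping (complementary slackness on coordinates driven to zero by the $(\cdot)_+$ truncation, and conversion between the $p{-}1$ and $q{-}1$ powers) is routine.
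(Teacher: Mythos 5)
Your proof is correct and takes essentially the same route as the paper: a direct closed-form argmin for \ODFTRL, and an induction on $t$ for \DOOMD using the invariant $\Worth_t=\pnorm{\worth_t}^{p-2}\worth_t$. The only cosmetic difference is that you derive the coordinatewise optimality conditions via KKT rather than via the paper's Fenchel-conjugate identities for $\tfrac{1}{2}\pnorm{\cdot}^2$ on the orthant, and you state the invariant as $\pnorm{\Worth_t}^{2-p}\Worth_{t,j}^{p-1}=\worth_{t,j}^{p-1}$, which (using $\pnorm{\Worth_t}=\pnorm{\worth_t}^{p-1}$) is the same relation written coordinatewise.
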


\cref{dorm_is_odftrl_dorm+_is_doomd} enables the following optimally-tuned regret bounds for \DORM and \DORMP run with any choice of $\lam$: %
\begin{corollary}[\DORM and \DORMP regret]\label{dorm_dorm+_regret}
Under the assumptions of \cref{dorm_dorm+_lambda_independent}, 
for all $\uu \in \simplexd$ and any choice of $\lam > 0$, the \DORM and \DORMP iterates $\ww_t$ satisfy
\begin{align}
&\regret_T(\uu) 
    \leq
    \inf_{\lam > 0}
    \textstyle
        \frac{\lam}{2} \pnorm{\uu}^2
    + \frac{1}{\lam(p-1)}\sum_{t=1}^T\bb_{t,q} \\
    &=
    \textstyle
        \sqrt{\frac{\pnorm{\uu}^2}{2(p-1)}\sum_{t=1}^T \bb_{t,q}} 
    \leq 
    \textstyle
        \sqrt{\frac{d^{2/q}(q-1)}{2}\sum_{t=1}^T\bb_{t,\infty}}
\end{align}
where $\h_{T+1} \defeq \rr_{T-D+1:T}$ and, for each $c \in [2,\infty]$,
\begin{talign}
\bb_{t,c}
    \stackrel{(\tiny\DORM)}{=}
    &\ \huber(\staticnorm{\h_t - \sum_{s=t-D}^t\rr_s}_c,\staticnorm{\rr_{t}}_c) \qtext{and} \\
\bb_{t,c}
    \stackrel{(\tiny\DORMP)}{=}
    &\ \huber(\staticnorm{\h_t - \sum_{s=t-D}^t\rr_s}_c^2, \\
    &\hspace{.9cm} \staticnorm{\rr_{t-D}+\h_{t+1}-\h_t}_c).
\end{talign}
If, in addition, $q = \argmin_{q' \geq 2} d^{2/q'}(q'-1)$, then
$\regret_T(\uu) \leq \sqrt{(2\log_2(d)-1)\sum_{t=1}^T\bb_{t,\infty}}$.
\end{corollary}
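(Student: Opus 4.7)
The plan is to chain together the two reductions already established in the paper and then optimize carefully over the remaining free parameters. \cref{dorm_is_odftrl_dorm+_is_doomd} identifies the unnormalized \DORM and \DORMP iterates $\worth_t$ with \ODFTRL and \DOOMD iterates on the orthant $\orthantd$ under the regularizer $\reg(\worth)=\half\pnorm{\worth}^2$ and the linearized surrogate loss $\surrloss_t(\worth)=-\inner{\worth}{\rr_t}$, so that the surrogate subgradient is $-\rr_t$ and the surrogate optimistic hint is $-\h_t$. Since $\reg$ is classically $(p-1)$-strongly convex with respect to $\pnorm{\cdot}$, whose dual norm is $\qnorm{\cdot}$, rescaling $\reg$ to be $1$-strongly convex and invoking \cref{odftrl_regret,doomd_regret} yields, for every $\uu'\in\orthantd$ and $\lam>0$,
\[
\textstyle\sum_{t=1}^T \surrloss_t(\worth_t) - \surrloss_t(\uu') \,\leq\, \frac{\lam}{2}\pnorm{\uu'}^2 + \frac{1}{\lam(p-1)}\sum_{t=1}^T \bb_{t,q},
\]
where $\bb_{t,q}$ takes the \DORM form when applied to \ODFTRL and the \DORMP form when applied to \DOOMD---the two sign flips (both the surrogate subgradient and the surrogate hint flip sign) combine to reproduce the expressions stated in the corollary, and the boundary convention $\h_{T+1}\defeq\rr_{T-D+1:T}$ translates to the $\h_{T+1}\defeq\g_{T-D+1:T}$ convention required by \cref{doomd_regret}.

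Next I convert the surrogate-loss bound into a true-loss regret bound on $\simplexd$. By convexity of each $\loss_t$, $\regret_T(\uu) \leq \sum_t\inner{\g_t}{\ww_t-\uu}$, and a direct calculation using $\rr_t=\boldone\inner{\g_t}{\ww_t}-\g_t$ together with $\inner{\uu}{\boldone}=1$ for $\uu\in\simplexd$ gives $\inner{\g_t}{\ww_t-\uu}=\inner{\uu}{\rr_t}$. On the surrogate side, $\inner{\worth_t}{\rr_t}=\inner{\boldone}{\worth_t}\inner{\ww_t}{\rr_t}$ vanishes because $\inner{\ww_t}{\rr_t}=0$ is built into the definition of $\rr_t$. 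Plugging $\uu'=\uu\in\simplexd\subset\orthantd$ into the surrogate-regret inequality thus yields $\regret_T(\uu)\leq\frac{\lam}{2}\pnorm{\uu}^2 + \frac{1}{\lam(p-1)}\sum_t\bb_{t,q}$, which is the prebound appearing inside the corollary's $\inf_\lam$.

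Taking the infimum over $\lam>0$ is legitimate because \cref{dorm_dorm+_lambda_independent} ensures the iterates (and hence the left-hand side) do not depend on $\lam$; standard AM--GM optimization then yields the middle expression. For the final inequality I substitute $\pnorm{\uu}^2\leq 1$ (a consequence of $\pnorm{\uu}^p\leq\onenorm{\uu}=1$ on the simplex) and $1/(p-1)=q-1$, and I lift the Huber terms from $\qnorm{\cdot}$ to $\infnorm{\cdot}$ using the elementary inequality $\qnorm{\v}\leq d^{1/q}\infnorm{\v}$ together with two properties of $\huber(x,y)$: monotonicity in $|x|$ and in $|y|$, and the quadratic scaling identity $\huber(kx,ky)=k^2\huber(x,y)$ for $k>0$. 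These together give $\bb_{t,q}\leq d^{2/q}\bb_{t,\infty}$. Finally, setting $q=2\log_2 d$ (which satisfies $d^{2/q}=2$ and approximately minimizes $d^{2/q}(q-1)$ subject to $q\geq 2$) collapses the prefactor to the claimed $2\log_2 d - 1$.

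I expect the main obstacle to be bookkeeping rather than any deep technical hurdle: carefully handling the strong-convexity rescaling (which inflates the Huber term by the $1/(p-1)$ factor), tracking the sign flips from defining $\surrloss_t$ via $-\rr_t$, confirming that the two distinct \DORM and \DORMP Huber expressions arise from \cref{odftrl_regret,doomd_regret} once the surrogate optimistic hint $-\h_{t+1}$ is substituted, and verifying that the boundary hint convention $\h_{T+1}\defeq\rr_{T-D+1:T}$ stated in the corollary is precisely the surrogate-side analogue of the $\h_{T+1}\defeq\g_{T-D+1:T}$ convention required by \cref{doomd_regret}.
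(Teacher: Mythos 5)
Your proposal is correct and follows the same route as the paper's own proof: identify the unnormalized iterates with \ODFTRL/\DOOMD surrogate iterates via \cref{dorm_is_odftrl_dorm+_is_doomd}, invoke \cref{odftrl_regret,doomd_regret} with the $\sqrt{p-1}\pnorm{\cdot}$ strong-convexity norm (which is what produces the $\tfrac{1}{p-1}$ factor), note that $\surrloss_t$ vanishes on any positive multiple of $\ww_t$ so the surrogate regret equals the linearized true regret, use \cref{dorm_dorm+_lambda_independent} to justify the $\inf_\lam$, and finish with the norm equivalences $\qnorm{\cdot}\le d^{1/q}\infnorm{\cdot}$, $\pnorm{\uu}\le 1$, the Huber scaling/monotonicity, and the $q'=2\log_2 d$ substitution. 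The only minor point of informality is the phrase ``rescaling $\reg$ to be $1$-strongly convex''---the paper keeps $\reg=\tfrac12\pnorm{\cdot}^2$ fixed and absorbs the $p-1$ factor into the choice of norm, which is cleaner for tracking the $\lam\reg(\uu)$ term---but your displayed bound is the right one, so this is a framing difference rather than a gap.
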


\cref{dorm_dorm+_regret}, proved in \cref{proof_dorm_dorm+_regret}, suggests a natural hinting strategy for reducing the regret of \DORM and \DORMP: predict the sum of unobserved instantaneous regrets $\sum_{s=t-D}^t\rr_s$. We explore this strategy empirically in \cref{sec:experiments}. \cref{dorm_dorm+_regret} also highlights the value of the $q$ parameter in \DORM and \DORMP: using the easily computed value $q = \argmin_{q' \geq 2} d^{2/q'}(q'-1)$ yields the minimax optimal $\sqrt{\log_2(d)}$ dependence of regret on dimension \citep{cesa2006prediction,orabona2015optimal}. By \cref{dorm_is_odftrl_dorm+_is_doomd}, setting $q$ in this way is equivalent to selecting a robust $\half\pnorm{\cdot}^2$ regularizer \cite{Gentile03} for the underlying \ODFTRL and \DOOMD problems.

\para{Related work}
Without delay, \citet{Farina_Kroer_Sandholm_2021} independently developed optimistic versions of RM and RM+ by reducing them to \OFTRL and a two-step variant of optimistic OMD \cref{eq:two-step-oomd}. Unlike \SOOMD, this two-step optimistic OMD requires separate observation of $\tildeg_{t+1}$ and $\g_t$, making it unsuitable for our delay-as-optimism reduction and resulting in a different algorithm from \DORMP even when $D=0$.  In addition, their regret bounds and prior bounds for RM and RM+ (special cases of \DORM and \DORMP with $q=2$) have suboptimal regret when the dimension $d$ is large \citep{Bowling145,zinkevich2007regret}.

\section{Self-tuned Learning with Optimism  \hspace{\columnwidth} and Delay}  \label{sec:adahedged}

In this section, we analyze an adaptive version of \ODFTRL 
with time-varying regularization $\lam_t \reg$ and develop strategies for setting $\lam_t$ appropriately in the presence of optimism and delay.  
We begin with a new general regret analysis of optimistic delayed \emph{adaptive} FTRL (\ODAFTRL)
\begin{align}\label[name]{odaftrl}\tag{ODAFTRL}
    \ww_{t+1} = \argmin_{\ww\in\wset} \,\inner{\g_{1:t-D} + \h_{t+1}}{\ww} + \lam_{t+1} \reg(\ww)
\end{align}
where $\h_{t+1} \in \R^d$ is an arbitrary hint vector revealed before  $\ww_{t+1}$ is generated, $\reg$ is  $1$-strongly convex with respect to a norm $\norm{\cdot}$, and $\lam_t \geq 0$ is a regularization parameter.

\begin{theorem}[\ODAFTRL regret]\label{odaftrl_regret}
If $\psi$ is nonnegative and $\lam_t$ is non-decreasing in $t$, then,  $\forall\uu \in \wset$, the \ODAFTRL iterates $\ww_t$ satisfy
\begin{talign}
&\regret_T(\uu) 
    \leq 
        \lambda_{T}\reg(\uu) + 
        \sum_{t=1}^T 
    \min(\frac{ \bbftrl{t}}{\lam_{t}}, \abftrl{t}) \qtext{with} \\\label{abf_def}
    &\quad\bbftrl{t} 
        \defeq \huber(\staticnorm{\h_t - \sum_{s=t-D}^t\g_s}_*, \dualnorm{\g_{t}})
        \qtext{and}
        \\
    &\quad\abftrl{t} \defeq \diam{\wset} \min\big(\staticnorm{\h_t - \sum_{s=t-D}^t\g_s}_*, \dualnorm{\g_{t}} \big). 
\end{talign}
\end{theorem}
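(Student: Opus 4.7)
The plan is to generalize the delay-as-optimism reduction of \cref{odftrl_is_oftrl} to time-varying regularization and then prove an adaptive analogue of \cref{oftrl_regret} whose per-round penalty takes the minimum of the sharp Huber bound and a crude diameter bound. For the reduction, I would set $\tildeg_{t+1} \defeq \h_{t+1} - \sum_{s=t-D+1}^t \g_s$ and observe that the \ODAFTRL update at time $t+1$ depends on $\g_{1:t}$ and $\tildeg_{t+1}$ only through the observable sum $\g_{1:t}+\tildeg_{t+1}=\g_{1:t-D}+\h_{t+1}$. Hence \ODAFTRL is exactly optimistic adaptive FTRL with regularizer $\lam_{t+1}\reg$ and this hint, and $\dualnorm{\g_t-\tildeg_t}=\staticnorm{\h_t-\sum_{s=t-D}^t \g_s}_*$. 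It therefore suffices to prove an optimistic-adaptive-FTRL regret bound with per-round penalty $\min(\huber(\dualnorm{\g_t-\tildeg_t},\dualnorm{\g_t})/\lam_t,\, \diam{\wset}\min(\dualnorm{\g_t-\tildeg_t},\dualnorm{\g_t}))$.

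For the adaptive bound, I would follow the same strong-FTL/BTL decomposition used to prove \cref{oftrl_regret}. Introducing the auxiliary ``cheating'' play $\ww_t^+ \defeq \argmin_{\ww\in\wset}\inner{\g_{1:t}}{\ww}+\lam_t\reg(\ww)$ and the standard BTL telescoping (where the monotonicity of $\lam_t$ together with $\reg\ge 0$ lets the $\lam$-dependent telescoping sum collapse into the single boundary term $\lam_T\reg(\uu)$) yields
\begin{talign*}
\regret_T(\uu) \le \lam_T\reg(\uu) + \sum_{t=1}^T \big(\inner{\g_t}{\ww_t-\ww_t^+} - \inner{\tildeg_t}{\ww_t-\ww_t^+}\big).
\end{talign*}
Each per-round term admits two bounds. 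First, $\lam_t$-strong convexity of the round-$t$ optimistic adaptive FTRL objective gives $\norm{\ww_t-\ww_t^+}\le \dualnorm{\g_t-\tildeg_t}/\lam_t$, and applying the cancellation trick used to derive $\huber$ in \cref{oftrl_regret} yields the penalty $\huber(\dualnorm{\g_t-\tildeg_t},\dualnorm{\g_t})/\lam_t=\bbftrl{t}/\lam_t$. Second, the trivial $\norm{\ww_t-\ww_t^+}\le\diam{\wset}$ paired with the dual-norm bound on the inner product (using either $\dualnorm{\g_t-\tildeg_t}$, or $\dualnorm{\g_t}$ obtained by comparing $\ww_t$ against $\ww_t^+$ as a competitor in the round-$t$ leader problem) gives $\diam{\wset}\min(\dualnorm{\g_t-\tildeg_t},\dualnorm{\g_t})=\abftrl{t}$. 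Taking the pointwise minimum produces the stated bound, and substituting the delay-induced hint from the reduction recovers the theorem.

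The main obstacle is making the adaptive FTL/BTL decomposition compatible with the sharper Huber penalty. With constant $\lam$, the decomposition telescopes cleanly; with adaptive $\lam_t$, extra boundary terms of the form $(\lam_{t+1}-\lam_t)\reg(\ww_{t+1}^+)$ arise, and the assumptions that $\lam_t$ is nondecreasing and $\reg\ge 0$ are precisely what is needed to absorb these into the final $\lam_T\reg(\uu)$ term. The Huber sharpening itself is local at round $t$, using only strong convexity with modulus $\lam_t$, so it transfers from \cref{oftrl_regret} essentially verbatim once the adaptive telescoping is controlled; the only genuinely new ingredient beyond the non-adaptive proof is the diameter-based alternative, which is an elementary observation once $\ww_t^+$ has been introduced.
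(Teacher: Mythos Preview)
Your reduction step is correct and matches the paper exactly: \ODAFTRL is \OAFTRL with $\tildeg_{t+1}=\h_{t+1}-\g_{t-D+1:t}$, and the paper's proof of \cref{odaftrl_regret} is literally a one-line appeal to the \OAFTRL bound (\cref{oaftrl_regret}) after this substitution. The adaptive telescoping you describe is also right: monotone $\lam_t$ and $\reg\ge 0$ are precisely what collapse the boundary terms.

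The gap is in your derivation of the $\huber$ penalty from the displayed per-round term $\inner{\g_t-\tildeg_t}{\ww_t-\ww_t^+}$. That inner product (or even the tighter $\obj_{t+1}(\ww_t,\lam_t)-\obj_{t+1}(\ww_t^+,\lam_t)$ it upper-bounds) corresponds to a \emph{single fixed} choice of auxiliary hint, and from it you can obtain at best $\tfrac{1}{2\lam_t}\dualnorm{\g_t-\tildeg_t}^2$. A separate ``be-the-leader'' decomposition gives $\inner{\g_t}{\ww_t-\ww_t^+}\le \tfrac{1}{\lam_t}\dualnorm{\g_t}\dualnorm{\g_t-\tildeg_t}$. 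Taking the min of these two endpoint bounds still yields only $\tfrac{1}{\lam_t}\min(\tfrac12\dualnorm{\g_t-\tildeg_t}^2,\dualnorm{\g_t}\dualnorm{\g_t-\tildeg_t})$, which is \emph{strictly larger} than $\tfrac{1}{\lam_t}\huber(\dualnorm{\g_t-\tildeg_t},\dualnorm{\g_t})$ whenever $\dualnorm{\g_t}<\dualnorm{\g_t-\tildeg_t}$. In the paper's proof of \cref{oaftrl_regret}, the $\huber$ term arises from a genuinely additional degree of freedom: one introduces an \emph{arbitrary} auxiliary hint $\atildeg_t$ with associated iterate $\aww_t$, decomposes regret into drift $\inner{\g_t}{\ww_t-\aww_t}$ plus auxiliary regret $\obj_{t+1}(\aww_t,\lam_t)-\obj_{t+1}(\bar\ww_t,\lam_t)$, and then \emph{optimizes} over $\atildeg_t$ along the segment $\g_t+c(\tildeg_t-\g_t)$; the minimizer $c_*=\min(\dualnorm{\g_t}/\dualnorm{\tildeg_t-\g_t},1)$ is what produces the $\huber$ expression. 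Your sketch never introduces this free parameter, so the ``cancellation trick'' you invoke cannot be applied as written. (The diameter alternative $\abftrl{t}$ is fine: the two endpoint decompositions already deliver $\diam{\wset}\dualnorm{\g_t-\tildeg_t}$ and $\diam{\wset}\dualnorm{\g_t}$.)
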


The proof of this result in \cref{proof_odaftrl_regret} builds on a new regret bound for undelayed optimistic adaptive FTRL (\OAFTRL).
In the absence of delay ($D=0$), \cref{odaftrl_regret} strictly improves  existing regret bounds \citep{rakhlin2013online,mohri2016accelerating,joulani2017modular} for \OAFTRL by providing tighter guarantees whenever the hinting error $\dualnorm{\h_t - \sum_{s=t-D}^t\g_t}$ is larger than the subgradient magnitude $\dualnorm{\g_t}$.
In the presence of delay, \cref{odaftrl_regret} benefits both from robustness to hinting error in the worst case and the ability to exploit accurate hints in the best case.
The bounded-domain factors $\abftrl{t}$ strengthen both standard \OAFTRL regret bounds and the concurrent bound of \citet[Thm.~1]{hsieh2020multi} when $\diam{\wset}$ is small and will enable us to design practical $\lam_t$-tuning strategies under delay without any prior knowledge of unobserved subgradients. We now turn to these self-tuning protocols.

\subsection{Conservative Tuning with Delayed Upper Bound}  \label{sec:upper-bound-lam}
Setting aside the $\ab_{t,F}$ bounded-domain factors in \cref{odaftrl_regret} for now, the adaptive sequence $\lam_{t} = \sqrt{\frac{\sum_{s=1}^{t} \bb_{s,F}}{\sup_{\uu\in\uset}\reg(\uu)}}$ is known to be a near-optimal minimizer of the \ODAFTRL regret bound \citep[Lemma 1]{mcmahan2017survey}.  However, this value is unobservable at time $t$. A common strategy is to play the conservative value $\lam_t = \sqrt{\frac{(D+1)B_0+\sum_{s=1}^{t-D-1} \bb_{s,F}}{\sup_{\uu\in\uset}\reg(\uu)}}$, where $B_0$ is a uniform upper bound on the unobserved $\bb_{s,F}$ terms \citep{joulani2016delay, mcmahan2014delay}.
In practice, this requires computing an \textit{a priori} upper bound on any subgradient norm that could possibly arise and often leads to extreme over-regularization (see \cref{sec:experiments}).  

As a preliminary step towards fully adaptive settings of $\lam_t$, we analyze in \cref{proof_dub_regret} a new \emph{delayed upper bound} (\DUB) tuning  strategy which relies only on observed $\bbftrl{s}$ terms and does not require upper bounds for future losses.  

\begin{theorem}[\DUB regret] \label{dub_regret}
Fix $\alpha > 0$, and, 
for $\abftrl{t},\bbftrl{t}$ as in \cref{abf_def}, 
consider the \emph{delayed upper bound} (\DUB) sequence 
\begin{talign}  \label[name]{dub}\tag{DUB}
    \lambda_{t+1}
        &= \frac{2}{\alpha} \max_{j\leq t-D-1} \abftrl{j-D+1:j} \\
        &+\frac{1}{\alpha} \sqrt{\sum_{i=1}^{t-D} \abftrl{i}^2 +2  \alpha \bbftrl{i}}.
\end{talign}
If $\psi$ is nonnegative, then, for all $\uu \in \wset$, the \ODAFTRL iterates $\ww_t$ satisfy
\begin{talign}
&\regret_T(\uu) 
    \leq \big(\frac{\reg(\uu)}{\alpha}+1\big) \\
&\big(2\max_{t\in[T]} \abftrl{t-D:t-1} +\sqrt{\sum_{t=1}^{T} \abftrl{t}^2 +2  \alpha \bbftrl{t}}\big).
\end{talign}
\end{theorem}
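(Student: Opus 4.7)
The plan is to apply \cref{odaftrl_regret} and then bound the resulting adaptive sum via a delay-aware potential argument tailored to the DUB sequence. I would first verify that the DUB $\lambda_t$ is non-decreasing in $t$ (both the running maximum and the cumulative sum under the square root grow with $t$), which is the sole hypothesis of \cref{odaftrl_regret}; invoking that theorem yields
\begin{talign*}
\regret_T(\uu) \leq \lambda_T \psi(\uu) + \sum_{t=1}^T \min(\bbftrl{t}/\lambda_t, \abftrl{t}).
\end{talign*}
Setting $R_T \defeq 2\max_{t\in[T]}\abftrl{t-D:t-1} + \sqrt{\sum_{t\leq T}(\abftrl{t}^2 + 2\alpha\bbftrl{t})}$, the claimed bound $(\psi(\uu)/\alpha + 1)R_T$ reduces to the two pointwise estimates $\alpha\lambda_T \leq R_T$ and $\sum_{t=1}^T \min(\bbftrl{t}/\lambda_t,\abftrl{t}) \leq R_T$.

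The first estimate is routine: the substitution $t = j+1$ identifies $\abftrl{j-D+1:j}$ with $\abftrl{t-D:t-1}$, the DUB max ranges over $j \leq T-D-2$ (a subset of the $[T]$ indexing in $R_T$), and $S_{T-D-1} \leq S_T$ is trivial, where $S_t \defeq \sum_{s\leq t}(\abftrl{s}^2 + 2\alpha\bbftrl{s})$. The second estimate is the heart of the proof, and I would reduce it to a purely real-variable lemma: for any non-negative sequences $(a_t),(b_t)$ and the $\lambda_t$ built from them by the DUB formula, $\sum_{t=1}^T \min(b_t/\lambda_t, a_t) \leq 2\max_{t\in[T]} a_{t-D:t-1} + \sqrt{\sum_t(a_t^2 + 2\alpha b_t)}$. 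The proof of this lemma uses the potential $\Phi_t \defeq \sqrt{S_t}/\alpha$ together with the concavity estimate $\Phi_t - \Phi_{t-1} \geq (a_t^2 + 2\alpha b_t)/(2\alpha\sqrt{S_t})$, which telescopes to $\alpha\Phi_T \leq \sqrt{S_T}$ and supplies the $\sqrt{S_T}$ contribution to the bound. At each $t$ where the square-root piece of $\lambda_t$ alone dominates $b_t/a_t$, the summand $\min(b_t/\lambda_t, a_t) = b_t/\lambda_t$ is charged against this potential increment; at each remaining $t$, where the $(D{+}1)$-step delay makes $\lambda_t$ undershoot the ``ideal'' $\sqrt{S_t}/\alpha$, the summand is at most $a_t$ and is charged against the block-max term $(2/\alpha)\max_{u\leq t-D-1} a_{u-D:u-1}$ in $\lambda_t$---precisely the quantity doubled in $R_T$.

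The main obstacle is the delay bookkeeping needed to show that these residual $a_t$ contributions sum to at most $2\max_{t\in[T]} a_{t-D:t-1}$ without double-counting. Between two consecutive jumps of the running block-max, the lag in $\lambda_t$ can absorb at most one fresh block of $D$ consecutive $a$-terms, so each jump of the max pays for one block, and the factor of $2$ in front of the max covers both the initial block (accrued before the max first becomes positive) and the running lag thereafter. Finally, the boundary indices $t \leq D+1$, where $\lambda_t = 0$ forces $\min(b_t/\lambda_t, a_t) = a_t$, require a short separate argument via the decomposition $\sum_{t\leq D+1} a_t = a_{1:D} + a_{D+1}$, noting that $a_{1:D}$ is itself the block appearing in $\max_{t\in[T]}a_{t-D:t-1}$ at $t = D+1$ while the single term $a_{D+1}$ is absorbed by $\sqrt{S_T}$. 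Combining the two pointwise estimates yields the advertised regret bound.
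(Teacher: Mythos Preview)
Your reduction to \cref{odaftrl_regret} and the first pointwise estimate $\alpha\lambda_T \leq R_T$ are both correct and match the paper. The genuine gap is in your argument for the second estimate $\sum_{t=1}^T \min(b_t/\lambda_t,a_t) \leq R_T$.

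The potential argument you sketch does not survive the delay. Your concavity bound gives $\alpha(\Phi_t-\Phi_{t-1}) \geq (a_t^2+2\alpha b_t)/(2\sqrt{S_t}) \geq \alpha b_t/\sqrt{S_t}$, but in the ``good'' case you only have $b_t/\lambda_t \leq \alpha b_t/\sqrt{S_{t-D-1}}$, and since $\sqrt{S_{t-D-1}} \leq \sqrt{S_t}$ the inequality points the wrong way: the summand can exceed the potential increment by an arbitrary factor when the $D{+}1$ most recent contributions to $S_t$ are large. (Your telescoping remark ``$\alpha\Phi_T \leq \sqrt{S_T}$'' is also vacuous, since $\alpha\Phi_T = \sqrt{S_T}$ by definition.) In the ``bad'' case, the assertion that the residual $a_t$'s sum to at most $2\max_t a_{t-D:t-1}$ is unsupported: nothing prevents the bad case from occurring at many more than $D$ times, and the block-max term in $\lambda_t$ does not, on its own, account for the gap $\sqrt{S_t}-\sqrt{S_{t-D-1}}$.

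The paper closes this gap with a different mechanism: it sets $\Delta_t \defeq \sum_{i\leq t}\min(b_i/\lambda_i,a_i)$ and proves $\Delta_t \leq \alpha\lambda_{t+D+1}$ by induction. The key step is the quadratic identity
\[
\Delta_{i+1}^2-\Delta_i^2
= 2\Delta_{i-D}\min(\tfrac{b_{i+1}}{\lambda_{i+1}},a_{i+1})
+ 2(\Delta_i-\Delta_{i-D})\min(\tfrac{b_{i+1}}{\lambda_{i+1}},a_{i+1})
+ \min(\tfrac{b_{i+1}}{\lambda_{i+1}},a_{i+1})^2,
\]
where the inductive hypothesis $\Delta_{i-D}\leq \alpha\lambda_{i+1}$ turns the first term into $\leq 2\alpha b_{i+1}$, the middle term is $\leq 2a_{i-D+1:i}\min(\cdot)$, and the last is $\leq a_{i+1}^2$. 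Summing and solving the resulting quadratic in $\Delta_{t+1}$ yields exactly $R_T$. This bootstrap is what makes the delayed denominator $\sqrt{S_{t-D-1}}$ harmless, and it is the idea missing from your sketch.
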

As desired, the \DUB setting of $\lam_t$ depends only on previously observed $\abftrl{t}$ and $\bbftrl{t}$ terms and achieves optimal regret scaling with the delay period $D$. However, the terms $\abftrl{t}$, $\bbftrl{t}$ are themselves potentially loose upper bounds for the instantaneous regret at time $t$. In the following section, we show how the \DUB regularization setting can be refined further to produce \AdaHedgeD adaptive regularization.

\subsection{Refined Tuning with AdaHedgeD}  \label{sec:adahedge-lam}
As noted by \citet{erven2011adaptive, rooij14a, Orabona2019AMI}, the effectiveness of an adaptive regularization setting $\lam_t$ that uses an upper bound on regret (such as $\bbftrl{t}$) relies heavily on the tightness of that bound. In practice, we want to set $\lam_t$ using as tight a bound as possible.  Our next result introduces a new tuning sequence that can be used with delayed feedback and is inspired by the popular AdaHedge algorithm \citep{erven2011adaptive}. It makes use of the tightened regret analysis underlying \cref{odaftrl_regret} to enable tighter settings of $\lam_t$ compared to \DUB, while still controlling algorithm regret (see proof in \cref{proof_adahedged_regret}).

\begin{theorem}[\AdaHedgeD regret] \label{adahedged_regret}
Fix $\alpha > 0$, and consider the
\emph{delayed AdaHedge-style} (\AdaHedgeD) sequence
\begin{talign}
\label[name]{adahedged}\tag{AdaHedgeD}
&\textstyle\lam_{t+1} 
    = \frac{1}{\alpha}\sum_{s=1}^{t-D} \delta_s 
    \quad\qtext{for}\\
&\textstyle\delta_t 
    \defeq \min(\obj_{t+1}(\ww_t,\lam_t) - \obj_{t+1}(\bar\ww_t,\lam_t),\ \ \inner{\g_t}{\ww_t - \bar\ww_t}, \\
    &\qquad\qquad\obj_{t+1}(\hat{\ww}_t,\lam_t) -    \obj_{t+1}(\bar\ww_t,\lam_t)
    +
    \inner{\g_t}{\ww_t - \hat{\ww}_t} )_+ \\
&\textstyle\text{with\quad}  \label{def-deltat}
\bar\ww_t 
    \defeq \argmin_{\ww\in\wset} \obj_{t+1}(\ww,\lam_t), \\
&\quad\quad\ \ \ \hat{\ww}_t
    \defeq \argmin_{\ww\in\wset}\textstyle
    \obj_{t+1}(\ww,\lam_t)\ + \\ 
&\hspace{1.8cm}\min(\frac{\dualnorm{\g_t}}{\dualnorm{\h_{t} - \g_{t-D:t}}},1)\inner{\h_{t} - \g_{t-D:t}}{\ww}, \\
&\textstyle\text{and\quad}
\obj_{t+1}(\ww,\lam_t) 
    \defeq \lam_t \reg(\ww) + \inner{\g_{1:t}}{\ww}.
\end{talign}
If $\psi$ is nonnegative, then, for all $\uu \in \wset$, the \ODAFTRL iterates satisfy
\begin{talign}
&\regret_T(\uu) 
    \leq \big( \frac{\reg(\uu)}{\alpha} + 1 \big) \\
& \big(2\max_{t \in [T]} \abftrl{t-D:t-1}
            + \sqrt{\sum_{t=1}^{T} \abftrl{t}^2 + 2 \alpha \bbftrl{t}}  \big).
\end{talign}
\end{theorem}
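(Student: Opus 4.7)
The plan is to mimic the classical AdaHedge self-tuning argument while accommodating two new features: optimistic hints and delay. I will build on the refined single-round drift bound that already underlies \cref{odaftrl_regret}, rather than applying that theorem as a black box.

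First, I would show that each of the three quantities inside the $\min$ defining $\delta_t$ is a valid upper bound on the per-round ``FTRL drift'' $\obj_{t+1}(\ww_t,\lam_t) - \obj_{t+1}(\bar\ww_t,\lam_t)$ which, together with standard $\inner{\g_t}{\ww_{t+1}-\ww_t}$-type terms controlled in the proof of \cref{odaftrl_regret}, governs the instantaneous regret. The first quantity is this drift verbatim; the second follows from the classical mixability-gap inequality $\obj_{t+1}(\ww_t,\lam_t)-\obj_{t+1}(\bar\ww_t,\lam_t)\leq \inner{\g_t}{\ww_t-\bar\ww_t}$, a consequence of the optimality of $\bar\ww_t$ and nonnegativity of $\Breg_{\lam_t\reg}$; the third follows by adding and subtracting a scaled copy of the hint $\h_t-\g_{t-D:t}$ inside $\obj_{t+1}(\cdot,\lam_t)$ and invoking the optimality of $\hat\ww_t$. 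Since $\delta_t\geq 0$ makes $(\cdot)_+$ redundant, this step gives $\regret_T(\uu) \leq \lam_T\reg(\uu) + \sum_{t=1}^T \delta_t$.

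Next, I would relate $\delta_t$ to the tight quantities $\bbftrl{t}$ and $\abftrl{t}$ of \cref{odaftrl_regret}. Applying $1$-strong convexity of $\reg$ to the third $\min$-term, with the scaling factor $\min(\dualnorm{\g_t}/\dualnorm{\h_t-\g_{t-D:t}},1)$ chosen precisely to trade off the two sides of the Huber penalty, yields $\delta_t\lam_t \leq \bbftrl{t}$. Bounding $|\inner{\g_t}{\ww_t-\bar\ww_t}|\leq\diam{\wset}\dualnorm{\g_t}$ in the second term and its hint analogue in the third yields $\delta_t\leq\abftrl{t}$.

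Finally, I would carry out a delayed AdaHedge-style potential argument. Letting $L_t\defeq\sum_{s=1}^t\delta_s$ so that $\lam_{t+1}=L_{t-D}/\alpha$ and hence $L_{t-1}=\alpha\lam_{t+D}\leq \alpha\lam_t + \sum_{s=t-D}^{t-1}\delta_s$, telescoping gives
$$L_T^2 \;=\; \sum_{t=1}^T \delta_t\,(2L_{t-1}+\delta_t) \;\leq\; 2\alpha\sum_{t=1}^T\delta_t\lam_t \;+\; 2\sum_{t=1}^T\delta_t\!\!\sum_{s=t-D}^{t-1}\!\!\delta_s \;+\; \sum_{t=1}^T\delta_t^2.$$
Plugging in $\delta_t\lam_t\leq\bbftrl{t}$ and $\delta_t\leq\abftrl{t}$, and absorbing the delay-window cross term into $\max_t\abftrl{t-D:t-1}\cdot L_T$ as in the proof of \cref{dub_regret}, I expect to arrive at a quadratic inequality $L_T^2 \leq 2\max_t\abftrl{t-D:t-1}\,L_T + \sum_t\abftrl{t}^2 + 2\alpha\sum_t\bbftrl{t}$, whose solution bounds $L_T$ by $2\max_t\abftrl{t-D:t-1}+\sqrt{\sum_t\abftrl{t}^2+2\alpha\sum_t\bbftrl{t}}$. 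Substituting $\lam_T\leq L_T/\alpha$ into $\regret_T(\uu)\leq\lam_T\reg(\uu)+L_T$ yields $(\reg(\uu)/\alpha+1)\,L_T$ and closes the argument.

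The hard step will be the first: checking that the third, hint-involving entry in the $\min$ really dominates the single-round drift and does so with the exact scaling that produces the Huber $\min$ form (not a looser squared error). A secondary technical hurdle is handling the delay-induced cross sum $\sum_t\delta_t\sum_{s=t-D}^{t-1}\delta_s$ cleanly enough to preserve the stated constants.
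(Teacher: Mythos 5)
Your high-level plan is the same as the paper's: (i) establish $\regret_T(\uu)\leq\lam_T\reg(\uu)+\sum_{t=1}^T\delta_t$; (ii) extract the two bounds $\lam_t\delta_t\leq\bbftrl{t}$ and $\delta_t\leq\abftrl{t}$; (iii) run a delayed AdaHedge-style potential argument on $L_t=\sum_{s\leq t}\delta_s=\alpha\lam_{t+D+1}$, split the cross term into a $\max_t\abftrl{t-D:t-1}$ contribution, and solve the resulting quadratic. Step (iii) is essentially a verbatim match to the paper (and your ``$\leq$'' there is actually an equality, harmlessly). The paper achieves (i)--(ii) by simply citing \cref{oaftrl_regret} and the estimate \cref{delta_min_bound} from its proof.

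The mechanism you propose for step (i), however, does not close. You want to show that each of the three quantities inside the $\min$ dominates the FTRL drift $\obj_{t+1}(\ww_t,\lam_t)-\obj_{t+1}(\bar\ww_t,\lam_t)$, so that $\delta_t$ is itself an upper bound on the drift and the classical drift-based regret decomposition goes through. But the inequality you invoke for the second entry,
\begin{align}
\obj_{t+1}(\ww_t,\lam_t)-\obj_{t+1}(\bar\ww_t,\lam_t)\leq\inner{\g_t}{\ww_t-\bar\ww_t},
\end{align}
is equivalent (after subtracting $\inner{\g_t}{\ww_t-\bar\ww_t}$ from both sides) to $\obj_t(\ww_t,\lam_t)\leq\obj_t(\bar\ww_t,\lam_t)$, which is \emph{false} here: under optimism and delay, $\ww_t$ minimizes $\lam_t\reg(\cdot)+\inner{\g_{1:t-D-1}+\h_t}{\cdot}$, not $\obj_t(\cdot,\lam_t)=\lam_t\reg(\cdot)+\inner{\g_{1:t-1}}{\cdot}$, so there is no reason $\ww_t$ outperforms $\bar\ww_t$ on $\obj_t$. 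The second (and likewise the third) entry of the $\min$ can be strictly smaller than the drift, which is precisely the point of the $\min$. The paper's argument does not claim the min-entries bound the drift; instead, it introduces an auxiliary \OAFTRL sequence $\aww_t$ run with a free auxiliary hint $\atildeg_t$, decomposes regret as ``drift between $\ww_t$ and $\aww_t$'' plus ``regret of the auxiliary sequence,'' and obtains $\regret_T(\uu)\leq\lam_T\reg(\uu)+\sum_t\delta_t(\atildeg_t)$ for \emph{every} choice of $(\atildeg_t)_t$. Specializing to $\atildeg_t\in\{\tildeg_t,\g_t,\hat\g_t\}$ produces exactly the three entries of your $\min$ (for $\atildeg_t=\tildeg_t$, $\aww_t=\ww_t$ and the contribution is the drift; for $\atildeg_t=\g_t$, $\aww_t=\bar\ww_t$ and the contribution is $\inner{\g_t}{\ww_t-\bar\ww_t}$; for $\atildeg_t=\hat\g_t$, $\aww_t=\hat\ww_t$). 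Taking the infimum over auxiliary hints then gives $\regret_T(\uu)\leq\lam_T\reg(\uu)+\sum_t\delta_t$. You will need this auxiliary-sequence device (which is \cref{oaftrl_regret} / \cref{oaftrl_difference_bound} in the paper) to make step (i) rigorous; the rest of your plan then goes through as you describe.
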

Remarkably, \cref{adahedged_regret}  yields a minimax optimal $\bigO{\sqrt{(D+1)T}+D}$ dependence on the delay parameter and nearly matches the \cref{odftrl_regret} regret of the optimal constant $\lam$ tuning. Although this regret bound is identical to that in  \cref{dub_regret}, in practice the $\lam_t$ values produced by \AdaHedgeD can be orders of magnitude smaller than those of \DUB, granting additional adaptivity. We evaluate the practical implications of these $\lam_t$ settings in \cref{sec:experiments}.

As a final note, when $\reg$ is bounded on $\uset$, we recommend choosing $\alpha = \sup_{\uu\in\uset} \reg(\uu)$ so that $\frac{\reg(\uu)}{\alpha} \leq 1$.
For negative entropy regularization $\reg(\uu) = \sum_{j=1}^d \uu_j \ln(\uu_j) + \ln(d)$ on the simplex $\uset=\wset=\simplexd$, this yields $\alpha = \ln(d)$ and a regret bound with minimax optimal $\sqrt{\ln(d)}$ dependence on $d$ \citep{cesa2006prediction,orabona2015optimal}.

\para{Related work}
Our \AdaHedgeD $\delta_t$ terms differ from standard AdaHedge increments \citep[see, e.g.,][Sec.~7.6]{Orabona2019AMI} due to the accommodation of delay, the incorporation of optimism, and the inclusion of the final two terms in the $\min$. These non-standard terms are central to reducing the impact of delay on our regret bounds.
Prior and concurrent approaches to adaptive tuning under delay do not incorporate optimism and require an explicit upper bound on all future subgradient norms, a quantity which is often difficult to obtain or very loose %
\citep{mcmahan2014delay,joulani2016delay,hsieh2020multi}.
Our optimistic algorithms, \DUB and \AdaHedgeD, admit comparable regret guarantees (\cref{dub_regret,adahedged_regret}) but require no prior knowledge of future subgradients.
\section{Learning to Hint with Delay} \label{sec:hinting}
As we have seen, optimistic hints play an important role in online learning under delay: effective hinting can counteract the increase in regret under delay.  In this section, we consider the problem of choosing amongst several competing hinting strategies. We show that this problem can again be treated as a delayed online learning problem.  In the following, we will call the original online learning problem the ``base problem'' and the learning-to-hint problem the ``hinting  problem.''

Suppose that, at time $t$, we observe the hints $\tildeg_t$ of $m$ different hinters arranged into a $d \times m$ matrix $H_t$.
Each column of $H_t$ is one hinter's best estimate of the sum of missing loss subgradients $\g_{t-D:t}$.
Our aim is to output a sequence of combined hints $\h_t(\omega_t) \defeq H_t \omega_t$ with low regret relative to the best constant combination strategy $\omega \in \Omega \defeq \simplexm$ in hindsight.
To achieve this using delayed online learning, we make use of a convex loss function $l_t(\omega)$ for the hint learner that upper bounds the base learner regret.

\begin{assumption}[Convex regret bound] \label{base_assumptions}
For any hint sequence $(\h_t)_{t=1}^T$ and $\uu\in\Omega$,
the base problem admits the regret bound  $\regret_T(\uu) \leq C_0(\uu) + C_1(\uu) \sqrt{\sum_{t=1}^T f_t(\h_t)}$
 for $C_1(\uu)\geq 0$ and convex functions $f_t$ independent of $\uu$.
\end{assumption}
As we detail in \cref{proof_adahedged-example}, \cref{base_assumptions} holds for all of the learning algorithms introduced in this paper. For example, by \cref{dorm_dorm+_regret}, if the base learner is \DORM, we may choose
$C_0(\uu) = 0$, $C_1(\uu) = \sqrt{\frac{\pnorm{\uu}^2}{2(p-1)}},$
and the $\bigO{D+1}$ convex function
$
f_t(\h_t) = \norm{\rr_{t}}_{q}  \norm{\h_t -\sum_{s=t-D}^t\rr_s}_{q}
    \geq \bb_{t,q}
$.\footnote{The alternative choice $f_t(\h_t) = \half\staticnorm{\h_t - \sum_{s=t-D}^t\rr_s}^2_{q}$ also bounds regret but may have size $\Theta((D+1)^2)$.}

For any base learner satisfying \cref{base_assumptions}, 
we choose $l_t(\omega) = f_t(H_t \omega)$ as our hinting loss, use the tuning-free \DORMP algorithm to output the combination weights $\omega_t$ on each round, and provide the hint $\h_t(\omega_t) = H_t \omega_t$ to the base learner.
The following result, proved in \cref{adaptive-hinting-proofs}, shows that this learning to hint strategy performs nearly as well as the best constant hint combination strategy in restrospect.
\begin{theorem}[Learning to hint regret] \label{base-and-hinter-regret}
Suppose the base problem satisfies \cref{base_assumptions} and the hinting problem is solved with \DORMP hint iterates $\omega_t$, hinting losses $l_t(\omega) = f_t(H_t \omega)$, no meta-hints for the hinting problem, and $q=\argmin_{q' \geq 2} m^{2/q'}(q'-1)$. Then the base problem with hints $\h_t(\omega_t) = H_t \omega_t$ satisfies
\begin{talign}
&\regret_T(\uu) 
    \leq C_0(\uu)  + C_1(\uu) \sqrt{\inf_{\omega\in\Omega} \sum_{t=1}^T f_t(\h_t(\omega))} \\
    &+  C_1(\uu) \big({(2\log_2(m)-1) (\half\xi_T + \sum_{t=1}^{T-1} \huber(\xi_t, \zeta_t)})\big)^{1/4} \\
&\qtext{for} 
\ \,\xi_t 
    \defeq 4(D+1)\sum_{s=t-D}^{t} \norm{\gamma_{s}}_{\infty}^2,
    \quad
\gamma_t 
    \in \subdiff l_t(\omega_t),
    \\
&\qtext{and}
\zeta_t 
    \defeq 4  \norm{\gamma_{t-D}}_{\infty} \sum_{s=t-D}^t \norm{\gamma_{s}}_{\infty}.
\end{talign}
\end{theorem}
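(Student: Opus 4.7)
The plan is to reduce the hinting problem to a delayed online learning problem on the simplex with per-round convex losses $l_t(\omega)=f_t(H_t\omega)$, bound the resulting hinter regret with \cref{dorm_dorm+_regret}, and substitute the control on $\sum_t f_t(\h_t(\omega_t))$ back into the base learner bound from \cref{base_assumptions}. First I would verify that the hinting problem satisfies the hypotheses of \cref{dorm_dorm+_regret,dorm_dorm+_lambda_independent}: $l_t$ is convex on $\Omega=\simplexm$ because $f_t$ is convex (by \cref{base_assumptions}) and $\omega\mapsto H_t\omega$ is linear, and the choice of zero meta-hints together with the prescribed $q$ trivially satisfies the $\lam$-independence hypothesis.

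Second I would let $\omega^\star\in\argmin_{\omega\in\Omega}\sum_t f_t(H_t\omega)$ and write
\begin{talign*}
\textstyle\sum_{t=1}^T f_t(\h_t(\omega_t))=\inf_{\omega\in\Omega}\textstyle\sum_{t=1}^T f_t(H_t\omega)+\hregret_T(\omega^\star),
\end{talign*}
where $\hregret_T(\omega^\star)=\sum_t[l_t(\omega_t)-l_t(\omega^\star)]\geq0$ by convexity of $l_t$ over $\Omega$. Since $f_t\geq0$ and $\hregret_T(\omega^\star)\geq0$, the sub-additivity $\sqrt{a+b}\leq\sqrt{a}+\sqrt{b}$ cleanly splits the square-root factor in \cref{base_assumptions}; multiplying through by $C_1(\uu)$ and adding $C_0(\uu)$ reproduces the first two terms of the claim plus a residual $C_1(\uu)\sqrt{\hregret_T(\omega^\star)}$ to be controlled next.

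Third I would invoke \cref{dorm_dorm+_regret} on the hinter with zero meta-hints and $q=\argmin_{q'\geq2}m^{2/q'}(q'-1)$, obtaining $\hregret_T(\omega^\star)\leq\sqrt{(2\log_2 m-1)\sum_{t=1}^T\bb_{t,\infty}}$; taking a fourth root produces the final term of the conclusion provided $\sum_t\bb_{t,\infty}\leq\tfrac12\xi_T+\sum_{t<T}\huber(\xi_t,\zeta_t)$. To close this, I would combine (i) $\norm{\rr^{hint}_s}_\infty\leq 2\norm{\gamma_s}_\infty$, valid because $\omega_s\in\simplexm$ forces every coordinate of the hinter's instantaneous regret $\rr^{hint}_s=\boldone\inner{\gamma_s}{\omega_s}-\gamma_s$ into $[-2\norm{\gamma_s}_\infty,2\norm{\gamma_s}_\infty]$; (ii) Cauchy--Schwarz $(\sum_{s=t-D}^t\norm{\gamma_s}_\infty)^2\leq(D+1)\sum_{s=t-D}^t\norm{\gamma_s}_\infty^2$, which yields both $\norm{\sum_{s=t-D}^t\rr^{hint}_s}_\infty^2\leq\xi_t$ and $\norm{\rr^{hint}_{t-D}}_\infty\,\norm{\sum_{s=t-D}^t\rr^{hint}_s}_\infty\leq\zeta_t$; and (iii) the monotonicity of $\huber$ in each argument together with the identity $\huber(x,x)=\tfrac12x^2$.

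The main obstacle is the $t=T$ endpoint: the meta-hint convention $\h_{T+1}=\rr^{hint}_{T-D+1:T}$ built into \cref{dorm_dorm+_regret} forces the two $\huber$ arguments at $t=T$ to coincide, so $\bb_{T,\infty}$ collapses to $\tfrac12\norm{\sum_{s=T-D}^T\rr^{hint}_s}_\infty^2$ and must be routed through Cauchy--Schwarz to $\tfrac12\xi_T$ separately, rather than through the mixed $\huber(\xi_t,\zeta_t)$ shape used for $t<T$. Once this boundary term is handled cleanly, the rest of the argument is a few lines of algebraic bookkeeping.
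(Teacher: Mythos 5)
Your proposal is correct and mirrors the paper's own proof: the decomposition $\sum_t f_t(\h_t(\omega_t)) = \inf_\omega \sum_t f_t(H_t\omega) + \hregret_T$, the subadditivity of $\sqrt{\cdot}$, the invocation of \cref{dorm_dorm+_regret} for the hinter (this is precisely the paper's \cref{dorm+_hinting_regret}), and the bounds $\infnorm{\rho_s}\leq 2\infnorm{\gamma_s}$ plus Cauchy--Schwarz to control each $\bb_{t,\infty}$ (the paper's \cref{beta-bound-general}) are exactly the steps used. Your handling of the $t=T$ endpoint---noting that $\h_{T+1}=\rr_{T-D+1:T}$ forces the two Huber arguments to coincide, collapsing the term to $\half\|\sum_{s=T-D}^T\rho_s\|_\infty^2$---matches the paper's case split.

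One caveat worth flagging (which the paper shares): from your step (ii) you obtain $\|\sum_s\rho_s\|_\infty^2\leq\xi_t$ and $\|\rho_{t-D}\|_\infty\|\sum_s\rho_s\|_\infty\leq\zeta_t$, i.e.\ bounds on $a^2$ and $ab$ rather than on $a$ and $b$ separately, so ``monotonicity of $\huber$ in each argument'' does not directly yield $\huber(a,b)\leq\huber(\xi_t,\zeta_t)$. What one can cleanly conclude from $\huber(a,b)\leq\min(\tfrac12 a^2, ab)$ is $\beta_{t,\infty}\leq\min(\tfrac12\xi_t,\zeta_t)$, which is what actually drives the $O((D+1)^3)$ scaling discussed after the theorem. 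The paper's statement of \cref{beta-bound-general} records this as $\huber(\xi_t,\zeta_t)$, and your write-up inherits the same imprecision; it is worth being explicit about the $\min$ form in the final step rather than relying on Huber monotonicity.
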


To quantify the size of this regret bound, consider again the \DORM base learner with $f_t(\h_t) = \norm{\rr_{t}}_{q}  \norm{\h_t -\sum_{s=t-D}^t\rr_s}_{q}$.  
By \cref{hint-gradient-bound} in \cref{proof_adahedged-example}, $\norm{\gamma_t}_{\infty} \leq d^{1/q}\maxentrynorm{H_t} \norm{\rr_t}_{q}$ for $\maxentrynorm{H_t}$ the maximum absolute entry of $H_t$.
Each column of $H_t$ is a sum $D+1$ subgradient hints, so $\maxentrynorm{H_t}$ is $\bigO{D+1}$. %
Thus, for this choice of hinter loss, the $\huber(\xi_t, \zeta_t)$ term is $\bigO{(D+1)^3}$, and the hint learner suffers only $\bigO{T^{1/4}(D+1)^{3/4}}$ additional regret from learning to hint.
Notably, this additive regret penalty is $\bigO{\sqrt{(D+1)T}}$
if $D = \bigO{T}$ (and $o(\sqrt{(D+1)T})$
when $D = o(T)$), so the learning to hint strategy of \cref{base-and-hinter-regret} preserves minimax optimal regret rates.

\para{Related work}
\citet[Sec.~4.1]{rakhlin2013online} propose and analyze a method to learn optimism strategies for a two-step OMD base learner.
Unlike \cref{base-and-hinter-regret}, the approach does not accommodate delay, and the analyzed regret is only with respect to single hinting strategies $\omega\in \{\basis_j\}_{j \in [m]}$ rather than combination strategies, $\omega \in \simplexm$. 
\section{Experiments} \label{sec:experiments}

\begin{table*}[bt]
\caption{\textbf{Average RMSE of the 2011-2020 semimonthly forecasts}: The average RMSE for online learning algorithms (left) and input models (right) over a $10$-year evaluation period with the top-performing learners and input models bolded and blue. %
In each task, the online learners compare favorably with the best input model and learn to downweight the lower-performing candidates, like the worst models italicized in red.}

\label{exp-zoo-table}
\vskip 0.15in
\centering
\begin{small}
\begin{sc}
\begin{tabular*}{\linewidth}{lrrr|rrrrrr}
\toprule

\multicolumn{1}{c}{{}} &  \multicolumn{1}{c}{AdaHedgeD} &   \multicolumn{1}{c}{DORM} &  \multicolumn{1}{c}{DORM+} &   \multicolumn{1}{c}{Model1}  &  \multicolumn{1}{c}{Model2} &  \multicolumn{1}{c}{Model3} &  \multicolumn{1}{c}{Model4}  &  \multicolumn{1}{c}{Model5}  &  \multicolumn{1}{c}{Model6}  \\
\midrule
Precip. 3-4w &     21.726 & 21.731 & \textcolor{RoyalBlue}{\textbf{21.675}} &         \textcolor{RoyalBlue}{\textbf{21.973}} &    22.431 &         22.357 &   21.978 &         21.986 &     \textcolor{Maroon}{\textit{23.344}} \\
Precip. 5-6w &     21.868 & 21.957 & \textcolor{RoyalBlue}{\textbf{21.838}} &         22.030 &    22.570 &         22.383 &   22.004 &         \textcolor{RoyalBlue}{\textbf{21.993}} &     \textcolor{Maroon}{\textit{23.257}} \\
Temp. 3-4w   &      2.273 &  2.259 &  \textcolor{RoyalBlue}{\textbf{2.247}} &         \textcolor{RoyalBlue}{\textbf{2.253}} &     2.352 &          2.394 &    2.277 &          2.319 &      \textcolor{Maroon}{\textit{2.508}} \\
Temp. 5-6w   &      2.316 &  2.316 &  \textcolor{RoyalBlue}{\textbf{2.303}} &          \textcolor{RoyalBlue}{\textbf{2.270}} &     2.368 &          2.459 &    2.278 &          2.317 &      \textcolor{Maroon}{\textit{2.569}} \\
\bottomrule
\end{tabular*}
\end{sc}
\end{small}
\end{table*}

We now apply the online learning techniques developed in this paper to the problem of adaptive ensembling for subseasonal forecasting. 
Our experiments are based on the subseasonal forecasting data of \citet{frii} that provides the forecasts of $d=6$ machine learning and physics-based models for both temperature and precipitation at two forecast horizons: 3-4 weeks and 5-6 weeks.
In operational subseasonal forecasting, feedback is delayed; models make $D=2$ or $3$ forecasts (depending on the forecast horizon) before receiving feedback. We use delayed, optimistic online learning to play a time-varying convex combination of input models and compete with the best input model over a year-long prediction period ($T=26$ semimonthly dates). The loss function is the geographic root-mean squared error (RMSE) across $514$ locations in the Western United States.

We evaluate the relative merits of the delayed online learning techniques presented by computing yearly regret and mean RMSE for the ensemble plays made by the online leaner in each year from 2011-2020.  Unless otherwise specified, all online learning algorithms use the \texttt{recent\_g} hint $\tildeg_s$, which approximates each unobserved subgradient at time $t$ with the most recent observed subgradient $\g_{t-D-1}$.
See \cref{sec:experiment_details} for full experimental details, \cref{sec:algorithm_details} for algorithmic details, and \cref{sec:experiment_extended_results} for extended experimental results.

\para{Competing with the best input model} \label{all_comparison}
The primary benefit of online learning in this setting is its ability to achieve small average regret, i.e., to perform nearly as well as the best input model in the competitor set $\uset$ without knowing which is best in advance. 
We run our three delayed online learners---\DORM, \DORMP, and \AdaHedgeD---on all four subseasonal prediction tasks and measure their average loss. 

\begin{figure}[bth]
    \centering
    \includegraphics[width=\columnwidth]{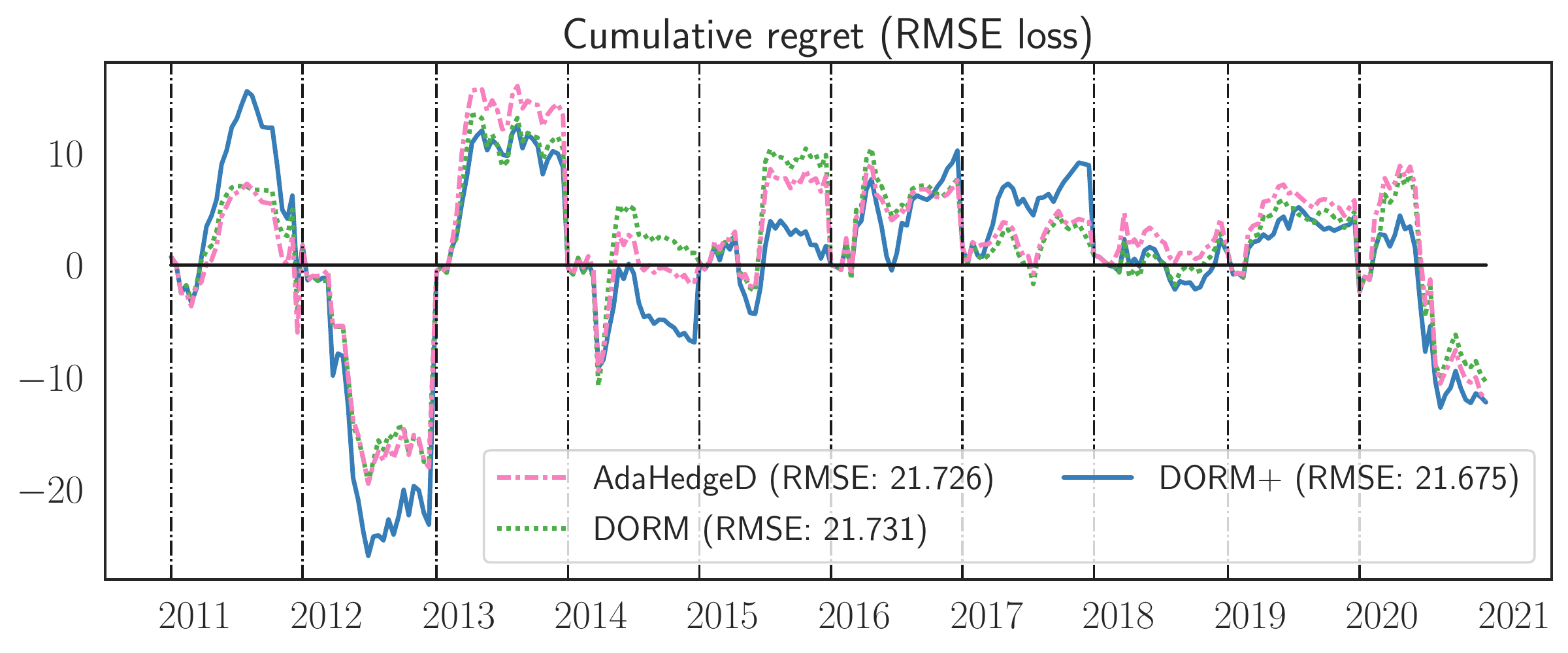}
    \caption{\textbf{Overall performance}: Yearly cumulative regret under RMSE loss for the 
    the Precip.\ 3-4w task. The zero line corresponds to the performance of the best input model in a given year.}
    \label{fig:zoo-experts}
\end{figure}

The average yearly RMSE for the three online learning algorithms and the six input models is shown in \cref{exp-zoo-table}. The \DORMP algorithm tracks the performance of the best input model for all tasks except Temp.~5-6w. All online learning algorithms achieve negative regret for both precipitation tasks. \cref{fig:zoo-experts} shows the yearly cumulative regret (in terms of the RMSE loss) of the online learning algorithms over the $10$-year evaluation period. There are several years (e.g., 2012, 2014, 2020) in which all online learning algorithms substantially outperform the best input forecasting model. The consistently low regret year-to-year of \DORMP compared to \DORM and \AdaHedgeD makes it a promising candidate for real-world delayed subseasonal forecasting. Notably, RM+ (a special case of \DORMP) is known to have small \emph{tracking regret}, i.e., it competes well even with strategies that switch between input models a bounded number of times \citep[Thm.\ 2]{tammelin2015solving}.  We suspect that this is one source of \DORMP's superior performance. We also note that the self-tuned \AdaHedgeD performs comparably to the the optimally-tuned \DORM, demonstrating the effectiveness of our self-tuning strategy.

\para{Impact of regularization}
We evaluate the impact of the three regularization strategies developed in this paper: 1) the upper bound \DUB strategy, 2) the tighter \AdaHedgeD strategy, and 3) the \DORMP algorithm that is tuning-free. This tuning-free property has evident practical benefits, as this section demonstrates. 

\begin{figure}[b]
    \centering
    \includegraphics[width=\columnwidth]{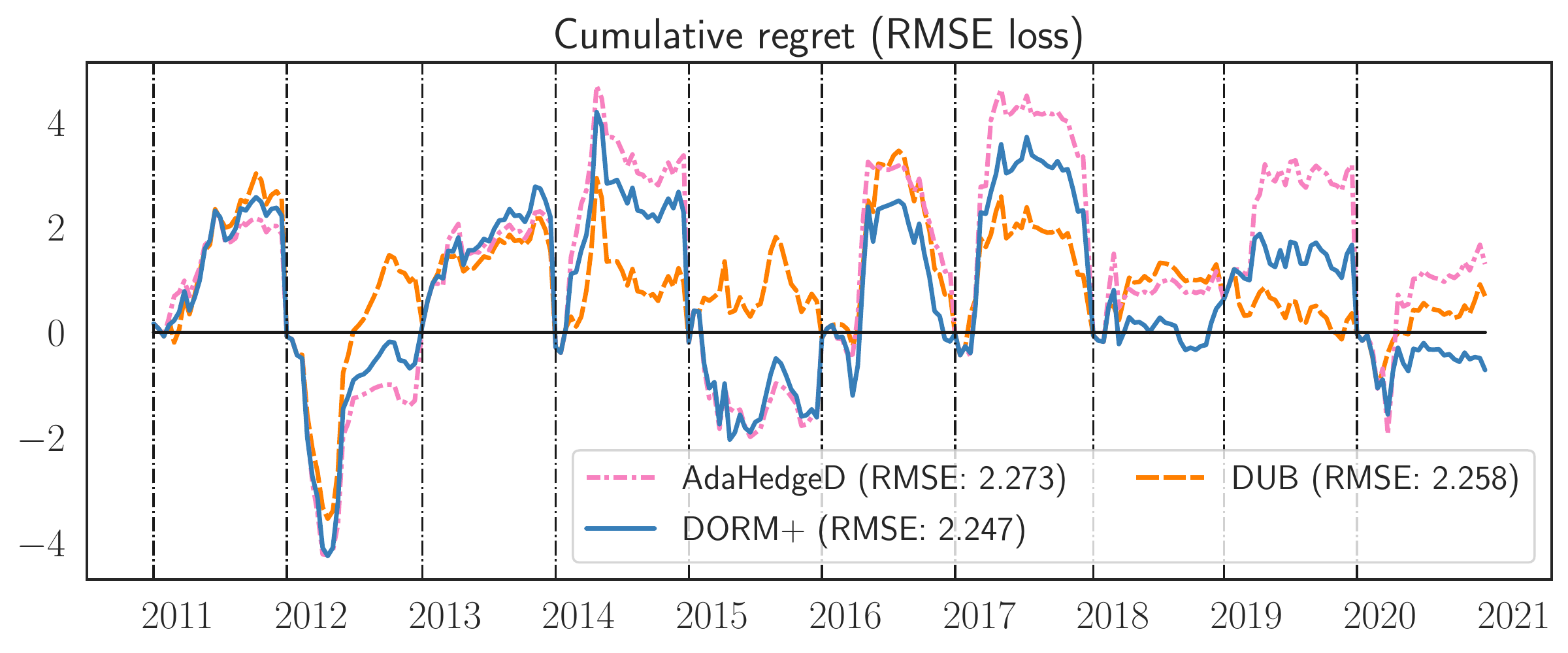}
    \caption{\textbf{Regret of regularizers}: Yearly cumulative regret (in terms of the RMSE loss) for the three regularization strategies for the Temp.~3-4w task.  } 
    \label{fig:regularization_regret}
\end{figure}

\cref{fig:regularization_regret} shows the yearly regret of the \DUB, \AdaHedgeD, and \DORMP algorithms. A consistent pattern appears in the yearly regret: \DUB has moderate positive regret, \AdaHedgeD has both the largest positive and negative regret values, and  \DORMP sits between these two extremes. 
If we examine the weights played by each algorithm (\cref{fig:regularization}), the weights of \DUB and \AdaHedgeD appear respectively over- and under-regularized compared to \DORMP (the top model for this task). \DUB's use of the upper bound $\bb_{t,F}$ results in a very large regularization setting ($\lam_T = 142.881$) and a virtually uniform weight setting. \AdaHedgeD's tighter bound $\delta_t$ produces a value for $\lam_T=3.005$ that is two orders of magnitude smaller. However, in this short-horizon forecasting setting, \AdaHedgeD's aggressive plays result in higher average RMSE. By nature of it's $\lam_t$-free updates, \DORMP produces more moderately regularized plays $\ww_t$ and negative regret. 
\begin{figure*}[hbt!]
    \centering
    \includegraphics[height=1.5in]{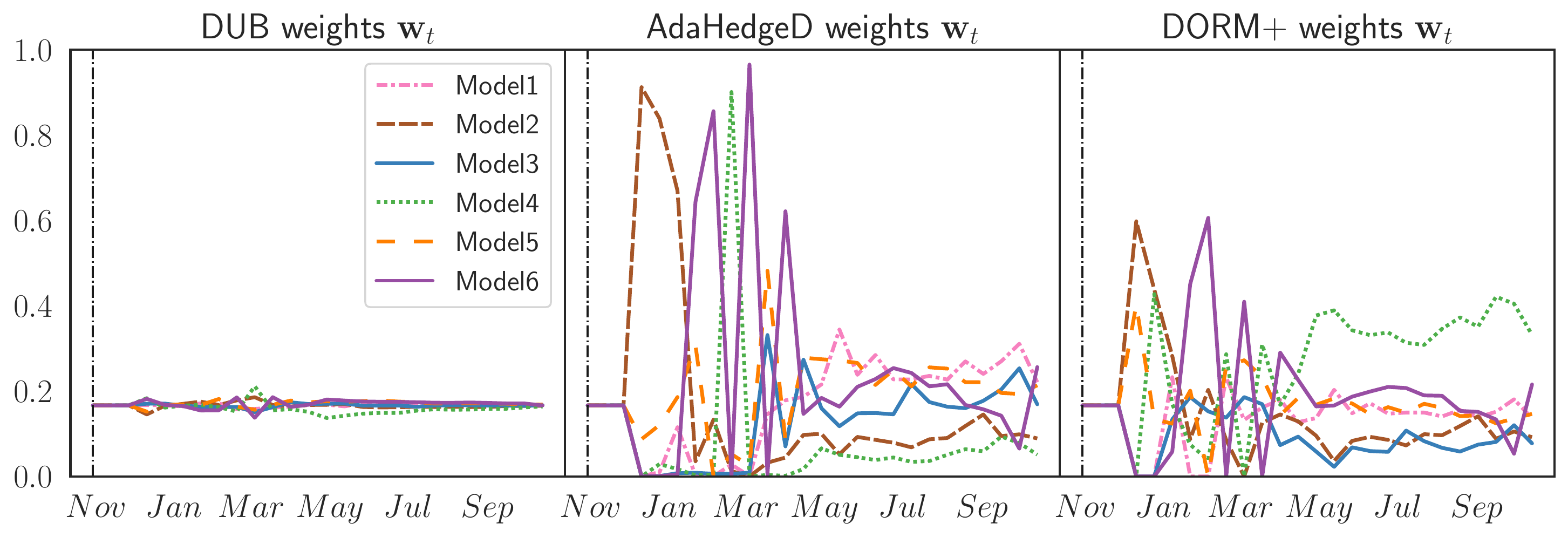} 
    \includegraphics[height=1.512in]{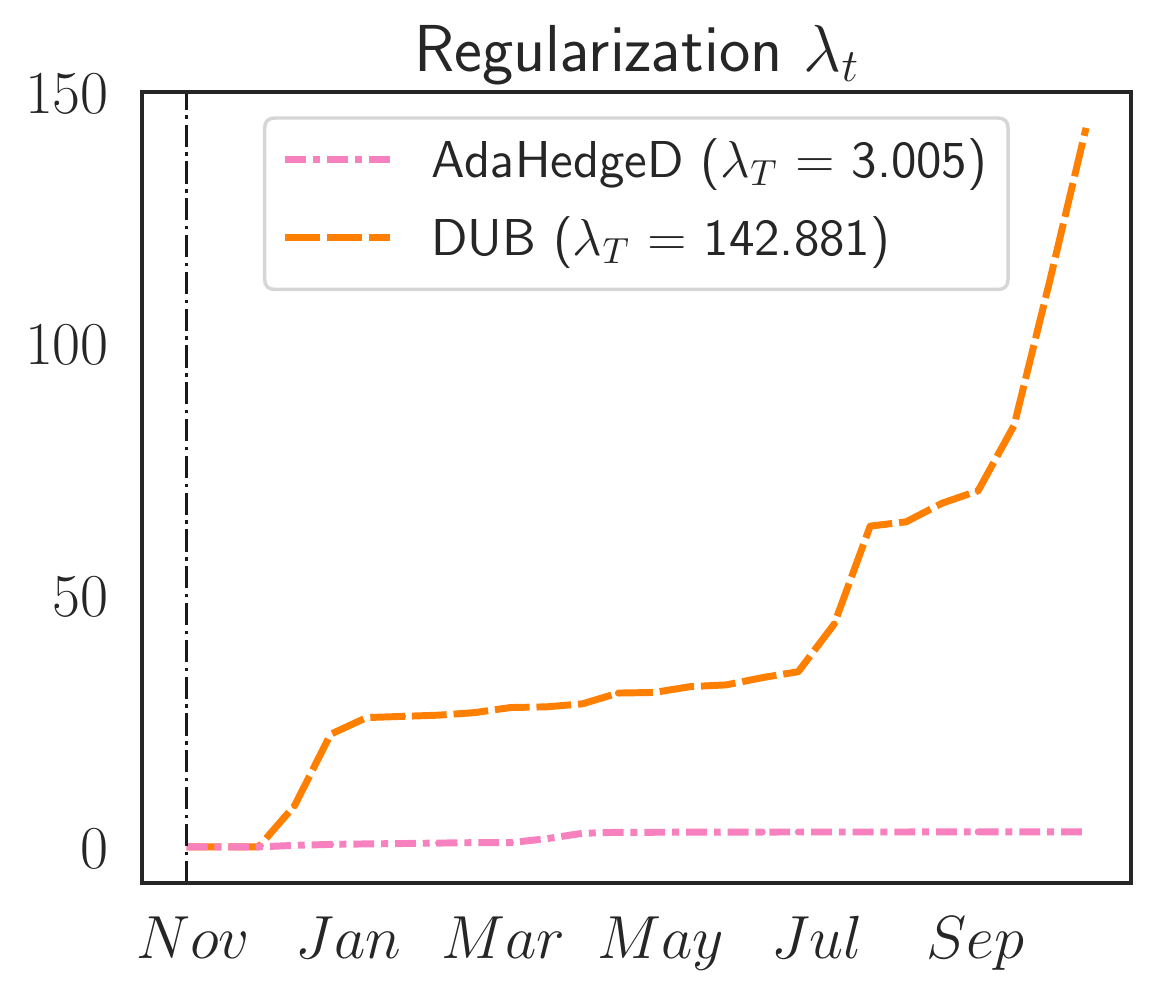}
    \caption{\textbf{Impact of regularization:} The plays $\ww_t$ of online learning algorithms used to combine the input models for the Temp.~3-4w task in the 2020 evaluation year. The weights of \DUB and \AdaHedgeD appear respectively over and under regularized compared to \DORMP (the top model for this task) due to their selection of regularization strength $\lam_t$ (right).}
    \label{fig:regularization}
\end{figure*}

\para{To replicate or not to replicate}
In this section, we compare the performance of replicated and non-replicated variants of our \DORMP algorithm. 
Both algorithms perform well (see \cref{sec:replicate}), but in all tasks, \DORMP outperforms replicated \DORMP (in which $D+1$ independent copies of \DORMP make staggered predictions). \cref{fig:exp_replication} provides an example of the  weight plots produced by the replication strategy in the Temp.~5-6w task with $D=3$. 
The separate nature of the replicated learner's plays is evident in the weight plots and leads to an average RMSE of $2.315$, versus $2.303$ for \DORMP in the Temp.~5-6w task.

\begin{figure}[htb!]
    \centering
    \includegraphics[width=\columnwidth]{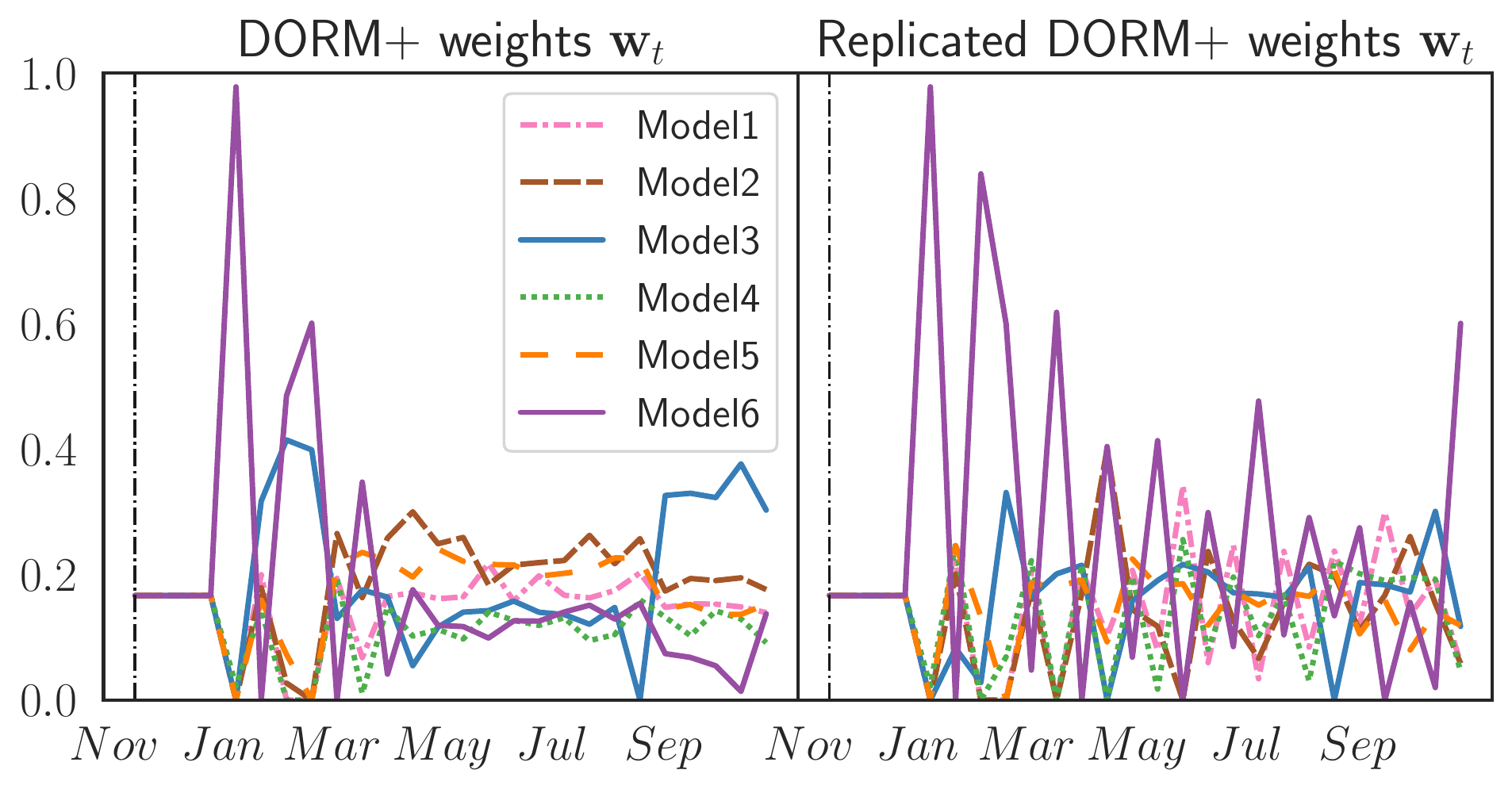}   
    \caption{\textbf{To replicate or not to replicate}: The plays $\ww_t$ of standard \DORMP and replicated \DORMP algorithms for the Temp.~5-6w task in the final evaluation year.}
    \label{fig:exp_replication}
\end{figure}

\para{Learning to hint}
Finally, we examine the effect of optimism on the \DORMP algorithms and the ability of our ``learning to hint'' strategy to recover the performance of the best optimism strategy in retrospect.  
Following the hint construction protocol in  \cref{sec:algorithm_details_dorm_dormp}, 
we run the \DORMP base algorithm with 
$m=4$ subgradient hinting strategies: %
$\tildeg_s = \g_{t-D-1}$ (\texttt{recent\_g}), $\tildeg_s = \g_{s-D-1}$ (\texttt{prev\_g}), $\tildeg_s = \frac{D+1}{t-D-1} \g_{1:t-D-1}$ (\texttt{mean\_g}), or $\tildeg_s = \boldzero$ (\texttt{none}).
We also use \DORMP as the meta-algorithm for hint learning to produce the \texttt{learned} optimism strategy that plays a convex combination of the four hinters. In \cref{fig:hinting}, we first note that several optimism strategies outperform the \texttt{none} hinter, confirming the value of optimism in reducing regret. The \texttt{learned} variant of \DORMP avoids the worst-case performance of the individual hinters in any given year (e.g., 2015), while staying competitive with the best strategy (although it does not outperform the dominant \texttt{recent\_g} strategy overall). We believe the performance of the online hinter could be further improved by developing tighter convex bounds on the regret of the base problem in the spirit of \cref{base_assumptions}. 

\begin{figure}[bt!]
    \centering
    \includegraphics[width=\columnwidth]{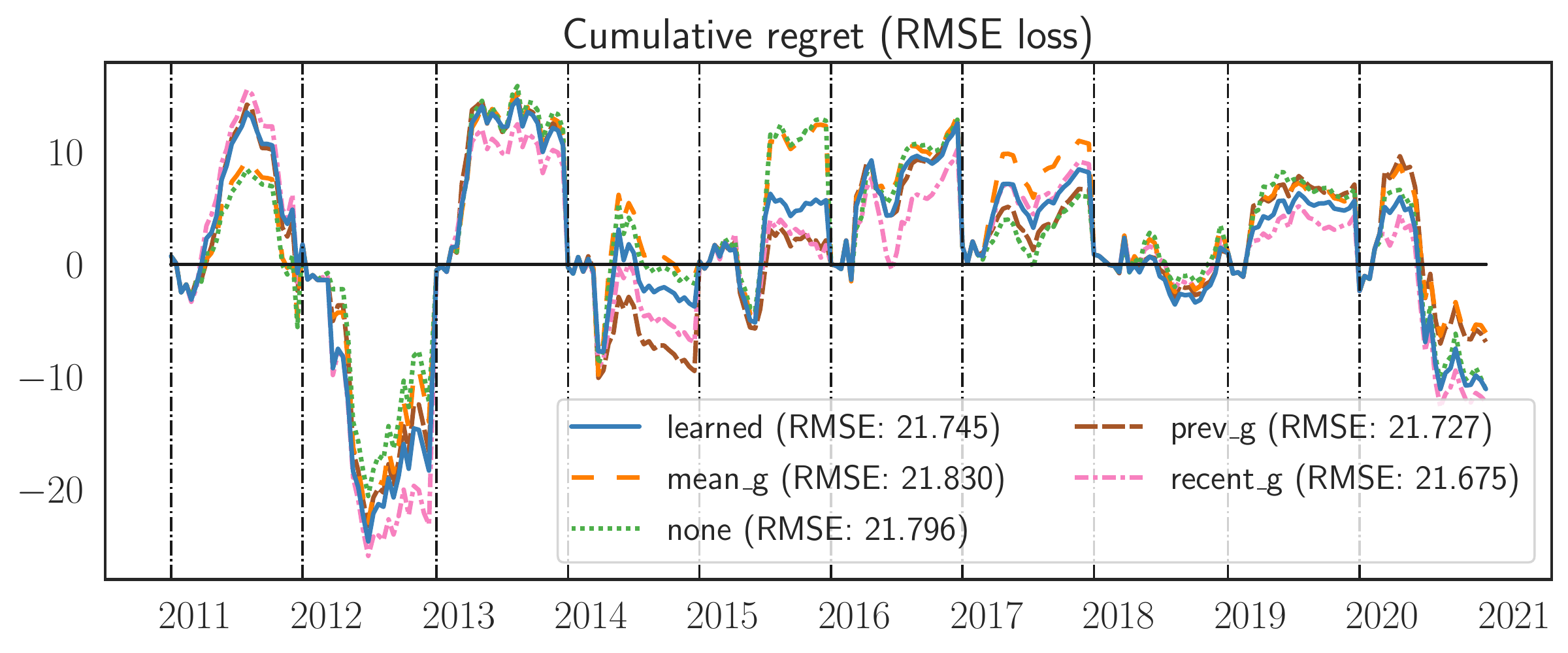}
    \caption{\textbf{Learning to hint}: Yearly cumulative regret (in terms of the RMSE loss) for the adaptive hinting and four constant hinting strategies for the Precip. 3-4w task.}
    \label{fig:hinting}
\end{figure}

\section{Conclusion}
In this work, we confronted the challenges of delayed feedback and short regret horizons in online learning with optimism, developing practical non-replicated, self-tuned and tuning-free algorithms with optimal regret guarantees. Our ``delay as optimism'' reduction and our refined analysis of optimistic learning produced novel regret bounds for both optimistic and delayed online learning and elucidated the connections between these two problems. Within the subseasonal forecasting domain, we demonstrated that delayed online learning methods can produce zero-regret forecast ensembles that perform robustly from year-to-year. Our results highlighted \DORMP as a particularly promising candidate due to its tuning-free nature and small tracking regret. 

In future work, we are excited to further develop optimism strategies under delay by 1) employing tighter convex loss bounds on the regret of the base algorithm to improve the learning to hint algorithm, 2) exploring the relative impact of hinting for ``past'' ($\g_{t-D:t-1}$) versus ``future'' ($\g_t$) missing subgradients (see \cref{sec:future_past} for an initial exploration), and
 3) developing adaptive self-tuning variants of the \DOOMD algorithm.  
 Within the subseasonal domain, we plan to leverage the flexibility of our optimism formulation to explore hinting strategies that use meteorological expertise to improve beyond the generic mean and past subgradient hints and to deploy our open-source subseasonal forecasting algorithms operationally. 
\section*{Acknowledgements}
This work was supported by Microsoft AI for Earth, an NSF GRFP, and the NSF 
grants no.\ 1925930 ``Collaborative Research: TRIPODS Institute for Optimization and Learning'', no.\ 1908111 ``AF: Small: Collaborative Research: New Representations for Learning Algorithms and Secure Computation'', and no.\ 2022446 ``Foundations of Data Science Institute''. FO also thanks Nicol\`o Cesa-Bianchi and Christian Kroer for discussions on RM and RM+.
\clearpage\newpage
{
\small
\bibliography{online_learning}
\bibliographystyle{icml_style/icml2021}
}
\appendix\onecolumn
\clearpage
\newpage
\section{Extended Literature Review}\label{lit_review}
We review here additional prior work not detailed in the main paper. 
\subsection{General online learning}
We recommend the monographs of \citet{shalev2012online,Orabona2019AMI} and the textbook of \citet{cesa2006prediction} for surveys of the field of online learning and 
\citet{joulani2017modular,mcmahan2017survey} for widely applicable and modular analyses of online learning algorithms.

\subsection{Online learning with optimism but without delay}
\citet{syrgkanis2015fast} analyzed optimistic FTRL and two-step variant of optimistic MD without delay.  The work focuses on a particular form of optimism (using the last observed subgradient as a hint) and shows improved rates of convergence to correlated equilibria in multiplayer games.
In the absence of delay, \citet{steinhardt2014adaptivity} combined optimism and adaptivity to obtain improvements over standard optimistic regret bounds.
\subsection{Online learning with delay but without optimism}

\paragraph{Overview}  \citet{joulani2013online,joulani2016delay,mcmahan2014delay} provide broad reviews of progress on delayed online learning. 

\paragraph{Delayed stochastic optimization} \citet{recht2011hogwild, agarwal2012distributed, nesterov2012efficiency, liu2014asynchronous, liu2015asynchronous,sra2016adadelay}
studied the effects of delay on stochastic optimization but do not treat the adversarial setting studied here.

\paragraph{FTRL-Prox vs. FTRL} \citet{joulani2016delay} analyzed the delayed feedback regret of the \emph{FTRL-Prox} algorithm, which regularizes toward the last played iterate as in online mirror descent, but did not study the standard FTRL algorithms (sometimes called \emph{FTRL-Centered}) analyzed in this work.

\subsection{Self-tuned online learning without delay or optimism}
In the absence of optimism and delay, 
\citet{rooij14a,OrabonaP15,koolen2014learning} developed alternative variants of FTRL algorithms that self-tune their learning rates.

\subsection{Online learning without delay for climate forecasting}
\citet{monteleoni2011tracking} applied the Learn-$\alpha$ online learning algorithm of \citet{monteleonijaakkola2004} to the task of ensembling climate models.
The authors considered historical temperature data from 20 climate models and tracked the changing sequence of which model predicts best at any given time. In this context, the algorithm used was based on a set of generalized Hidden Markov Models, in which the identity of the current best model is the hidden variable and the updates are derived as Bayesian updates. This work was extended to take into account the influence of regional neighboring locations when performing updates~\citep{mcquade2012global}. These initial results demonstrated the promise of applying online learning to climate model ensembling, but both methods rely on receiving feedback without delay.

\section{Proof of \cref{oftrl_regret}: OFTRL regret} \label{proof_oftrl_regret}

We will prove the following more general result for optimistic adaptive FTRL (\OAFTRL)
\begin{talign}\label[name]{oaftrl}\tag{OAFTRL}
    \ww_{t+1} = \argmin_{\ww\in\wset} \,\inner{\g_{1:t} + \tildeg_{t+1}}{\ww} + \lam_{t+1} \reg(\ww),
\end{talign}
from which \cref{oftrl_regret} will follow with the choice $\lam_t = \lam$ for all $t \geq 1$.

\begin{theorem}[\OAFTRL regret] \label{oaftrl_regret}
If $\reg$ is nonnegative and $(\lam_t)_{t\geq 1}$ is non-decreasing, then, $\forall \uu\in\wset$, the \OAFTRL iterates $\ww_t$ satisfy, 
\begin{talign}
\regret_T(\uu)
    \label{oaftrl_delta_regret_bound}
    &\leq 
        \lam_{T}\reg(\uu) + \sum_{t=1}^T 
        \delta_t
    \\
    &\leq
        \lam_{T}\reg(\uu) + \sum_{t=1}^T 
        \min\big(\frac{1}{\lam_t}\huber(\dualnorm{\g_t-\tildeg_t}, \dualnorm{\g_t}), 
        \diam{\wset}\min(\dualnorm{\g_t-\tildeg_t},\dualnorm{\g_t})\big)
\end{talign}
for 
\begin{talign}
\delta_t 
    &\defeq \min (\obj_{t+1}(\ww_t,\lam_t) - \obj_{t+1}(\bar\ww_t,\lam_t),\  \ \inner{\g_t}{\ww_t - \bar\ww_t},\\
    &\qquad\quad\ \obj_{t+1}(\hat{\ww}_t,\lam_t) -    \obj_{t+1}(\bar\ww_t,\lam_t)
    +
    \inner{\g_t}{\ww_t - \hat{\ww}_t} )_+ \qtext{with}  \\ 
\bar\ww_t 
    &\defeq \argmin_{\ww\in\wset} \obj_{t+1}(\ww,\lam_t),
    \quad
\obj_{t+1}(\ww,\lam_t) 
    \defeq \lam_t \reg(\ww) + \inner{\g_{1:t}}{\ww},
\qtext{and} \\
\hat{\ww}_t
    &\defeq \argmin_{\ww\in\wset}
    \lam_t \reg(\ww) + \inner{\g_{1:t}+\min(\frac{\dualnorm{\g_t}}{\dualnorm{\tildeg_t - \g_t}},1)(\tildeg_t - \g_t)}{\ww}.
\end{talign}
\end{theorem}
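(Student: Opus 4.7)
The plan is to follow a Be-the-Leader (BTL) cheating-gap analysis tailored to optimistic adaptive FTRL, with a refined per-round analysis that produces the three-way minimum in $\delta_t$ and the Huber penalty. I first introduce the BTL reference $\bar\ww_t = \argmin_{\ww\in\wset} \obj_{t+1}(\ww,\lambda_t)$ and split
\[
\regret_T(\uu) = \sum_{t=1}^T \inner{\g_t}{\bar\ww_t-\uu} + \sum_{t=1}^T \inner{\g_t}{\ww_t-\bar\ww_t}.
\]
A standard telescoping argument on the dual $\Phi_t = \min_\ww \obj_{t+1}(\ww,\lambda_t)$---using $\Phi_t-\Phi_{t-1}\geq(\lambda_t-\lambda_{t-1})\psi(\bar\ww_t)+\inner{\g_t}{\bar\ww_t}$, $\psi\geq 0$, $\lambda_t$ non-decreasing, $\Phi_T\leq\obj_{T+1}(\uu,\lambda_T)$, and $\Phi_0=0$---bounds the BTL sum by $\lambda_T\psi(\uu)$, reducing the theorem to bounding the cheating-gap sum by $\sum_t \delta_t$.

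For the cheating gap I establish three per-round upper bounds corresponding to the three expressions in the $\min$. The $b_t$ form is the gap itself. For $a_t$, optimality of $\ww_t$ for the OAFTRL objective $F_t = \lambda_t\psi+\inner{\g_{1:t-1}+\tildeg_t}{\cdot}$ gives $a_t\leq\inner{\g_t-\tildeg_t}{\ww_t-\bar\ww_t}$; combined with the strong-monotonicity identity $\lambda_t\|\ww_t-\bar\ww_t\|^2\leq\inner{\g_t-\tildeg_t}{\ww_t-\bar\ww_t}$ (from $\lambda_t$-strong convexity of $\lambda_t\psi$ plus first-order optimality of $\ww_t$ and $\bar\ww_t$) and Young's inequality, this delivers both $a_t\leq\tfrac{1}{2\lambda_t}\dualnorm{\g_t-\tildeg_t}^2$ and the diameter form $a_t\leq\diam{\wset}\dualnorm{\g_t-\tildeg_t}$. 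For $c_t$, the clipped play $\hat\ww_t$ is the OAFTRL iterate under the interpolated hint $\alpha\tildeg_t+(1-\alpha)\g_t$ with $\alpha=\min(\dualnorm{\g_t}/\dualnorm{\g_t-\tildeg_t},1)$; the same strong-convexity arguments give $\obj_{t+1}(\hat\ww_t,\lambda_t)-\obj_{t+1}(\bar\ww_t,\lambda_t)\leq\tfrac{\alpha^2}{2\lambda_t}\dualnorm{\g_t-\tildeg_t}^2$, while the minimizer-perturbation bound $\|\ww_t-\hat\ww_t\|\leq(1-\alpha)\dualnorm{\g_t-\tildeg_t}/\lambda_t$ (the two iterates solve OAFTRL with hints differing by $(1-\alpha)(\g_t-\tildeg_t)$) yields $\inner{\g_t}{\ww_t-\hat\ww_t}\leq(1-\alpha)\dualnorm{\g_t}\dualnorm{\g_t-\tildeg_t}/\lambda_t$.

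The Huber form of the second regret bound then follows by case analysis. When $\dualnorm{\g_t-\tildeg_t}\leq\dualnorm{\g_t}$ the clipping is inactive ($\alpha=1$, $\hat\ww_t=\ww_t$), so $c_t=a_t\leq\tfrac{1}{2\lambda_t}\dualnorm{\g_t-\tildeg_t}^2=\tfrac{1}{\lambda_t}\huber(\dualnorm{\g_t-\tildeg_t},\dualnorm{\g_t})$; when $\dualnorm{\g_t-\tildeg_t}>\dualnorm{\g_t}$, plugging the chosen $\alpha$ into the $c_t$ estimate yields $c_t\leq\tfrac{1}{\lambda_t}(\dualnorm{\g_t}\dualnorm{\g_t-\tildeg_t}-\tfrac{1}{2}\dualnorm{\g_t}^2)=\tfrac{1}{\lambda_t}\huber(\dualnorm{\g_t-\tildeg_t},\dualnorm{\g_t})$. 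The $\diam{\wset}\min(\dualnorm{\g_t-\tildeg_t},\dualnorm{\g_t})$ piece comes from $a_t\leq\diam{\wset}\dualnorm{\g_t-\tildeg_t}$ and $b_t\leq\diam{\wset}\dualnorm{\g_t}$ via Cauchy--Schwarz and $\|\ww_t-\bar\ww_t\|\leq\diam{\wset}$.

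I expect the main obstacle to be justifying the \emph{pointwise} (rather than sum-level) minimum inside $\delta_t$. A direct check shows $a_t$, $b_t$, $c_t$ need not dominate one another round-by-round---e.g., $a_t-b_t$ equals the gap of a related quadratic at $\ww_t$ vs.\ $\bar\ww_t$, which can be of either sign---so naively combining three separate regret bounds would only yield $\min(\sum_t a_t,\sum_t b_t,\sum_t c_t)$ rather than $\sum_t\min(a_t,b_t,c_t)_+$. The cleanest resolution I anticipate is a unified per-round Strong-FTRL-style decomposition that bounds the regret simultaneously by each of the three expressions on a per-round basis (via a common canonical per-round surrogate), with the positive-part truncation absorbing any non-positive residual using slack in the BTL bound. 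Threading this unified analysis carefully, and in particular verifying the minimizer-perturbation estimate for $\hat\ww_t$ across the regime switch at $\alpha=1$, is where the argument will require the most care.
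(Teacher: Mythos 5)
Your proof has a genuine gap, and you have in fact correctly diagnosed it yourself: the pointwise minimum in $\delta_t$ does not follow from your decomposition. You split the regret as BTL~$+$~cheating gap, bound the BTL part by $\lam_T\reg(\uu)$ (this step is fine), and then would need to show that the cheating-gap sum $\sum_t \inner{\g_t}{\ww_t - \bar\ww_t}$ is at most $\sum_t \delta_t$. But $\delta_t \leq \big(\inner{\g_t}{\ww_t - \bar\ww_t}\big)_+$ whenever, say, $a_t < b_t$, so the inequality you need runs the wrong way on exactly those rounds. Your computations of $a_t$, $b_t$, $c_t$ and the Huber case split are all correct, but they yield three separate global regret bounds and thus only $\min(\sum_t a_t, \sum_t b_t, \sum_t c_t)$, which is what you acknowledge in your final paragraph. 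The ``unified per-round Strong-FTRL-style decomposition'' you gesture at is the piece you are missing, and it does not emerge from the BTL split.

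The paper's key idea is to make the per-round contribution depend on a \emph{free parameter}. It introduces an auxiliary OAFTRL sequence $\aww_t$ driven by arbitrary auxiliary hints $\atildeg_t$ and decomposes the regret relative to that sequence: $\regret_T(\uu) \leq \sum_t \inner{\g_t}{\ww_t - \aww_t} + \sum_t \inner{\g_t}{\aww_t - \uu}$. Standard OAFTRL analysis applied to the auxiliary sequence bounds the second sum by $\lam_T\reg(\uu) + \sum_t \big(\obj_{t+1}(\aww_t,\lam_t) - \obj_{t+1}(\bar\ww_t,\lam_t)\big)$, while the drift term is controlled by a one-step stability bound on the two FTRL minimizers. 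This gives $\regret_T(\uu) \leq \lam_T\reg(\uu) + \sum_t \delta_t(\atildeg_t)$ for \emph{every} choice of the $\atildeg_t$, and since each $\atildeg_t$ enters only through the $t$-th summand, one may take the infimum per round. Your three expressions are precisely $\delta_t(\tildeg_t) = a_t$, $\delta_t(\g_t) = b_t$, and $\delta_t(\hat\g_t) = c_t$; your BTL approach is the specialization $\atildeg_t = \g_t$ for all $t$, which is why it only recovers the $b_t$ bound. The $(\cdot)_+$ is merely a further loosening (useful for the later self-tuning results), not a device for absorbing negative residuals. Without the auxiliary-sequence trick the pointwise minimum does not hold, so the proposal as written does not establish the theorem.
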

\begin{proof}
Consider a sequence of arbitrary auxiliary subgradient hints $\atildeg_1, \dots, \atildeg_{T} \in \reals^d$ and the auxiliary OAFTRL sequence
\begin{talign}\label{aux_oaftrl}
    \aww_{t+1} = \argmin_{\aww\in\wset} \,\inner{\g_{1:t} + \atildeg_{t+1}}{\aww} + \lam_{t+1} \reg(\aww)
    \qtext{for} 0 \leq t \leq T
    \qtext{with}
    \atildeg_{T+1}\defeq \boldzero
    \qtext{and}
    \lam_{T+1} = \lam_T.
\end{talign}
Generalizing the forward regret decomposition of \citet{joulani2017modular} and the prediction drift decomposition of \citet{joulani2016delay}, we will decompose the regret of our original $(\ww_t)_{t=1}^T$ sequence into the regret of the auxiliary sequence $(\aww_t)_{t=1}^T$ and the drift between $(\ww_t)_{t=1}^T$ and $(\aww_t)_{t=1}^T$.

For each time $t$, define 
the auxiliary optimistic objective function $\aoobj_t(\ww) = \obj_t(\ww) + \inner{\atildeg_t}{\ww}$.
Fixing any $\uu \in \wset$, we have the regret bound
\begin{talign}
    \regret_T(\uu) 
         &= \sum_{t=1}^T \loss_t(\ww_t) - \loss_t(\uu) 
         \leq \sum_{t=1}^T \inner{\g_t}{\ww_t-\uu} 
            \qtext{(since each $\loss_t$ is convex with $\g_t \in \partial \loss_t(\ww_t)$)}\\   
         &= \underbrace{\sum_{t=1}^T \inner{\g_t}{\ww_t-\aww_t}}_{\textup{drift}} +  \underbrace{\sum_{t=1}^T \inner{\g_t}{\aww_t-\uu}}_{\textup{auxiliary regret}}.
\end{talign}
 
To control the drift term we employ the following lemma, proved in \cref{proof_oaftrl_difference_bound}, which bounds the difference between two \OAFTRL optimizers with different losses but common regularizers.
\begin{lemma}[\OAFTRL difference bound] \label{oaftrl_difference_bound}  The \OAFTRL and auxiliary \textup{OAFTRL} iterates \cref{aux_oaftrl}, $\ww_t$ and $\aww_t$, satisfy
    \begin{talign}
        \norm{\ww_t - \aww_t} 
        \leq \min(\frac{1}{\lam_t}\dualnorm{\tildeg_t - \atildeg_t}, \diam{\wset}).
    \end{talign}
\end{lemma}

Letting $a = \diam{\wset} \in \R \cup \{\infty\}$, we now bound each drift term summand using the Fenchel-Young inequality for dual norms and  \cref{oaftrl_difference_bound}: 
\begin{talign}
\inner{\g_t}{\ww_t-\aww_t}
    \leq
        \dualnorm{\g_t}\norm{\ww_t - \aww_t}
        \leq \min \big( \frac{1}{\lam_t}\dualnorm{\g_t}\dualnorm{\tildeg_t - \atildeg_t}, a\dualnorm{\g_t} \big).
\end{talign}

To control the auxiliary regret, we begin by invoking the OAFTRL regret bound of \citet[proof of Thm.~7.28]{Orabona2019AMI}, the nonnegativity of $\reg$, and the assumption that $(\lam_t)_{t\geq 1}$ is non-decreasing:
\begin{talign}
\sum_{t=1}^T \inner{\g_t}{\aww_t-\uu} 
     &\leq 
        \lam_{T+1} \reg(\uu) - \lam_1 \reg(\aww_1)     
        +  \sum_{t=1}^T \obj_{t+1}(\aww_t,\lam_t) -  \obj_{t+1}(\bar\ww_t,\lam_t)
     + (\lam_t - \lam_{t+1}) \reg(\aww_{t+1}) \\
     &\leq 
        \lam_{T+1} \reg(\uu) - \lam_1 \reg(\aww_1)     
        + \sum_{t=1}^T \obj_{t+1}(\aww_t,\lam_t) -  \obj_{t+1}(\bar\ww_t,\lam_t).
\end{talign}
We next bound the summands in this expression in two ways.
Since $\aww_t$ is the minimizer of $\aoobj_t$, we may apply the Fenchel-Young inequality for dual norms to conclude that
\begin{talign}
\obj_{t+1}(\aww_t,\lam_t) -  \obj_{t+1}(\bar\ww_t,\lam_t)
    &=
    \aoobj_t(\aww_t) + \inner{\aww_t}{\g_t-\atildeg_t} - (\aoobj_{t}(\bar\ww_t) + \inner{\bar\ww_t}{\g_t-\atildeg_t}) \\
    &\leq 
    \inner{\aww_t-\bar\ww_t}{\g_t-\atildeg_t} 
    \leq 
    \norm{\aww_t-\bar\ww_t}\dualnorm{\g_t-\atildeg_t} 
    \leq
    a\dualnorm{\g_t-\atildeg_t}.
\end{talign}
Moreover, by \citet[proof of Thm.~7.28]{Orabona2019AMI} and the fact that $\bar\ww_t$ minimizes $\obj_{t+1}(\cdot,\lam_t)$ over $\wset$,
\begin{talign}
\obj_{t+1}(\aww_t,\lam_t) -  \obj_{t+1}(\bar\ww_t,\lam_t)
    &\leq  \frac{\dualnorm{\g_t-\atildeg_t}^2}{2\lam_t}.
\end{talign}

Our collective bounds establish that
\begin{talign} \label{def-deltat-general}
\delta_t(\atildeg_t)
    &\defeq
\obj_{t+1}(\aww_t,\lam_t) - \obj_{t+1}(\bar\ww_t,\lam_t)
    +
\inner{\g_t}{\ww_t - \aww_t} \\
    &\leq
\min(\frac{1}{2\lam_t}\dualnorm{\g_t-\atildeg_t}^2, a\dualnorm{\g_t-\atildeg_t}) + \min(\frac{1}{\lam_t}\dualnorm{\g_t}\dualnorm{\tildeg_t-\atildeg_t}, a\dualnorm{\g_t}) \\
    &\leq
\frac{1}{2\lam_t}\dualnorm{\g_t-\atildeg_t}^2 + \frac{1}{\lam_t}\dualnorm{\g_t}\dualnorm{\tildeg_t-\atildeg_t}.
\end{talign}
To obtain an interpretable bound on regret, we will minimize the final expression over all convex combinations $\atildeg_t$ of $\g_t$ and $\tildeg_t$.
The optimal choice is given by 
\begin{align}
\hat{\g}_t 
    &= 
\g_t + c_* (\tildeg_t - \g_t) 
    \qtext{for} \\
c_* 
    &\defeq\textstyle
\min(\frac{\dualnorm{\g_t}}{\dualnorm{\tildeg_t - \g_t}},1)
    = 
\displaystyle\argmin_{c\leq 1, \atildeg_t = \g_t + c (\tildeg_t - \g_t)}\textstyle \frac{1}{2\lam_t}\dualnorm{\g_t-\atildeg_t}^2 + \frac{1}{\lam_t}\dualnorm{\g_t}\dualnorm{\tildeg_t-\atildeg_t} \\
    &=\textstyle
\argmin_{c\leq 1} \frac{c^2}{2\lam_t}\dualnorm{\g_t-\tildeg_t}^2 + \frac{1-c}{\lam_t}\dualnorm{\g_t}\dualnorm{\tildeg_t-\g_t}.
\end{align}
For this choice, we obtain the bound
\begin{talign}
(\delta_t(\hat{\g}_t))_+
    &\leq 
        \frac{1}{2\lam_t}\dualnorm{\g_t-\hat{\g}_t}^2
    +
        \frac{1}{\lam_t}\dualnorm{\g_t}\dualnorm{\hat{\g}_t-\tildeg_t} \\
    &=
        \frac{c_*^2}{2\lam_t}\dualnorm{\g_t-\tildeg_t}^2
    + 
        \frac{1-c_*}{\lam_t}\dualnorm{\g_t}\dualnorm{\g_t-\tildeg_t} \\
    &=
        \frac{1}{2\lam_t}\min(\dualnorm{\g_t-\tildeg_t},\dualnorm{\g_t})^2
    + 
         \frac{1}{\lam_t}\dualnorm{\g_t}(\dualnorm{\g_t-\tildeg_t}-\dualnorm{\g_t})_+ \\
    &= 
        \frac{1}{2\lam_t}(\dualnorm{\g_t-\tildeg_t}^2
    - 
        (\dualnorm{\g_t-\tildeg_t}-\dualnorm{\g_t})_+^2) \\
    &= 
        \frac{1}{\lam_t}\huber(\dualnorm{\g_t-\tildeg_t},\dualnorm{\g_t}) 
\end{talign}
and therefore
\begin{talign}
\label{delta_min_bound}
\delta_t = \min(\delta_t(\tildeg_t),\delta_t(\g_t), \delta_t(\hat{\g}_t))_+
    &\leq 
        \min(\frac{1}{\lam_t}\huber(\dualnorm{\g_t-\tildeg_t},\dualnorm{\g_t}),
            a\min(\dualnorm{\g_t-\tildeg_t}, \dualnorm{\g_t})).
\end{talign} 
Since $\atildeg_t$ is arbitrary, 
 the advertised regret bounds follow as
\begin{talign}
\regret_T(\uu)
    &\leq 
        \inf_{\atildeg_1,\dots,\atildeg_T \in\reals^d} \lam_{T+1} \reg(\uu)
        + \sum_{t=1}^T \delta_t(\atildeg_t) \\
    &= 
        \lam_{T+1} \reg(\uu)
        + \sum_{t=1}^T \inf_{\atildeg_t \in\reals^d}\delta_t(\atildeg_t) \\
    &\leq 
        \lam_{T+1} \reg(\uu)
        + \sum_{t=1}^T \min(\delta_t(\tildeg_t),\delta_t(\g_t),\delta_t(\hat{\g}_t))_+. 
\end{talign}
\end{proof}

\subsection{Proof of \cref{oaftrl_difference_bound}: \OAFTRL difference bound}\label{proof_oaftrl_difference_bound}
Fix any time $t$, and define the optimistic objective function 
$\oobj_t(\ww) = \lam_t \reg(\ww) + \sum_{i=1}^{t-1} \inner{\g_i}{\ww} + \inner{\tildeg_t}{\ww}$ 
and the auxiliary optimistic objective function $\aoobj_t(\ww) = \lam_t \reg(\ww) + \sum_{i=1}^{t-1} \inner{\g_i}{\ww} + \inner{\atildeg_t}{\ww}$ so that
$\ww_t \in \argmin_{\ww\in\wset} \oobj_t(\ww)$ and $\aww_t \in \argmin_{\ww\in\wset} \aoobj_t(\ww)$.
We have
\begin{talign}
    \aoobj_t(\ww_{t}) - \aoobj_t(\aww_{t})  
    &\geq \frac{\lam_t}{2} \norm{\ww_t - \aww_{t}}^2 \qtext{by the strong convexity of $\aoobj_t$ and} \\
    \oobj_{t}(\aww_{t}) - \oobj_{t}(\ww_{t})  
    &\geq \frac{\lam_{t}}{2} \norm{\ww_{t} - \aww_{t}}^2 \qtext{by the strong convexity of $\oobj_{t}$.}
\end{talign}
    
Summing the above inequalities and applying the Fenchel-Young inequality for dual norms, we obtain
\begin{align}
    \lam_t \norm{\ww_{t} - \aww_t}^2
        &\leq \inner{\atildeg_{t} - \tildeg_t}{\ww_t - \aww_t} 
        \leq \dualnorm{\tildeg_t-\atildeg_t}\norm{\ww_t - \aww_t},
\end{align}
which yields the first half of our target bound after rearrangement.
The second half follows from the definition of diameter, as
$\norm{\ww_{t} - \aww_t} \leq \diam{\wset}$.

\section{Proof of \cref{soomd_regret}: \SOOMD regret}
\label{proof_soomd_regret}

We will prove the following more general result for adaptive SOOMD (\ASOOMD)
\begin{align}\label[name]{asoomd}\tag{ASOOMD}
    \ww_{t+1} = \argmin_{\ww\in\wset} \,\inner{\g_t+\tildeg_{t+1}-\tildeg_t}{\ww} + \lam_{t+1}\Breg_{\reg}(\ww,\ww_t)
    \qtext{with arbitrary} \ww_0
    \qtext{and} \g_0 = \tildeg_0 = \boldzero
\end{align}
from which \cref{soomd_regret} will follow with the choice $\lam_t = \lam$ for all $t \geq 1$.

\begin{theorem}[\ASOOMD regret]\label{asoomd_regret}
Fix any $\lam_{T+1}\geq 0$.
If each $(\lam_{t+1}-\lam_t)\reg$ is proper 
and  differentiable, $\lam_0 \defeq 0$, and $\tildeg_{T+1} \defeq \boldzero$, then, for all $\uu\in\wset$, the \ASOOMD iterates $\ww_t$ satisfy
\begin{talign}\label{soomd_regret_bound}
\regret_T(\uu) 
    \leq 
    &\sum_{t=0}^T (\lam_{t+1} - \lam_{t}) \Breg_{\reg}(\uu,\ww_t) +\\
    &\sum_{t=1}^T
    \min\big(
        \diam{\wset}\norm{\g_t - \tildeg_t}_*,
        \frac{1}{\lam_{t+1}}\huber(\norm{\g_t - \tildeg_t}_*,\norm{\g_t + \tildeg_{t+1} - \tildeg_t}_*
        )
        \big).
\end{talign}
\end{theorem}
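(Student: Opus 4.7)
My plan combines first-order optimality of the \ASOOMD update with an Abel-style telescoping that handles time-varying regularization and a refined Fenchel--Young step that recovers the Huber penalty. Throughout, write $\hat{\g}_t \defeq \g_t + \tildeg_{t+1} - \tildeg_t$ for the effective linearized loss driving the \ASOOMD subproblem, so that for $t=0$ we have $\hat{\g}_0 = \tildeg_1$ and for $t=T$ we have $\hat{\g}_T = \g_T - \tildeg_T$.

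First, convexity of $\loss_t$ yields $\regret_T(\uu) \leq \sum_{t=1}^T \inner{\g_t}{\ww_t - \uu}$, and the three-point identity for $\Breg_\reg$ applied to the first-order optimality of \ASOOMD at each $t \in \{0, 1, \dots, T\}$ (the $t=0$ relation simply produces $\ww_1$) gives
\begin{talign}
\inner{\hat{\g}_t}{\ww_{t+1} - \uu} + \lam_{t+1}\Breg_\reg(\ww_{t+1}, \ww_t)
    \leq \lam_{t+1}\bigl[\Breg_\reg(\uu, \ww_t) - \Breg_\reg(\uu, \ww_{t+1})\bigr].
\end{talign}
Summing this over $t = 0, \dots, T$ and Abel-summing the right-hand side, using $\lam_0 = 0$ and dropping the non-negative tail $\lam_{T+1}\Breg_\reg(\uu, \ww_{T+1})$, produces exactly the divergence term $\sum_{t=0}^T (\lam_{t+1} - \lam_t)\Breg_\reg(\uu, \ww_t)$ appearing in the theorem.

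Next I expand $\hat{\g}_t$ and telescope the hint differences $\tildeg_{t+1} - \tildeg_t$, using the boundary conditions $\g_0 = \tildeg_0 = \boldzero$ and $\tildeg_{T+1} = \boldzero$, to obtain the bookkeeping identity
\begin{talign}
\sum_{t=0}^T \inner{\hat{\g}_t}{\ww_{t+1} - \uu}
    = \sum_{t=1}^T \inner{\g_t}{\ww_t - \uu}
    - \sum_{t=1}^T \inner{\g_t - \tildeg_t}{\ww_t - \ww_{t+1}}.
\end{talign}
Combining this with the previous display and discarding the non-negative $\lam_1 \Breg_\reg(\ww_1, \ww_0)$ bounds the regret by $\sum_{t=0}^T (\lam_{t+1} - \lam_t)\Breg_\reg(\uu, \ww_t) + \sum_{t=1}^T \Delta_t$, where $\Delta_t \defeq \inner{\g_t - \tildeg_t}{\ww_t - \ww_{t+1}} - \lam_{t+1}\Breg_\reg(\ww_{t+1}, \ww_t)$.

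The main obstacle is upper-bounding each $\Delta_t$ by the Huber quantity in the theorem. I will leverage two stability estimates for $b \defeq \norm{\ww_{t+1} - \ww_t}$: the unconditional diameter bound $b \leq \diam{\wset}$, and the bound $b \leq \dualnorm{\hat{\g}_t}/\lam_{t+1}$ obtained by combining the $1$-strong convexity of $\reg$ with the first-order optimality condition for $\ww_{t+1}$, in direct analogy with \cref{oaftrl_difference_bound}. Applying Fenchel--Young to the inner product and the strong-convexity lower bound $\Breg_\reg(\ww_{t+1},\ww_t)\geq \half b^2$ produces $\Delta_t \leq \dualnorm{\g_t - \tildeg_t}\, b - \tfrac{\lam_{t+1}}{2} b^2$. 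Maximizing this concave quadratic in $b$ over $[0, \diam{\wset}]$ yields the alternative $\diam{\wset}\dualnorm{\g_t - \tildeg_t}$, while maximizing over $[0, \dualnorm{\hat{\g}_t}/\lam_{t+1}]$ yields $\huber(\dualnorm{\g_t - \tildeg_t}, \dualnorm{\hat{\g}_t})/\lam_{t+1}$ via a two-case split on whether the unconstrained optimum $b^\star = \dualnorm{\g_t - \tildeg_t}/\lam_{t+1}$ is feasible: when $\dualnorm{\g_t - \tildeg_t} \leq \dualnorm{\hat{\g}_t}$ the interior maximizer returns $\tfrac{1}{2\lam_{t+1}}\dualnorm{\g_t - \tildeg_t}^2$, and otherwise the boundary maximizer returns $\tfrac{1}{\lam_{t+1}}(\dualnorm{\g_t - \tildeg_t}\dualnorm{\hat{\g}_t} - \tfrac{1}{2}\dualnorm{\hat{\g}_t}^2)$. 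Both values coincide with the Huber expression by its piecewise definition, and taking the minimum of the two per-$t$ bounds completes the proof.
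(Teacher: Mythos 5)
Your proof is correct. The second half of your argument---strong convexity of $\reg$ to lower-bound $\Breg_\reg(\ww_{t+1},\ww_t)$ by $\tfrac12\norm{\ww_t-\ww_{t+1}}^2$, the iterate-stability bound $\norm{\ww_t-\ww_{t+1}} \le \min(\dualnorm{\hat{\g}_t}/\lam_{t+1}, \diam{\wset})$, and the norm-constrained maximization of the concave quadratic $\dualnorm{\g_t-\tildeg_t}\,b - \tfrac{\lam_{t+1}}{2}b^2$ to produce the $\huber$ term---is essentially identical to the paper's (which packages the quadratic maximization as \cref{norm_constrained_conjugate} and cites \cref{omd_iterate_bound}). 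Where you genuinely diverge is in deriving the intermediate inequality
\begin{talign}
\regret_T(\uu) \leq \sum_{t=0}^T (\lam_{t+1}-\lam_t)\Breg_\reg(\uu,\ww_t) + \sum_{t=1}^T \Big[\inner{\g_t-\tildeg_t}{\ww_t-\ww_{t+1}} - \lam_{t+1}\Breg_\reg(\ww_{t+1},\ww_t)\Big],
\end{talign}
which is the paper's \cref{baseline_soomd_regret_bound}. The paper reaches it by instantiating the abstract \textsc{Ada-MD} regret inequality of \citet{joulani2017modular} with a carefully chosen decomposition into $r_t, q_t, \tilde q_t, p_t$, which requires matching a somewhat opaque framework. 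You instead derive it from scratch: the three-point identity plus first-order optimality gives the per-step mirror-descent inequality, Abel summation with $\lam_0 = 0$ and the dropped $\lam_{T+1}\Breg_\reg(\uu,\ww_{T+1})$ term yields the $(\lam_{t+1}-\lam_t)\Breg_\reg(\uu,\ww_t)$ sum, and the bookkeeping identity $\sum_{t=0}^T \inner{\hat{\g}_t}{\ww_{t+1}-\uu} = \sum_{t=1}^T \inner{\g_t}{\ww_t-\uu} - \sum_{t=1}^T \inner{\g_t-\tildeg_t}{\ww_t-\ww_{t+1}}$ (relying on $\g_0=\tildeg_0=\tildeg_{T+1}=\boldzero$) recovers the hint-error terms. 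This route is more self-contained and makes the role of the boundary conditions on $\tildeg_0, \tildeg_{T+1}$ transparent, at the cost of a somewhat longer elementary computation; the paper's route is shorter modulo the cited framework but obscures which cancellations are actually doing the work.
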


\begin{proof}
Fix any $\uu\in\wset$, instantiate the notation of \citet[Sec. 7.2]{joulani2017modular}, and consider the choices 
\begin{itemize}
    \item $r_1 = \lam_2 \reg$, $r_t = (\lam_{t+1} - \lam_{t}) \reg$ for $t \geq 2$, so that $r_{1:t} = \lam_{t+1} \reg$ for $t \geq 1$,
    \item $q_t = \tildeq_t + \inner{\tildeg_{t+1} - \tildeg_t}{\cdot}$ for $t \geq 0$, 
    \item $\tilde{q}_0(\ww) = \lam_1\Breg_{\reg}(\ww, \ww_0)$ and $\tilde{q}_t \equiv 0$ for all $t \geq 1$,
    \item $p_1 \defeq r_1 - q_{0} = r_1 - \tilde{q}_{0} - \inner{\tildeg_1 - \tildeg_{0}}{\cdot} = \lam_2 \reg - \lam_1\Breg_{\reg}(\cdot, \ww_0) - \inner{\tildeg_1 - \tildeg_{0}}{\cdot}$,
    \item $p_t \defeq r_t - q_{t-1} = r_t - \tilde{q}_{t-1} - \inner{\tildeg_t - \tildeg_{t-1}}{\cdot} = (\lam_{t+1} - \lam_{t}) \reg - \inner{\tildeg_t - \tildeg_{t-1}}{\cdot}$ for all $t\geq 2$.
\end{itemize}
Since, for each $t$, $\delta_t = 0$ and $\loss_t$ is convex, the \textsc{Ada-MD} regret inequality of \citet[Eq. (24)]{joulani2017modular} and 
the choice $\tildeg_{T+1} = 0$
imply that
\begin{align}
\regret_T(\uu) 
    &= 
        \sum_{t=1}^T\loss_t(\ww_{t}) - \sum_{t=1}^T\loss_t(\uu) \\
    &\leq 
        - \sum_{t=1}^T \Breg_{\loss_t}(\uu, \ww_t)
        + \sum_{t=0}^T q_t(\uu) - q_t(\ww_{t+1})
        + \sum_{t=1}^T \Breg_{p_t}(\uu, \ww_t) \\
        &\quad- \sum_{t=1}^T \Breg_{r_{1:t}}(\ww_{t+1}, \ww_t)
        + \sum_{t=1}^T \inner{\g_t}{\ww_t - \ww_{t+1}}
        + \sum_{t=1}^T \delta_t \\
    &\leq 
        \lam_1(\Breg_{\reg}(\uu, \ww_0) - \Breg_{\reg}(\ww_1, \ww_0)) 
        + \sum_{t=0}^T \inner{\tildeg_{t+1} - \tildeg_t}{\uu - \ww_{t+1}} \\
        &\quad+ 
        \sum_{t=1}^T (\lam_{t+1} - \lam_{t})  \Breg_{\reg}(\uu,\ww_t) 
        +\sum_{t=1}^T \inner{\g_t}{\ww_t - \ww_{t+1}} - \lam_{t+1} \Breg_{\reg}(\ww_{t+1}, \ww_t)\\
    &=
        \sum_{t=0}^T (\lam_{t+1} - \lam_{t})  \Breg_{\reg}(\uu,\ww_t) 
            + \sum_{t=0}^T\inner{\g_t-\tildeg_t}{\ww_t - \ww_{t+1}} 
            - \lam_{t+1} \Breg_{\reg}(\ww_{t+1}, \ww_t). \label{baseline_soomd_regret_bound} 
\end{align}
To obtain our advertised bound, we begin with the expression \cref{baseline_soomd_regret_bound} and invoke the $1$-strong convexity of $\reg$ and the nonnegativity of $\Breg_{\lam \reg}(\ww_{1}, \ww_0)$ to find
\begin{talign}
\regret_T(\uu) 
    &\leq 
        \sum_{t=0}^T (\lam_{t+1} - \lam_{t})  \Breg_{\reg}(\uu,\ww_t) 
            + \sum_{t=0}^T\inner{\g_t-\tildeg_t}{\ww_t - \ww_{t+1}} 
            - \lam_{t+1} \Breg_{\reg}(\ww_{t+1}, \ww_t)\\
    &\leq 
        \sum_{t=0}^T (\lam_{t+1} - \lam_{t}) \Breg_{\reg}(\uu,\ww_t)             + \sum_{t=1}^T\inner{\g_t-\tildeg_t}{\ww_t - \ww_{t+1}} - \frac{\lam_{t+1}}{2}\norm{\ww_t - \ww_{t+1}}^2. \label{intermediate_soomd_regret_bound}
\end{talign}
We will bound the final sum in this expression using two lemmas.
The first is a bound on the difference between subsequent \ASOOMD iterates distilled from \citet[proof of Prop. 2]{joulani2016delay}. 
\begin{lemma}[\ASOOMD iterate bound {\citep[proof of Prop. 2]{joulani2016delay}}]\label{omd_iterate_bound}
If $\reg$ is differentiable and $1$-strongly convex with respect to $\norm{\cdot}$, then the \ASOOMD iterates satisfy
\begin{talign}
\norm{\ww_t - \ww_{t+1}} \leq \frac{1}{\lam_{t+1}}\norm{\g_t + \tildeg_{t+1} - \tildeg_t}_*.
\end{talign}
\end{lemma}
The second, proved in \cref{proof_norm_constrained_conjugate}, is a general bound on $\inner{\g}{\vv} - \frac{\lam}{2} \norm{\vv}^2$ under a norm constraint on $\vv$.
\begin{lemma}[Norm-constrained conjugate] \label{norm_constrained_conjugate}
For any $\g \in \reals^d$ and $\lam, c, b > 0$,
\begin{align} 
    \sup_{\vv \in \reals^d: \norm{\vv} \leq \min(\frac{c}{\lam},b)} 
    \textstyle
        \inner{\g}{\vv} - \frac{\lam}{2} \norm{\vv}^2 
    &= 
    \textstyle
        \frac{1}{\lam} \min(\dualnorm{\g}, c,b\lam)(\dualnorm{\g} - \frac{1}{2} \min(\dualnorm{\g}, c,b\lam)) \\
    &\leq 
    \textstyle
        \min(b\dualnorm{\g}, \frac{1}{\lam} \min(\dualnorm{\g}, c)(\dualnorm{\g} - \frac{1}{2} \min(\dualnorm{\g}, c))) \\
    &= 
    \textstyle
        \min(b\dualnorm{\g},\frac{1}{2\lam} (\dualnorm{\g}^2 - (\dualnorm{\g}- \min(\dualnorm{\g},c))^2))\\
    &= 
    \textstyle
        \min(b\dualnorm{\g},\frac{1}{2\lam} (\dualnorm{\g}^2 - (\dualnorm{\g}- c)_+^2))\\
    &\leq
    \textstyle
         \min(\frac{1}{2\lam}\dualnorm{\g}^2, \frac{1}{\lam}c\dualnorm{\g}, b\dualnorm{\g}).
\end{align}
\end{lemma}
By \cref{omd_iterate_bound,norm_constrained_conjugate} and the definition of $a\defeq \diam{\wset}$, each summand in our regret bound \cref{intermediate_soomd_regret_bound} satisfies
\begin{align}
&\inner{\g_t-\tildeg_t}{\ww_t - \ww_{t+1}} - \textstyle
\frac{\lam_{t+1}}{2}\norm{\ww_t - \ww_{t+1}}^2 
    \leq 
    \,\displaystyle\sup_{\vv \in \reals^d : \norm{\vv}\leq \min(\frac{1}{\lam_{t+1}}\norm{\g_t + \tildeg_{t+1} - \tildeg_t}_*,a)}
    \textstyle\inner{\g_t-\tildeg_t}{\vv} - \frac{\lam_{t+1}}{2}\norm{\vv}^2 \\
    =
    \,&\textstyle
        \min\big(
        a\norm{\g_t - \tildeg_t}_*,
        \frac{1}{2\lam_{t+1}}(\norm{\g_t - \tildeg_t}_*^2 
    - 
        (\norm{\g_t - \tildeg_t}_* 
    - 
        \norm{\g_t + \tildeg_{t+1} - \tildeg_t}_*
        )_+^2)
        \big) 
\end{align}
yielding the advertised result.
\end{proof}

\subsection{Proof of \cref{norm_constrained_conjugate}: Norm-constrained conjugate}\label{proof_norm_constrained_conjugate}
By the definition of the dual norm, 
\begin{align}
    \sup_{\vv \in \reals^d : \norm{\vv}\leq \min(\frac{c}{\lam},b)} 
    \textstyle
        \inner{\g}{\vv} - \frac{\lam}{2}\norm{\vv}^2  
    &= 
        \sup_{a \leq \min(\frac{c}{\lam},b)} \sup_{\vv \in \reals^d : \norm{\vv}\leq a} 
    \textstyle
        \inner{\g}{\vv} - \frac{\lam}{2}a^2 
    = 
    \displaystyle
        \sup_{a \leq \min(\frac{c}{\lam},b)}  
    \textstyle
        a \dualnorm{\g} - \frac{\lam}{2}a^2 \\
    &= 
    \textstyle
        \frac{1}{\lam} \min(\dualnorm{\g}, c,b\lam)(\dualnorm{\g} - \frac{1}{2} \min(\dualnorm{\g}, c,b\lam))
    \leq
        \min(\frac{1}{\lam} c \dualnorm{\g}, b\dualnorm{\g}).
\end{align}
We compare to the values of less constrained optimization problems to obtain the final inequalities:
\begin{align}
    \displaystyle
        \sup_{a \leq \min(\frac{c}{\lam},b)}  
    \textstyle
        a \dualnorm{\g} - \frac{\lam}{2}a^2 
    &\leq
    \displaystyle
        \sup_{a \leq \frac{c}{\lam}}  
    \textstyle
        a \dualnorm{\g} - \frac{\lam}{2}a^2 
    = 
    \textstyle
        \frac{1}{\lam} \min(\dualnorm{\g}, c)(\dualnorm{\g} - \frac{1}{2} \min(\dualnorm{\g}, c))\\
    &\leq
    \displaystyle
        \sup_{a > 0}  
    \textstyle
        a \dualnorm{\g} - \frac{\lam}{2}a^2 
        = \frac{1}{\lam} \half \dualnorm{\g}^2.
\end{align}

\section{Proof of \cref{dorm_is_odftrl_dorm+_is_doomd}: \DORM is \ODAFTRL and \DORM+ is \DOOMD}\label{proof_dorm_is_odftrl_dorm+_is_doomd}
Our derivations will make use of several facts about $\ell^p$ norms, summarized in the next lemma.
\begin{lemma}[$\ell^p$ norm facts]\label{pnorm-facts}
For $p\in (1,\infty)$, $\reg(\ww) = \half \pnorm{\ww}^2$, and any vectors $\ww, \vv \in \reals^d$ and $\worth_0\in\orthantd$,
\begin{align}
\grad \reg(\ww)
    &= \grad \texthalf \pnorm{\ww}^2 
    = \sign(\ww) |\ww|^{p-1}/\pnorm{\ww}^{p-2}  \label{grad_pnorm_squared}\\
\inner{\ww}{\grad \reg(\ww)}
    &= \pnorm{\ww}^2 = 2\reg(\ww) \\
\reg^*(\vv) 
    &= \sup_{\ww\in\reals^d} \inner{\ww}{\vv} - \reg(\ww) = \texthalf \qnorm{\vv}^2 \qtext{for} 1/q = 1 - 1/p \label{pnorm_squared_conjugate}\\
\grad \reg^*(\vv) 
    &= \sign(\vv) |\vv|^{q-1}/\qnorm{\vv}^{q-2} \\
\reg_+^*(\vv) 
    &= \sup_{\ww\in\orthantd} \inner{\ww}{\vv} - \reg(\ww) 
    = \sup_{\ww\in\reals^d} \inner{\ww}{(\vv)_+} - \reg(\ww) 
    = \texthalf \qnorm{(\vv)_+}^2 \\
\grad \reg_+^*(\vv) 
    &= \argmax_{\ww\in\orthantd} \inner{\ww}{\vv} - \reg(\ww) = \argmin_{\ww\in\orthantd} \reg(\ww) - \inner{\ww}{\vv} = (\vv)_+^{q-1}/\qnorm{(\vv)_+}^{q-2} \label{argmin_pnorm_reg}\\
\min_{\worth \in \orthantd} \Breg_{\lam\reg}(\worth, \worth_0) - \inner{\vv}{\worth} 
    &= \lam(\inner{\worth_0}{\grad \reg(\worth_0)} -\reg(\worth_0) - \sup_{\worth\in\orthantd}  \inner{\worth}{\grad \reg(\worth_0) + \vv/\lam} - \reg(\worth)) \\
    &= \lam(\inner{\worth_0}{\grad \reg(\worth_0)} - \reg(\worth_0) - \reg_+^*(\grad \reg(\worth_0) + \vv/\lam)) \\
    &= \lam(\reg(\worth_0) - \reg_+^*(\grad \reg(\worth_0) + \vv/\lam)) \\
    &= \lam(\reg(\worth_0) - \texthalf\qnorm{(\grad \reg(\worth_0) + \vv/\lam)_+}^2) \\
    &= \lam(\texthalf\pnorm{\worth_0}^2 - \texthalf\qnorm{(\worth_0^{p-1}/\pnorm{\worth_0}^{p-2} + \vv/\lam)_+}^2).
\end{align}
\end{lemma}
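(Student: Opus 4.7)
The plan is to verify each identity in sequence, using standard duality and convex analysis arguments, since every claim is a classical fact about $\ell^p$ norms and their Fenchel conjugates. I would begin with the gradient formula for $\reg(\ww) = \half\pnorm{\ww}^2$: writing $\reg(\ww) = \half\big(\sum_i |\wi_i|^p\big)^{2/p}$ and applying the chain rule coordinatewise gives $\partial_i\reg(\ww) = \pnorm{\ww}^{2-p}\cdot \sign(\wi_i)|\wi_i|^{p-1}$, which is exactly \eqref{grad_pnorm_squared}. Contracting against $\ww$ yields $\inner{\ww}{\grad\reg(\ww)} = \pnorm{\ww}^{2-p}\sum_i|\wi_i|^p = \pnorm{\ww}^2 = 2\reg(\ww)$, which is also just the Euler identity for the $2$-homogeneous function $\reg$.

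For the Fenchel conjugate $\reg^*(\vv) = \half\qnorm{\vv}^2$, I would argue by computing $\sup_{\ww}\,\inner{\ww}{\vv} - \half\pnorm{\ww}^2$ via optimality: setting $\grad\reg(\ww) = \vv$ and solving using \eqref{grad_pnorm_squared} gives $\ww = \sign(\vv)|\vv|^{q-1}/\qnorm{\vv}^{q-2}$ (one verifies this by inserting into $\grad\reg$ and simplifying using $p(q-1) = q$ and $(p-2)(q-1) + (q-2) = 0$). Plugging the optimizer back and using $\inner{\ww}{\vv} = \qnorm{\vv}^2$ and $\pnorm{\ww}^2 = \qnorm{\vv}^2$ yields $\reg^*(\vv) = \half\qnorm{\vv}^2$, and differentiating in $\vv$ (or appealing to $\grad\reg^* = (\grad\reg)^{-1}$) produces the claimed gradient formula.

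Next, for the nonnegative restriction $\reg_+^*(\vv) = \sup_{\ww\ge 0}\,\inner{\ww}{\vv} - \reg(\ww)$, I would observe coordinatewise that the optimizer must set $\wi_i = 0$ whenever $v_i < 0$ (since any positive $\wi_i$ strictly decreases the objective there, while $\wi_i = 0$ contributes nothing). Hence the constrained supremum over $\orthantd$ equals the unconstrained supremum with $\vv$ replaced by $(\vv)_+$, giving the middle equality. Applying the already-proved formula for $\reg^*$ to $(\vv)_+$ gives $\reg_+^*(\vv) = \half\qnorm{(\vv)_+}^2$, and the argmax identity \eqref{argmin_pnorm_reg} follows from the unconstrained argmax formula applied to $(\vv)_+$ combined with the fact that this argmax already lies in $\orthantd$.

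The final identity is a routine rearrangement: expanding $\Breg_{\lam\reg}(\worth,\worth_0) = \lam\reg(\worth) - \lam\reg(\worth_0) - \lam\inner{\grad\reg(\worth_0)}{\worth-\worth_0}$ and grouping terms linear in $\worth$, I would write
\begin{align*}
\min_{\worth\in\orthantd}\Breg_{\lam\reg}(\worth,\worth_0) - \inner{\vv}{\worth}
 &= \lam\inner{\worth_0}{\grad\reg(\worth_0)} - \lam\reg(\worth_0)
 - \lam\sup_{\worth\in\orthantd}\big[\inner{\worth}{\grad\reg(\worth_0)+\vv/\lam} - \reg(\worth)\big],
\end{align*}
recognize the bracketed expression as $\reg_+^*(\grad\reg(\worth_0)+\vv/\lam)$, and substitute the evaluated form $\half\qnorm{(\cdot)_+}^2$. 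Using $\inner{\worth_0}{\grad\reg(\worth_0)} = 2\reg(\worth_0) = \pnorm{\worth_0}^2$ (proved in step one) collapses the constant term to $\lam\reg(\worth_0)$, yielding the claimed closed form. The only real subtlety in the whole argument is keeping the exponent arithmetic straight when verifying that the candidate optimizer for $\reg^*$ actually satisfies $\grad\reg(\ww) = \vv$; everything else is bookkeeping.
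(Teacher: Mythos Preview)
Your proof is correct and matches the paper's approach for the gradient formula \eqref{grad_pnorm_squared} (the same chain-rule computation). The one genuine difference is for the conjugate identity $\reg^*(\vv) = \half\qnorm{\vv}^2$: you solve the first-order condition $\grad\reg(\ww) = \vv$ explicitly and verify the exponent arithmetic, whereas the paper simply invokes its norm-constrained conjugate lemma (\cref{norm_constrained_conjugate}), which shows $\sup_{\ww}\,\inner{\ww}{\vv} - \half\norm{\ww}^2 = \half\dualnorm{\vv}^2$ for any norm and its dual and then specializes to $\pnorm{\cdot}$ with dual $\qnorm{\cdot}$. The paper's route is shorter and norm-agnostic; your direct computation is self-contained and has the advantage of simultaneously producing the explicit maximizer needed for $\grad\reg^*$ and for \eqref{argmin_pnorm_reg}, which the paper leaves unproved. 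The remaining identities ($\reg_+^*$ and the Bregman minimum) are not proved in the paper at all, and your arguments for them are correct; the only point worth tightening is the coordinatewise claim for $\reg_+^*$, where ``any positive $\wi_i$ strictly decreases the objective'' should be phrased as ``replacing $\wi_i$ by $0$ when $v_i\le 0$ weakly increases $\inner{\ww}{\vv}$ and weakly decreases $\pnorm{\ww}$,'' since the $\pnorm{\cdot}^2$ term couples coordinates.
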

\begin{proof}
The fact \cref{grad_pnorm_squared} follows from the chain rule as
\begin{talign}
    \grad_j \half \pnorm{\ww}^2
        &= \half \grad_j (\pnorm{\ww}^p)^{2/p}
        = \frac{1}{p} (\pnorm{\ww}^p)^{(2/p)-1} \grad_j \pnorm{\ww}^p 
        = \frac{1}{p} \pnorm{\ww}^{2-p} \grad_j \sum_{j'=1}^d |\ww_{j'}|^p \\
        &= \frac{1}{p} \pnorm{\ww}^{2-p} p \sign(\ww_{j}) |\ww_{j}|^{p-1}
        = \sign(\ww_{j}) |\ww_{j}|^{p-1}/\pnorm{\ww}^{p-2}.
\end{talign}

The fact \cref{pnorm_squared_conjugate} follows from \cref{norm_constrained_conjugate} as $\qnorm{\cdot}$ is the dual norm of $\pnorm{\cdot}$.
\end{proof}
We now prove each claim in turn.
\subsection{\DORM  is \ODAFTRL}
Fix $p \in (1,2]$, $\lam > 0$, and $t\geq 0$.
The \ODAFTRL iterate with hint $-\h_{t+1}$, $\wset \defeq \orthantd$, $\psi(\worth) = \half \pnorm{\worth}^2$, loss subgradients $\g_{1:t-D}^{\ODAFTRL} = -\rr_{1:t-D}$, and regularization parameter $\lam$ takes the form
\begin{align}
\argmin_{\worth\in\orthantd}\ &\lam\reg(\worth)- \inner{\worth}{\h_{t+1} + \rr_{1:t-D}} \\
        &= \argmin_{\worth\in\orthantd}  \reg(\worth)- \inner{\worth}{(\h_{t+1} + \rr_{1:t-D})/\lam} \\
        &= ((\rr_{1:t-D} + \h_{t+1})/\lam)_+^{q-1} / \qnorm{((\rr_{1:t-D} + \h_{t+1})/\lam)_+}^{q-2} \qtext{by \cref{argmin_pnorm_reg}} \\
        &= ((\rr_{1:t-D} + \h_{t+1})/\lam)_+^{q-1} \norm{((\rr_{1:t-D} + \h_{t+1})/\lam)_+^{q-1}}_p^{p-2} \qtext{since $(p-1)(q-1) = 1$}\\
        &= \worth_{t+1} \norm{\worth_{t+1}}_p^{p-2}
\end{align}
proving the claim. 

\subsection{\DORMP  is \DOOMD}
Fix $p \in (1,2]$ and $\lam > 0$,
and let $(\worth_{t})_{t\geq 0}$ denote the unnormalized iterates generated by \DORMP with hints $\h_t$, instantaneous regrets $\rr_{t}$,
regularization parameter $\lambda$, and hyperparameter $q$.
For $p = q/(q-1)$, let $(\wbar_{t})_{t\geq 0}$ denote the sequence generated by \DOOMD with 
$\wbar_0 = \boldzero$, hints $-\h_t$, $\wset \defeq \orthantd$, $\psi(\worth) = \half \pnorm{\worth}^2$, loss subgradients $\g_{t}^{\DOOMD} = -\rr_{t}$, and regularization parameter $\lam$. 
We proceed by induction to show that, for each $t$, $\wbar_{t} = \worth_{t} \pnorm{\worth_{t}}^{p-2}$.

\paragraph{Base case} By assumption, $\wbar_0 = \boldzero = \worth_0 \pnorm{\worth_0}^{p-2}$, confirming the base case.

\paragraph{Inductive step}
Fix any $t \geq 0$ and assume that for each $s \leq t$, $\wbar_s = \worth_{s} \pnorm{\worth_{s}}^{p-2}$. Then, by the definition of \DOOMD and our $\ell^p$ norm facts, 
\begin{align}
\bar{\ww}_{t+1} 
    &= \argmin_{\wbar\in\orthantd}\, \inner{-\h_{t+1}+\h_t-\rr_{t-D}}{\wbar} + \Breg_{\lam\reg}(\wbar,\wbar_t) \\
    &= \argmin_{\wbar\in\orthantd} \lam (\reg(\wbar)-\reg(\wbar_{t}) - \inner{\wbar-\wbar_{t}}{\grad \reg(\wbar_{t})}) + \inner{-\h_{t+1}+\h_t-\rr_{t-D}}{\wbar}  \\
    &= \argmin_{\wbar\in\orthantd}  \reg(\wbar)- \inner{\wbar}{\grad \reg(\wbar_{t}) + (\rr_{t-D}-\h_t+\h_{t+1})/\lam} \\
    &= \argmin_{\wbar\in\orthantd}  \reg(\wbar)- \inner{\wbar}{\wbar_t^{p-1} / \pnorm{\wbar_t}^{p-2} + (\rr_{t-D}-\h_t+\h_{t+1})/\lam} \qtext{by \cref{grad_pnorm_squared}}\\
    &= \argmin_{\wbar\in\orthantd}  \reg(\wbar)- \inner{\wbar}{\worth_{t}^{p-1} + (\rr_{t-D}-\h_t+\h_{t+1})/\lam} \qtext{by the inductive hypothesis}\\
    &= (\worth_{t}^{p-1} + (\rr_{t-D}-\h_t+\h_{t+1})/\lam)_+^{q-1} / \qnorm{(\worth_{t}^{p-1} + (\rr_{t-D}-\h_t+\h_{t+1})/\lam)_+}^{q-2} \qtext{by \cref{argmin_pnorm_reg}}\\
    &= (\worth_{t}^{p-1} + (\rr_{t-D}-\h_t+\h_{t+1})/\lam)_+^{q-1} \pnorm{(\worth_{t}^{p-1} + (\rr_{t-D}-\h_t+\h_{t+1})/\lam)_+^{q-1}}^{p-2}
    \qtext{since $(p-1)(q-1)=1$}\\
    &= \worth_{t+1} \pnorm{\worth_{t+1}}^{p-2},
\end{align}
completing the inductive step.
\section{Proof of \cref{dorm_dorm+_lambda_independent}: \DORM and \DORMP  are independent of $\lam$}\label{proof_dorm_dorm+_lambda_independent}
We will prove the following more general result, from which the stated result follows immediately.

\begin{lemma}[\DORM and \DORMP are independent of $\lam$]
\label{dorm_dorm+_lambda_independent_detailed}
Consider either \DORM or \DORMP plays $\worth_t$ as a function of $\lam > 0$, and suppose that for all time points $t$, the observed subgradient $\g_t$ and chosen hint $\h_{t+1}$ only depend on $\lam$ through $(\ww_s,\lam^{q-1}\worth_s,\g_{s-1},\h_{s})_{s\leq t}$
and $(\ww_s,\lam^{q-1}\worth_s,\g_{s},\h_{s})_{s\leq t}$ respectively.
Then if $\lam^{q-1}\worth_0$ is independent of the choice of $\lambda > 0$, then so is $\lam^{q-1} \worth_{t}$ for all time points $t$.
As a result, $\ww_t \propto \lam^{q-1} \worth_{t}$ is also independent of the choice of $\lambda > 0$ at all time points.
\end{lemma}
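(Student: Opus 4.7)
The plan is to proceed by strong induction on $t$, simultaneously showing that $\lam^{q-1}\worth_s$, $\ww_s$, $\g_{s-1}$, and $\h_s$ are independent of $\lambda$ for every $s \leq t$. The base case $t = 0$ is immediate: $\lam^{q-1}\worth_0$ is $\lambda$-independent by assumption, the stated conventions set $\h_0 = \boldzero$ (with no $\g_{-1}$ entering the recursion), and the normalized play $\ww_0 = \worth_0/\inner{\boldone}{\worth_0} = \lam^{q-1}\worth_0/\inner{\boldone}{\lam^{q-1}\worth_0}$ depends on $\lambda$ only through the $\lambda$-independent quantity $\lam^{q-1}\worth_0$.

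For the inductive step, the lemma's hypothesis on $\g_t$ asserts that $\g_t$ depends on $\lambda$ only through $(\ww_s, \lam^{q-1}\worth_s, \g_{s-1}, \h_s)_{s \leq t}$; by the inductive hypothesis each of these is $\lambda$-independent, so $\g_t$ is too. Applying the analogous hypothesis to $\h_{t+1}$ yields $\h_{t+1}$ $\lambda$-independent as well. Each instantaneous regret $\rr_s = \boldone\inner{\g_s}{\ww_s} - \g_s$ for $s \leq t$ is then a function of $\lambda$-independent quantities and hence $\lambda$-independent.

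The crux is a short algebraic manipulation that absorbs the explicit $\lambda$-factors into the $\lam^{q-1}\worth$ iterates. For \DORM,
\begin{align}
\lam^{q-1}\worth_{t+1} = \lam^{q-1}\max(\boldzero, (\rr_{1:t-D} + \h_{t+1})/\lam)^{q-1} = \max(\boldzero, \rr_{1:t-D} + \h_{t+1})^{q-1},
\end{align}
which manifestly depends only on $\lambda$-independent data. For \DORMP, I invoke the conjugate-exponent identity $(q-1)(p-1) = 1$, which gives $\lam \worth_t^{p-1} = (\lam^{q-1}\worth_t)^{p-1}$, so
\begin{align}
\lam^{q-1}\worth_{t+1} = \max\big(\boldzero, (\lam^{q-1}\worth_t)^{p-1} + \rr_{t-D} + \h_{t+1} - \h_t\big)^{q-1},
\end{align}
a function of $\lam^{q-1}\worth_t$, $\rr_{t-D}$, $\h_t$, and $\h_{t+1}$, all $\lambda$-independent. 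The claim for $\ww_{t+1}$ then follows by the same normalization argument as in the base case, closing the induction. The hard part will be the \DORMP algebra above: one must check that $\lam > 0$ permits pulling $\lam^{q-1}$ inside the truncated max via $\lam^{q-1}\max(\boldzero, x)^{q-1} = \max(\boldzero, \lam x)^{q-1}$ and that the conjugate-exponent identity correctly collapses $\lam\worth_t^{p-1}$ to a function of $\lam^{q-1}\worth_t$ alone; the rest is routine bookkeeping of the inductive dependencies.
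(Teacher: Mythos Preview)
Your proof is correct and takes essentially the same approach as the paper: both proceed by induction on $t$, pulling $\lam^{q-1}$ through the positive-part operator for \DORM and invoking the conjugate-exponent identity $(p-1)(q-1)=1$ to rewrite $\lam\,\worth_t^{p-1}=(\lam^{q-1}\worth_t)^{p-1}$ for \DORMP. The paper runs two separate inductions (one per algorithm) while you package them into a single strong induction that also tracks $\g_{s-1}$ and $\h_s$ explicitly, but the substance is identical.
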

\begin{proof}
We prove each result by induction on $t$.

\subsection{Scaled \DORM iterates $\lam^{q-1} \worth_{t}$ are independent of $\lam$}
\paragraph{Base case}
By assumption, $\h_1$ is independent of the choice of $\lam > 0$.
Hence $\lam^{q-1}\worth_1 = (\h_1)_+^{q-1}$ is independent of $\lam > 0$, confirming the base case.

\paragraph{Inductive step}
Fix any $t \geq 0$, suppose $\lam^{q-1}\worth_{s}$ is independent of the choice of $\lambda > 0$ for all $s \leq t$, and consider
\begin{talign}
\lam^{q-1} \worth_{t+1} = (\rr_{1:t-D} + \h_{t+1})_+^{q-1}.
\end{talign}  

Since $\rr_{1:t-D}$ depends on $\lam$ only through $\ww_{s}$ and $\g_{s}$ for $s \leq t-D$, our $\lam$ dependence assumptions for $(\g_{s},\h_{s+1})_{s\leq t}$; the fact that, for each $s$, $\ww_s \propto \lam^{q-1}\worth_{s}$; and our inductive hypothesis together imply that $\lam^{q-1} \worth_{t+1}$ is independent of $\lam > 0$.
\subsection{Scaled \DORMP iterates $\lam^{q-1} \worth_{t}$ are independent of $\lam$}
\paragraph{Base case}
By assumption, $\lam^{q-1}\worth_0$ is independent of the choice of $\lambda > 0$, confirming the base case.

\paragraph{Inductive step} 
Fix any $t \geq 0$ and suppose $\lam^{q-1}\worth_{s}$ is independent of the choice of $\lambda > 0$ for all $s \leq t$. Since $(p-1)(q-1)=1$,
\begin{align}
\lam^{q-1} \worth_{t+1} 
    = (\lam \worth_{t}^{p-1} + \rr_{t-D} - \h_t + \h_{t+1})_+^{q-1}
    = ((\lam^{q-1} \worth_{t})^{p-1} + \rr_{t-D} - \h_t + \h_{t+1})_+^{q-1}.
\end{align}  
Since $\rr_{t-D}$ depends on $\lam$ only through $\ww_{t-D}$ and $\g_{t-D}$, our $\lam$ dependence assumptions for $(\g_{s},\h_{s+1})_{s\leq t}$; the fact that, for each $s \leq t$, $\ww_s \propto \lam^{q-1}\worth_{s}$; and our inductive hypothesis together imply that $\lam^{q-1} \worth_{t+1}$ is independent of $\lam > 0$.
\end{proof}
\section{Proof of \cref{dorm_dorm+_regret}: \DORM and \DORMP regret}\label{proof_dorm_dorm+_regret}

Fix any $\lambda > 0$ and $\uu\in\simplexd$, consider the unnormalized \DORM or \DORMP iterates $\worth_t$, and define $\wbar_t = \worth_t \pnorm{\worth_t}^{p-2}$ for each $t$. 
For either algorithm, we will bound our regret in terms of the surrogate losses 
\begin{talign}
\surrloss_t(\worth) 
    \defeq 
        -\inner{\rr_t}{\worth}
    = 
        \inner{\g_t}{\worth}-\inner{\worth}{\boldone} \inner{\g_t}{\ww_t}
\end{talign}
defined for $\worth \in \orthantd$.
Since $\surrloss_t(\uu) = \inner{\g_t}{\uu-\ww_t}$, $\surrloss_t(\wbar_t) = 0$, and each $\loss_t$ is convex, we have
\begin{talign}
\regret_T(\uu) 
    = \sum_{t=1}^T \loss_t(\ww_t) - \loss_t(\uu)
    \leq \sum_{t=1}^T \inner{\g_t}{\ww_t-\uu}
    = \sum_{t=1}^T \surrloss_t(\wbar_t) - \surrloss_t(\uu).
\end{talign}
For \DORM, \cref{dorm_is_odftrl_dorm+_is_doomd} implies that $(\wbar_t)_{t\geq 1}$ are \ODFTRL iterates, so the \ODFTRL regret bound (\cref{odftrl_regret}) and the fact that $\psi$ is $1$-strongly convex with respect to $\norm{\cdot} = \sqrt{p-1}\pnorm{\cdot}$ \citep[see][Lemma 17]{shalev2007online} with $\norm{\cdot}_* = \frac{1}{\sqrt{p-1}}\qnorm{\cdot}$ imply
\begin{talign}
\regret_T(\uu) 
    \leq \frac{\lam}{2}\pnorm{\uu}^2
    + \frac{1}{\lam(p-1)}\sum_{t=1}^T\bb_{t,q}.
\end{talign}
Similarly, for \DORMP, \cref{dorm_is_odftrl_dorm+_is_doomd} implies that $(\wbar_t)_{t\geq 0}$ are \DOOMD iterates with $\wbar_0 = \boldzero$, so the \DOOMD regret bound (\cref{doomd_regret}) and the strong convexity of $\psi$ yield
\begin{talign}
\regret_T(\uu) 
    &\leq \Breg_{\frac{\lam}{2}\pnorm{\cdot}^2}(\uu,\boldzero) 
    + \frac{1}{\lam(p-1)}\sum_{t=1}^T\bb_{t,q}
    = \frac{\lam}{2}\pnorm{\uu}^2
    + \frac{1}{\lam(p-1)}\sum_{t=1}^T\bb_{t,q}.
\end{talign}
Since, by \cref{dorm_dorm+_lambda_independent},  the choice of $\lambda$ does not impact the iterate sequences played by \DORM and \DORMP, we may take the infimum over $\lambda > 0$ in these regret bounds.
The second advertised inequality comes from the identity $\frac{1}{p-1} = q-1$ and the norm equivalence relations $\qnorm{\vv}\leq d^{1/q}\infnorm{\vv}$ and $\pnorm{\vv} \leq \onenorm{\vv}=1$ for $\vv\in\reals^d$, as shown in \cref{norm-equality} below.
The final claim follows as 
\begin{talign}
\inf_{q' \geq 2} d^{2/q'}(q'-1)
    = \inf_{q' \geq 2} 2^{2\log_2(d)/q'}(q'-1)
    \leq 2^{2\log_2(d)/(2\log_2(d))}(2\log_2(d)-1)
    = 2(2\log_2(d)-1)
\end{talign}
since $d > 1$.

\begin{lemma}[Equivalence of $p$-norms] \label{norm-equality}
If $\x \in \R^n$ and $q > q' \geq 1$, then  $\norm{\x}_q \leq \norm{\x}_{q'} \leq n^{(1/q' - 1/q)} \norm{\x}_q$. 
\end{lemma}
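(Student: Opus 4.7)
Both inequalities are classical norm-equivalence bounds, and the proof naturally splits into two independent calculations, with a minor separate treatment of the $q=\infty$ boundary case.

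For the first inequality $\norm{\x}_q \leq \norm{\x}_{q'}$, my plan is to use homogeneity to reduce to the case $\norm{\x}_{q'} = 1$ (the claim is trivial if $\x=\boldzero$). Under that normalization, every coordinate satisfies $|x_i| \leq 1$, and since $q > q' \geq 1$, the map $t \mapsto t^{q-q'}$ is bounded by $1$ on $[0,1]$, hence $|x_i|^q \leq |x_i|^{q'}$ for each $i$. Summing over $i$ gives $\sum_i |x_i|^q \leq \sum_i |x_i|^{q'} = 1$, so $\norm{\x}_q \leq 1 = \norm{\x}_{q'}$, and rescaling recovers the general bound.

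For the second inequality $\norm{\x}_{q'} \leq n^{1/q'-1/q}\norm{\x}_q$ when $q < \infty$, the plan is to apply H\"older's inequality to the decomposition $|x_i|^{q'} = |x_i|^{q'}\cdot 1$ using the conjugate exponents $r = q/q'$ and $r^* = q/(q-q')$, which are well-defined and satisfy $1/r + 1/r^* = 1$ because $q > q' \geq 1$. This yields
\begin{align}
\sum_{i=1}^n |x_i|^{q'}
    &\leq \Big(\sum_{i=1}^n |x_i|^{q'\cdot (q/q')}\Big)^{q'/q}\Big(\sum_{i=1}^n 1\Big)^{(q-q')/q}
    = \norm{\x}_q^{q'}\, n^{(q-q')/q}.
\end{align}
Taking the $1/q'$ power then gives $\norm{\x}_{q'} \leq n^{(q-q')/(qq')}\norm{\x}_q = n^{1/q'-1/q}\norm{\x}_q$, which is the advertised bound.

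For the boundary case $q = \infty$ (used elsewhere in the paper via $\qnorm{\vv} \leq d^{1/q}\infnorm{\vv}$), I would handle each inequality directly: $\norm{\x}_\infty = \max_i |x_i| \leq (\sum_i |x_i|^{q'})^{1/q'} = \norm{\x}_{q'}$ for the first, and $\norm{\x}_{q'}^{q'} = \sum_i |x_i|^{q'} \leq n\norm{\x}_\infty^{q'}$ for the second, so $\norm{\x}_{q'} \leq n^{1/q'}\norm{\x}_\infty = n^{1/q'-1/\infty}\norm{\x}_\infty$ under the convention $1/\infty = 0$. There is no real obstacle in this proof; the only mild subtlety is remembering to verify that the conjugate exponents $(q/q', q/(q-q'))$ are valid H\"older exponents, which holds precisely because $q > q'$ and $q' \geq 1$.
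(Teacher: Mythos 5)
Your proof is correct and follows essentially the same approach as the paper's: the normalization trick for the first inequality and H\"older's inequality (with conjugate exponents $q/q'$ and $q/(q-q')$) for the second. Your explicit treatment of the $q=\infty$ case is a small added care that the paper's proof elides, but it does not change the substance of the argument.
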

\begin{proof}
To show $\|\mathbf{x}\|_q \leq \|\mathbf{x}\|_{q'}$ for $q > q' \geq 1$, suppose without loss of generality that $\|\mathbf{x}\|_{q'}=1$. Then, $\|\mathbf{x}\|_q^q = \sum_{i=1}^n |x_i|^q \leq \sum_{i=1}^n |x_i|^{q'} = \|\mathbf{x}\|_{q'}^{q'}=1$. Hence $\|\mathbf{x}\|_q \leq 1 = \|\mathbf{x}\|_{q'}$.

For the inequality $\|\mathbf{x}\|_{q'} \leq n^{1/q'-1/q} \|\mathbf{x}\|_q$, applying \Holder's inequality yields
\begin{talign}
   \|\mathbf{x}\|_{q'}^{q'}= \sum_{i=1}^n 1 \cdot |x_i|^{q'} \leq \left(\sum_{i=1}^n 1\right)^{1 - \frac{q'}{q}}  \left(\sum_{i=1}^n |x_i|^{q}\right)^{\frac{q'}{q}} = n^{1 - \frac{q'}{q}} \|\mathbf{x}\|_q^{q'},
\end{talign}
so $\|\mathbf{x}\|_{q'} \leq n^{1/q'-1/q}\|\mathbf{x}\|_q$.
\end{proof}

\section{Proof of \cref{odaftrl_regret}: \ODAFTRL regret}\label{proof_odaftrl_regret}

Since \ODAFTRL is an instance of \OAFTRL with 
$\tildeg_{t+1} = \h_{t+1} - \sum_{s=t-D+1}^t\g_s$,
the \ODAFTRL result follows immediately from the \OAFTRL regret bound, \cref{oaftrl_regret}.
\section{Proof of \cref{dub_regret}: \DUB Regret} \label{proof_dub_regret}
Fix any $\uu\in\wset$.  By \cref{odaftrl_regret}, \ODAFTRL admits the regret bound 
\begin{talign}
\regret_T(\uu) 
    \leq 
        \lambda_{T} \reg(\uu) + \sum_{t=1}^T \min(\frac{1}{\lam_t} \bbftrl{t}, \abftrl{t}).
\end{talign}
To control the second term in this bound, we apply the following lemma proved in \cref{proof_upper-bound-lam-bound}.

\begin{lemma}[\DUB-style tuning bound] \label{upper-bound-lam-bound}
Fix any $\alpha > 0$ and any non-negative sequences $(a_t)_{t=1}^T$, $(b_t)_{t=1}^T$.
If 
\begin{talign} 
\Delta_{t+1}^* 
    \defeq
        2\max_{j\leq t-D-1} a_{j-D+1:j} + \sqrt{\sum_{i=1}^{t-D} a_{i}^2 +2  \alpha b_i} 
    \leq
        \alpha \lambda_{t+1}
        \qtext{for each} t
\end{talign}
then
\begin{talign}
\sum_{t=1}^T \min(b_t^2 / \lambda_t, a_{t})
    \leq 
        \Delta_{T+D+1}^*
    \leq
        \alpha \lambda_{T+D+1}.
\end{talign}
\end{lemma}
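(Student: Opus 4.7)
The plan is to combine a sharp pointwise bound on each summand $u_t\defeq\min(b_t/\lambda_t,a_t)$ with a telescoping argument in which the $2\max$ correction inside $\Delta^*$ absorbs the effect of delay.  Throughout I take the worst-case tuning $\alpha\lambda_t=\Delta_t^*$, since increasing $\lambda_t$ only shrinks each $u_t$ while leaving the right-hand side unchanged; this tuning also renders $(\lambda_t)_{t\geq 1}$ non-decreasing.  Writing $S_s\defeq\sum_{i=1}^s(a_i^2+2\alpha b_i)$, $\sigma_t\defeq\sqrt{S_{t-D-1}}$, and $\mc M_t\defeq 2\max_{j\leq t-D-2}a_{j-D+1:j}$, the tuning reads $\alpha\lambda_t=\mc M_t+\sigma_t$, and the target inequality becomes $\sum_{t=1}^T u_t\leq\mc M_{T+D+1}+\sqrt{S_T}$.

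First I would establish the pointwise bound $u_t\leq (a_t^2+2\alpha b_t)/(2\sigma_t)=(S_t-S_{t-1})/(2\sigma_t)$ whenever $\sigma_t>0$ by a short two-case argument.  If $u_t=b_t/\lambda_t$, then $u_t\leq\alpha b_t/\sigma_t$ and the elementary fact $2\alpha b_t\leq a_t^2+2\alpha b_t$ delivers the claim.  If $u_t=a_t$, then the defining inequality $b_t/\lambda_t\geq a_t$ forces $\alpha b_t\geq a_t(\sigma_t+\mc M_t)\geq a_t\sigma_t$, so $a_t\sigma_t\leq\alpha b_t\leq(S_t-S_{t-1})/2$, and dividing by $\sigma_t$ gives the claim.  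For the at-most-$D+1$ boundary rounds with $\sigma_t=0$, I would use the trivial bound $u_t\leq a_t$ and absorb their sum $\leq a_{1:D+1}$ into the $2\max$ correction via $a_{1:D+1}=a_{(D+1)-D+1:D+1}\leq\max_{j\leq T-1}a_{j-D+1:j}$.

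Next I would sum the pointwise bound and telescope.  Were the delayed denominator $\sqrt{S_{t-D-1}}$ replaced by $\sqrt{S_t}$, the classical identity $(S_t-S_{t-1})/(2\sqrt{S_t})\leq\sqrt{S_t}-\sqrt{S_{t-1}}$ would telescope the sum to $\sqrt{S_T}$, delivering exactly the $\sqrt{S_T}$ half of $\Delta_{T+D+1}^*$.  The delay inflates the denominator by the ratio $\sqrt{S_t/S_{t-D-1}}$, and I would control it by splitting rounds into ``slow'' rounds, where $\sqrt{S_t}\leq 2\sqrt{S_{t-D-1}}$ and the standard telescoping gives a contribution $\leq\sqrt{S_T}$, and ``jump'' rounds, where the ratio exceeds $2$, forcing the window increment $S_t-S_{t-D-1}=\sum_{i=t-D}^t(a_i^2+2\alpha b_i)$ to be at least $3S_{t-D-1}$.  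On jump rounds I would revert to the fallback $u_t\leq a_t$ and charge the contribution against $\max_{j\leq T-1}a_{j-D+1:j}$, using $\sum_{i=t-D}^t a_i^2\leq(a_{t-D:t})^2$ to match each jump in $\sqrt{S}$ to a sliding window whose sum contributes to the max.

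The main obstacle is this jump-round bookkeeping: ensuring that all jump charges are absorbed into a \emph{single} copy of the sliding-window max---so that the constant $2$ in front of $\max$ suffices---requires a careful matching (essentially a doubling argument) in which each significant increase in $\sqrt{S_t}$ over a $(D+1)$-round window is associated with a window $a_{j-D+1:j}$ that dominates the increment.  All other steps are routine algebra, and combining the slow-round and jump-round contributions yields $\sum_{t=1}^T u_t\leq\mc M_{T+D+1}+\sqrt{S_T}=\Delta_{T+D+1}^*$, which is in turn bounded by $\alpha\lambda_{T+D+1}$ by hypothesis.
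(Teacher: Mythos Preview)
Your route is genuinely different from the paper's, but as written it does not deliver the stated constants, and the gap is not just in the unresolved jump-round bookkeeping.

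On slow rounds you have $\sigma_t=\sqrt{S_{t-D-1}}\geq\sqrt{S_t}/2$, so the pointwise bound gives
\[
u_t\;\leq\;\frac{S_t-S_{t-1}}{2\sigma_t}\;\leq\;\frac{S_t-S_{t-1}}{\sqrt{S_t}}\;\leq\;2\bigl(\sqrt{S_t}-\sqrt{S_{t-1}}\bigr),
\]
and telescoping produces $2\sqrt{S_T}$, not $\sqrt{S_T}$.  Even if the jump-round charges were absorbed perfectly into a single copy of $2\max_j a_{j-D+1:j}$, the total would be $2\max+2\sqrt{S_T}$, which already exceeds $\Delta_{T+D+1}^*=2\max+\sqrt{S_T}$.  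Shrinking the slow/jump threshold toward $1$ recovers the $\sqrt{S_T}$ coefficient but makes essentially every round a jump round, so the tradeoff cannot be balanced to hit both constants simultaneously.  The jump-round matching you flag as ``the main obstacle'' is therefore not the only obstacle: the slow-round half is also off by a factor of two, and a doubling argument would in any case naturally produce $O(\log S_T)$ copies of the window max rather than one.

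The paper sidesteps the split entirely via induction on $t$, proving $\Delta_t\defeq\sum_{i\leq t}u_i\leq\Delta_{t+D+1}^*$.  The key identity is
\[
\Delta_{i+1}^2-\Delta_i^2
=2\Delta_{i-D}\,u_{i+1}+2(\Delta_i-\Delta_{i-D})\,u_{i+1}+u_{i+1}^2.
\]
The inductive hypothesis gives $\Delta_{i-D}\leq\alpha\lambda_{i+1}$, so the first term is at most $2\alpha\lambda_{i+1}\cdot(b_{i+1}/\lambda_{i+1})=2\alpha b_{i+1}$; the middle term is at most $2a_{i-D+1:i}\,u_{i+1}$; and $u_{i+1}^2\leq a_{i+1}^2$.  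Summing over $i$ and pulling out $\max_j a_{j-D+1:j}$ yields the quadratic inequality $\Delta_{t+1}^2\leq S_{t+1}+2\Delta_{t+1}\max_j a_{j-D+1:j}$, whose solution is exactly $2\max_j a_{j-D+1:j}+\sqrt{S_{t+1}}$.  No case analysis, and the precise constants fall out of the quadratic formula.
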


Since 
$\lam_T \leq \lam_{T+D+1}$, 
the result now follows by setting $a_t = \abftrl{t}$ and $b_t = \bbftrl{t}$, so that 
\begin{talign}
\regret_T(\uu) \leq  \lambda_{T} \reg(\uu) + \alpha \lam_{T+D+1} \leq (\reg(\uu) + \alpha) \lam_{T+D+1}.
\end{talign}

\subsection{Proof of \cref{upper-bound-lam-bound}: \DUB-style tuning bound}  \label{proof_upper-bound-lam-bound}
We prove the claim 
\begin{talign}
\Delta_t \defeq \sum_{i=1}^t \min(b_i / \lambda_i, a_i)
    \leq
\Delta_{t+D+1}^*
    \leq 
        \alpha \lambda_{t+D+1}
\end{talign}
by induction on $t$. 

\paragraph{Base case} 
For $t\in [D+1]$, 
\begin{talign}
\sum_{i=1}^t \min(b_i / \lambda_i, a_{i}) 
    \leq
        a_{1:t-1} + a_t
    \leq 
        2\max_{j\leq t-1} a_{j-D+1:j} + \sqrt{\sum_{i=1}^{t} a_i^2 + 2\alpha b_i}
    = 
        \Delta_{t+D+1}^*
    \leq 
        \alpha\lam_{t+D+1}
\end{talign} 
confirming the base case. 

\paragraph{Inductive step}
Now fix any $t + 1\geq D+2$ and suppose that 
\begin{talign}
\Delta_{i} 
\leq 
    \Delta_{i+D+1}^*
\leq 
    \alpha \lambda_{i+D+1}
\end{talign}
for all $1\leq i \leq t$. 
We apply this inductive hypothesis to deduce that, for each $0 \leq i \leq t$, 
\begin{align}
\Delta_{i+1}^2 - \Delta_i^2
    &= \left(\Delta_i + \min(b_{i+1} / \lambda_{i+1} ,a_{i+1}) \right)^2 - \Delta_i^2
    = 2 \Delta_i \min(b_{i+1} / \lambda_{i+1},a_{i+1}) + \min(b_{i+1} / \lambda_{i+1},a_{i+1})^2\\
    &= 2 \Delta_{i-D} \min(b_{i+1} / \lambda_{i+1},a_{i+1}) + 2 (\Delta_i-\Delta_{i-D})\min(b_{i+1} / \lambda_{i+1},a_{i+1}) + \min(b_{i+1} / \lambda_{i+1},a_{i+1})^2 \\
    &= 2 \Delta_{i-D} \min(b_{i+1} / \lambda_{i+1},a_{i+1}) + 2 \sum_{j=i-D+1}^i \min(b_j/\lam_j, a_j)\min(b_{i+1} / \lambda_{i+1},a_{i+1}) + \min(b_{i+1} / \lambda_{i+1},a_{i+1})^2 \\
    &\leq 2 \alpha \lam_{i+1} \min(b_{i+1} / \lambda_{i+1},a_{i+1}) + 2 a_{i-D+1:i}\min(b_{i+1} / \lambda_{i+1},a_{i+1}) + a_{i+1}^2 \\    
    &\leq 2 \alpha b_{i+1} + a_{i+1}^2 + 2 a_{i-D+1:i}\min(b_{i+1} / \lambda_{i+1},a_{i+1}). %
\end{align}
Now, we sum this inequality over $i=0, \dots, t$, to obtain
\begin{talign}
\Delta^2_{t+1} 
    &\leq 
        \sum_{i=0}^{t} (2  \alpha b_{i+1} + a_{i+1}^2) + 2  \sum_{i=0}^{t} a_{i-D+1:i}\min(b_{i+1} / \lambda_{i+1},a_{i+1}) \\
    &=
        \sum_{i=1}^{t+1} (2  \alpha b_{i} + a_{i}^2) + 2  \sum_{i=1}^{t+1} a_{i-D:i-1}\min(b_{i} / \lambda_{i},a_{i}) \\
    &\leq 
        \sum_{i=1}^{t+1} (a_{i}^2 + 2  \alpha b_i) + 2  \max_{j\leq t} a_{j-D+1:j} \sum_{i=1}^{t+1} \min(b_i / \lambda_{i}, a_{i}) \\
    &=
        \sum_{i=1}^{t+1} (a_{i}^2 + 2  \alpha b_i) + 2  \Delta_{t+1} \max_{j\leq t} a_{j-D+1:j} .
\end{talign}
Solving this quadratic inequality and applying the triangle inequality, we have
\begin{talign}
\Delta_{t+1} 
    &\leq 
        \max_{j\leq t} a_{j-D+1:j}+ \half \sqrt{(2\max_{j\leq t} a_{j-D+1:j})^2 + 4\sum_{i=1}^{t+1} a_{i}^2 + 2  \alpha b_i} \\
    &\leq 
        2\max_{j\leq t} a_{j-D+1:j} + \sqrt{\sum_{i=1}^{t+1} a_{i}^2 + 2  \alpha b_i}
    = 
        \Delta_{t+D+2}^*
    \leq 
        \alpha \lambda_{t+D+2}.
\end{talign}

\section{Proof of \cref{adahedged_regret}:  \AdaHedgeD Regret} \label{proof_adahedged_regret}
Fix any $\uu\in\wset$.
Since the \AdaHedgeD regularization sequence $(\lam_t)_{t\geq 1}$ is non-decreasing, \cref{oaftrl_regret} gives the regret bound
\begin{talign}
\regret_T(\uu) 
    &\leq 
        \lam_{T} \reg(\uu) + \sum_{t=1}^T \delta_t 
    = 
        \lam_{T} \reg(\uu) + \alpha \lam_{T+D+1}
    \leq 
        (\reg(\uu) +  \alpha) \lam_{T+D+1},
\end{talign}
and the proof of \cref{oaftrl_regret} gives the upper estimate \cref{delta_min_bound}:
\begin{talign}
\label{delta_a_b_bound}
\delta_t 
    \leq 
        \min\Big(\frac{\bbftrl{t}}{ \lam_t}, \abftrl{t} \Big)
    \qtext{for all} t \in [T].
\end{talign}
Hence, it remains to bound $\lam_{T+D+1}$. 
Since $\lam_1 =  \dots = \lam_{D+1} = 0$ and $\alpha(\lam_{t+1} - \lam_t) = \delta_{t-D}$ for $t \geq D+1$, 
\begin{talign}
\alpha \lam_{T+D+1}^2 
    &= 
        \sum_{t=1}^{T+D} \alpha(\lam_{t+1}^2 - \lam_{t}^2) 
    = 
        \sum_{t=D+1}^{T+D} \left(\alpha(\lam_{t+1} - \lam_{t})^2 + 2\alpha(\lam_{t+1} - \lam_{t}) \lam_{t} \right)\\
    & = 
        \sum_{t=1}^{T} \left(\delta_t^2/\alpha + 2 \delta_t \lam_{t+D} \right)  \qtext{by the definition of $\lam_{t+1}$}\\
    & = 
        \sum_{t=1}^{T} \left(\delta_t^2/\alpha + 2 \delta_t \lam_{t} + 2 \delta_t (\lam_{t+D}- \lam_{t}) \right) \\
    & \leq
        \sum_{t=1}^{T} \left(\delta_t^2/\alpha + 2 \delta_t \lam_{t} + 2 \delta_t \max_{t\in[T]}(\lam_{t+D}- \lam_{t}) \right) \\
    & =
        \sum_{t=1}^{T} \left(\delta_t^2/\alpha + 2 \delta_t \lam_{t} \right) + 2 \lam_{T+D+1} \max_{t\in[T]} \delta_{t-D:t-1} \\
    &\leq 
        \sum_{t=1}^{T} \left(\abftrl{t}^2 / \alpha + 2 \bbftrl{t} \right) + 2 \lam_{T+D+1} \max_{t \in [T]} \abftrl{t-D:t-1}
        \qtext{by \cref{delta_a_b_bound}.}
\end{talign}
Solving the above quadratic inequality for $\lam_{T+D+1}$ and applying the triangle inequality, we find
\begin{talign}
\alpha\lam_{T+D+1}
    &\leq \max_{t \in [T]} \abftrl{t-D:t-1} + \half\sqrt{4(\max_{t \in [T]} \abftrl{t-D:t-1})^2 + 4\sum_{t=1}^{T} \abftrl{t}^2 + 2 \alpha \bbftrl{t}} \\
    &\leq 2\max_{t \in [T]} \abftrl{t-D:t-1}
    + \sqrt{\sum_{t=1}^{T} \abftrl{t}^2 + 2 \alpha \bbftrl{t}}.
\end{talign}

\section{Proof of \cref{base-and-hinter-regret}: Learning to hint regret} \label{adaptive-hinting-proofs}

We begin by bounding the hinting problem regret. 
Since \DORMP is used for the hinting problem, the following result is an immediate corollary of \cref{dorm_dorm+_regret}.

\begin{corollary}[\DORMP hinting problem regret] \label{dorm+_hinting_regret}
With convex losses $l_t(\omega) = f_t(H_t \omega)$ and no meta-hints, the \DORMP hinting problem iterates $\omega_t$
satisfy, for each $v\in\simplexm$,
\begin{talign}
\hregret_T(v) 
    &\defeq \sum_{t=1}^T l_t(\omega_t) -  \sum_{t=1}^T l_t(v) 
    \leq \sqrt{\frac{m^{2/q}(q-1)}{2}\sum_{t=1}^T \beta_{t,\infty}}   \qtext{for} 
    \\
\beta_{t, \infty} 
    &= 
    \begin{cases}
    \huber(\staticnorm{\sum_{s=t-D}^t\rho_s }_{\infty}, \norm{\rho_{t-D}}_{\infty}), & \text{for } t < T \\
    \half\staticnorm{\sum_{s=t-D}^t\rho_s }_{\infty}^2, & \text{for } t = T
    \end{cases}
    \\
\qtext{where}
\rho_{t} 
    &\defeq \boldone\inner{\gamma_{t}}{\omega_{t}} - \gamma_{t}
    \qtext{for}
    \gamma_t \in \subdiff l_t(\omega_t)
    \qtext{is the \emph{instantaneous hinting problem regret}.}
\end{talign}
If, in addition, $q = \argmin_{q' \geq 2} m^{2/q'}(q'-1)$, then $\hregret_T(v) \leq \sqrt{(2\log_2(m)-1) \sum_{t=1}^T \beta_{t,\infty}}$. 
\end{corollary}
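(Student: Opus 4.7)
The plan is to derive this corollary as a direct specialization of \cref{dorm_dorm+_regret} to the hinting problem, since the hinter is itself an instance of \DORMP run on the convex loss sequence $l_t(\omega) = f_t(H_t\omega)$ with competitor set $\Omega = \simplexm$ and no meta-hints.

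I would first verify the hypotheses of \cref{dorm_dorm+_regret}. Convexity of each $l_t$ is immediate: $f_t$ is convex by \cref{base_assumptions}, and $\omega \mapsto H_t\omega$ is affine. With no meta-hints, every hint input to the hinter is $\boldzero$ and the subgradient $\gamma_t \in \subdiff l_t(\omega_t)$ depends only on $\omega_t$, so the $\lam$-independence hypothesis of \cref{dorm_dorm+_lambda_independent} (invoked inside \cref{dorm_dorm+_regret}) is trivially satisfied. Applying \cref{dorm_dorm+_regret} with $d$ replaced by $m$ and with the instantaneous hinting regrets $\rho_t = \boldone\inner{\gamma_t}{\omega_t} - \gamma_t$ playing the role of $\rr_t$ yields $\hregret_T(v) \leq \sqrt{\frac{m^{2/q}(q-1)}{2}\sum_{t=1}^T \bb_{t,\infty}}$ for any $q \geq 2$, together with the tightened form $\sqrt{(2\log_2(m)-1)\sum_{t=1}^T \bb_{t,\infty}}$ under the stated choice of $q$.

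It then remains to identify the resulting $\bb_{t,\infty}$ with the claimed $\beta_{t,\infty}$ by specializing the \DORMP formula of \cref{dorm_dorm+_regret} under the conventions $\h_t = \boldzero$ for $t \leq T$ (no meta-hints) and $\h_{T+1} \defeq \rho_{T-D+1:T}$ (inherited from \cref{dorm_dorm+_regret}). For $t < T$ both $\h_t$ and $\h_{t+1}$ vanish, so $\bb_{t,\infty}$ reduces to $\huber(\staticnorm{\sum_{s=t-D}^t \rho_s}_\infty,\ \norm{\rho_{t-D}}_\infty)$. At $t = T$, the second Huber argument becomes $\rho_{T-D} + \rho_{T-D+1:T} = \sum_{s=T-D}^T \rho_s$, whose infinity norm equals that of the first argument; invoking the identity $\huber(x,x) = \half x^2$ collapses $\bb_{T,\infty}$ to $\half\staticnorm{\sum_{s=T-D}^T \rho_s}_\infty^2$. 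Because the hinter is literally an instance of \DORMP, no new analysis is needed beyond this bookkeeping; the only subtlety is correctly applying the $\h_{T+1}$ boundary convention, which is exactly what produces the two-regime form of $\beta_{t,\infty}$ at $t = T$ versus $t < T$.
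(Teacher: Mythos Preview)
Your proposal is correct and matches the paper's approach exactly: the paper simply states that the result is an immediate corollary of \cref{dorm_dorm+_regret}, and you have spelled out precisely the bookkeeping (specializing to dimension $m$, setting the meta-hints to zero, and handling the $\h_{T+1}$ boundary convention via $\huber(x,x)=\tfrac12 x^2$) that makes this immediate.
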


Our next lemma, proved in \cref{proof-beta-bound-general},  provides an interpretable bound for each $\beta_{t,\infty}$ term in terms of the hinting problem subgradients $(\gamma_t)_{t\geq 1}$.
\begin{lemma}[Hinting problem subgradient regret bound] \label{beta-bound-general}
Under the notation and assumptions of \cref{dorm+_hinting_regret}, 
\begin{talign}
    \beta_{t,\infty} 
        &\leq 
        \begin{cases}
            \huber( \xi_t, \zeta_t ) & \text{if } t < T \\
            \half\xi_t & \text{if } t = T
        \end{cases},
        \qtext{for} 
        \\
    \xi_t 
        &\defeq 4(D+1)\sum_{s=t-D}^{t} \norm{\gamma_{s}}_{\infty}^2 
        \qtext{and} 
        \\
    \zeta_t 
        &\defeq 4  \norm{\gamma_{t-D}}_{\infty} \sum_{s=t-D}^t \norm{\gamma_{s}}_{\infty}.
\end{talign}
\end{lemma}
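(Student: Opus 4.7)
The plan is to reduce both cases to elementary triangle-inequality and Cauchy--Schwarz bounds on the instantaneous hinting regrets $\rho_s$ in terms of the subgradients $\gamma_s$, and then feed those bounds into the Huber expression defining $\beta_{t,\infty}$. The starting point is a pointwise estimate: since $\omega_s\in\simplexm$ is produced by \DORMP, \Holder's inequality gives $|\inner{\gamma_s}{\omega_s}|\le\norm{\gamma_s}_\infty\norm{\omega_s}_1=\norm{\gamma_s}_\infty$, so each coordinate of $\rho_s=\boldone\inner{\gamma_s}{\omega_s}-\gamma_s$ is bounded in absolute value by $2\norm{\gamma_s}_\infty$, yielding $\norm{\rho_s}_\infty\le 2\norm{\gamma_s}_\infty$ for every $s$.

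For the boundary case $t=T$, I would combine the triangle inequality with Cauchy--Schwarz over the $D+1$ summands to get $\staticnorm{\sum_{s=T-D}^{T}\rho_s}_\infty^2 \le (\sum_s\norm{\rho_s}_\infty)^2 \le (D+1)\sum_s\norm{\rho_s}_\infty^2 \le 4(D+1)\sum_s\norm{\gamma_s}_\infty^2 = \xi_T$, so that $\beta_{T,\infty}=\tfrac12\staticnorm{\sum_s\rho_s}_\infty^2\le\tfrac12\xi_T$ as claimed. For the interior case $t<T$, the same chain still yields $\staticnorm{\sum_{s=t-D}^{t}\rho_s}_\infty^2\le\xi_t$ for the first argument of the Huber, and combining $\staticnorm{\sum_s\rho_s}_\infty\le 2\sum_s\norm{\gamma_s}_\infty$ with $\norm{\rho_{t-D}}_\infty\le 2\norm{\gamma_{t-D}}_\infty$ produces the cross-term bound $\staticnorm{\sum_s\rho_s}_\infty\cdot\norm{\rho_{t-D}}_\infty\le 4\norm{\gamma_{t-D}}_\infty\sum_s\norm{\gamma_s}_\infty=\zeta_t$.

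To conclude, I would plug both estimates into the envelope inequality $\huber(x,y)\le\min(\tfrac12 x^2,|x||y|)$ recorded in the paper's definition of Huber to obtain $\beta_{t,\infty}=\huber(\staticnorm{\sum_s\rho_s}_\infty,\norm{\rho_{t-D}}_\infty)\le\min(\tfrac12\xi_t,\zeta_t)$, and then identify this min-bound with the Huber form $\huber(\xi_t,\zeta_t)$ of the statement by a monotonicity argument on the nonnegative arguments of $\huber$. The main obstacle I anticipate is this very last step: faithfully aligning the elementary min-bound with the precise Huber expression in the lemma may require carrying the two envelope estimates through the Huber identity $\huber(x,y)=\tfrac12 x^2-\tfrac12(|x|-|y|)_+^2$ so that the subtracted correction term retains a clean $\zeta_t$-shaped form, which is the only place where the algebra demands more than a direct invocation of a known inequality.
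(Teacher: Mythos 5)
Your proposal reproduces exactly the substantive content of the paper's own proof: the triangle-inequality bound $\norm{\rho_s}_\infty = \norm{\boldone\inner{\gamma_s}{\omega_s} - \gamma_s}_\infty \leq 2\norm{\gamma_s}_\infty$ (using $\omega_s\in\simplexm$), followed by Jensen/Cauchy--Schwarz to get $\staticnorm{\sum_{s=t-D}^t\rho_s}_\infty^2 \leq (D+1)\sum_s\norm{\rho_s}_\infty^2 \leq \xi_t$ and the cross-term bound $\norm{\rho_{t-D}}_\infty\staticnorm{\sum_s\rho_s}_\infty \leq \norm{\rho_{t-D}}_\infty\sum_s\norm{\rho_s}_\infty \leq \zeta_t$. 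That is the entirety of what the paper's appendix proof establishes; it stops at these two inequalities.

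The obstacle you anticipate at the end is, however, a genuine one and worth stating precisely: the implication ``$a^2\leq\xi_t$ and $ab\leq\zeta_t$ $\implies$ $\huber(a,b)\leq\huber(\xi_t,\zeta_t)$'' is \emph{false} with $\huber$ taken literally as defined in the paper, $\huber(x,y)=\tfrac12 x^2 - \tfrac12(|x|-|y|)_+^2$. A counterexample is $a=b=\tfrac12$, $\xi_t=\zeta_t=\tfrac14$, giving $\huber(a,b)=\tfrac18 > \tfrac1{32}=\huber(\xi_t,\zeta_t)$. The monotonicity you invoke requires $a\leq\xi_t$ and $b\leq\zeta_t$, which does not follow when $a<1$. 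What \emph{is} established by your two inequalities (and by the paper's proof) is $\beta_{t,\infty}\leq\min(\tfrac12 a^2, ab)\leq\min(\tfrac12\xi_t,\zeta_t)$, and this is the bound actually used downstream in the complexity analysis of \cref{base-and-hinter-regret} (where $\huber(\xi_t,\zeta_t)$ is treated as $\bigO{(D+1)^3}$, which is the scale of $\min(\tfrac12\xi_t,\zeta_t)$, not of a literal $\tfrac12\xi_t^2$). So the $\huber(\xi_t,\zeta_t)$ in the lemma statement should be read as shorthand for the quantity obtained by substituting $\xi_t$ for $\staticnorm{\sum_s\rho_s}_\infty^2$ and $\zeta_t$ for $\norm{\rho_{t-D}}_\infty\staticnorm{\sum_s\rho_s}_\infty$ inside the Huber envelope, i.e., $\min(\tfrac12\xi_t,\zeta_t)$ — not a literal re-application of the squaring formula. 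Your proof is correct and complete once you terminate at that envelope bound instead of attempting the literal Huber-to-Huber comparison; that final step is not a gap in your reasoning but a notational overreach in the lemma's statement.
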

Now fix any $\uu\in\wset$.
We invoke \cref{base_assumptions},
\cref{dorm+_hinting_regret}, and \cref{beta-bound-general} in turn to bound the base problem regret
\begin{talign}
\regret_T(\uu) 
    &= \sum_{t=1}^T \loss_t(\ww_t) - \loss_t(\uu) \\    
    &\leq C_0(\uu) + C_1(\uu) \sqrt{\sum_{t=1}^T f_t(\h_t(\omega_t))} \qtext{by \cref{base_assumptions}} \\
    &\leq C_0(\uu) + C_1(\uu) \sqrt{\inf_{v\in\vset} \sum_{t=1}^T f_t(\h_t(v)) + \sqrt{(2\log_2(m)-1) \sum_{t=1}^T \beta_{t,\infty}}} \qtext{by \cref{dorm+_hinting_regret}} \\
    &\leq C_0(\uu) + C_1(\uu) \sqrt{\inf_{v\in\vset} \sum_{t=1}^T f_t(\h_t(v)) + \sqrt{(2\log_2(m)-1) (\half\xi_T + \sum_{t=1}^{T-1} \huber(\xi_t, \zeta_t))}} \qtext{by \cref{beta-bound-general}.}
\end{talign}
The advertised bound now follows from the triangle inequality.
\subsection{Proof of \cref{beta-bound-general}: Hinting problem subgradient regret bound} \label{proof-beta-bound-general}
Fix any $t\in[T]$.
The triangle inequality implies that
\begin{talign}
\infnorm{\rho_t}
    = 
        \infnorm{\gamma_t - \boldone \inner{\omega_t}{\gamma_t}}
    \leq 
        \infnorm{\gamma_t} + |\inner{\omega_t}{\gamma_t}|
    \leq
        2\infnorm{\gamma_t}
\end{talign}
since $\omega_t\in\simplexm$.
We repeatedly apply this finding in conjunction with Jensen's inequality to conclude
\begin{talign}
\staticnorm{\sum_{s=t-D}^t\rho_s}_{\infty}^2 
    &\leq  (D+1) \sum_{s=t-D}^t \norm{\rho_s}_{\infty}^2 
    \leq 4(D+1)\sum_{s=t-D}^{t} \norm{\gamma_{s}}_{\infty}^2 
    \qtext{and}
    \\
\norm{\rho_{t-D}}_{\infty} \staticnorm{\sum_{s=t-D}^t\rho_s}_{\infty} 
    &\leq
        \norm{\rho_{t-D}}_{\infty} \sum_{s=t-D}^t \norm{\rho_s}_{\infty} 
    \leq 
        4 \norm{\gamma_{t-D}}_{\infty} \sum_{s=t-D}^t \norm{\gamma_{s}}_{\infty}.
\end{talign}

\section{Examples: Learning to Hint with \DORMP and \AdaHedgeD} \label{proof_adahedged-example}
By \cref{adahedged_regret}, \AdaHedgeD satisfies \cref{base_assumptions} with 
$f_t(\h_t) =  \staticnorm{\rr_{t}}_*  \staticnorm{\h_t -\sum_{s=t-D}^t\rr_s}_*
            \geq 
            \frac{\abftrl{t}^2 + 2\alpha \bbftrl{t}}{\diam{\wset}^2 + 2\alpha}$, 
$C_1(\uu) = \sqrt{\diam{\wset}^2 + 2\alpha}$, and
$C_0(\uu) = 2\diam{\wset}\max_{t \in [T]} \sum_{s=t-D}^{t-1}\dualnorm{\g_{s}}$.

By \cref{dorm_dorm+_regret}, \DORMP satisfies \cref{base_assumptions} with
$f_t(\h) = \norm{\rr_{t-D} + \h_{t+1} - \h_t}_{q}  \norm{\h -\sum_{s=t-D}^t\rr_s}_{q}$, $C_0(\uu) = 0$, and $C_1(\uu) = \sqrt{\frac{\pnorm{\uu}^2}{2(p-1)}}$.

These choices give rise to the hinting losses
\begin{talign}
l_t^{\DORMP}(\omega) 
    &= %
        \norm{\rr_{t-D} + \h_{t+1} - \h_t}_{q}  \norm{H_t \omega -\sum_{s=t-D}^t\rr_s}_{q} \label{eq:hinting-loss-doomd}
    \qtext{and}        
    \\
l_t^{\AdaHedgeD}(\omega)
    &= %
        \qnorm{\g_{t}} \qnorm{H_t \omega -\sum_{s=t-D}^t\g_s}
    \qtext{when}
        \dualnorm{\cdot}=\qnorm{\cdot}
    \qtext{for}
        q \in [1,\infty]. \label{eq:hinting-loss-odaftrl} 
\end{talign}
The following lemma, proved in \cref{proof_hinting_loss_subgradient}, identifies subgradients of these hinting losses.
\begin{lemma}[Hinting loss subgradient]
\label{hinting_loss_subgradient}
If $l_t(\omega) = \qnorm{\bar{\g}_t} \qnorm{H_t \omega - \vv_t}$ for some $\bar{\g}_t,\vv_t\in\reals^d$ and $H_t \in \reals^{d\times m}$, then
\begin{align} \label{eq:hinting-gradient}
\gamma_t
    =
    \begin{cases}
        \frac{\norm{\bar\g_{t}}_q}{\norm{H_t \omega-\vv_t}_q^{q-1}} H_t^\top |H_t \omega-\vv_t|^{q-1} \sign(H_t \omega-\vv_t) 
        & \text{if } q < \infty \\
        \norm{\bar\g_{t}}_{\infty} \sign(\mu) H_t^\top \basis_k 
        & \text{if } q = \infty
    \end{cases}
    \quad
    \in 
    \quad
    \subdiff l_t(\omega) 
\end{align}
for $k = \argmax_{j \in [d]} (H_t \omega-\vv_t)_j$ and $\mu = \max_{j \in [d]} (H_t \omega-\vv_t)_j$.
\end{lemma}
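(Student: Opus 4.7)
My approach is to reduce both cases to standard convex subdifferential calculus applied to the $q$-norm post-composed with an affine map. Since $\qnorm{\bar{\g}_t}$ does not depend on $\omega$, it is constant for the purposes of computing $\subdiff l_t(\omega)$, so it factors out: $\subdiff l_t(\omega) = \qnorm{\bar{\g}_t}\, \subdiff \varphi(\omega)$ where $\varphi(\omega) \defeq \qnorm{H_t\omega - \vv_t}$. The function $\varphi$ is the composition of the convex, everywhere-finite function $\qnorm{\cdot}$ with the affine map $\omega\mapsto H_t\omega - \vv_t$, so the standard subdifferential chain rule (whose qualification condition is trivially satisfied because $\qnorm{\cdot}$ is finite on all of $\reals^d$) gives $\subdiff\varphi(\omega) = H_t^\top\, \subdiff\qnorm{\cdot}(H_t\omega - \vv_t)$. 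It therefore suffices to exhibit one explicit element of $\subdiff\qnorm{\cdot}(y)$ at the point $y \defeq H_t\omega - \vv_t$ in each case.

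For $1 \leq q < \infty$ and $y \neq \boldzero$, the function $y \mapsto \qnorm{y} = \big(\sum_j |y_j|^q\big)^{1/q}$ is continuously differentiable, and a direct coordinatewise computation via the chain rule yields $\grad \qnorm{y} = |y|^{q-1}\sign(y)/\qnorm{y}^{q-1}$, with exponentiation, sign, and absolute value understood componentwise. Substituting $y = H_t\omega - \vv_t$, pre-multiplying by $H_t^\top$, and scaling by $\qnorm{\bar{\g}_t}$ gives the top branch of \cref{eq:hinting-gradient}.

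For $q = \infty$, the map $y\mapsto\infnorm{y}$ is non-smooth, but its subdifferential is a standard object: at any $y \neq \boldzero$, $\sign(y_k)\basis_k \in \subdiff\infnorm{\cdot}(y)$ for every index $k$ attaining $|y_k|=\infnorm{y}$ (this can be derived either from Danskin's theorem applied to $\infnorm{y} = \max_{j,s\in\{\pm 1\}} s y_j$, or by direct dual-norm computation via $\infnorm{y} = \sup_{\onenorm{z}\leq 1}\inner{z}{y}$). Taking such a $k$ for the particular $y = H_t\omega - \vv_t$ (with $\sign(\mu)$ read as the sign of the value $y_k = \mu$ at the selected maximizing coordinate) and premultiplying by $H_t^\top$ and $\qnorm{\bar{\g}_t}$ yields the bottom branch. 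The only care point, and the mildest of obstacles, is the edge case $y=\boldzero$: there both branches of \cref{eq:hinting-gradient} should be read as returning $\boldzero$, which remains a valid subgradient of $\qnorm{\cdot}$ at the origin in both regimes.
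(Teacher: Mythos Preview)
Your proposal is correct and takes essentially the same approach as the paper: the paper's proof is a one-line invocation of the subdifferential chain rule together with a separate lemma (\cref{p-norm-grad}) computing subgradients of $\pnorm{\cdot}$ (via direct differentiation for $p<\infty$, Danskin for $p=\infty$, and $\boldzero$ at the origin), which is exactly what you do inline. The only cosmetic difference is that the paper packages the norm-subgradient computation as a standalone lemma rather than embedding it in the argument.
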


Our next lemma, proved in \cref{proof_hint-gradient-bound}, bounds the $\infty$-norm of this hinting loss subgradient in terms of the base problem subgradients.

\begin{lemma}[Hinting loss subgradient bound] \label{hint-gradient-bound}
Under the assumptions and notation of \cref{hinting_loss_subgradient}, the subgradient $\gamma_t$ satisfies 
$\infnorm{\gamma_t} \leq d^{1/q}\norm{\bar\g_{t}}_{q} \maxentrynorm{H_t}$
for $\maxentrynorm{H_t}$ the maximum absolute entry of $H_t$. 
\end{lemma}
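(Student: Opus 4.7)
The plan is to bound $|\gamma_t[j]|$ coordinate-by-coordinate using the explicit subgradient formula from \cref{hinting_loss_subgradient}, splitting into the cases $q < \infty$ and $q = \infty$.

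For the easier case $q = \infty$, we have $\gamma_t = \norm{\bar\g_{t}}_{\infty} \sign(\mu) H_t^\top \basis_k$, so each coordinate of $\gamma_t$ is $\pm \norm{\bar\g_{t}}_\infty$ times a single entry of $H_t$. Thus $\infnorm{\gamma_t} \leq \norm{\bar\g_{t}}_\infty \maxentrynorm{H_t}$, which matches the claim since $d^{1/\infty} = 1$.

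For $q < \infty$, set $u \defeq H_t \omega - \vv_t$ and observe that the $j$th coordinate of $\gamma_t$ satisfies
\begin{talign}
|\gamma_t[j]|
    &= \frac{\norm{\bar\g_{t}}_q}{\norm{u}_q^{q-1}} \Big| \sum_{i=1}^d H_t[i,j]\, |u_i|^{q-1} \sign(u_i) \Big|
    \leq \frac{\norm{\bar\g_{t}}_q \maxentrynorm{H_t}}{\norm{u}_q^{q-1}} \sum_{i=1}^d |u_i|^{q-1}.
\end{talign}
The main (and only slightly non-routine) step is to control $\sum_{i=1}^d |u_i|^{q-1}$ in terms of $\norm{u}_q^{q-1}$ with the right dimensional factor. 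I would apply \Holder's inequality with exponents $q/(q-1)$ and $q$ to the product $|u_i|^{q-1} \cdot 1$, giving
\begin{talign}
\sum_{i=1}^d |u_i|^{q-1}
    \leq \Big(\sum_{i=1}^d |u_i|^q\Big)^{(q-1)/q} \Big(\sum_{i=1}^d 1\Big)^{1/q}
    = \norm{u}_q^{q-1}\, d^{1/q}.
\end{talign}

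Substituting this estimate into the coordinate bound yields $|\gamma_t[j]| \leq d^{1/q} \norm{\bar\g_{t}}_q \maxentrynorm{H_t}$, and taking the maximum over $j$ delivers the stated inequality. The only delicate point is handling the case $u = \boldzero$, where the subgradient formula \cref{eq:hinting-gradient} is undefined; in that degenerate case the loss $l_t$ attains its minimum at $\omega$, so $\boldzero$ is a valid subgradient and the bound holds trivially.
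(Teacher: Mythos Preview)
Your proof is correct and essentially follows the paper's approach. The only cosmetic difference is the order of the two estimates: the paper applies H\"older with exponents $(q,p)$ directly to the inner product $\inner{H_t\basis_j}{|u|^{q-1}\sign(u)}$ to obtain $\qnorm{H_t\basis_j}\qnorm{u}^{q-1}$ and then invokes the norm comparison $\qnorm{H_t\basis_j}\leq d^{1/q}\maxentrynorm{H_t}$, whereas you first pull out $\maxentrynorm{H_t}$ and then apply H\"older to $\sum_i |u_i|^{q-1}$; both routes are equivalent, and your explicit handling of the degenerate case $u=\boldzero$ is a small bonus.
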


\subsection{Proof of \cref{hinting_loss_subgradient}: Hinting loss subgradient}\label{proof_hinting_loss_subgradient}
The result follows immediately from the chain rule and the following lemma.
\begin{lemma}[Subgradients of $p$-norms] \label{p-norm-grad}
Suppose $\ww \in \R^d$ and $k \in \argmax_{j \in [d]} |\ww_j|$.
Then
\begin{talign}
    \subdiff \norm{\ww}_p \ni
    \begin{cases}
        \frac{|\ww|^{p-1}}{\norm{\ww}_p^{p-1}} \sign(\ww) 
        &\text{if $\norm{\ww}_p \neq 0, p\in[1,\infty)$} \\
        \basis_k \sign(\ww_k)  
        &\text{if $\norm{\ww}_{p} \neq 0, p = \infty$} \\
        \boldzero 
        &\text{if $\norm{\ww}_p = 0$} 
    \end{cases}.
\end{talign}
\end{lemma}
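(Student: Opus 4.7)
The plan is to verify the subgradient inequality
$\norm{\uu}_p \geq \norm{\ww}_p + \inner{\g}{\uu - \ww}$, for all $\uu\in\reals^d$,
directly for each candidate subgradient $\g$, leveraging H\"older's inequality. The key general observation is that for any norm $\norm{\cdot}$ with dual norm $\norm{\cdot}_{*}$, any vector $\g$ satisfying $\norm{\g}_{*}\leq 1$ and $\inner{\g}{\ww}=\norm{\ww}$ lies in $\subdiff\norm{\ww}$, since H\"older then gives $\inner{\g}{\uu}\leq \norm{\g}_{*}\norm{\uu}_p\leq \norm{\uu}_p$, and subtracting $\norm{\ww}_p=\inner{\g}{\ww}$ from both sides yields the subgradient inequality.

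The case $\norm{\ww}_p=0$ is immediate: the candidate $\g=\boldzero$ reduces the inequality to $\norm{\uu}_p\geq 0$, which holds for any norm. For $p\in[1,\infty)$ and $\ww\neq \boldzero$, set $\g = |\ww|^{p-1}\sign(\ww)/\norm{\ww}_p^{p-1}$ (adopting the convention that the $j$-th entry is $0$ whenever $\ww_j=0$, which is consistent with both $|0|^{p-1}=0$ for $p>1$ and $\sign(0)=0$ for $p=1$). The first step is to verify $\norm{\g}_q = 1$ for $q$ conjugate to $p$: when $p\in(1,\infty)$, this follows from $(p-1)q = p$, giving $\norm{\g}_q^q = \sum_j |\ww_j|^p/\norm{\ww}_p^p = 1$; when $p=1$, this reads $\norm{\g}_\infty = \max_{j:\ww_j\neq 0}|\sign(\ww_j)|=1$. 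The second step is to verify $\inner{\g}{\ww} = \norm{\ww}_p$: for $p\in(1,\infty)$ this is $\sum_j|\ww_j|^p/\norm{\ww}_p^{p-1} = \norm{\ww}_p$, and for $p=1$ it is $\sum_j \sign(\ww_j)\ww_j = \sum_j|\ww_j|=\norm{\ww}_1$.

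For $p=\infty$ and $\ww\neq\boldzero$, set $\g = \sign(\ww_k)\basis_k$ where $k\in\argmax_{j\in[d]}|\ww_j|$. Then $\norm{\g}_1 = |\sign(\ww_k)|=1$ (using $\ww_k\neq 0$, which holds because $\norm{\ww}_\infty>0$), and $\inner{\g}{\ww} = \sign(\ww_k)\ww_k = |\ww_k|=\norm{\ww}_\infty$. H\"older with the $\ell^1$/$\ell^\infty$ duality then delivers the subgradient inequality as above.

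The only mildly delicate issue is the degenerate coordinate behavior at $p=1$: when some entries of $\ww$ vanish, both $|\ww_j|^{p-1}$ and $\sign(\ww_j)$ must be interpreted as $0$ so that the stated formula produces a well-defined subgradient vector, but this requires no new calculation once the convention is fixed. No other obstacles arise.
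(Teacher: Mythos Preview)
Your proof is correct but takes a genuinely different route from the paper. The paper handles the three cases separately with distinct tools: for $\ww=\boldzero$ it uses the minimizer characterization (same as you); for $p\in[1,\infty)$ with $\ww\neq\boldzero$ it applies the chain rule to compute $\partial_j\norm{\ww}_p$ directly as a derivative; and for $p=\infty$ it invokes the Danskin--Bertsekas theorem for the subdifferential of a pointwise maximum. Your argument instead unifies all nonzero cases via the dual-norm characterization of subgradients: any $\g$ with $\norm{\g}_*\leq 1$ and $\inner{\g}{\ww}=\norm{\ww}$ lies in $\subdiff\norm{\ww}$, which you verify by two short computations per case. Your approach is more elementary---it avoids both the differentiability assumption implicit in the chain-rule step (which is delicate at $p=1$ when some coordinates vanish, a point the paper glosses over but you address explicitly) and the appeal to Danskin's theorem---while the paper's approach is perhaps more mechanical once one is willing to differentiate.
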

\begin{proof}
Since $\boldzero$ is a minimizer of $\pnorm{\cdot}$, we have $\pnorm{\uu} \geq \pnorm{\boldzero} + \inner{\boldzero}{\uu-\boldzero}$ for any $\uu\in\reals^d$ and hence $\boldzero \in \subdiff \pnorm{\boldzero}$.

For $p \in [1, \infty)$, by the chain rule, if $\norm{\ww}_p \neq \boldzero$,
\begin{talign}
\partial_j \norm{\ww}_p 
    &= \partial_j \big( \sum_{k=1}^n |\ww_k|^p \big)^{1/p} = \frac{1}{p} \big( \sum_{k=1}^n |\ww_k|^p \big)^{(1/p) - 1}   p |\ww_j|^{p-1} \sign(\ww_j) \\
    &=   \Big(\big( \sum_{k=1}^n |\ww_k|^p \big)^{1/p}\Big)^{-(p-1)}  |\ww_j|^{p-1} \sign(\ww_j) \\
    &=  \Big(\frac{|\ww_j|}{\norm{\ww}_p}\Big)^{p-1} \sign(\ww_j).
\end{talign}

For $p = \infty$, we have that $\infnorm{\ww} = \max_{j \in [n]} |\ww_j|$. By the Danskin-Bertsekas Theorem \citep{danskin2012theory} for subdifferentials, $\subdiff \infnorm{\ww} = \conv\{\cup \subdiff |\ww_j| \qtext{s.t.} |\ww_j| = \infnorm{\ww} \} = \conv\{\cup \sign(\ww_j) \basis_j \qtext{s.t.} |\ww_j| = \infnorm{\ww} \}$, where $\conv$ is the convex hull operation.
\end{proof}

\subsection{Proof of \cref{hint-gradient-bound}: Hinting loss subgradient bound} \label{proof_hint-gradient-bound}

If $q \in [1, \infty)$, we have
\begin{talign}
\norm{\gamma_t}_{\infty} 
    &= \left\lVert \frac{\norm{\bar\g_{t}}_q}{\norm{H_t \omega-\sum_{s=t-D}^t\g_s}_q^{q-1}} H_t^\top |H_t \omega-\sum_{s=t-D}^t\g_s|^{q-1} \sign(H_t \omega-\sum_{s=t-D}^t\g_s)\right\rVert_{\infty} \\
    &\leq \frac{\norm{\bar\g_{t}}_q \max_{j\in[d]} \qnorm{H_t\basis_j}}{\norm{H_t \omega-\sum_{s=t-D}^t\g_s}_q^{q-1}} \qnorm{H_t \omega-\sum_{s=t-D}^t\g_s}^{q-1} \qtext{by \Holder's inequality for $(q,p)$}\\
    &\leq d^{1/q}\norm{\bar\g_{t}}_q \maxentrynorm{H_t} \qtext{ by \cref{norm-equality}.}
\end{talign}

If $q=\infty$, we have
\begin{talign}
\infnorm{\gamma_t} = \left\lVert \norm{\bar\g_{t}}_{\infty} \sign(\mu) H_t^\top \basis_{k} \right\rVert_{\infty} 
   =\indic{\mu \neq 0}\norm{\bar\g_{t}}_{\infty} \maxentrynorm{H_t}
   \leq d^{1/q}\norm{\bar\g_{t}}_{\infty} \maxentrynorm{H_t}.
\end{talign}

\section{Experiment Details} \label{sec:experiment_details}
\subsection{Subseasonal Forecasting Application}
We apply the online learning techniques developed in this paper to the problem of adaptive ensembling for subseasonal weather forecasting.  Subseasonal forecasting is the problem predicting meteorological variables, often temperature and precipitation, 2-6 weeks in advance. These mid-range forecasts are critical for managing water resources and mitigating  wildfires, droughts, floods, and other extreme weather events \citep{hwang2019improving}. However, the subseasonal forecasting task is notoriously difficult due to the joint influences of short-term initial conditions and long-term boundary conditions \citep{white2017potential}. 

To improve subseasonal weather forecasting capabilities, the US Department of Reclamation launched the Sub-Seasonal Climate Forecast Rodeo competition \cite{nowak2020subseasonal}, a yearlong real-time forecasting competition for the Western United States. Our experiments are based on \citet{frii}, a snapshot of public subseasonal model forecasts including both physics-based and machine learning models. These models were developed for the subseasonal forecasting challenge and make semimonthly forecasts for the contest period (19 October 2019 -- 29 September 2020).  

To expand our evaluation beyond the subseasonal forecasting competition, we used the forecasts in \citet{frii} for analogous yearlong periods (26 semi-monthly dates starting from the last Wednesday in October) beginning in Oct. 2010 and ending in Sep. 2020. 
Throughout, we refer to the yearlong period beginning in Oct. 2010 -- Sep. 2011 as the 2011 year and so on for each subsequent year.
For each forecast date $t$, the models in \citet{frii} were trained only on data available at time $t$ and model hyper-parameters were tuned to optimize average RMSE loss on the 3-year period preceding the forecast date $t$. For a few of the forecast dates, one or more models had missing forecasts; only dates for which all models have forecasts were used in evaluation.

\subsection{Problem Definition}
Denote the set of $d=6$ input models $\{\M_1, \dots \M_d\}$ with labels: \texttt{llr} (Model1), \texttt{multillr} (Model2), \texttt{tuned\_catboost} (Model3), \texttt{tuned\_cfsv2} (Model4), \texttt{tuned\_doy} (Model5) and \texttt{tuned\_salient\_fri} (Model6). On each semimonthly forecast date, each model $\M_i$ makes a prediction for each of two meteorological variables (cumulative precipitation and average temperature over 14 days) and two forecasting horizons (3-4 weeks and 5-6 weeks).  For the 3-4 week and 5-6 horizons respectively, the forecaster experiences a delay of $D=2$ and $D=3$ forecasts. Each model makes a total of $T=26$ semimonthly forecasts for these four tasks. 

At each time $t$, each input model $\M_i$ produces a prediction at $G=514$ gridpoints in the Western United States: $\x^c_{t, i} \in \mathbb{R}^{G} = \M_i(t)$ for task $c$ at time $t$. Let $\X^c_{t} \in \R^{G \times d}$ be the matrix containing each input model's predictions as columns. The true meterological outcome for task $c$ is $\y_t^c \in \mathbb{R}^G$.  As online learning is performed for each task separately, we drop the task superscript $c$ in the following. 

At each timestep, the online learner makes a forecast prediction $\hat{\y}_t$ by playing $\ww_t \in \wset = \simplex_{d-1}$, corresponding to a convex combination of the individual models: $\hat{\y}_t = \X_{t} \ww_t$. The learner then incurs a loss for the play $\ww_t$ according to the root mean squared (RMSE) error over the geography of interest:
\begin{align} \label{eq:rodeo_loss}
    \loss_t(\ww_t) &= \frac{1}{\sqrt{G}} \normt{\y_t - \X_t \ww_t}, \\
    \subdiff \loss_t(\ww_t) &\ni \g_t =
    \begin{cases}
    \frac{\X_t^\top(\X_t \ww_t - \y_t)}{\sqrt{G}\normt{\X_t \ww_t - \y_t}} & \qtext{if} \X_t \ww_t - \y_t \neq \boldzero \\
    \boldzero & \qtext{if} \X_t \ww_t - \y_t = \boldzero
    \end{cases}
\end{align}    

Our objective for the subseasonal forecasting application is to produce an adaptive ensemble forecast that competes with the best input model over the yearlong period. Hence, in our evaluation, we take the competitor set to be the set of individual models $\uset = \{ \basis_i : i \in [d] \}$.

\section{Extended Experimental Results} \label{sec:experiment_extended_results}
We present complete experimental results for the four experiments presented in the main paper (see \cref{sec:experiments}).

\subsection{Competing with the Best Input Model}
Results for our three delayed online learning algorithms --- \DORM, \DORMP, and \AdaHedgeD --- on the four subseasonal prediction tasks for the four optimism strategies described in \cref{sec:experiments} (\texttt{recent\_g}, \texttt{prev\_g}, \texttt{mean\_g}, \texttt{none}) are presented below. Each table and figure shows the average RMSE loss and the annual regret versus the best input model in any given year respectively for each algorithm and task. 

\DORMP is a competitive model for all three hinting strategies and under the \texttt{recent\_g} hinting strategy achieves negative regret on all tasks except Temp. 5-6w. For the Temp. 5-6w task, no online learning model outperforms the best input model for any hinting strategy. For the precipitation tasks, the online learning algorithms presented achieve negative regret using all three hinting strategies for all four tasks. Within the subseasonal forecasting domain, precipitation is often considered a more challenging forecasting task than temperature \citep{white2017potential}. The gap between the best model and the worst model tends to be larger for precipitation than for temperature, and this could in part explain the strength of the online learning algorithms for these tasks.

\begin{table}[H]
\caption{\textbf{Hint \texttt{recent\_g}:} Average RMSE of the 2011-2020 semimonthly forecasts for online learning algorithms (left) and input models (right) over a $10$-year evaluation period with the top-performing learners and input models bolded and blue. %
In each task, the online learners compare favorably with the best input model and learn to downweight the lower-performing candidates, like the worst models italicized in red.}
\vskip 0.15in
\centering
\begin{small}
\begin{sc}
\begin{tabular}{lrrr|rrrrrr}
\toprule
\multicolumn{1}{c}{\textbf{recent\_g}} &  \multicolumn{1}{c}{AdaHedgeD} &   \multicolumn{1}{c}{DORM} &  \multicolumn{1}{c}{DORM+} &   \multicolumn{1}{c}{Model1}  &  \multicolumn{1}{c}{Model2} &  \multicolumn{1}{c}{Model3} &  \multicolumn{1}{c}{Model4}  &  \multicolumn{1}{c}{Model5}  &  \multicolumn{1}{c}{Model6}  \\
\midrule
Precip. 3-4w &     21.726 & 21.731 & \textcolor{RoyalBlue}{\textbf{21.675}} &  \textcolor{RoyalBlue}{\textbf{21.973}} &  22.431 &  22.357 &  21.978 &  21.986 &  \textcolor{Maroon}{\textit{23.344}} \\
Precip. 5-6w &     21.868 & 21.957 & \textcolor{RoyalBlue}{\textbf{21.838}} &  22.030 &  22.570 &  22.383 &  22.004 &  \textcolor{RoyalBlue}{\textbf{21.993}} &  \textcolor{Maroon}{\textit{23.257}} \\
Temp. 3-4w   &      2.273 &  2.259 &  \textcolor{RoyalBlue}{\textbf{2.247}} &   \textcolor{RoyalBlue}{\textbf{2.253}} &   2.352 &   2.394 &   2.277 &   2.319 &   \textcolor{Maroon}{\textit{2.508}} \\
Temp. 5-6w   &      2.316 &  2.316 &  \textcolor{RoyalBlue}{\textbf{2.303}} &   \textcolor{RoyalBlue}{\textbf{2.270}} &   2.368 &   2.459 &   2.278 &   2.317 &   \textcolor{Maroon}{\textit{2.569}} \\
\bottomrule
\end{tabular}
\end{sc}
\end{small}
\vskip -0.2in
\end{table}

\begin{figure}[H] 
  \begin{minipage}[b]{0.5\linewidth}
    \includegraphics[width=\linewidth]{figures/regret_zoo_contest_precip_34w.pdf} 
    \caption*{Precipitation Weeks 3-4} 
  \end{minipage} 
  \begin{minipage}[b]{0.5\linewidth}
    \includegraphics[width=\linewidth]{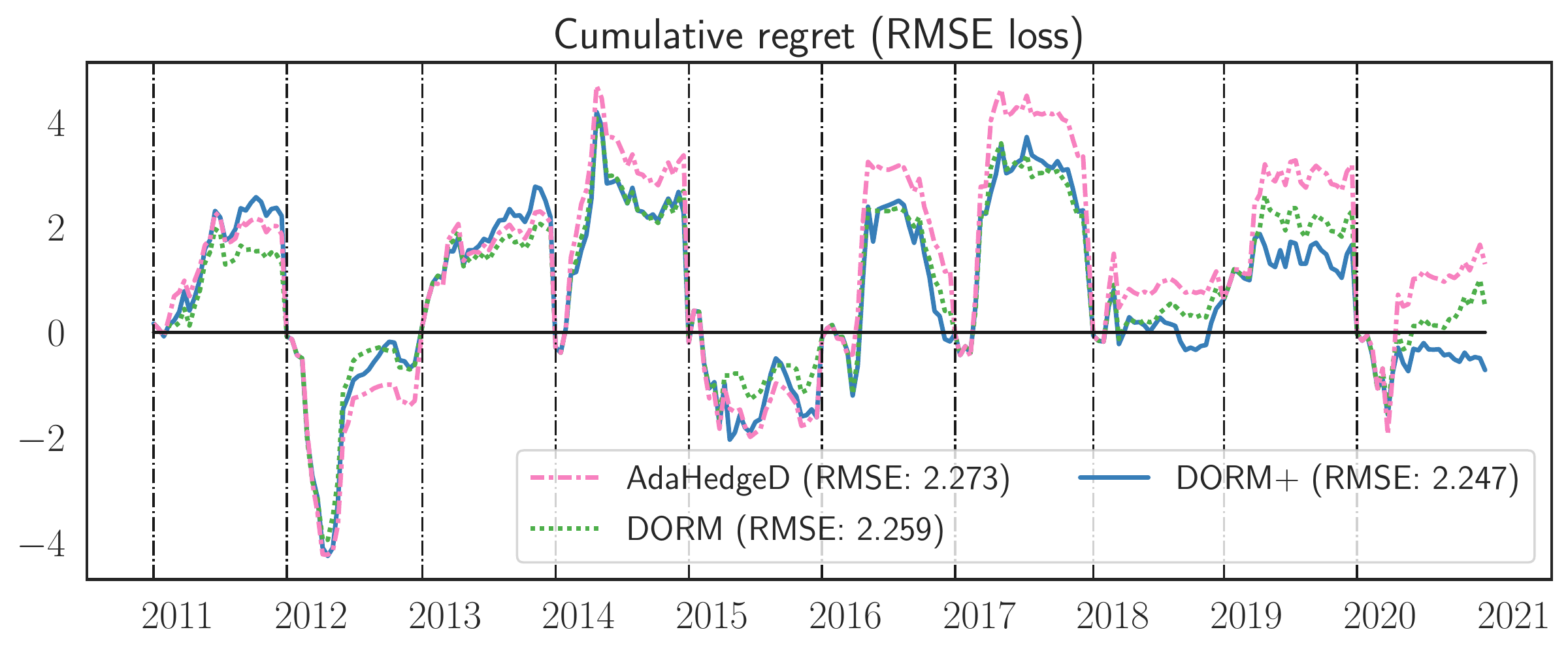} 
    \caption*{Temperature Weeks 3-4} 
  \end{minipage}  
  \begin{minipage}[b]{0.5\linewidth}
    \includegraphics[width=\linewidth]{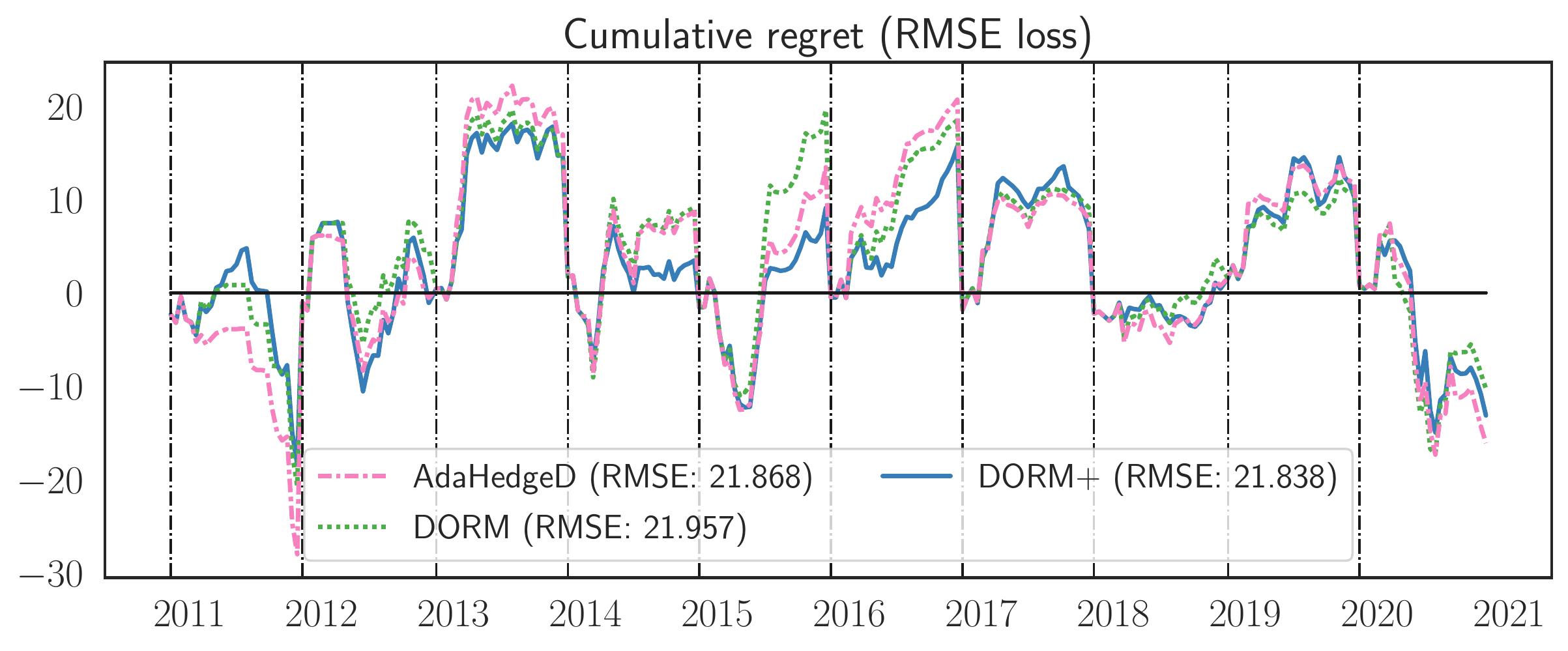} 
    \caption*{Precipitation Weeks 5-6} 
  \end{minipage} 
  \begin{minipage}[b]{0.5\linewidth}
    \includegraphics[width=\linewidth]{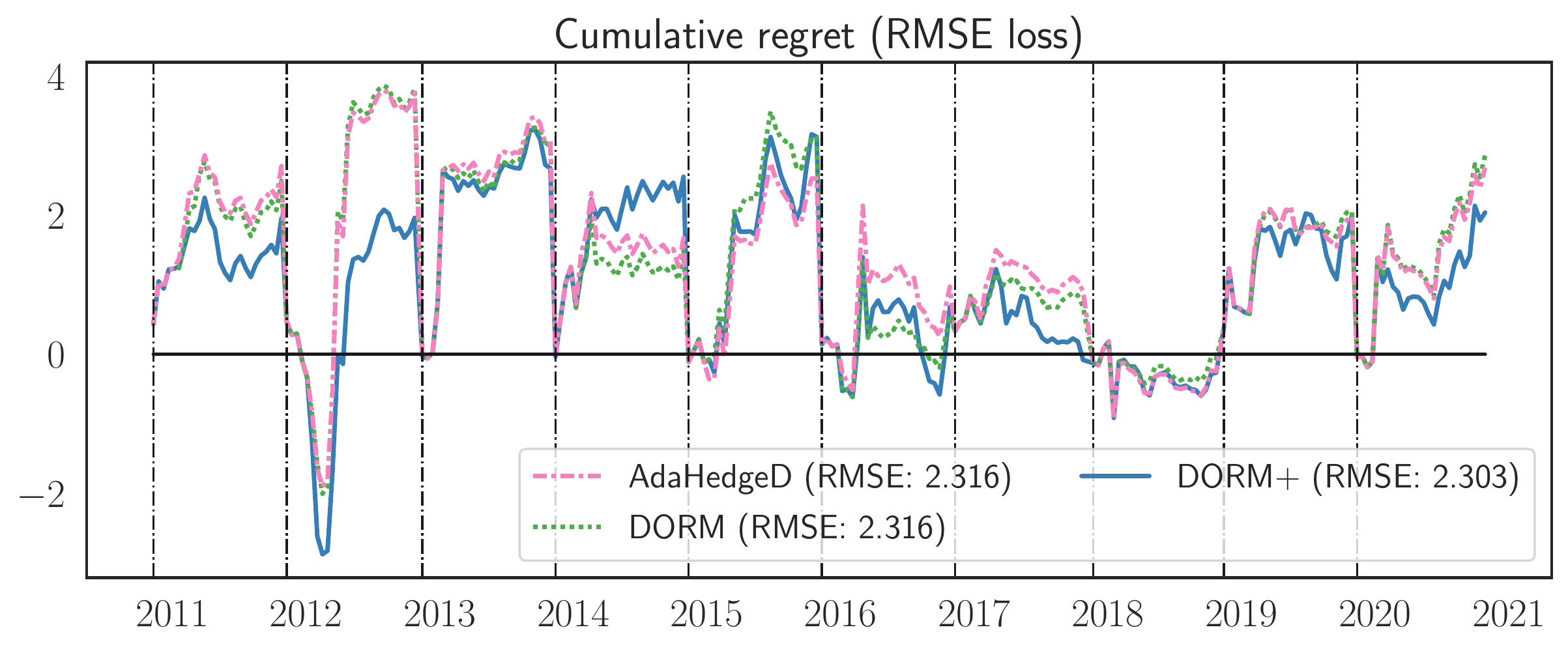} 
    \caption*{Temperature Weeks 5-6} 
  \end{minipage} 
  \caption{\textbf{Hint \texttt{recent\_g}:} Yearly cumulative regret under RMSE loss for the three delayed online learning algorithms presented, over the $10$-year evaluation period. The zero line corresponds to the performance of the best input model in a given year.}
  \label{extended_hint_prev}
\end{figure}

\begin{table}[H]
\caption{\textbf{Hint \texttt{prev\_g}:} Average RMSE of the 2010-2020 semimonthly forecasts for all four tasks over over a $10$-year evaluation period.}
\vskip 0.15in
\centering
\begin{small}
\begin{sc}
\begin{tabular}{lrrr|rrrrrr}
\toprule
\multicolumn{1}{c}{\textbf{\texttt{prev\_g}}} &  \multicolumn{1}{c}{AdaHedgeD} &   \multicolumn{1}{c}{DORM} &  \multicolumn{1}{c}{DORM+} &   \multicolumn{1}{c}{Model1}  &  \multicolumn{1}{c}{Model2} &  \multicolumn{1}{c}{Model3} &  \multicolumn{1}{c}{Model4}  &  \multicolumn{1}{c}{Model5}  &  \multicolumn{1}{c}{Model6}  \\
\midrule
Precip. 3-4w &     21.760 & 21.777 & \textcolor{RoyalBlue}{\textbf{21.729}} &  \textcolor{RoyalBlue}{\textbf{21.973}} &  22.431 &  22.357 &  21.978 &  21.986 &  \textcolor{Maroon}{\textit{23.344}} \\
Precip. 5-6w &     21.943 & 21.964 & \textcolor{RoyalBlue}{\textbf{21.911}} &  22.030 &  22.570 &  22.383 &  22.004 &  \textcolor{RoyalBlue}{\textbf{21.993}} &  \textcolor{Maroon}{\textit{23.257}} \\
Temp. 3-4w   &      2.266 &  2.269 & \textcolor{RoyalBlue}{\textbf{2.250}} &   \textcolor{RoyalBlue}{\textbf{2.253}} &   2.352 &   2.394 &   2.277 &   2.319 &   \textcolor{Maroon}{\textit{2.508}} \\
Temp. 5-6w   &      2.306 &  2.307 &  \textcolor{RoyalBlue}{\textbf{2.305}} &  \textcolor{RoyalBlue}{\textbf{2.270}} &   2.368 &   2.459 &   2.278 &   2.317 &   \textcolor{Maroon}{\textit{2.569}} \\
\bottomrule
\end{tabular}
\end{sc}
\end{small}
\vskip -0.2in
\end{table}

\begin{figure}[H]
  \begin{minipage}[b]{0.5\linewidth}
    \includegraphics[width=\linewidth]{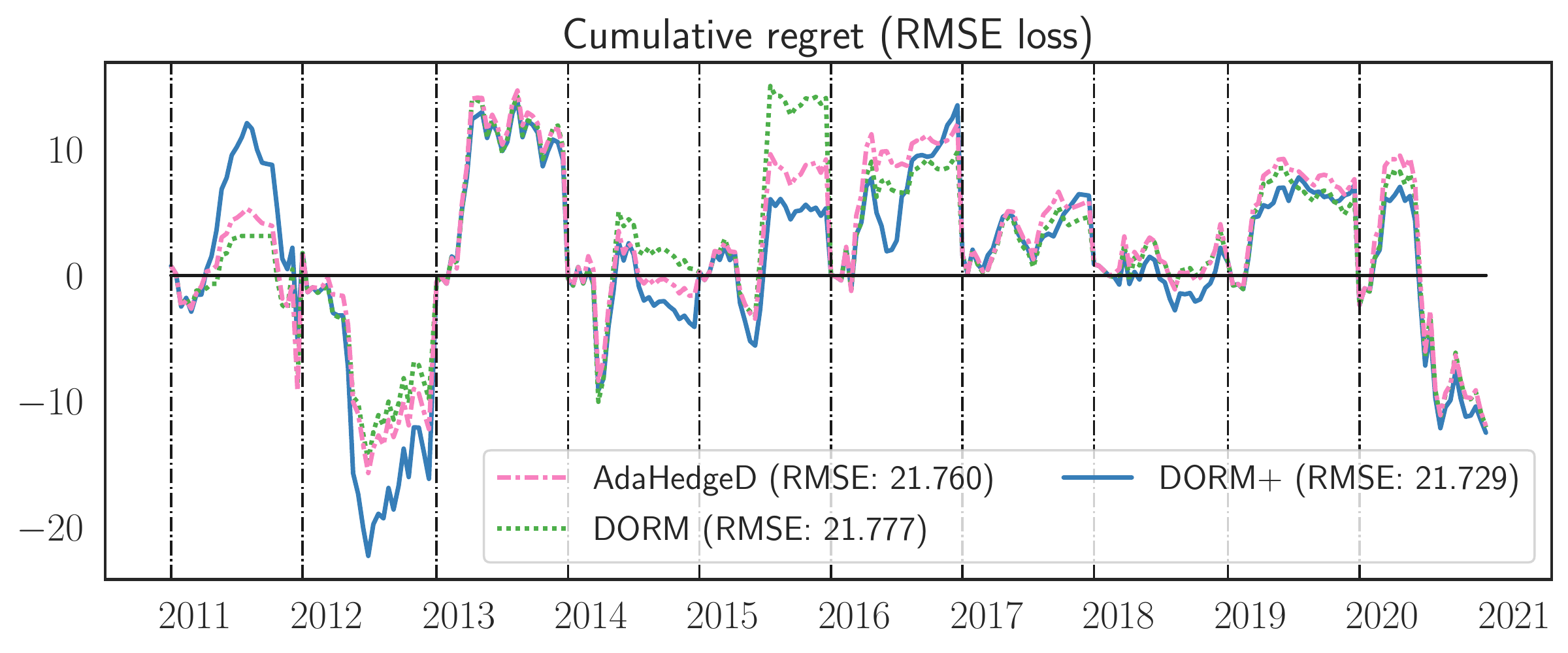} 
    \caption*{Precipitation Weeks 3-4} 
  \end{minipage} 
  \begin{minipage}[b]{0.5\linewidth}
    \includegraphics[width=\linewidth]{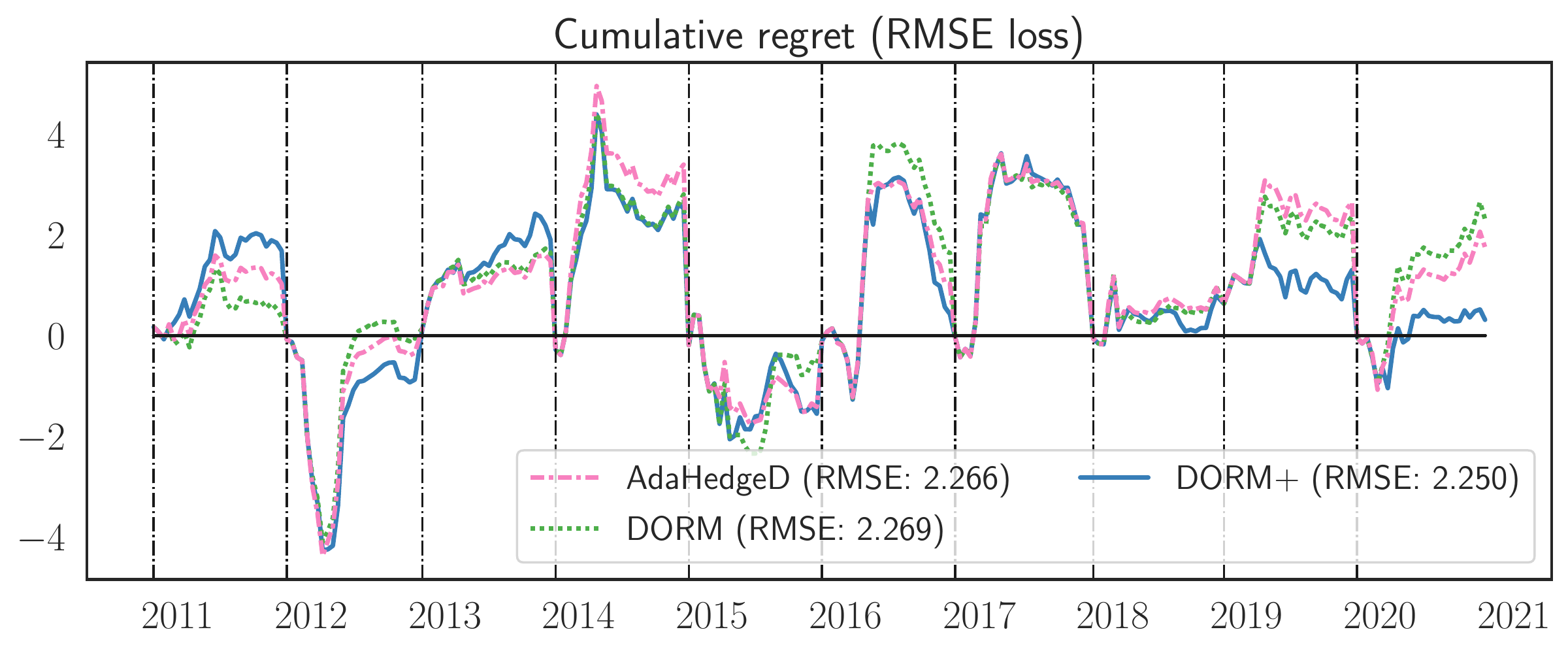} 
    \caption*{Temperature Weeks 3-4} 
  \end{minipage}  
  \begin{minipage}[b]{0.5\linewidth}
    \includegraphics[width=\linewidth]{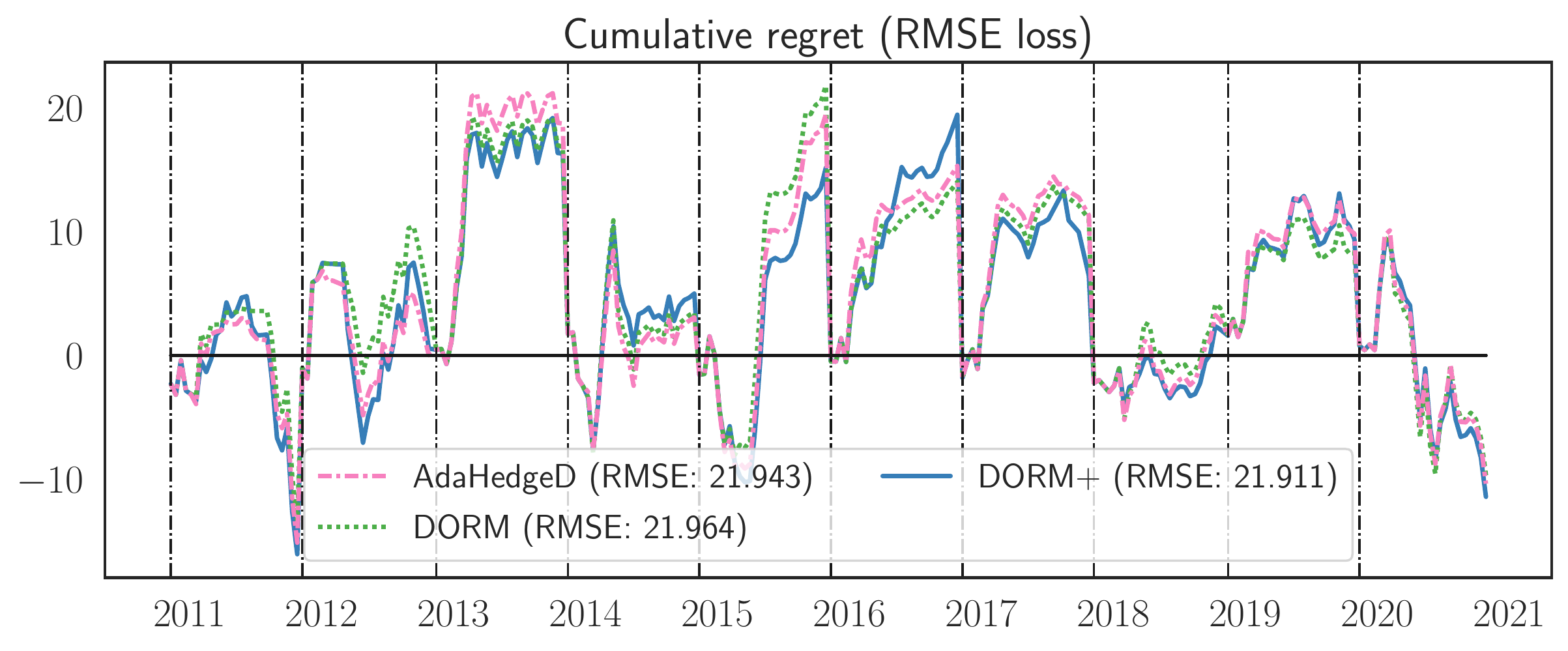} 
    \caption*{Precipitation Weeks 5-6} 
  \end{minipage} 
  \begin{minipage}[b]{0.5\linewidth}
    \includegraphics[width=\linewidth]{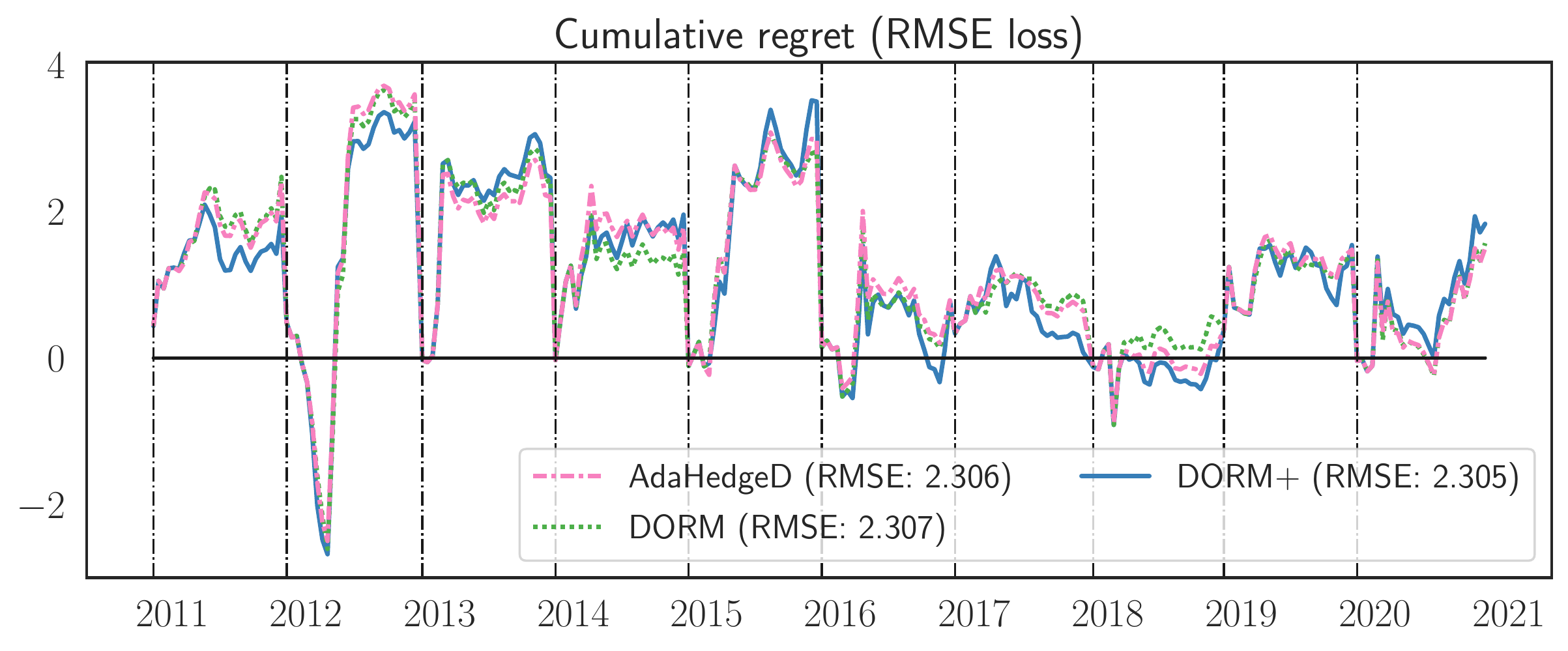} 
    \caption*{Temperature Weeks 5-6} 
  \end{minipage} 
  \caption{\textbf{Hint \texttt{prev\_g}:} Yearly cumulative regret under RMSE loss for the three delayed online learning algorithms presented.}
  \label{extended_hint_avg_prev_g}
\end{figure}

\begin{table}[H]
\caption{\textbf{Hint \texttt{mean\_g}:} Average RMSE of the 2010-2020 semimonthly forecasts for all four tasks over over a $10$-year evaluation period.}
\vskip 0.15in
\centering
\begin{small}
\begin{sc}
\begin{tabular}{lrrr|rrrrrr}
\toprule
\multicolumn{1}{c}{\textbf{mean\_g}} &  \multicolumn{1}{c}{AdaHedgeD} &   \multicolumn{1}{c}{DORM} &  \multicolumn{1}{c}{DORM+} &   \multicolumn{1}{c}{Model1}  &  \multicolumn{1}{c}{Model2} &  \multicolumn{1}{c}{Model3} &  \multicolumn{1}{c}{Model4}  &  \multicolumn{1}{c}{Model5}  &  \multicolumn{1}{c}{Model6}  \\
\midrule
Precip. 3-4w &     21.864 & 21.945 & \textcolor{RoyalBlue}{\textbf{21.830}} &  \textcolor{RoyalBlue}{\textbf{21.973}} &  22.431 &  22.357 &  21.978 &  21.986 &  \textcolor{Maroon}{\textit{23.344}} \\
Precip. 5-6w &     21.993 & 22.054 & \textcolor{RoyalBlue}{\textbf{21.946}} &  22.030 &  22.570 &  22.383 &  22.004 &  \textcolor{RoyalBlue}{\textbf{21.993}} &  \textcolor{Maroon}{\textit{23.257}} \\
Temp. 3-4w   &      2.273 &  2.277 &  \textcolor{RoyalBlue}{\textbf{2.257}} &   \textcolor{RoyalBlue}{\textbf{2.253}} &   2.352 &   2.394 &   2.277 &   2.319 &   \textcolor{Maroon}{\textit{2.508}} \\
Temp. 5-6w   &      \textcolor{RoyalBlue}{\textbf{2.311}} &  2.320 &  2.314 &   \textcolor{RoyalBlue}{\textbf{2.270}} &   2.368 &   2.459 &   2.278 &   2.317 &   \textcolor{Maroon}{\textit{2.569}} \\
\bottomrule
\end{tabular}
\end{sc}
\end{small}
\vskip -0.2in
\end{table}

\begin{figure}[H]
  \begin{minipage}[b]{0.5\linewidth}
    \includegraphics[width=\linewidth]{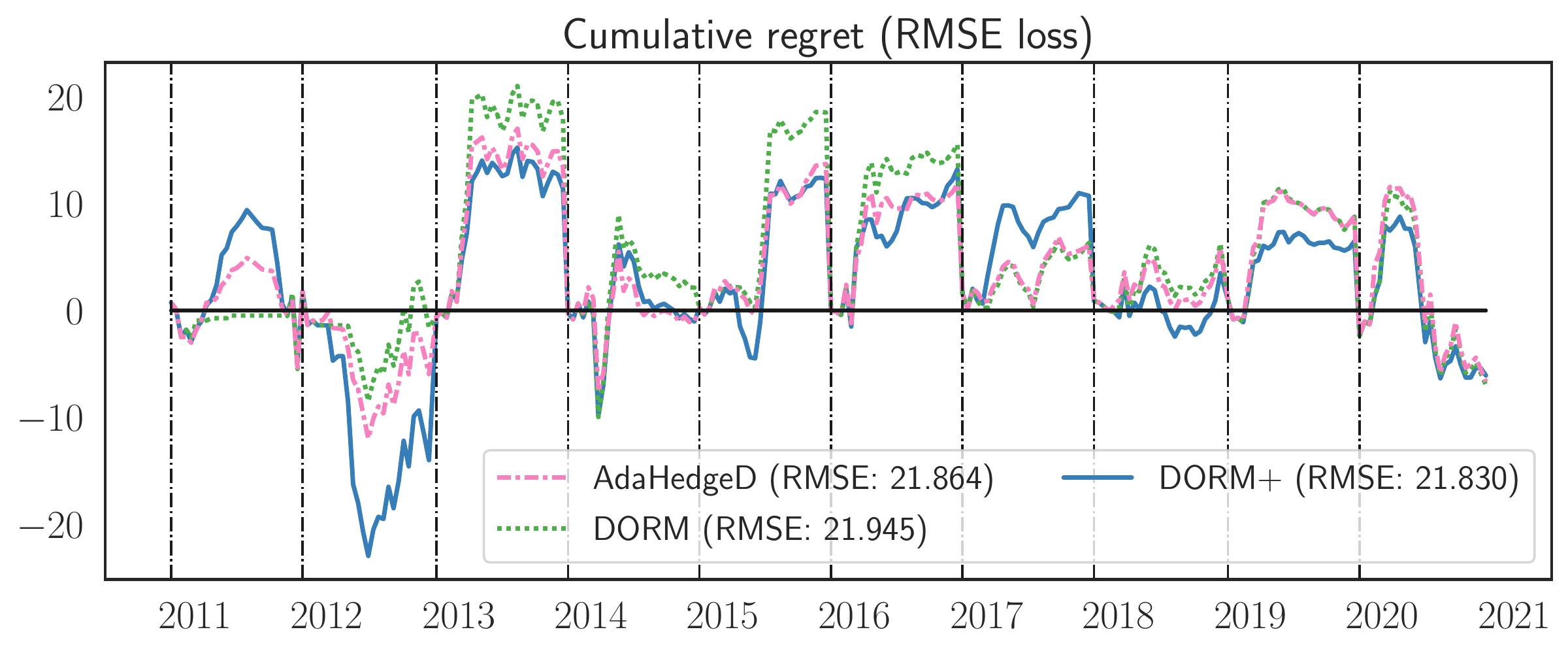} 
    \caption*{Precipitation Weeks 3-4} 
  \end{minipage} 
  \begin{minipage}[b]{0.5\linewidth}
    \includegraphics[width=\linewidth]{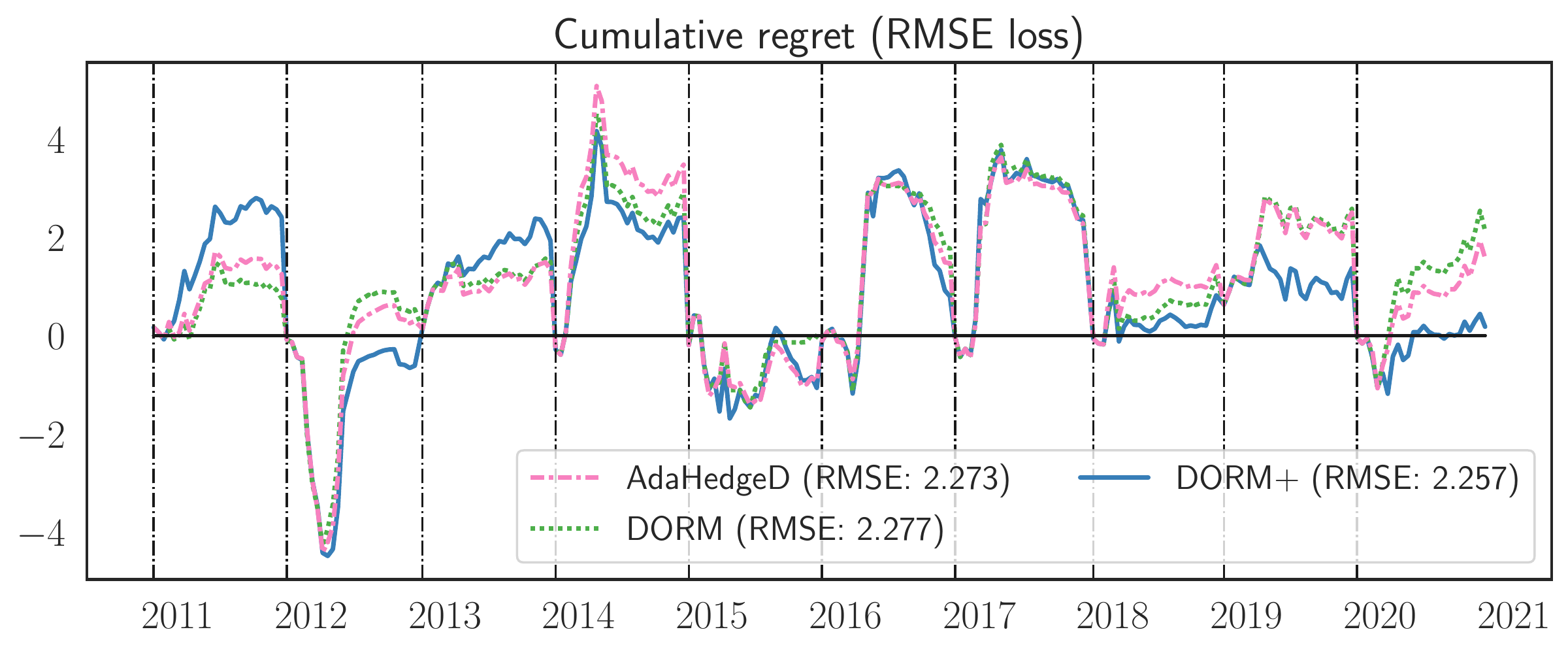} 
    \caption*{Temperature Weeks 3-4} 
  \end{minipage}  
  \begin{minipage}[b]{0.5\linewidth}
    \includegraphics[width=\linewidth]{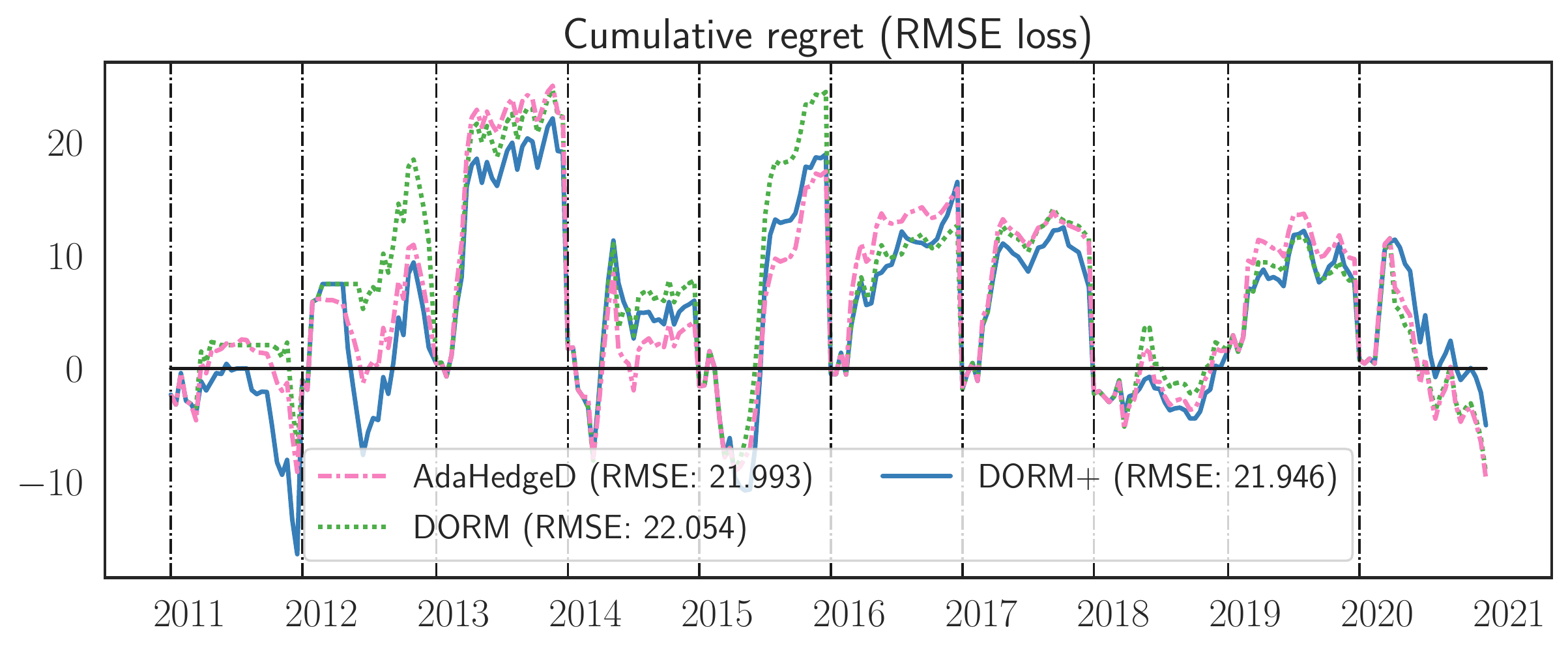} 
    \caption*{Precipitation Weeks 5-6} 
  \end{minipage} 
  \begin{minipage}[b]{0.5\linewidth}
    \includegraphics[width=\linewidth]{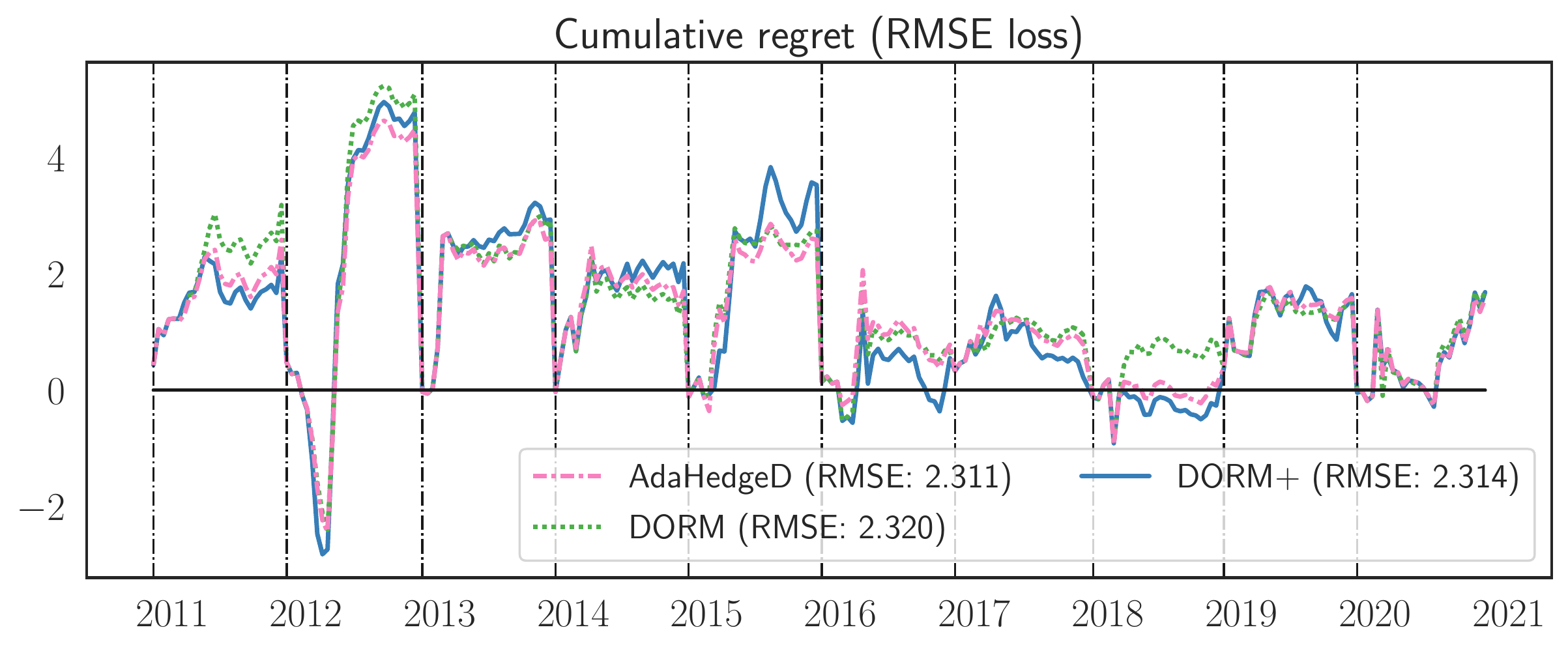} 
    \caption*{Temperature Weeks 5-6} 
  \end{minipage} 
  \caption{\textbf{Hint \texttt{mean\_g}:} Yearly cumulative regret under RMSE loss for the three delayed online learning algorithms presented.}
  \label{extended_hint_mean}
\end{figure}

\begin{table}[H]
\caption{\textbf{Hint \texttt{none}:} Average RMSE of the 2010-2020 semimonthly forecasts for all four tasks over over a $10$-year evaluation period.}
\vskip 0.15in
\centering
\begin{small}
\begin{sc}
\begin{tabular}{lrrr|rrrrrr}
\toprule
\multicolumn{1}{c}{\textbf{None}} &  \multicolumn{1}{c}{AdaHedgeD} &   \multicolumn{1}{c}{DORM} &  \multicolumn{1}{c}{DORM+} &   \multicolumn{1}{c}{Model1}  &  \multicolumn{1}{c}{Model2} &  \multicolumn{1}{c}{Model3} &  \multicolumn{1}{c}{Model4}  &  \multicolumn{1}{c}{Model5}  &  \multicolumn{1}{c}{Model6}  \\
\midrule
Precip. 3-4w &     \textcolor{RoyalBlue}{\textbf{21.760}} & 21.835 & 21.796 &  \textcolor{RoyalBlue}{\textbf{21.973}} &  22.431 &  22.357 &  21.978 &  21.986 &  \textcolor{Maroon}{\textit{23.344}} \\
Precip. 5-6w &     \textcolor{RoyalBlue}{\textbf{21.860}} & 21.967 & 21.916 &  22.030 &  22.570 &  22.383 &  22.004 &  \textcolor{RoyalBlue}{\textbf{21.993}} &  \textcolor{Maroon}{\textit{23.257}} \\
Temp. 3-4w   &      2.266 &  2.272 &  \textcolor{RoyalBlue}{\textbf{2.258}} &   \textcolor{RoyalBlue}{\textbf{2.253}} &   2.352 &   2.394 &   2.277 &   2.319 &   \textcolor{Maroon}{\textit{2.508}} \\
Temp. 5-6w   &      \textcolor{RoyalBlue}{\textbf{2.296}} &  2.311 &  2.308 &   \textcolor{RoyalBlue}{\textbf{2.270}} &   2.368 &   2.459 &   2.278 &   2.317 &   \textcolor{Maroon}{\textit{2.569}} \\
\bottomrule
\end{tabular}
\end{sc}
\end{small}
\vskip -0.2in
\end{table}

\begin{figure}[H]
  \begin{minipage}[b]{0.5\linewidth}
    \includegraphics[width=\linewidth]{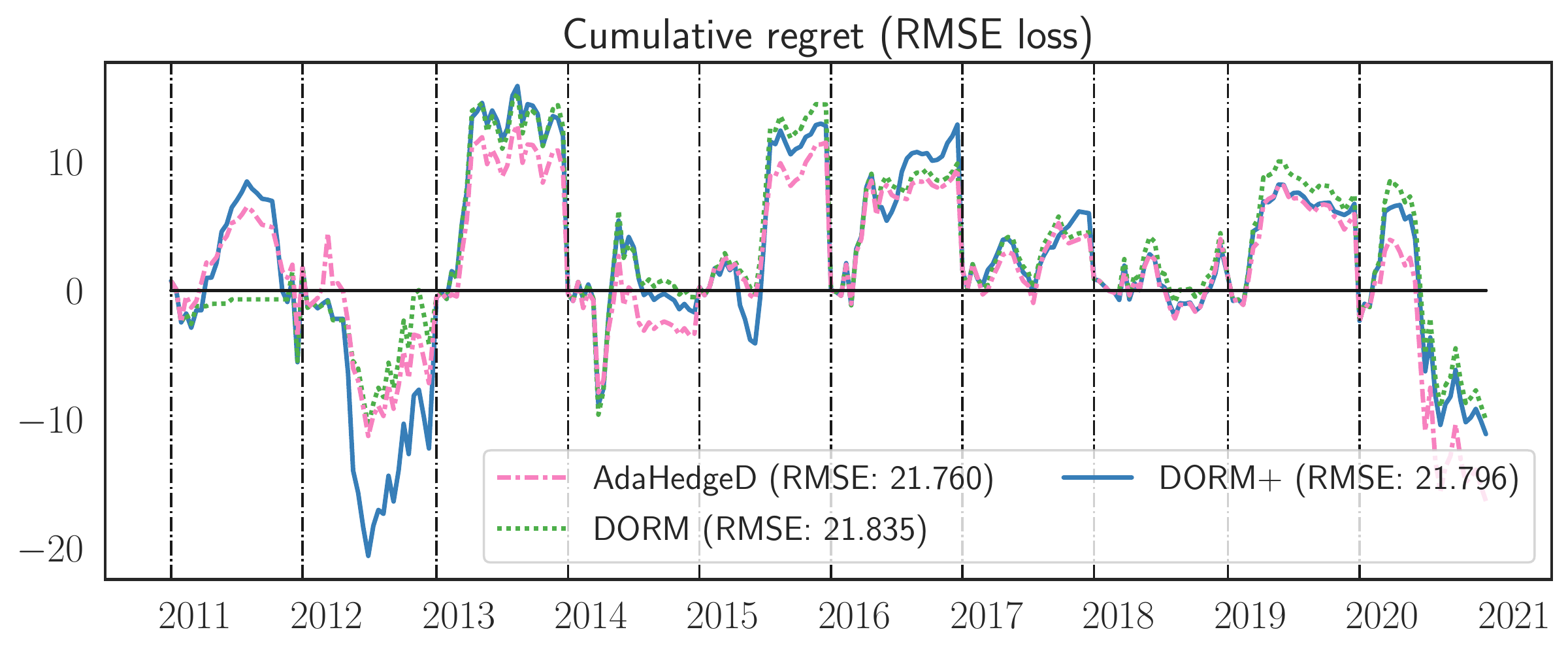} 
    \caption*{Precipitation Weeks 3-4} 
  \end{minipage} 
  \begin{minipage}[b]{0.5\linewidth}
    \includegraphics[width=\linewidth]{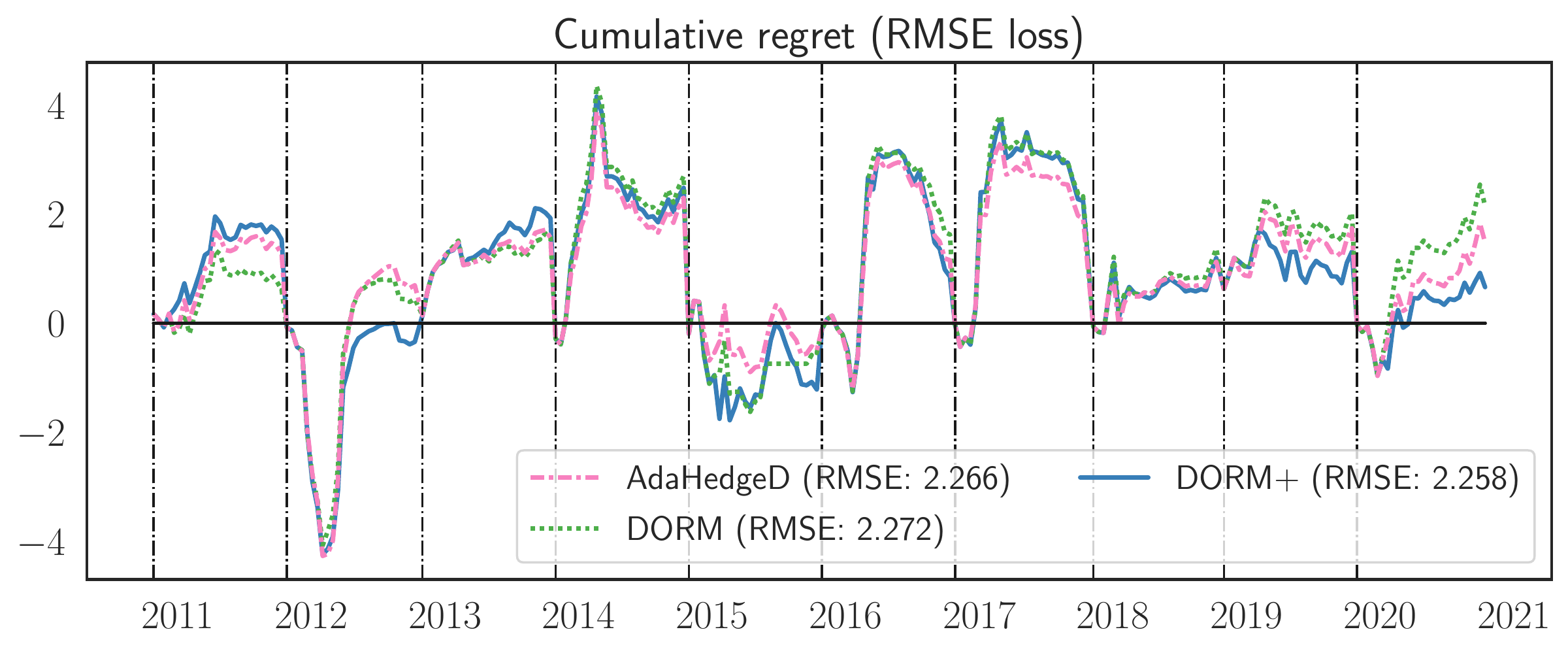} 
    \caption*{Temperature Weeks 3-4} 
  \end{minipage}  
  \begin{minipage}[b]{0.5\linewidth}
    \includegraphics[width=\linewidth]{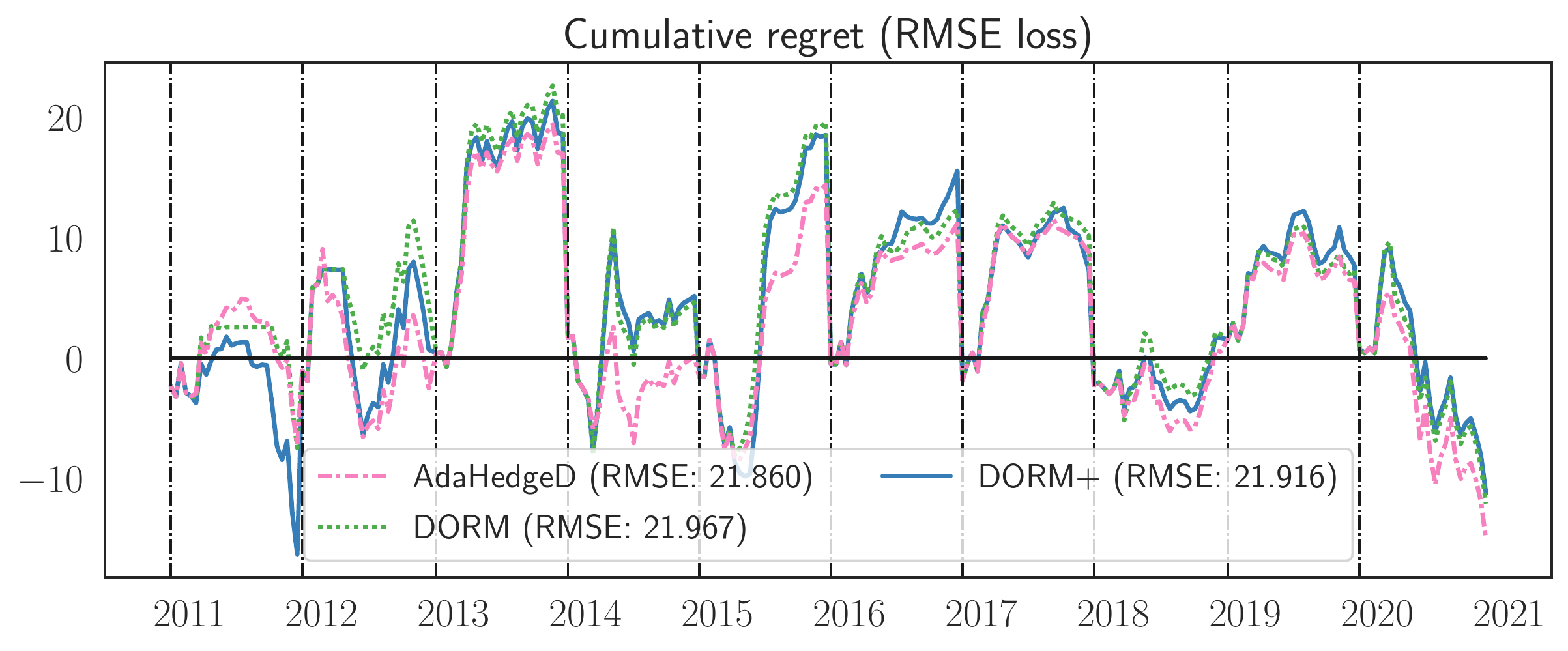} 
    \caption*{Precipitation Weeks 5-6} 
  \end{minipage} 
  \begin{minipage}[b]{0.5\linewidth}
    \includegraphics[width=\linewidth]{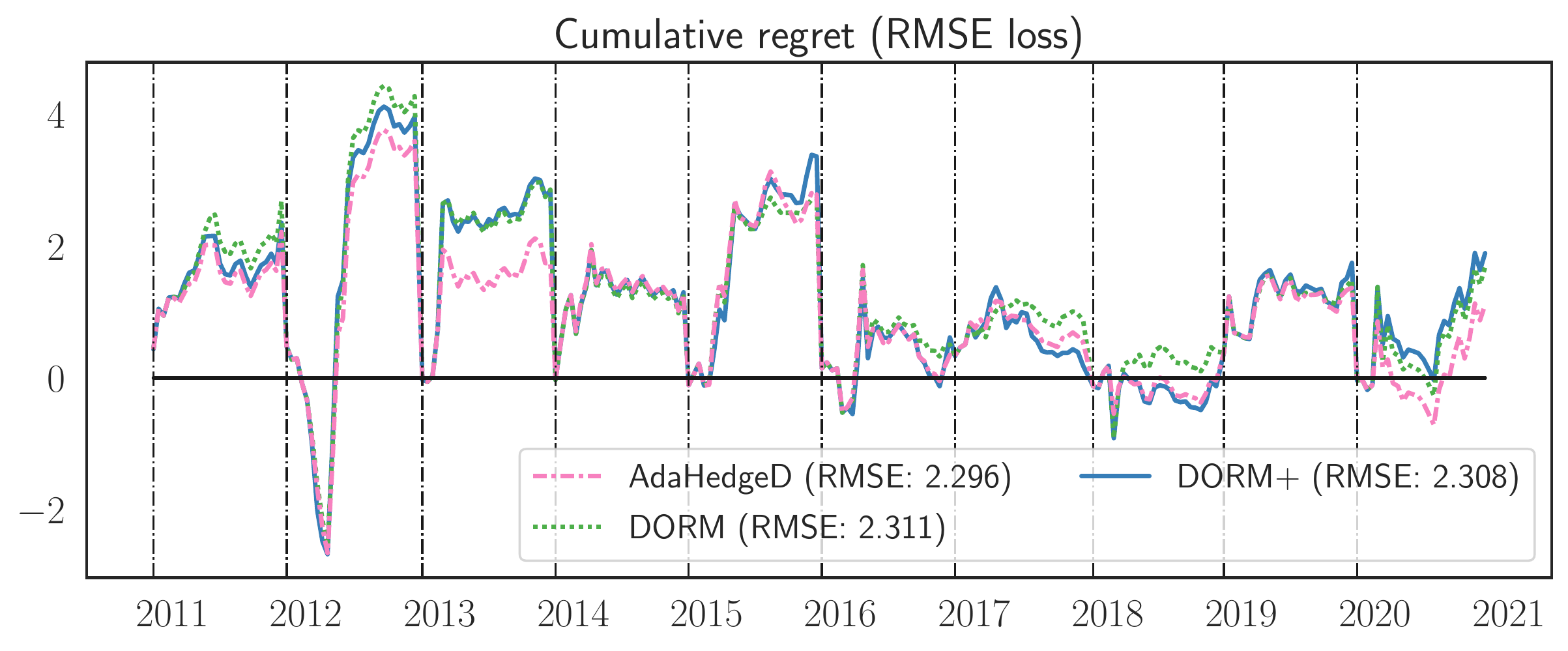} 
    \caption*{Temperature Weeks 5-6} 
  \end{minipage} 
  \caption{\textbf{Hint \texttt{none}:} Yearly cumulative regret under RMSE loss for the three delayed online learning algorithms presented.}
  \label{extended_hint_none}
\end{figure}

\newpage
\subsection{Impact of Regularization}
Results for three regularization strategies---\AdaHedgeD, \DORMP, and \DUB---on all four subseasonal prediction as described in \cref{sec:experiments}. \cref{extended_regularization} shows the annual regret versus the best input model in any given year for each algorithm and task, and \cref{extended_reg_weights} presents an example of the weights played by each algorithm in the final evaluation year, as well as the regularization weight used by each algorithm. 

The under- and over-regularization of \AdaHedgeD and \DUB respectively compared with \DORMP is evident in all four tasks, both in the regret and weight plots. Due to the looseness of the regularization settings used in \DUB, its plays can be seen to be very close to the uniform ensemble in all four tasks. For this subseasonal prediction problem, the uniform ensemble is competitive, especially for the 5-6 week horizons. However, in problems where the uniform ensemble has higher regret, this over-regularization property of \DUB would be undesirable. The more adaptive plays of \DORMP and \AdaHedgeD have the potential to better exploit heterogeneous performance among different input models. 
\begin{figure}[H] 
  \begin{minipage}[b]{0.5\linewidth}
    \includegraphics[width=\linewidth]{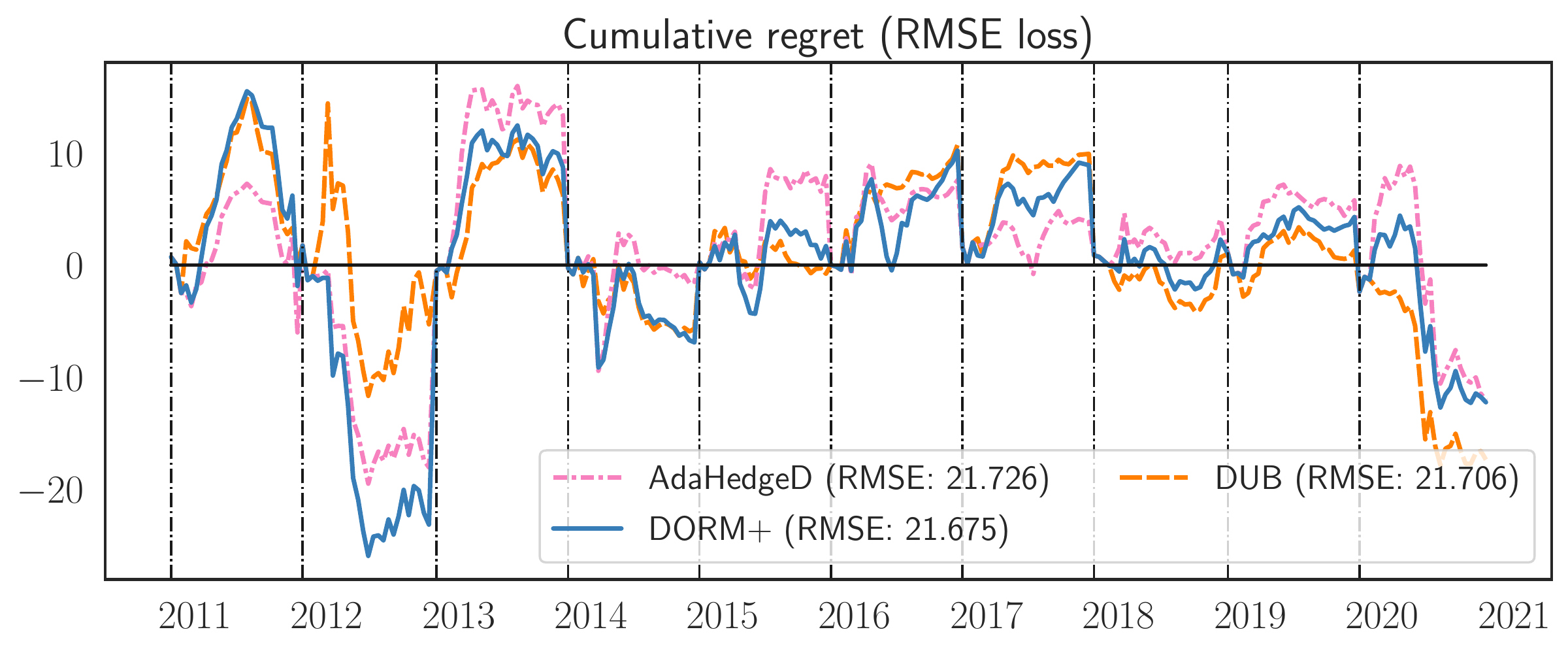} 
    \caption*{Precipitation Weeks 3-4} 
  \end{minipage} 
  \begin{minipage}[b]{0.5\linewidth}
    \includegraphics[width=\linewidth]{figures/regret_regularization_contest_tmp2m_34w.pdf} 
    \caption*{Temperature Weeks 3-4} 
  \end{minipage}  
  \begin{minipage}[b]{0.5\linewidth}
    \includegraphics[width=\linewidth]{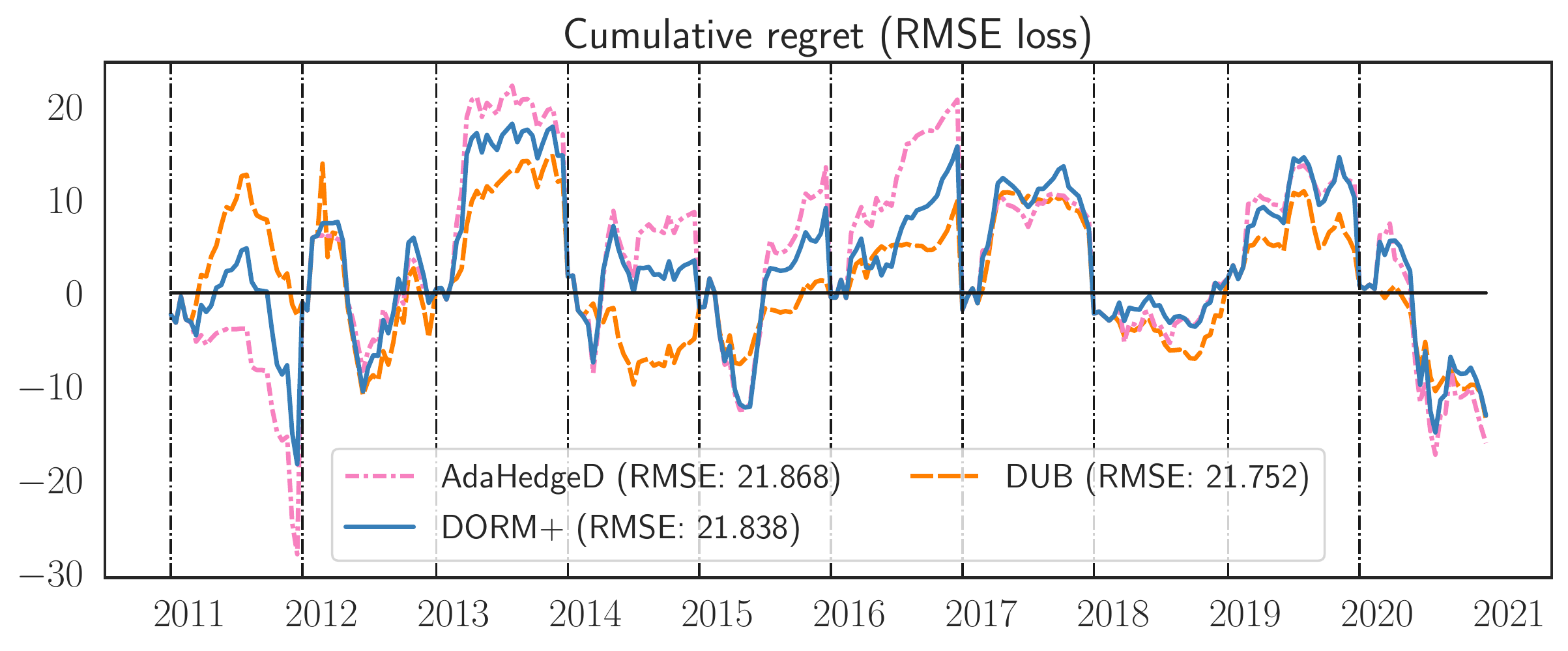} 
    \caption*{Precipitation Weeks 5-6} 
  \end{minipage} 
  \begin{minipage}[b]{0.5\linewidth}
    \includegraphics[width=\linewidth]{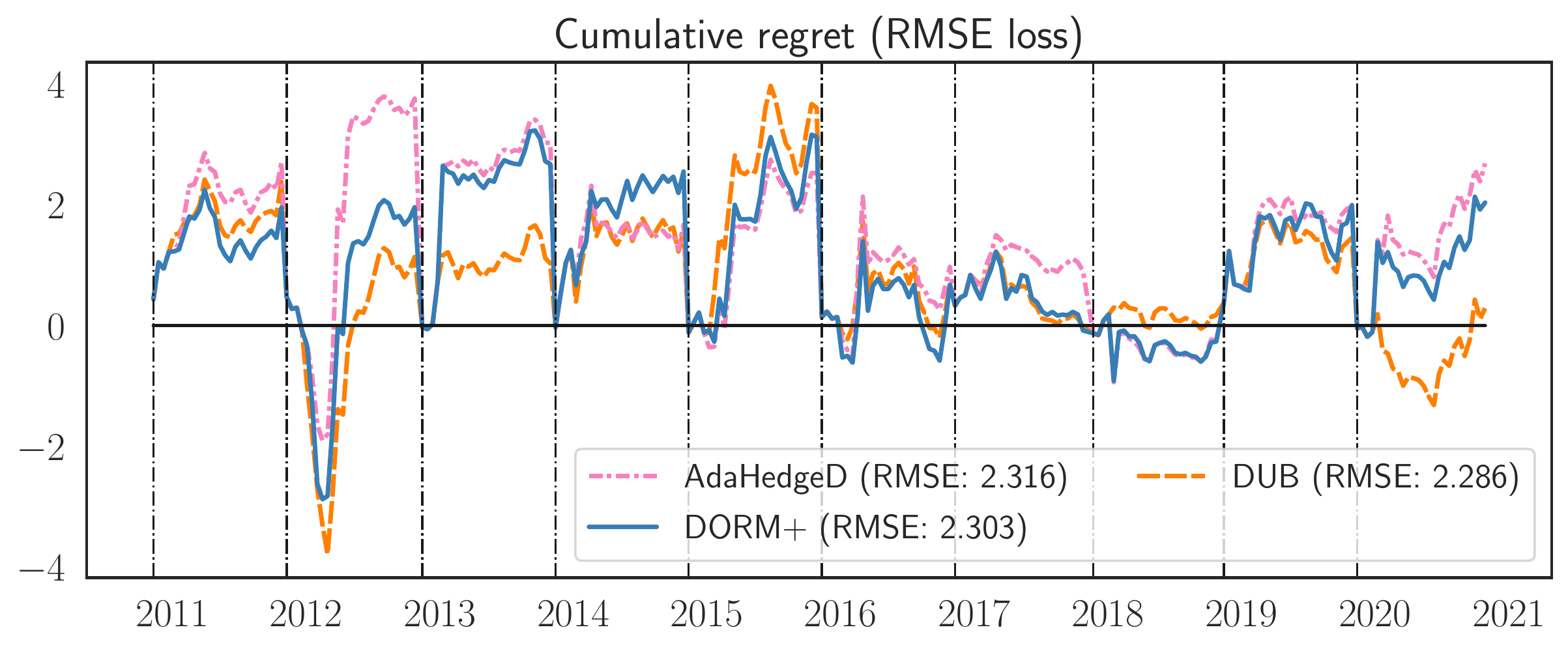} 
    \caption*{Temperature Weeks 5-6} 
  \end{minipage} 
  \caption{\textbf{Overall regret:} Yearly cumulative regret under the RMSE loss for the three regularization algorithms presented.}
 \label{extended_regularization}   
\end{figure}

\newpage
\begin{figure}[H]
  \centering
  \subfigure[Precipitation Weeks 3-4]{
    \includegraphics[height=1.5in]{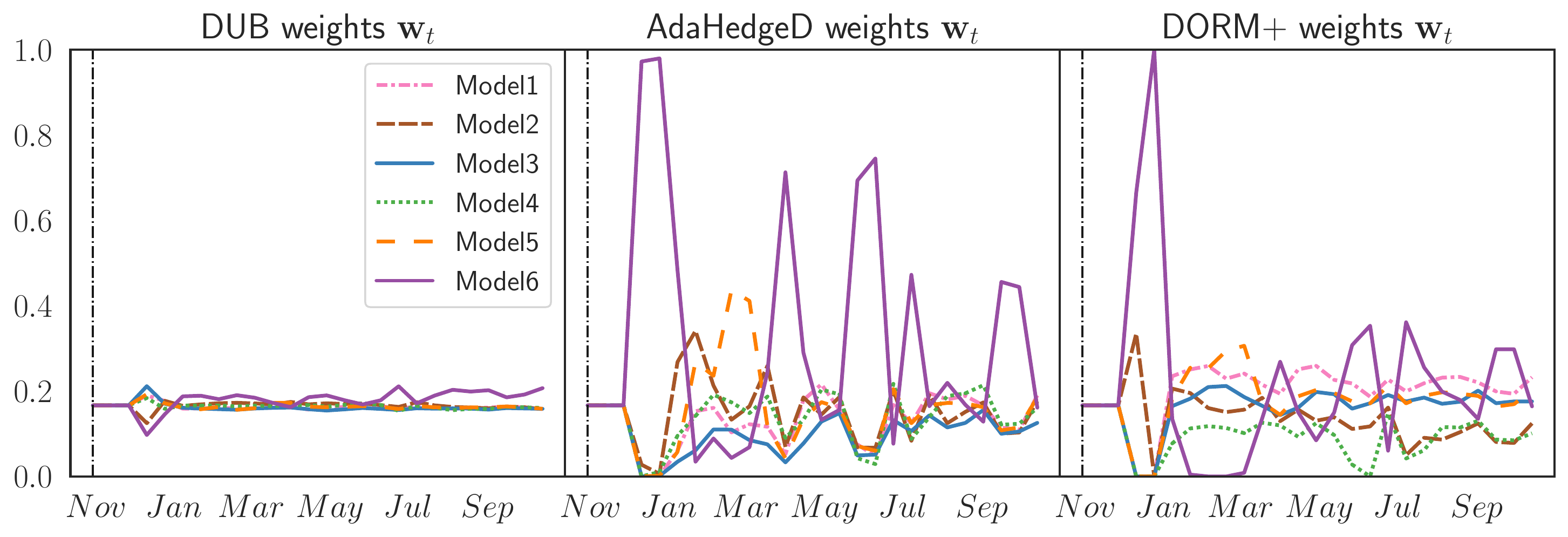} 
    \includegraphics[height=1.52in]{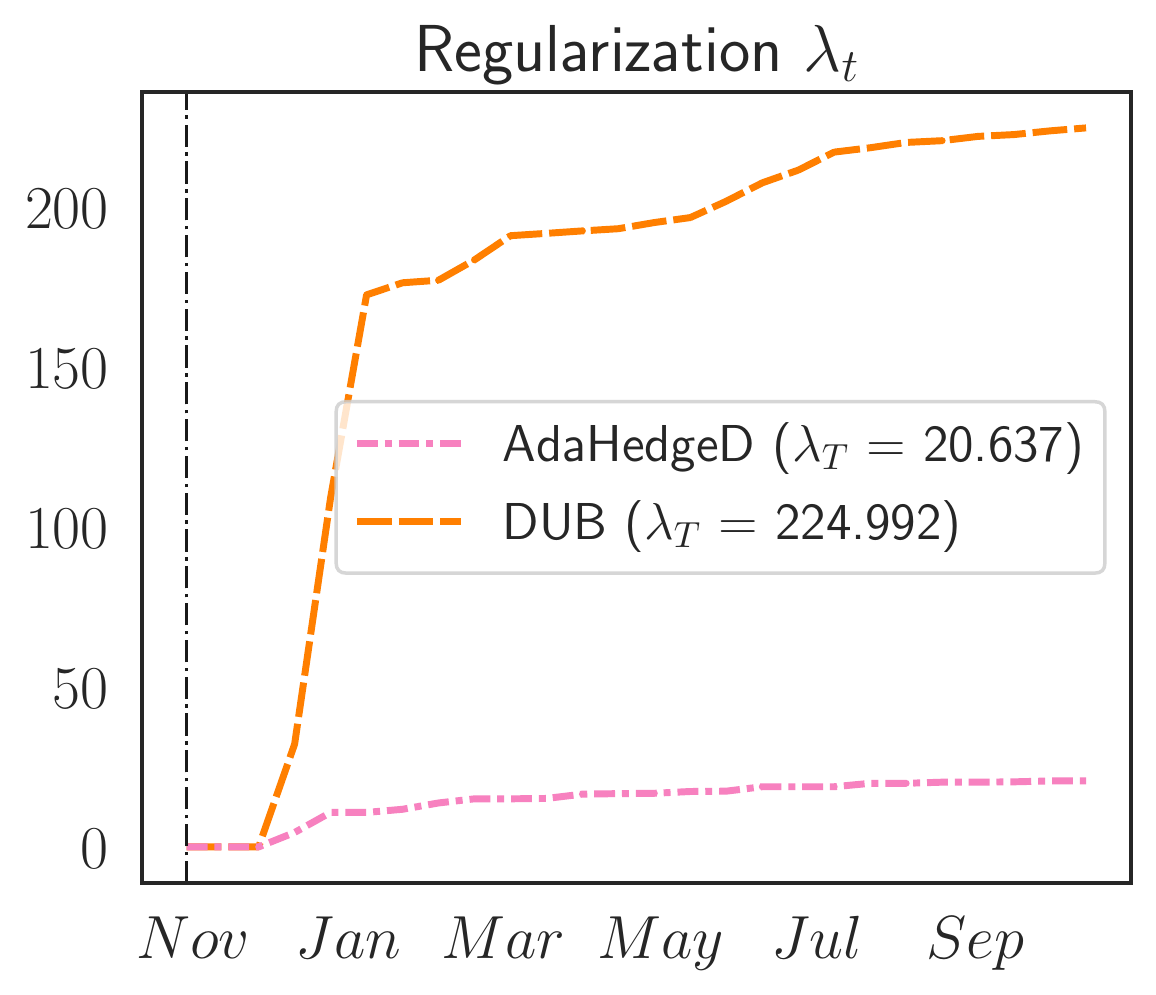}        
  }
  \subfigure[Precipitation Weeks 5-6]{
    \includegraphics[height=1.5in]{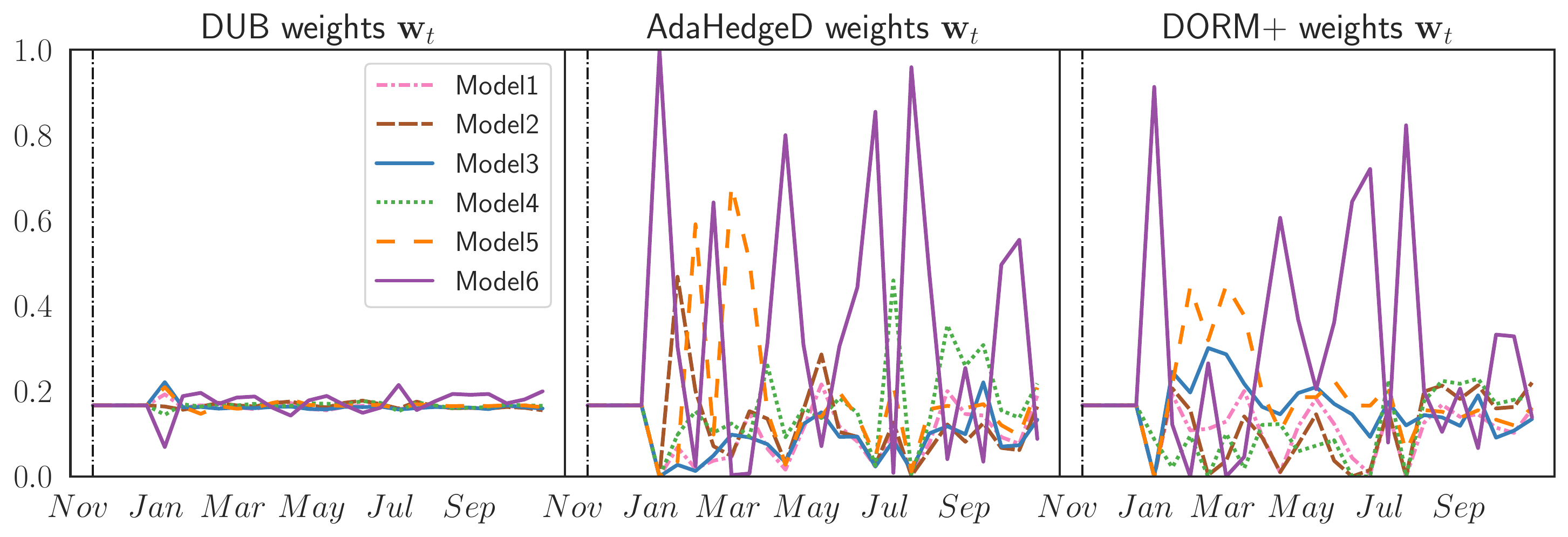} 
    \includegraphics[height=1.52in]{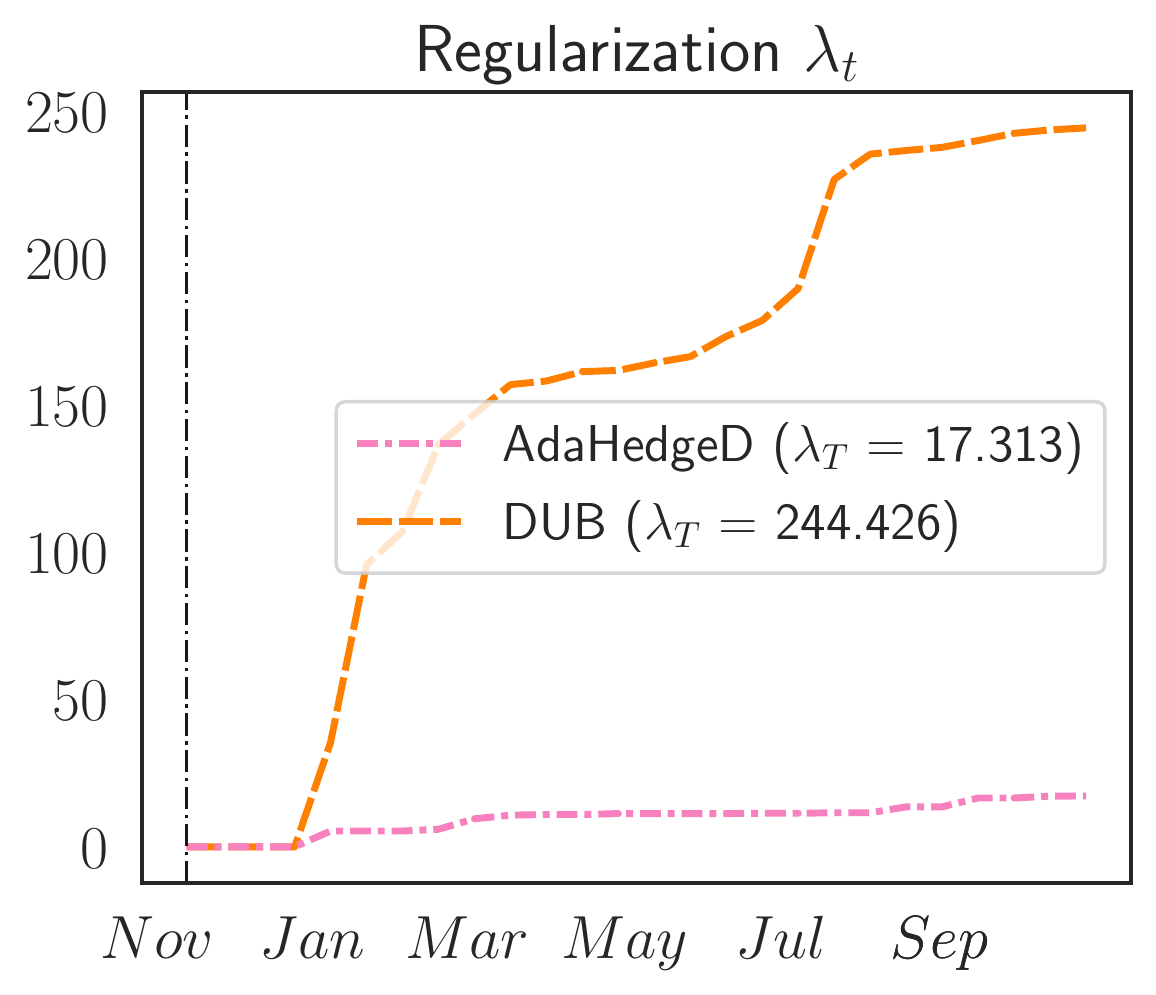}        
  }
  \subfigure[Temperature Weeks 3-4]{
    \includegraphics[height=1.5in]{figures/weights_regularization_contest_tmp2m_34w.pdf} 
    \includegraphics[height=1.52in]{figures/params_regularization_contest_tmp2m_34w.pdf}    
  }
  \subfigure[Temperature Weeks 5-6]{
    \includegraphics[height=1.5in]{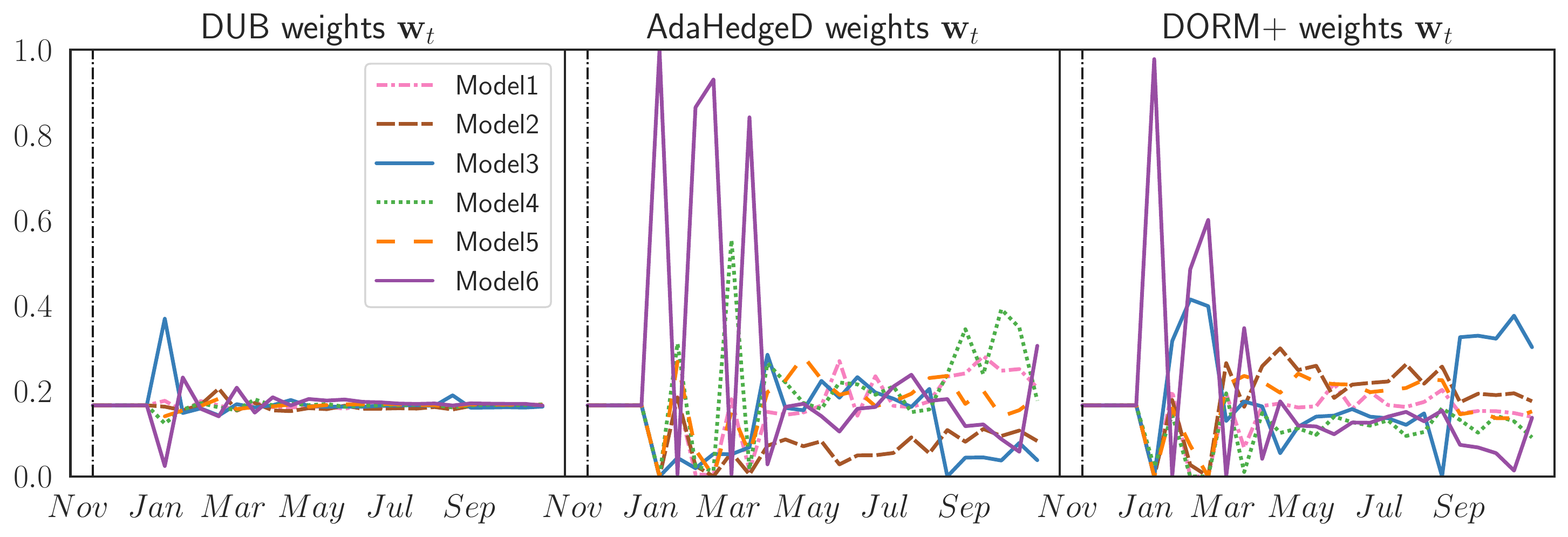}
    \includegraphics[height=1.52in]{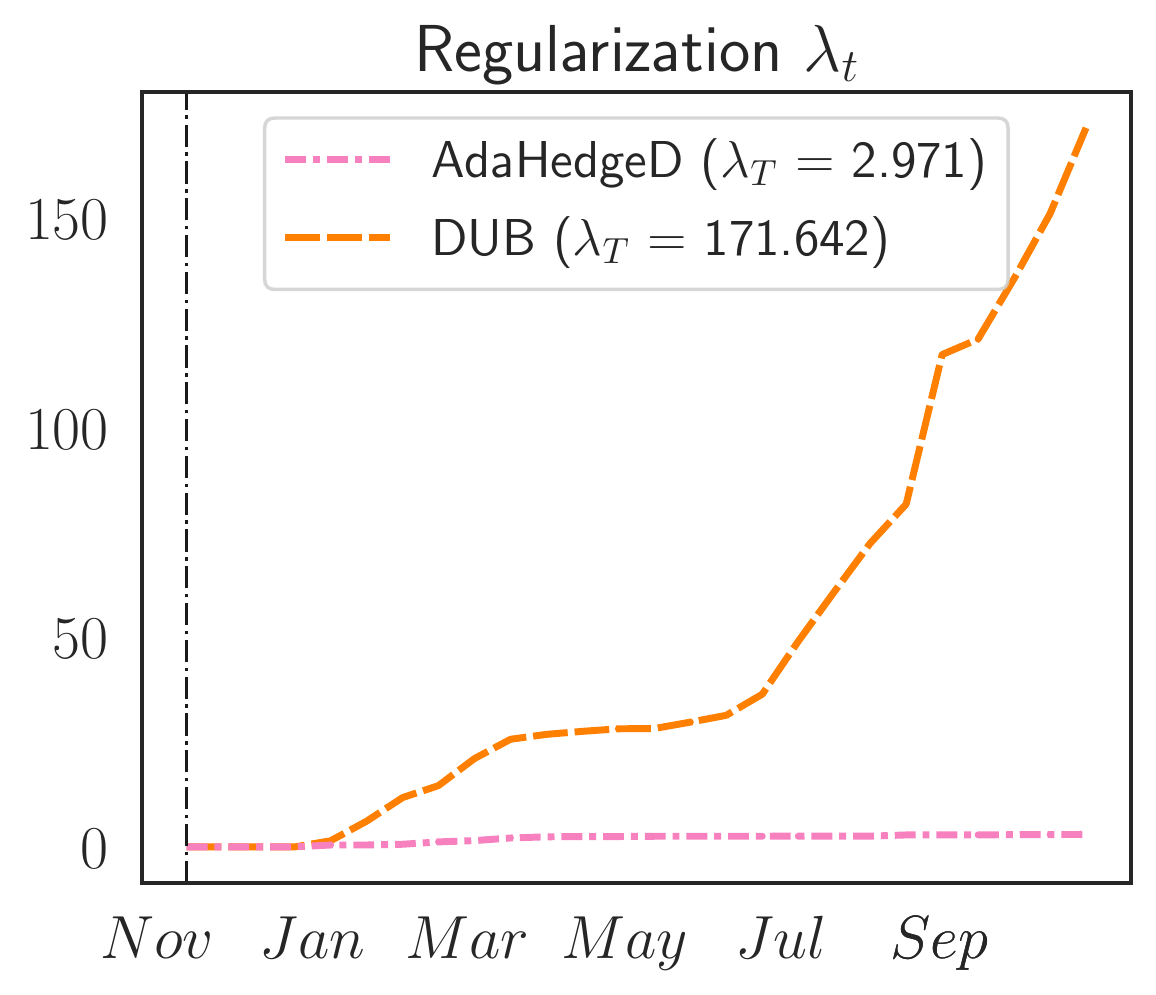} 
    }
\caption{\textbf{Impact of regularization:} The plays $\ww_t$ of online learning algorithms used to combine the input models for all four tasks in the 2020 evaluation year. The weights of \DUB and \AdaHedgeD appear respectively over and under regularized compared to \DORMP due to their selection of regularization strength $\lam_t$ (right).}
 \label{extended_reg_weights} 
\end{figure}

\newpage
\subsection{To Replicate or Not to Replicate}
\label{sec:replicate}
We compare the performance of replicated and non-replicated variants of our \DORMP algorithm as in \cref{sec:experiments}. Both algorithms perform well, but in all tasks, \DORMP outperforms replicated \DORMP (in which $D+1$ independent copies of \DORMP make staggered predictions). \cref{extended_replication} provides an example of the  weight plots produced by the replication strategy for all for tasks.

 The replicated algorithms only have the opportunity to learn from $T/(D+1)$ plays. For the 3-4 week horizons tasks $D=2$ and for the 5-6 week horizons tasks $D=3$. Because our forecasting horizons are short ($T=26$), further limiting the feedback available to each online learner via replication could be detrimental to practical model performance. 

\begin{table}[H]
\caption{\textbf{Replication RMSE:} Average RMSE of the 2010-2020 semimonthly forecasts for four tasks over over a $10$-year evaluation period for replicated versus standard \DORMP.}
\vskip 0.15in
\centering
\begin{small}
\begin{sc}
\begin{tabular}{lrr|rrrrrr}
\toprule
\multicolumn{1}{c}{} &  \multicolumn{1}{c}{DORM+} &   \multicolumn{1}{c}{Replicated DORM+} &  \multicolumn{1}{c}{Model1}  &  \multicolumn{1}{c}{Model2} &  \multicolumn{1}{c}{Model3} &  \multicolumn{1}{c}{Model4}  &  \multicolumn{1}{c}{Model5}  &  \multicolumn{1}{c}{Model6}  \\
\midrule
Precip. 3-4w & \textcolor{RoyalBlue}{\textbf{21.675}} &            21.720 &  \textcolor{RoyalBlue}{\textbf{21.973}} &  22.431 &  22.357 &  21.978 &  21.986 &  \textcolor{Maroon}{\textit{23.344}} \\
Precip. 5-6w & \textcolor{RoyalBlue}{\textbf{21.838}} &            21.851 &  22.030 &  22.570 &  22.383 &  22.004 &  \textcolor{RoyalBlue}{\textbf{21.993}} &  \textcolor{Maroon}{\textit{23.257}} \\
Temp. 3-4w   &  \textcolor{RoyalBlue}{\textbf{2.247}} &             2.249 &   \textcolor{RoyalBlue}{\textbf{2.253}} &   2.352 &   2.394 &   2.277 &   2.319 &   \textcolor{Maroon}{\textit{2.508}} \\
Temp. 5-6w   &  \textcolor{RoyalBlue}{\textbf{2.303}} &             2.315 &   \textcolor{RoyalBlue}{\textbf{2.270}} &   2.368 &   2.459 &   2.278 &   2.317 &   \textcolor{Maroon}{\textit{2.569}} \\
\bottomrule
\end{tabular}
\end{sc}
\end{small}
\vskip -0.2in
\end{table}

\begin{figure}[H]
  \begin{minipage}[b]{0.5\linewidth}
    \includegraphics[width=\linewidth]{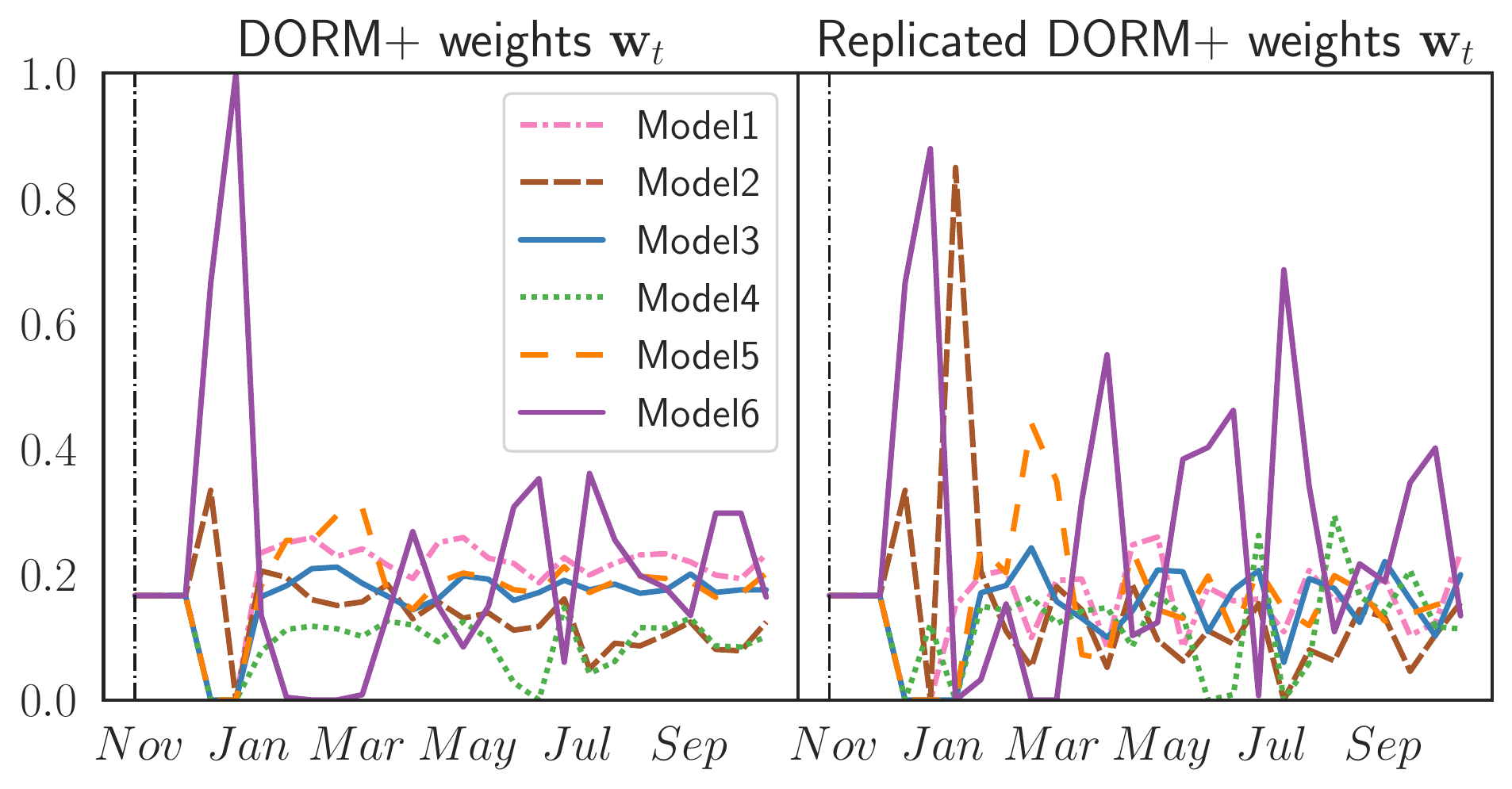} 
    \caption*{Precipitation Weeks 3-4} 
  \end{minipage} 
   \begin{minipage}[b]{0.5\linewidth}
    \includegraphics[width=\linewidth]{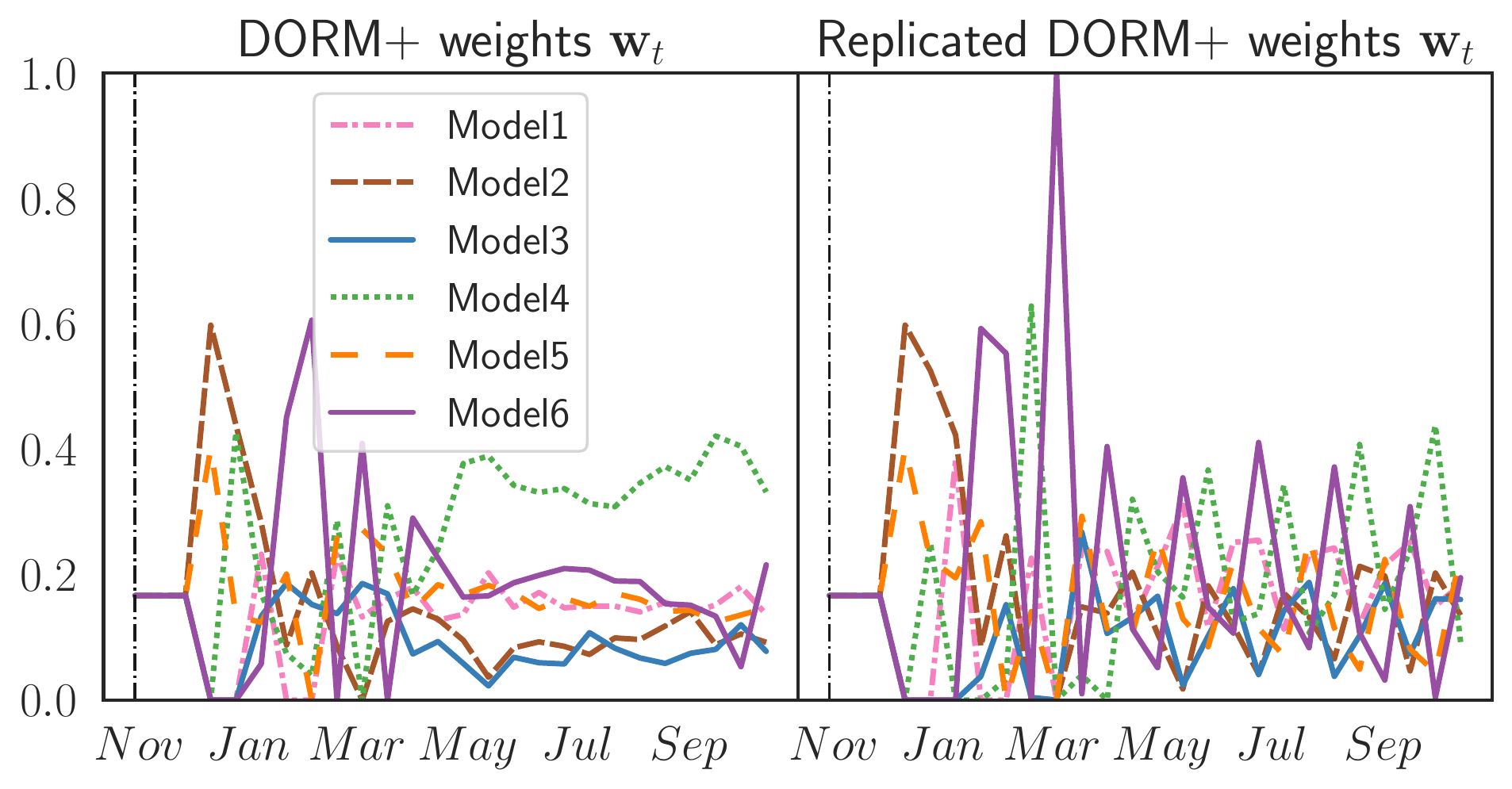} 
    \caption*{Temperature Weeks 3-4} 
  \end{minipage} 
  \begin{minipage}[b]{0.5\linewidth}
    \includegraphics[width=\linewidth]{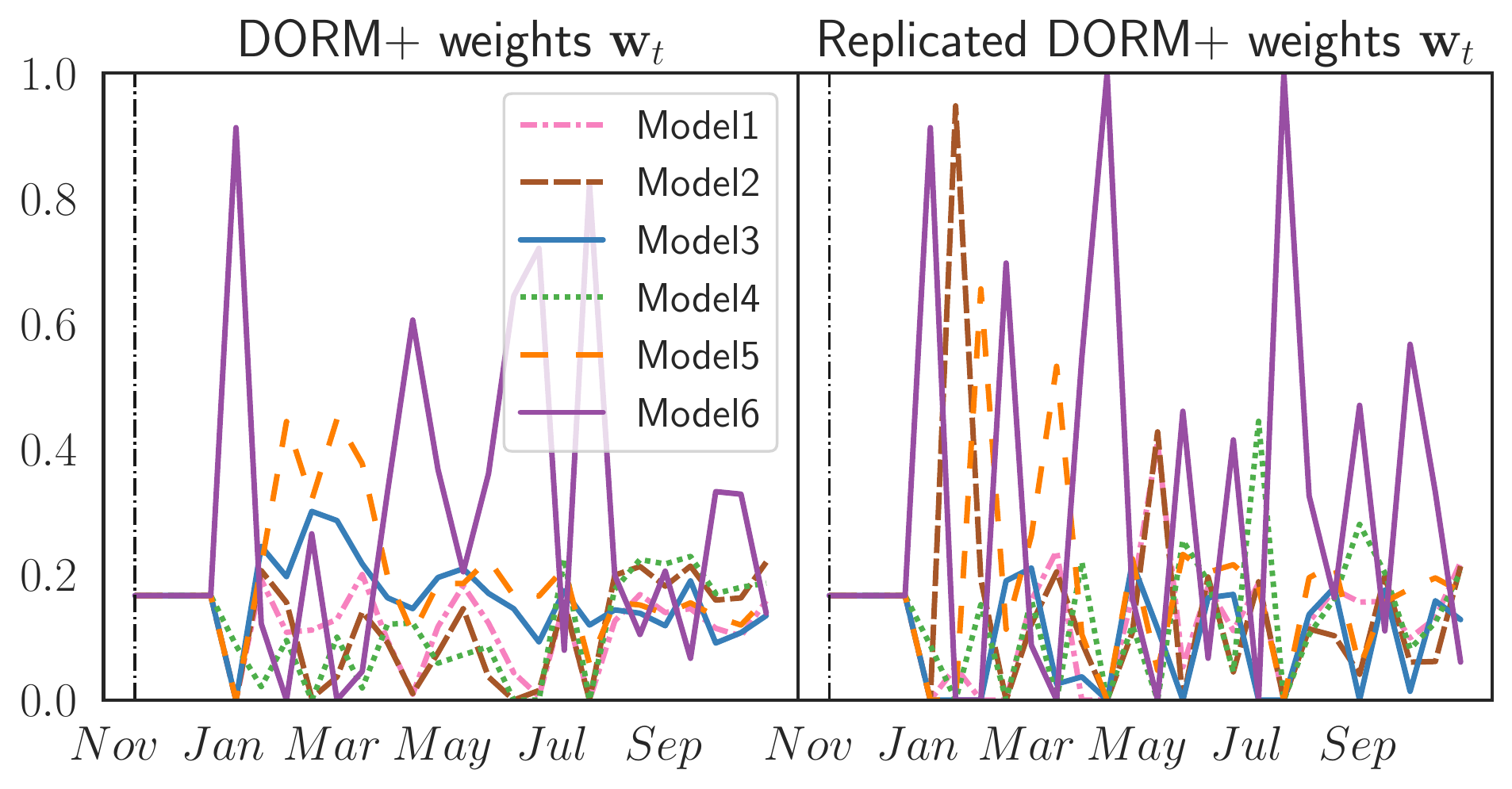} 
    \caption*{Precipitation Weeks 5-6} 
  \end{minipage} 
  \begin{minipage}[b]{0.5\linewidth}
    \includegraphics[width=\linewidth]{figures/weights_rep_contest_tmp2m_56w.pdf}
    \caption*{Temperature Weeks 5-6} 
  \end{minipage} 
  \caption{\textbf{Replication weights}: The plays $\ww_t$ of \DORMP and replicated \DORMP for all four tasks in the final evaluation year.}
  \label{extended_replication}
\end{figure}

\newpage
\subsection{Learning to Hint}
We examine the effect of optimism on the \DORMP algorithms and the ability of our ``learning to hint'' strategy to recover the performance of the best optimism strategy in retrospect as described in \cref{sec:experiments}. We use \DORMP as the meta-algorithm for hint learning to produce the \texttt{learned} optimism strategy that plays a convex combination of the three constant hinters.

As reported in the main text, the regret of the base algorithm using the learned hinting strategy generally falls between the worst and the best hinting strategy for any given year. Because the best hinting strategy for any given year is unknown \textit{a priori}, the adaptivity of the hint learner is useful practically. Currently, the hint learner is only optimizing a loose upper bound on base problem regret. Deriving loss functions for hint learning that more accurately quantify the effect of the hinter on base model regret is an important next step in achieving negative regret for online hinting algorithms.

\begin{figure}[H] 
  \begin{minipage}[b]{0.5\linewidth}
    \includegraphics[width=\linewidth]{figures/regret_hinting_dormplus_contest_precip_34w.pdf} 
    \caption*{Precipitation Weeks 3-4} 
  \end{minipage} 
  \begin{minipage}[b]{0.5\linewidth}
    \includegraphics[width=\linewidth]{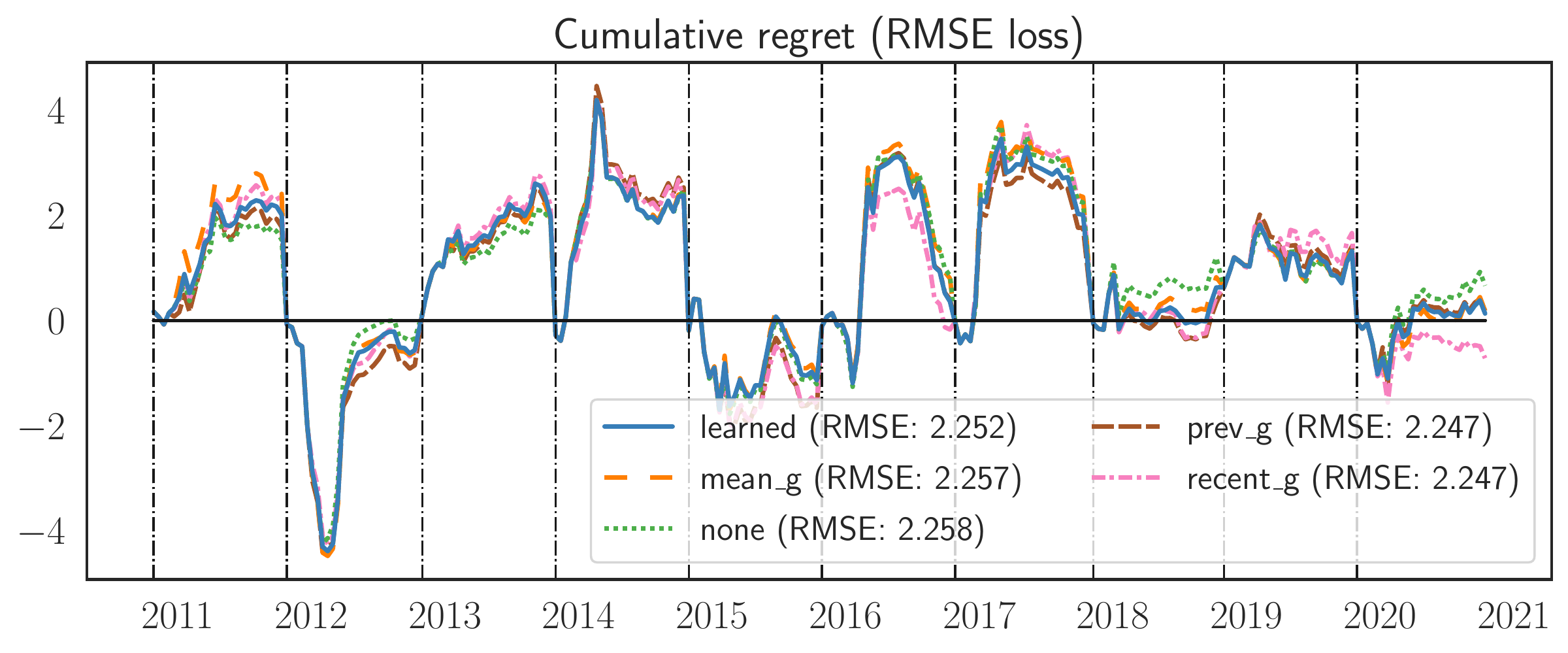}
    \caption*{Temperature Weeks 3-4} 
  \end{minipage}  
  \begin{minipage}[b]{0.5\linewidth}
    \includegraphics[width=\linewidth]{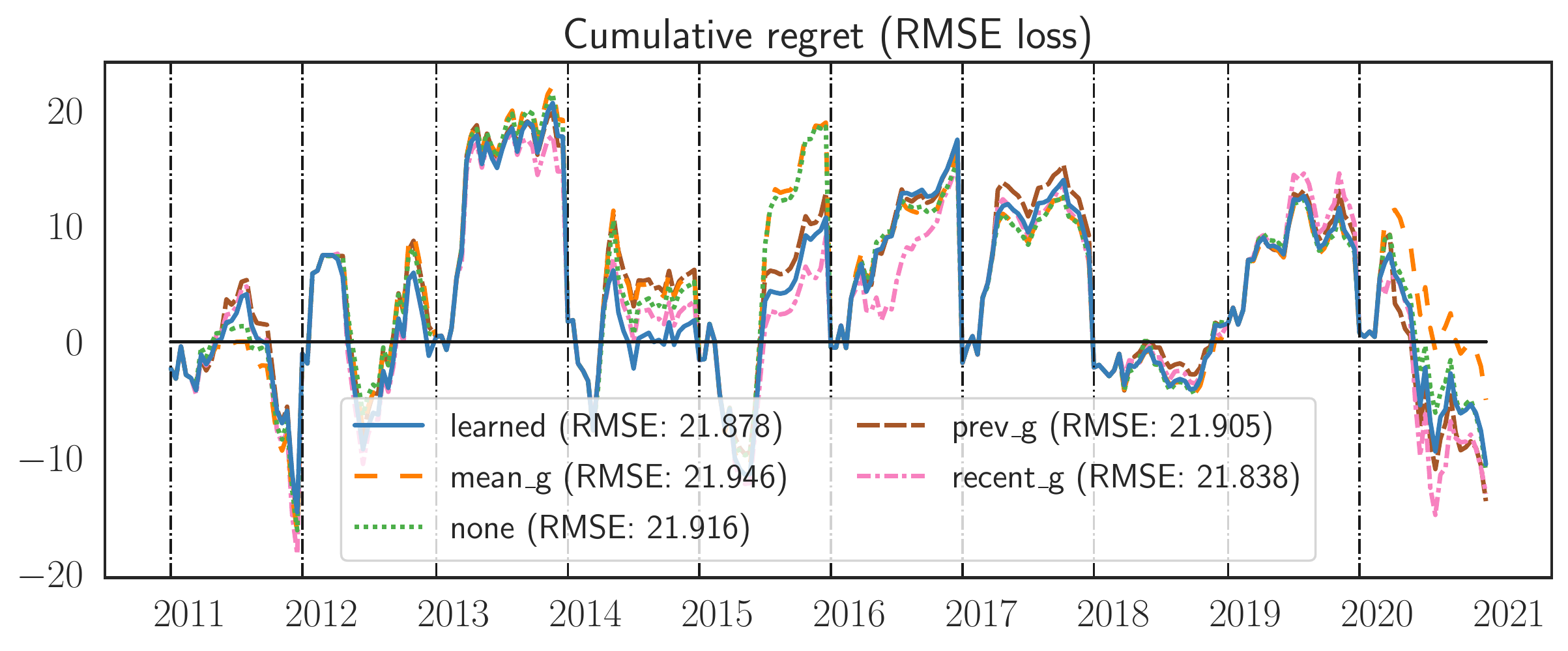}
    \caption*{Precipitation Weeks 5-6} 
  \end{minipage} 
  \begin{minipage}[b]{0.5\linewidth}
    \includegraphics[width=\linewidth]{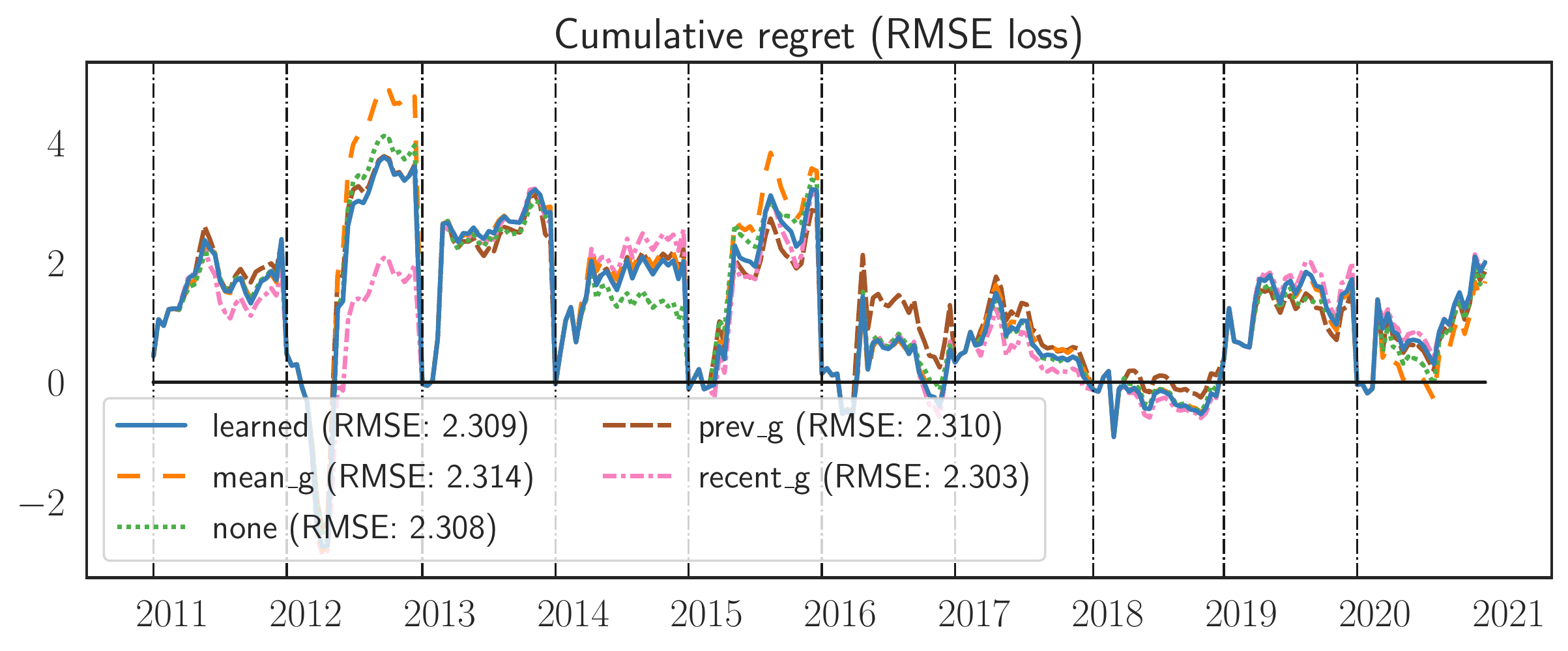}
    \caption*{Temperature Weeks 5-6} 
  \end{minipage} 
  \caption{\textbf{Overall regret:} Yearly cumulative regret under the RMSE loss for \DORMP using the three constant hinting strategies presented and the learned hinter, over the $10$-year evaluation period. }
 \label{extended_hinting}   
\end{figure}

\subsection{Impact of Different Forms of Optimism} \label{sec:future_past}
The regret analysis presented in this work suggest that optimistic strategies under delay can benefit from  hinting at both the ``past'' $\g_{t-D:t-1}$ missing losses and the ``future'' unobserved loss $\g_t$. 
To study the impact of different forms of optimism on \DORMP, we provide a \texttt{recent\_g} hint for either only the missing \textbf{future} loss $\g_t$, only the missing \textbf{past} losses $\g_{t-D:t-1}$, or both \textbf{past and future} losses (the strategy used in this paper) $\g_{t-D:t}$. Inspired by the recommendation of an anonymous reviewer, we also test two hint settings that only hint at the future unobserved loss but multiply the weight of that hint by \textbf{2D+1} or \textbf{3D+1}, effectively increasing the importance of the future hint in the online learning optimization. \cref{fig:hint_past_future} presents the experimental results.

\begin{figure}[h!]
    \centering
    \includegraphics[width=1\columnwidth]{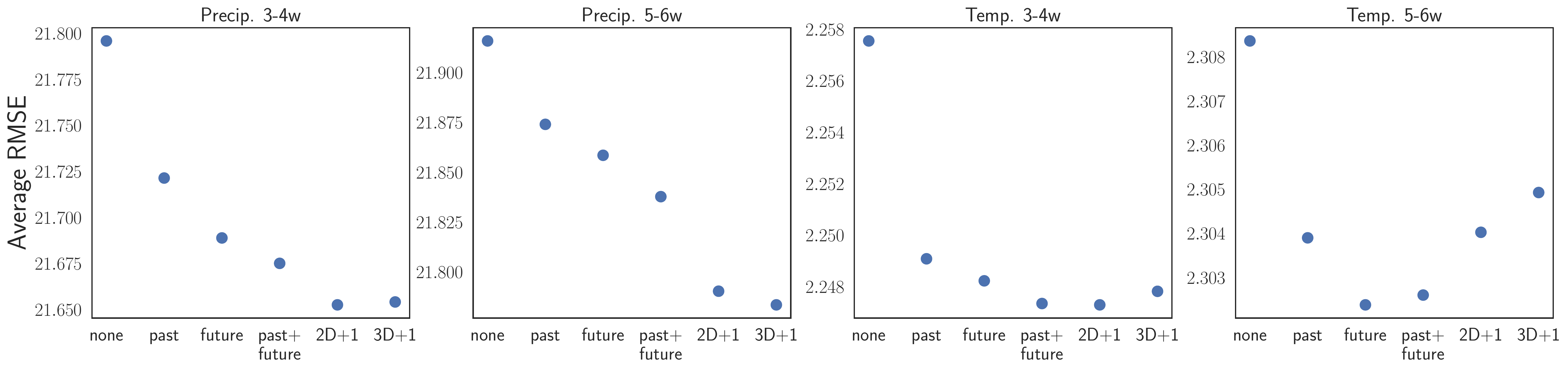}
    \vspace{0.1cm}
    \caption{DORM+ average RMSE as in \cref{exp-zoo-table} as a function of optimism strategy; see \cref{sec:future_past} for details.%
    }
    \label{fig:hint_past_future}
\end{figure}

In this experiment, all settings of optimism improve upon the non-optimistic algorithm, and, for all tasks,  providing hints for missing future losses outperforms hinting at missing past losses.
For all tasks save Temp.~5-6w, hinting at both missing past and future losses yields a further improvement. The 2D+1 and 3D+1 settings demonstrate that, for some tasks, increasing the magnitude of the optimistic hint can further improve performance in line with the online gradient descent predictions of \citet[Thm.~13]{hsieh2020multi}.
\section{Algorithmic Details} \label{sec:algorithm_details}

\subsection{\ODAFTRL with \AdaHedgeD and \DUB tuning}
The \AdaHedgeD and \DUB algorithms presented in the experiments are implementations of \ODAFTRL with a negative entropy regularizer $\reg(\ww) = \sum_{j=1}^d \ww_j \ln \ww_j + \ln d$, which is $1$-strongly convex with respect to the norm $\norm{\cdot}_1$ \citep[Lemma 16]{shalev2007online} with dual norm $\norm{\cdot}_{\infty}$. Each algorithm optimizes over the simplex and competes with the simplex: $\wset = \uset = \simplex_{d-1}$. We choose $\alpha = \sup_{\uu\in\uset} \reg(\uu) = \ln(d)$. In the following, define $\reg_t \defeq \lam_t \reg$ for $\lam_t \geq 0$. Our derivations of the update equations for \AdaHedgeD and \DUB make use of the following properties of the negative entropy regularizer, proved in \cref{proof_entropy_properties}.

\begin{lemma}[Negative entropy properties] \label{entropy_properties}
The negative entropy regularizer $\reg(\ww) = \sum_{j=1}^d \ww_j \ln \ww_j + \ln d$ with $\reg_t = \lam_t \reg$ for $\lam_t \geq 0$ satisfies the following properties on the simplex $\wset = \simplexd$. 
\begin{talign}
\reg_{\wset}^*(\theta) 
    &\defeq \sup_{\ww \in \wset} \inner{\ww}{\theta} - \reg(\ww) 
    = \ln \Big(\sum_{j=1}^d \exp(\theta_j) \Big) - \ln d, \\
(\lam \reg)_{\wset}^*(\theta) 
    &\defeq 
        \sup_{\ww \in \wset} \inner{\ww}{\theta} - \lam\reg(\ww)
    = 
    \begin{cases}
        \lam \reg_{\wset}^*(\theta / \lam) 
        =
        \lam \ln (\sum_{j=1}^d \exp(\theta_j / \lam) ) - \lam \ln d, & \text{if } \lam > 0 \\
    \max_{j \in [d]} \theta_j &  \text{if } \lam = 0
    \end{cases},
         \\
\ww^*(\theta,\lam) 
    &\defeq
    \begin{cases}
    \frac{\exp(\theta/\lam)}{\sum_{j=1}^d\exp(\theta_j/\lam)} 
        & \text{if } \lam > 0 \\
    \frac{\indic{\theta = \max_j \theta_j}}{\sum_{k \in [d]} \indic{\theta_k = \max_j \theta_j}}
        & \text{if } \lam = 0
    \end{cases}
    \in \argmin_{\ww \in \wset} \lam\reg(\ww) - \inner{\ww}{\theta} 
    \subseteq
    \subdiff (\lam\reg)_{\wset}^*(\theta).
\end{talign}
\end{lemma}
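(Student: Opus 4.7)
The proof plan is to verify each of the three identities in turn, relying on standard convex-analytic manipulations of the negative-entropy function together with Lagrangian arguments on the simplex. All three claims are folklore, so the goal is really just to assemble the derivations in a self-contained way while being careful about the degenerate $\lam=0$ case.

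First, to establish the expression for $\reg_{\wset}^*(\theta)$, I would set up the constrained maximization $\sup_{\ww \in \simplexd} \inner{\ww}{\theta} - \sum_j \ww_j \ln \ww_j - \ln d$ and take the Lagrangian with a multiplier $\nu$ for the equality constraint $\inner{\boldone}{\ww}=1$ (the non-negativity constraints are inactive since $-\ln \ww_j \to \infty$ as $\ww_j \to 0^+$). Stationarity gives $\theta_j - \ln \ww_j - 1 - \nu = 0$, hence $\ww_j \propto \exp(\theta_j)$, and normalization fixes $\ww_j = \exp(\theta_j)/\sum_k \exp(\theta_k)$. Substituting this optimizer back into the objective yields $\ln(\sum_j \exp(\theta_j)) - \ln d$, as claimed. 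This same computation simultaneously verifies the expression for $\ww^*(\theta,1)$, which in turn gives $\ww^*(\theta,\lam)$ for $\lam > 0$ via the next step.

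Second, for the scaled conjugate with $\lam>0$, I would factor out $\lam$: writing $(\lam\reg)_{\wset}^*(\theta) = \sup_{\ww \in \wset}\, \lam(\inner{\ww}{\theta/\lam} - \reg(\ww)) = \lam\reg_{\wset}^*(\theta/\lam)$, and then substitute the formula just derived. The associated maximizer is $\ww^*(\theta,\lam) = \exp(\theta/\lam)/\sum_j \exp(\theta_j/\lam)$, obtained either directly by repeating the Lagrangian argument with $\lam \reg$ in place of $\reg$, or by the same rescaling trick applied to $\ww^*(\theta,1)$. Since $\ww^*(\theta,\lam)$ lies in the relative interior of the simplex and achieves the supremum, it belongs to $\subdiff(\lam\reg)^*_{\wset}(\theta)$ by standard conjugate-subgradient duality.

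Third, the $\lam=0$ case must be handled separately since the expression $\lam \ln(\sum_j \exp(\theta_j/\lam))$ is only defined as a limit there. For $\lam=0$, the objective reduces to the linear program $\sup_{\ww \in \simplexd} \inner{\ww}{\theta}$, whose value is $\max_j \theta_j$ and whose maximizers are exactly the convex combinations of standard basis vectors $\basis_j$ with $\theta_j = \max_k \theta_k$; selecting the uniform distribution over these indices yields the stated $\ww^*(\theta,0)$. One can cross-check consistency by noting that $\lim_{\lam \to 0^+} \lam \ln(\sum_j \exp(\theta_j/\lam)) = \max_j \theta_j$ (soft-max tends to max), and that $\ww^*(\theta,\lam)$ converges to $\ww^*(\theta,0)$ in this limit.

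There is no real obstacle here; the only mild subtlety is making sure the $\lam=0$ branch is treated explicitly rather than as a limit, and confirming that the formally-infinite gradient of negative entropy at the boundary of the simplex justifies ignoring the non-negativity constraints in the $\lam>0$ case.
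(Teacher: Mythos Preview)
Your proposal is correct and follows essentially the same approach as the paper, which simply cites \citet[Section~6.6, Theorem~6.6]{Orabona2019AMI} for the constrained-optimization derivation of the Fenchel conjugate and \citet[Theorem~5.5]{rockafellar1970convex} for the conjugate--subgradient relation. In fact you supply strictly more detail than the paper's proof, including an explicit treatment of the $\lam=0$ branch that the paper does not spell out.
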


Our next corollary concerning optimal \ODAFTRL objectives follows directly from \cref{entropy_properties}.
\begin{corollary}[Optimal \ODAFTRL objectives] \label{ftrl-obj-def}
Instantiate the notation of \cref{entropy_properties}, and define the functions $\obj_{t}(\ww,\lam) \defeq \lam \reg(\ww) + \inner{\g_{1:t-1}}{\ww}$ 
for $\ww\in\wset$. Then
\begin{talign}
 - (\lam \reg)_{\wset}^*(-(\g_{1:t-1} + \h))
    &=
        \inf_{\ww\in\wset} \obj_{t}(\ww,\lam) + \inner{\h}{\ww}
    \qtext{and}
        \\
\ww^*(-(\g_{1:t-1} + \h),\lam)
    &=
        \argmin_{\ww\in\wset} \obj_{t}(\ww,\lam)  + \inner{\h}{\ww}.
\end{talign}
\end{corollary}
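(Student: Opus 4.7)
The plan is to prove the corollary by directly unfolding the definitions of $(\lam\reg)^*_\wset$ and $\ww^*$ from \cref{entropy_properties} and rewriting the sign of the argument. Since the corollary is labeled as following directly from \cref{entropy_properties}, no heavy lifting is required---the only subtlety is keeping track of the signs when converting the supremum in the definition of the conjugate into an infimum.

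First I would expand the left-hand side of the first equality:
\begin{talign}
-(\lam\reg)_\wset^*(-(\g_{1:t-1} + \h))
  &= -\sup_{\ww\in\wset}\bigl[\inner{\ww}{-(\g_{1:t-1}+\h)} - \lam\reg(\ww)\bigr] \\
  &= \inf_{\ww\in\wset}\bigl[\lam\reg(\ww) + \inner{\ww}{\g_{1:t-1}+\h}\bigr] \\
  &= \inf_{\ww\in\wset}\bigl[\obj_t(\ww,\lam) + \inner{\h}{\ww}\bigr],
\end{talign}
using the definition of $\obj_t(\ww,\lam) = \lam\reg(\ww) + \inner{\g_{1:t-1}}{\ww}$ in the last step. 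This handles the first claim.

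For the second claim, I would analogously substitute into the definition of $\ww^*$:
\begin{talign}
\ww^*(-(\g_{1:t-1}+\h),\lam)
  &\in \argmin_{\ww\in\wset}\bigl[\lam\reg(\ww) - \inner{\ww}{-(\g_{1:t-1}+\h)}\bigr] \\
  &= \argmin_{\ww\in\wset}\bigl[\lam\reg(\ww) + \inner{\ww}{\g_{1:t-1}+\h}\bigr] \\
  &= \argmin_{\ww\in\wset}\bigl[\obj_t(\ww,\lam) + \inner{\h}{\ww}\bigr].
\end{talign}
The two cases $\lam > 0$ and $\lam = 0$ need no separate treatment here, because both clauses of the definition of $\ww^*(\theta,\lam)$ in \cref{entropy_properties} are already asserted to lie in $\argmin_{\ww\in\wset}\lam\reg(\ww)-\inner{\ww}{\theta}$. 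There is no real obstacle to this proof: it is a one-line change of variables. The only thing worth double-checking is that the convention $\ww^*(\theta,0)$ (uniform over the argmax coordinates) is well-defined and genuinely selects an element of the argmin set when $\lam=0$, which is already built into \cref{entropy_properties}.
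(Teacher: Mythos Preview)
Your proposal is correct and matches the paper's approach exactly: the paper states that the corollary ``follows directly from \cref{entropy_properties}'' without further argument, and your unfolding of the definitions of $(\lam\reg)_\wset^*$ and $\ww^*$ with the sign flip is precisely that direct verification.
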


Using \cref{entropy_properties,ftrl-obj-def}, we can derive an expression, proved in \cref{proof_adahedge_del_setting}, for the \AdaHedgeD $\delta_t$ updates.
\begin{proposition}[\AdaHedgeD $\delta_t$] \label{adahedge_del_setting}
Instantiate the notation of \cref{adahedged_regret}, and
define the auxiliary hint vector
\begin{talign}
\label{eq:adahedged_aux_hint}
\hat\h_t 
    \defeq \g_{t-D:t} + \sigma_t(\h_{t} - \g_{t-D:t})
\qtext{for}
\sigma_t
    \defeq \min(\frac{\dualnorm{\g_t}}{\dualnorm{\h_{t} - \g_{t-D:t}}},1)
\end{talign}
along with the scalars
\begin{talign}
c_*= \max_{j : \ww_{t,j} \neq 0} \h_{t,j} - \g_{t-D:t,j}
\qtext{and}
\hat c_*= \max_{j : \hat\ww_{t,j} \neq 0} \hat\h_{t,j} - \g_{t-D:t,j}
\end{talign}
for 
\begin{talign}
\bar\ww_t 
    &= \argmin_{\ww\in\wset} \obj_{t+1}(\ww,\lam_t) = \frac{\exp(-\g_{1:t}/\lam_t)}{\sum_{j=1}^d\exp(-\g_{1:t,j}/\lam_t)} 
\qtext{and} \\
\hat\ww_t 
    &= \argmin_{\ww\in\wset} \obj_{t+1}(\ww,\lam_t) + \inner{\hat\h_{t} - \g_{t-D:t}}{\ww}
    = \frac{\exp(-(\g_{1:t-D-1}+\hat\h_{t})/\lam_t)}{\sum_{j=1}^d\exp(-(\g_{1:t-D-1,j}+\hat\h_{t,j})/\lam_t)} 
\end{talign} 
by \cref{ftrl-obj-def}.
If $\lam_t > 0$,
\begin{talign}
\delta_t
    &= 
        \min(\delta_t^{(1)}, \delta_t^{(2)}, \delta_t^{(3)})_+ \qtext{for} \\
\delta_t^{(1)}
    &= 
        \obj_{t+1}(\ww_t,\lam_t) - \obj_{t+1}(\bar\ww_t,\lam_t) \\
    &= 
        \lam_t \ln(\sum_{j\in[d]} \ww_{t,j} \exp((\h_{t,j} - \g_{t-D:t,j}) / \lam_t)) + \inner{\g_{t-D:t} - \h_{t}}{\ww_t} \\
    &= 
        \lam_t \ln(\sum_{j\in[d]} \ww_{t,j} \exp((\h_{t,j} - \g_{t-D:t,j} - c_*) / \lam_t)) + \inner{\g_{t-D:t} - \h_{t}}{\ww_t} + c_*, \\
\delta_t^{(2)}
    &= \inner{\g_t}{\ww_t - \bar\ww_t}, \qtext{and} \\
\delta_t^{(3)} 
    &= 
        \obj_{t+1}(\hat\ww_t,\lam_t) - \obj_{t+1}(\bar\ww_t,\lam_t) +  \inner{\g_t}{\ww_t - \hat\ww_t} \\
    &=
        \lam_t \ln(\sum_{j\in[d]} \hat\ww_{t,j} \exp((\hat\h_{t,j} - \g_{t-D:t,j}) / \lam_t)) + \inner{\g_{t-D:t} - \hat\h_{t}}{\hat\ww_t} +  \inner{\g_t}{\ww_t - \hat\ww_t} \\
    &=
        \lam_t \ln(\sum_{j\in[d]} \hat\ww_{t,j} \exp((\hat\h_{t,j} - \g_{t-D:t,j} - \hat c_*) / \lam_t)) + \inner{\g_{t-D:t} - \hat\h_{t}}{\hat\ww_t} + \hat c_* +  \inner{\g_t}{\ww_t - \hat\ww_t}.
\end{talign} 
If $\lam_t = 0$,
\begin{talign}
\delta_t
    &= 
        \min(\delta_t^{(1)}, \delta_t^{(2)}, \delta_t^{(3)})_+ \qtext{for} \\
\delta_t^{(1)}
    &= 
    \inner{\g_{1:t}}{\ww_t} - \min_{j\in[d]} \g_{1:t,j}, \\
\delta_t^{(2)}
    &= 
    \inner{\g_t}{\ww_t - \bar\ww_t}, \qtext{and} \\
\delta_t^{(3)}
    &=
    \inner{\g_{1:t}}{\hat\ww_t} - \min_{j\in[d]} \g_{1:t,j} 
    + \inner{\g_t}{\ww_t - \hat\ww_t}.
\end{talign}
\end{proposition}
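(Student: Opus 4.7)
The proposition derives closed forms for the three candidates $\delta_t^{(1)},\delta_t^{(2)},\delta_t^{(3)}$ whose minimum defines $\delta_t$ in \AdaHedgeD. My plan rests on two observations. First, \cref{ftrl-obj-def} expresses each of $\ww_t$, $\bar\ww_t$, and $\hat\ww_t$ as a Gibbs distribution (for $\lam_t>0$) or a uniform distribution over arg-minima (for $\lam_t=0$), with the value of the minimized objective equal to the negated log-normalizer. Second, the auxiliary hint $\hat\h_t$ of \cref{eq:adahedged_aux_hint} is defined precisely so that $\hat\ww_t$ solves the same form of optimization as $\ww_t$, just with $\h_t$ replaced by $\hat\h_t$; this lets me reuse the derivation of $\delta_t^{(1)}$ verbatim for $\delta_t^{(3)}$. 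Term $\delta_t^{(2)}=\inner{\g_t}{\ww_t-\bar\ww_t}$ needs no manipulation.

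\textbf{Computing $\delta_t^{(1)}$ when $\lam_t>0$.} The \ODAFTRL update at time $t$ combined with \cref{ftrl-obj-def} gives $\ww_{t,j}=\exp(-(\g_{1:t-D-1,j}+\h_{t,j})/\lam_t)/Z_t$ with $Z_t=\sum_k\exp(-(\g_{1:t-D-1,k}+\h_{t,k})/\lam_t)$. Substituting $\ln\ww_{t,j}=-(\g_{1:t-D-1,j}+\h_{t,j})/\lam_t-\ln Z_t$ into $\reg(\ww_t)=\sum_j\ww_{t,j}\ln\ww_{t,j}+\ln d$ yields $\lam_t\reg(\ww_t)=-\inner{\g_{1:t-D-1}+\h_t}{\ww_t}-\lam_t\ln Z_t+\lam_t\ln d$, and using $\g_{1:t}=\g_{1:t-D-1}+\g_{t-D:t}$ gives $\obj_{t+1}(\ww_t,\lam_t)=\inner{\g_{t-D:t}-\h_t}{\ww_t}-\lam_t\ln Z_t+\lam_t\ln d$. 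On the other hand \cref{ftrl-obj-def} directly gives $\obj_{t+1}(\bar\ww_t,\lam_t)=-\lam_t\ln(\sum_j\exp(-\g_{1:t,j}/\lam_t))+\lam_t\ln d$. I would then invoke the algebraic identity $\sum_j\exp(-\g_{1:t,j}/\lam_t)=Z_t\sum_j\ww_{t,j}\exp((\h_{t,j}-\g_{t-D:t,j})/\lam_t)$, obtained by multiplying and dividing each summand by $\exp(-(\g_{1:t-D-1,j}+\h_{t,j})/\lam_t)$. Subtracting the two objective values collapses the $\lam_t\ln Z_t$ and $\lam_t\ln d$ terms, producing $\delta_t^{(1)}=\lam_t\ln(\sum_j\ww_{t,j}\exp((\h_{t,j}-\g_{t-D:t,j})/\lam_t))+\inner{\g_{t-D:t}-\h_t}{\ww_t}$. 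The $c_*$-shifted form then follows by factoring $\exp(c_*/\lam_t)$ out of the log, which is legal because $\ww_{t,j}>0$ for every $j$ when $\lam_t>0$, so $\max_{j:\ww_{t,j}\ne0}=\max_j$.

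\textbf{Computing $\delta_t^{(3)}$.} The definition of $\hat\h_t$ in \cref{eq:adahedged_aux_hint} yields $\g_{1:t}+\sigma_t(\h_t-\g_{t-D:t})=\g_{1:t-D-1}+\hat\h_t$, so the optimization defining $\hat\ww_t$ is structurally identical to that of $\ww_t$ with $\h_t$ replaced by $\hat\h_t$. Applying the $\delta_t^{(1)}$ derivation word-for-word with $(\ww_t,\h_t,Z_t)$ replaced by $(\hat\ww_t,\hat\h_t,\hat Z_t)$ produces the claimed formula for $\obj_{t+1}(\hat\ww_t,\lam_t)-\obj_{t+1}(\bar\ww_t,\lam_t)$, and the extra $\inner{\g_t}{\ww_t-\hat\ww_t}$ is already explicit in the definition of $\delta_t^{(3)}$. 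The $\hat c_*$-shift follows by the same factoring argument. For $\lam_t=0$, \cref{entropy_properties} gives $\bar\ww_t$ as a uniform distribution over $\argmin_j\g_{1:t,j}$ with $\obj_{t+1}(\bar\ww_t,0)=\min_j\g_{1:t,j}$, and since $\obj_{t+1}(\ww,0)=\inner{\g_{1:t}}{\ww}$ has no regularization term, the formulas for $\delta_t^{(1)}$ and $\delta_t^{(3)}$ reduce immediately to the stated expressions.

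\textbf{Main obstacle.} The only real bookkeeping challenge is keeping the three distinct gradient sums straight---$\g_{1:t-D-1}+\h_t$ governs $\ww_t$, $\g_{1:t}$ governs $\bar\ww_t$, and $\g_{1:t-D-1}+\hat\h_t$ governs $\hat\ww_t$---and consistently applying $\g_{1:t}=\g_{1:t-D-1}+\g_{t-D:t}$ to bridge them, because this is exactly what generates the $\inner{\g_{t-D:t}-\h_t}{\ww_t}$ and $\inner{\g_{t-D:t}-\hat\h_t}{\hat\ww_t}$ affine residuals in the final formulas. Once the softmax representations of $\ww_t,\bar\ww_t,\hat\ww_t$ are in hand and the single shared identity $\sum_j\exp(-\g_{1:t,j}/\lam_t)=Z_t\sum_j\ww_{t,j}\exp((\h_{t,j}-\g_{t-D:t,j})/\lam_t)$ (and its $\hat\h_t$-analog) is noted, the rest is routine log-sum-exp algebra.
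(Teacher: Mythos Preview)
Your proposal is correct and essentially follows the paper's own proof: both express $\obj_{t+1}(\ww_t,\lam_t)$ and $\obj_{t+1}(\bar\ww_t,\lam_t)$ via the log-sum-exp Fenchel conjugate of \cref{entropy_properties,ftrl-obj-def}, take the difference, and recognize the softmax weights $\ww_{t,j}$ inside the resulting log, then repeat verbatim for $\delta_t^{(3)}$ with $\hat\h_t$ in place of $\h_t$. The only cosmetic difference is that you compute $\lam_t\reg(\ww_t)$ by direct substitution of $\ln\ww_{t,j}$, whereas the paper equivalently invokes $\obj_{t-D}(\ww_t,\lam_t)+\inner{\h_t}{\ww_t}=-\lam_t\reg_{\wset}^*((-\g_{1:t-D-1}-\h_t)/\lam_t)$ via optimality of $\ww_t$.
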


Leveraging these results, we present the pseudocode for the \AdaHedgeD and \DUB instantiations of \ODAFTRL in \cref{alg:delay_opt_adahedge}.
\begin{algorithm}[tbh] 
\caption{\ODAFTRL with $\wset = \simplexd$, $\reg(\ww) = \sum_{j=1}^d \ww_j \ln\ww_j + \ln(d)$, delay $D\geq 0$, and tuning strategy \texttt{tuning}}
\label{alg:delay_opt_adahedge}
\begin{algorithmic}[1]
\STATE Parameter $\alpha = \sup_{\uu\in\simplexd} \reg(\uu) =  \ln(d)$
\STATE Initial regularization weight: $\lambda_0 = 0$
\IF{\texttt{tuning} is \DUB}
\STATE Initial regularization sum: $\Delta_0 = 0$
\STATE Initial maximum: $\ab^{\max} = 0$
\ENDIF
\STATE Initial subgradient sum: $\g_{1:1} = \mathbf{0} \in \R^d$
\STATE Dummy losses and iterates: $\g_{-D} = \cdots = \g_{0} = \mathbf{0} \in \R^d$, $\ww_{-D} = \cdots = \ww_{0} = \mathbf{0} \in \R^d$
\FOR{$t = 1, \dots, T$}
\STATE Receive hint $\h_t \in \R^d$
\STATE Output $\ww_{t} = \argmin_{\ww\in\wset} \obj_{t-D}(\ww, \lam_t) + \inner{\h_t}{\ww}$ as in \cref{ftrl-obj-def}
\STATE Receive $\g_{t-D} \in \R^d$ and pay $\langle \g_{t-D}, \ww_{t-D} \rangle$ 
\STATE Update subgradient sum $\g_{1:t-D} = \g_{1:t-D-1} + \g_{t-D}$ 
\IF{\texttt{tuning} is \AdaHedgeD}
\STATE Compute the auxiliary play $\bar\ww_{t-D} = \argmin_{\ww\in\wset} \obj_{t-D+1}(\ww, \lam_{t-D})$ as in \cref{ftrl-obj-def}
\STATE  Compute the auxiliary regret term $\delta_{t-D}^{(1)} = \obj_{t-D+1}(\ww_{t-D}, \lam_{t-D}) - \obj_{t-D+1}(\bar\ww_{t-D},\lam_{t-D})$ as in \cref{adahedge_del_setting} 
\STATE Compute the drift term $\delta_{t-D}^{(2)} = \inner{\g_{t-D}}{\ww_{t-D} - \bar\ww_{t-D}}$ 
\STATE Compute the auxiliary hint \cref{eq:adahedged_aux_hint}
$\hat\h_{t-D} 
    \defeq \g_{t-2D:t-D} + \min(\frac{\dualnorm{\g_{t-D}}}{\dualnorm{\h_{t-D} - \g_{t-2D:t-D}}},1)(\h_{t-D} - \g_{t-2D:t-D})
$
\STATE Compute the auxiliary play $\hat\ww_{t-D} = \argmin_{\ww\in\wset} \obj_{t-D+1}(\ww, \lam_{t-D}) +  \inner{\hat\h_{t-D} - \g_{t-2D:t-D}}{\ww}$ as in \cref{ftrl-obj-def}
\STATE Compute the regret term $\delta_{t-D}^{(3)} = \obj_{t-D+1}(\hat\ww_{t-D}, \lam_{t-D}) - \obj_{t-D+1}(\bar\ww_{t-D},\lam_{t-D}) + \inner{\g_{t-D}}{\ww_{t-D} - \hat\ww_{t-D}}$ as in \cref{adahedge_del_setting} 
\STATE Update $\lam_{t+1} = \lam_{t} + \frac{1}{\alpha} \min(\delta_{t-D}^{(1)}, \delta_{t-D}^{(2)}, \delta_{t-D}^{(3)})_+$ as in \cref{def-deltat} 
\ELSIF{\texttt{tuning} is \DUB}
\STATE Compute $\abftrl{t-D} = 2 \min\big(\infnorm{\g_{t-D}}, \infnorm{\h_{t-D} - \sum_{s=t-2D}^{t-D} \g_s }\big)$ as in \cref{abf_def}
\STATE Compute $\bbftrl{t-D} = \half \infnorm{\h_{t-D} -\sum_{s=t-2D}^{t-D} \g_s}^2 - \half(\infnorm{\h_{t-D} -\sum_{s=t-2D}^{t-D} \g_s} -  \infnorm{\g_{t-D}})_+^2$ as in \cref{abf_def}
\STATE Update $\Delta_{t+1} = \Delta_t + \abftrl{t-D}^2 + 2\alpha\bbftrl{t-D}$  
\STATE Update maximum $\ab^{\max}  = \max(\ab^{\max}, \abftrl{t-2D:t-D-1})$
\STATE Update $\lam_{t+1} = \frac{1}{\alpha}(2\ab^{\max} + \sqrt{\Delta_{t+1}})$ as in \DUB
\ENDIF
\ENDFOR
\end{algorithmic}
\end{algorithm}

\subsection{\DORM and \DORMP}
\label{sec:algorithm_details_dorm_dormp}
The \DORM and \DORMP algorithms presented in the experiments are implementations of \ODAFTRL and \DOOMD respectively that play iterates in 
$\wset \defeq \simplexd$ using the default value $\lam=1$. Both algorithms use a $p$-norm regularizer $\reg = \half \norm{\cdot}_p^2$, which is $1$-strongly convex with respect to $\norm{\cdot} = \sqrt{p-1}\pnorm{\cdot}$ \citep[see][Lemma 17]{shalev2007online} with $\norm{\cdot}_* = \frac{1}{\sqrt{p-1}}\qnorm{\cdot}$. For the paper experiments, we choose the optimal value $q = \inf_{q' \geq 2} d^{2/q'}(q'-1)$ to obtain $\ln(d)$ scaling in the algorithm regret; for $d=6$, $p=q=2$. The update equations for each algorithm are given in the main text by \DORM and \DORMP respectively. The optimistic hinters provide delayed gradient hints $\tildeg_t$, which are then used to compute regret gradient hints $\tilde{\rr}_t$, where $\tilde{\rr}_t = \inner{\tildeg_t}{\ww_t} - \tildeg_t$ and $\h_t = \sum_{s=t-D}^{t-1} \tilde{\rr}_s + \inner{\tildeg_t}{\ww_{t-1}} - \tildeg_t$.

\subsection{Adaptive Hinting}
For the adaptive hinting experiments, we use the \DORMP as both the base and hint learner. For the hint learner with \DORM base algorithm, the hint loss function is given by
\cref{eq:hinting-loss-doomd} with $q=2$. The plays of the online hinter $\omega_t$ are used to generate the hints $\h_t$ for the base algorithm using the hint matrix $H_t \in \R^{d \times m}$. The $j$-th column of $H_t$ contains hinter $j$'s predictions for the cumulative missing regret subgradients $\rr_{t-D:t}$. The final hint for the base learner is $\h_t = H_t \omega_t$. Psuedo-code for the adaptive hinter is given in \cref{alg:adaptive_hinting}.

\begin{algorithm}[tbh]
\caption{Learning to hint with \DORMP($q$=2) hint learner, \DORMP base learner, and delay $D\geq 0$}
\label{alg:adaptive_hinting}
\begin{algorithmic}[1]
\STATE Subgradient vector: $\g_{-D}, \cdots \g_0 =  \boldzero\in\reals^d$
\STATE Meta-subgradient vector: $\metagrad_{-D}, \cdots \metagrad_{0} = \boldzero\in\reals^m$
\STATE Initial instantaneous regret:
$\rr_{-D} = \boldzero\in\reals^d$ 
\STATE Initial instantaneous meta-regret: $\rho_{-D} = \boldzero\in\reals^{m}$ 
\STATE Initial hint $\h_0 = \mathbf{0} \in \reals^{d}$
\STATE Initial orthant meta-vector: $\tilde{\omega}_0 = \boldzero \in\reals^m$
\FOR{$t = 1, \dots, T$}
\STATE // \texttt{Update online hinter using \DORMP with $q=2$} 
\STATE Find optimal unnormalized hint combination vector $\omegaorth_t = \max(\mathbf{0}, \omegaorth_{t-1} + \rho_{t-D-1})$ 
\STATE Normalize: $\omega_{t} = \begin{cases} 
\boldone/m & \text{if } \omegaorth_{t} = \boldzero \\
\omegaorth_{t}/\inner{\boldone}{\omegaorth_{t}} & \text{otherwise}
\end{cases}$
\STATE Receive hint matrix: $H_t \in \R^{d \times m}$ in which each column is a hint for $\sum_{s=t-D}^t \rr_s$
\STATE Output hint $\h_t = H_t \omega_t$
\STATE // \texttt{Update \DORMP base learner and get next play} 
\STATE Output $\ww_t = \DORMP(\g_{t-D-1},\h_t)$
\STATE Receive $\g_{t-D} \in \R^d$ and pay $\langle \g_{t-D}, \ww_{t-D} \rangle$ 
\STATE Compute instantaneous regret
$\rr_{t-D} = \boldone\inner{\g_{t-D}}{\ww_{t-D}} - \g_{t-D}$
\STATE Compute hint meta-subgradient $\metagrad_{t-D} \in \subdiff l_{t-D}(\omega_{t-D}) \in\reals^m$ as in \cref{eq:hinting-gradient}
\STATE Compute instantaneous hint regret
$\rho_{t-D} = 
\boldone\inner{\metagrad_{t-D}}{\omega_{t-D}} - \metagrad_{t-D}$ 
\ENDFOR
\end{algorithmic}
\end{algorithm}

\subsection{Proof of \cref{entropy_properties}: Negative entropy properties} \label{proof_entropy_properties}
The expression of the Fenchel conjugate for $\lam > 0$ is derived by solving an appropriate constrained convex optimization problem for $\ww = \simplexd$, as shown in \citet[Section 6.6]{Orabona2019AMI}. The value of $\ww^*(\theta,\lam) \in \subdiff (\lam\reg)_{\wset}^*(\theta)$ uses the properties of the Fenchel conjugate \citep[Theorem 5.5]{rockafellar1970convex,Orabona2019AMI} and is shown in \citet[Theorem 6.6]{Orabona2019AMI}.

\subsection{Proof of \cref{adahedge_del_setting}: \AdaHedgeD $\delta_t$}  \label{proof_adahedge_del_setting}
First suppose $\lam_t > 0$. 
The first term in the $\min$ of \AdaHedgeD's $\delta_t$ setting is derived as follows: 
\begin{talign} 
\delta_t^{(1)} 
    &\defeq \obj_{t+1}(\ww_t,\lam_t) - \obj_{t+1}(\bar\ww_t,\lam_t) \qtext{by definition \cref{def-deltat}} \\
    &= \obj_{t-D}(\ww_t,\lam_t) + \inner{\h_t}{\ww_t}+ \inner{\g_{t-D:t}-\h_t}{\ww_t} - \inf_{\ww\in\wset} \obj_{t+1}(\ww,\lam_t) \qtext{by definition of $\bar\ww_t$} \\
    &= \obj_{t-D}(\ww_t,\lam_t) + \inner{\h_t}{\ww_t}+ \inner{\g_{t-D:t}-\h_t}{\ww_t} + \lam_t \reg_{\wset}^*(-\g_{1:t} / \lam_t) \qtext{by \cref{ftrl-obj-def}} \\
    &= \lam_t \reg_{\wset}^*(-\g_{1:t} / \lam_t) - \lam_t \reg_{\wset}^*((-\h_t - \g_{1:t-D-1}) / \lam_t) + \inner{\g_{t-D:t}-\h_t}{\ww_t} \\
    & \hspace{0.2cm} \qtext{because $\ww_t \in \argmin_{\ww\in\wset} \obj_{t-D}(\ww_t,\lam_t) + \inner{\h_t}{\ww_t}$} \\
    &=  \lam_t ( \ln(\sum_{j=1}^d \exp(-\g_{1:t, j} / \lam_t)) - \lam_t ( \ln(\sum_{j=1}^d \exp((-\g_{1:t-D-1, j} - \h_{t,j}) / \lam_t)) + \inner{\g_{t-D:t}-\h_t}{\ww_t}   \qtext{by \cref{entropy_properties}} \\
    &= \lam_t  \ln\left(\sum_{j=1}^d\frac{ \exp(-\g_{1:t, j} / \lam_t)}{\sum_{j=1}^d \exp((-\g_{1:t-D-1, j} - \h_{t,j}) / \lam_t)} \right)  + \inner{\g_{t-D:t}-\h_t}{\ww_t} \\
    &= \lam_t  \ln\left(\sum_{j=1}^d \frac{ \exp((-\g_{1:t-D-1, j} - \h_{t,j}) / \lam_t) \exp((\h_{t,j} -\g_{t-D:t, j} )/ \lam_t)}{\sum_{j=1}^d \exp((-\g_{1:t-D-1, j} - \h_{t,j}) / \lam_t)} \right) + \inner{\g_{t-D:t}-\h_t}{\ww_t} \\
    &= \lam_t  \ln\left(\sum_{j=1}^d \ww_{t,j} \exp((\h_{t,j} -\g_{t-D:t, j} )/ \lam_t) \right) + \inner{\g_{t-D:t}-\h_t}{\ww_t} \qtext{by the expression for $\ww_t$ in \cref{ftrl-obj-def}.}
\end{talign}
The expression for the third term in the $\min$ of \AdaHedgeD's $\delta_t$ setting follows from identical reasoning.

Now suppose $\lam_t=0$. 
We have
\begin{talign} 
\delta_t^{(1)} 
    &\defeq \obj_{t+1}(\ww_t,\lam_t) - \obj_{t+1}(\bar\ww_t,\lam_t) \qtext{by definition \cref{def-deltat}} \\
    &= \inner{\g_{1:t}}{\ww_t} - \inf_{\ww\in\wset} \obj_{t+1}(\ww,\lam_t) \qtext{by definition of $\bar\ww_t$} \\
    &= \inner{\g_{1:t}}{\ww_t} - \min_{j\in[d]} \g_{1:t,j}\qtext{by \cref{ftrl-obj-def}.} 
\end{talign}
Identical reasoning yields the advertised expression for the third term.

\section{Extension to Variable and Unbounded Delays}
\label{sec:variable_delays}
\newcommand{\last}{\textup{last}}
\newcommand{\first}{\textup{first}}

In this section we detail 
how our main results 
generalize to the case of variable and potentially unbounded delays.
For each time $t$, 
we define
$\last(t)$ as the largest index $s$ for which $\g_{1:s}$ is observable at time $t$ (that is, available for constructing $\ww_{t}$) 
and 
$\first(t)$ as the first time $s$ at which $\g_{1:t}$ is observable at time $s$ (that is, available for constructing $\ww_s$).

\subsection{Regret of \varDOOMD}
Consider the \DOOMD variable-delay generalization
\begin{align}
    &\ww_{t+1} = \argmin_{\ww\in\wset} \,\inner{\g_{\last(t)+1:\last(t+1)} +\h_{t+1}-\h_t}{\ww} + \Breg_{\lam\reg}(\ww,\ww_t)     \label[name]{var-doomd}\tag{DOOMD with variable delays}
    \qtext{with} \h_0 \defeq \boldzero \qtext{and arbitrary} \ww_0.
\end{align}
We first note that \varDOOMD is an instance of \SOOMD respectively with a ``bad'' choice of optimistic hint $\tildeg_{t+1}$  that deletes the unobserved loss subgradients $\g_{\last(t+1)+1:t}$. 
\begin{lemma}[\varDOOMD is \SOOMD with a bad hint]\label{var_doomd_is_soomd}
\varDOOMD is \SOOMD with 
$
\tildeg_{t+1} 
        = \tildeg_t + \g_{\last(t)+1:\last(t+1)} - \g_t + \h_{t+1}-\h_t
        = \h_{t+1} 
        + \sum_{s=1}^t \g_{\last(s)+1:\last(s+1)} - \g_s.
        = \h_{t+1} - \g_{\last(t+1)+1:t}.
$
\end{lemma}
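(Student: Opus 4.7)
The plan is to verify this claim in two short steps: first, match the per-step updates of \varDOOMD and \SOOMD to derive a one-step recurrence for $\tildeg_{t+1}$; second, unroll that recurrence against the initial conditions $\tildeg_0 = \h_0 = \boldzero$ and $\g_0 \defeq \boldzero$ to obtain the closed form $\h_{t+1} - \g_{\last(t+1)+1:t}$.

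For the first step, I would observe that both \varDOOMD and \SOOMD apply the identical Bregman-proximal map $\ww \mapsto \argmin_{\ww \in \wset} \inner{\vv}{\ww} + \Breg_{\lam\reg}(\ww, \ww_t)$, differing only in the choice of the linear coefficient $\vv$. Equating these coefficients forces
\begin{talign}
\g_t + \tildeg_{t+1} - \tildeg_t = \g_{\last(t)+1:\last(t+1)} + \h_{t+1} - \h_t,
\end{talign}
which rearranges to the recurrence stated in the lemma,
\begin{talign}
\tildeg_{t+1} = \tildeg_t + \g_{\last(t)+1:\last(t+1)} - \g_t + \h_{t+1} - \h_t.
\end{talign}
Thus, running \SOOMD with this $\tildeg_{t+1}$ reproduces \varDOOMD's iterates exactly.

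For the second step, I would iterate the recurrence down to $\tildeg_0 = \boldzero$ to obtain the telescoping sum
\begin{talign}
\tildeg_{t+1}
    = \h_{t+1} - \h_0 + \sum_{s=1}^t \big(\g_{\last(s)+1:\last(s+1)} - \g_s\big) + \g_{\last(0)+1:\last(1)} - \g_0.
\end{talign}
Under the natural conventions $\h_0 = \boldzero$, $\g_0 = \boldzero$, and $\last(0) = 0$, the prefix-sum $\sum_{s=0}^t \g_{\last(s)+1:\last(s+1)}$ telescopes to $\g_{1:\last(t+1)}$, while $\sum_{s=0}^t \g_s = \g_{1:t}$. Substituting yields
\begin{talign}
\tildeg_{t+1} = \h_{t+1} + \g_{1:\last(t+1)} - \g_{1:t} = \h_{t+1} - \g_{\last(t+1)+1:t},
\end{talign}
which is precisely the advertised closed-form.

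I do not anticipate a real obstacle here; the only care required is bookkeeping the boundary indices $\last(0)$ and $\g_0$ so that the telescoping cleanly yields $\g_{\last(t+1)+1:t}$ (in particular, when $\last(t+1) \geq t$, this range is empty and the formula reduces to $\tildeg_{t+1} = \h_{t+1}$, consistent with the undelayed optimistic update). This is directly analogous to the constant-delay argument used for \cref{doomd_is_soomd}, with $\last(t) = t - D$ being the special case recovered there.
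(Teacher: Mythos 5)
Your argument is correct and is the natural one the paper leaves implicit (neither \cref{var_doomd_is_soomd} nor its constant-delay analogue \cref{doomd_is_soomd} carries an explicit proof): equate the linear term of the two Bregman-proximal updates to get the one-step recurrence for $\tildeg_{t+1}$, then telescope using $\tildeg_0 = \h_0 = \g_0 = \boldzero$, $\last(0)=0$, and the monotonicity of $\last$ so that $\sum_{s=0}^{t}\g_{\last(s)+1:\last(s+1)} = \g_{1:\last(t+1)}$. Your bookkeeping of the boundary conventions matches the paper's own (cf.\ the \ASOOMD setup in \cref{proof_soomd_regret}, which also fixes $\g_0=\tildeg_0=\boldzero$), and the reduction to the empty-range case when $\last(t+1)\geq t$ is handled correctly.
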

The following result now follows immediately from \cref{soomd_regret,var_doomd_is_soomd}.
\begin{theorem}[Regret of \varDOOMD]\label{var_doomd_regret}
If $\reg$ is differentiable
and
$\h_{T+1} 
\defeq \g_{\last(T+1)+1:T}$, 
then, for all $\uu\in\wset$, the \varDOOMD iterates $\ww_t$ satisfy
\begin{talign}
\regret_T(\uu) 
    &\leq \Breg_{\lam \reg}(\uu,\ww_0)
    + \frac{1}{\lam} \sum_{t=1}^T \bbomd{t}^2,  \qtext{for} \\
 \bbomd{t}^2 
    &\defeq \huber(\staticnorm{\h_t - \sum_{s=\last(t)+1}^t\g_s }_*,\norm{\g_{\last(t)+1:\last(t+1)}+\h_{t+1}-\h_t}_*).
\end{talign}
\end{theorem}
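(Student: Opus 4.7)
The plan is to prove the theorem by a direct two-step reduction: first establish \cref{var_doomd_is_soomd}, then apply the \SOOMD regret bound \cref{soomd_regret} to the resulting instance and simplify the Huber arguments. The main work lies in the algebraic verification that telescoping produces the advertised form of the hint, and in matching the norms inside $\huber$.

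\textbf{Step 1: Prove \cref{var_doomd_is_soomd}.} I would match the \varDOOMD update
\begin{talign*}
\ww_{t+1} = \argmin_{\ww\in\wset}\inner{\g_{\last(t)+1:\last(t+1)}+\h_{t+1}-\h_t}{\ww} + \Breg_{\lam\reg}(\ww,\ww_t)
\end{talign*}
against the \SOOMD update with inner product $\inner{\g_t+\tildeg_{t+1}-\tildeg_t}{\ww}$. This forces
\begin{talign*}
\g_t+\tildeg_{t+1}-\tildeg_t = \g_{\last(t)+1:\last(t+1)}+\h_{t+1}-\h_t,
\end{talign*}
equivalently $\tildeg_{t+1} - \tildeg_t = \g_{\last(t)+1:\last(t+1)} - \g_t + \h_{t+1} - \h_t$. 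Summing this telescoping identity from $s=1$ to $t$ with $\tildeg_0 = \h_0 = \boldzero$ yields
\begin{talign*}
\tildeg_{t+1} = \h_{t+1} + \sum_{s=1}^{t}\bigl(\g_{\last(s)+1:\last(s+1)}-\g_s\bigr) = \h_{t+1} - \g_{\last(t+1)+1:t},
\end{talign*}
where the last equality uses that the telescoping of $\g_{\last(s)+1:\last(s+1)}$ over $s=1,\ldots,t$ collapses to $\g_{1:\last(t+1)}$ while $\sum_{s=1}^t \g_s = \g_{1:t}$.

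\textbf{Step 2: Apply \cref{soomd_regret}.} Using the terminal condition $\h_{T+1} \defeq \g_{\last(T+1)+1:T}$ gives $\tildeg_{T+1} = \boldzero$, so \cref{soomd_regret} applies and yields
\begin{talign*}
\regret_T(\uu) \leq \Breg_{\lam\reg}(\uu,\ww_0) + \frac{1}{\lam}\sum_{t=1}^T \huber\bigl(\dualnorm{\g_t-\tildeg_t},\,\dualnorm{\g_t+\tildeg_{t+1}-\tildeg_t}\bigr).
\end{talign*}
It remains to identify the two arguments. For the first, substituting the telescoped formula gives
\begin{talign*}
\g_t-\tildeg_t = \g_t - \h_t + \g_{\last(t)+1:t-1} = -\bigl(\h_t - \textstyle\sum_{s=\last(t)+1}^t\g_s\bigr),
\end{talign*}
so $\dualnorm{\g_t-\tildeg_t}=\dualnorm{\h_t - \sum_{s=\last(t)+1}^t\g_s}$. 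For the second, by the matching identity from Step~1, $\g_t+\tildeg_{t+1}-\tildeg_t = \g_{\last(t)+1:\last(t+1)}+\h_{t+1}-\h_t$, so the dual norm equals $\dualnorm{\g_{\last(t)+1:\last(t+1)}+\h_{t+1}-\h_t}$. Substituting these into the \cref{soomd_regret} bound produces exactly $\bbomd{t}^2$ as defined in the theorem statement.

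\textbf{Main obstacle.} The only subtle point is the index bookkeeping in the telescoping sum: one has to verify that $\sum_{s=1}^t \g_{\last(s)+1:\last(s+1)}$ actually collapses to $\g_{1:\last(t+1)}$ under the (implicit) monotonicity assumption $\last(s) \leq \last(s+1)$ and the convention that $\last(1) = 0$ (so that no gradient is ``observed before it is received''). Apart from this, the proof is a straightforward reduction that directly reuses \cref{soomd_regret}; no new convex-analytic machinery is required.
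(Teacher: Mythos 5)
Your proposal is correct and follows exactly the paper's route: establish \cref{var_doomd_is_soomd} by matching the \varDOOMD update to \SOOMD and telescoping, then apply \cref{soomd_regret} with the terminal hint choice $\h_{T+1}=\g_{\last(T+1)+1:T}$ forcing $\tildeg_{T+1}=\boldzero$. The paper states the conclusion "follows immediately" from these two results; your write-up simply makes the algebraic substitutions into the two Huber arguments explicit, including the needed conventions $\last(1)=0$ and $\last(t+1)\leq t$.
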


\subsection{Regret of \varODAFTRL}
Consider the \ODAFTRL variable-delay generalization
\begin{align}
    \label[name]{var-odaftrl}\tag{ODAFTRL with variable delays}
    &\ww_{t+1} = \argmin_{\ww\in\wset} \,\inner{\g_{1:\last(t+1)} + \h_{t+1}}{\ww} + \lam_{t+1} \reg(\ww).
\end{align}
Since \varODAFTRL is an instance of \OAFTRL with 
$\tildeg_{t+1} = \h_{t+1} - \sum_{s=\last(t+1)+1}^t\g_s$,
the following result follows immediately from the \OAFTRL regret bound, \cref{oaftrl_regret}.

\begin{theorem}[Regret of \varODAFTRL]\label{var_odaftrl_regret}
If $\psi$ is nonnegative and $\lam_t$ is non-decreasing in $t$, then,  $\forall\uu \in \wset$, the \varODAFTRL iterates $\ww_t$ satisfy
\begin{talign}
\regret_T(\uu) 
    &\leq 
        \lambda_{T}\reg(\uu) + 
        \sum_{t=1}^T 
    \min(\frac{ \bbftrl{t}}{\lam_{t}}, \abftrl{t}) 
    \qtext{with} \\\label{var_abf_def}
    \bbftrl{t} & 
        \defeq \huber(\staticnorm{\h_t - \sum_{s=\last(t)+1}^t\g_s}_*, \dualnorm{\g_{t}}) \qtext{and} %
        \\
    \abftrl{t} &\defeq \diam{\wset} \min\big(\staticnorm{\h_t - \sum_{s=\last(t)+1}^t\g_s}, \dualnorm{\g_{t}} \big). 
\end{talign}
\end{theorem}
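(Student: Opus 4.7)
The plan is to derive this theorem as an immediate corollary of the delay-as-optimism reduction, following the same template used for \cref{odaftrl_regret} in the constant-delay setting. First I would verify that \varODAFTRL is exactly \OAFTRL run with the ``bad'' hint $\tildeg_{t+1} \defeq \h_{t+1} - \sum_{s=\last(t+1)+1}^{t} \g_s$. This is a direct algebraic check: the quantity $\g_{1:t} + \tildeg_{t+1}$ appearing in the \OAFTRL objective collapses to $\g_{1:\last(t+1)} + \h_{t+1}$, which matches the \varODAFTRL objective exactly. Although the raw hint $\tildeg_{t+1}$ is not individually observable (it references the unavailable gradients $\g_{\last(t+1)+1:t}$), the aggregate $\g_{1:t} + \tildeg_{t+1}$ is observable at the start of round $t+1$ by construction, so the reduction is legitimate and the \OAFTRL iterates it defines coincide with the \varODAFTRL iterates.

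Second, I would invoke \cref{oaftrl_regret} with this hint choice. Since \cref{oaftrl_regret} holds for an \emph{arbitrary} hint sequence, and its hypotheses on $\reg$ (nonnegativity) and on $(\lam_t)$ (nondecreasing) are properties of the regularizer and tuning schedule rather than of the hints, both conditions transfer unchanged. The bound then features $\dualnorm{\g_t - \tildeg_t}$, which under our substitution simplifies to $\dualnorm{\h_t - \sum_{s=\last(t)+1}^{t} \g_s}$, since $\g_t + \sum_{s=\last(t)+1}^{t-1} \g_s = \sum_{s=\last(t)+1}^{t} \g_s$. Substituting this identity into the \cref{oaftrl_regret} summand $\min\big(\tfrac{1}{\lam_t}\huber(\dualnorm{\g_t-\tildeg_t}, \dualnorm{\g_t}),\, \diam{\wset}\min(\dualnorm{\g_t-\tildeg_t},\dualnorm{\g_t})\big)$ yields precisely $\min(\bbftrl{t}/\lam_t,\, \abftrl{t})$ with $\bbftrl{t}$ and $\abftrl{t}$ as defined in \cref{var_abf_def}, and the regularization term $\lam_T \reg(\uu)$ passes through verbatim.

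Since the entire argument is a mechanical substitution into an already-proven theorem, I do not anticipate any genuine obstacle. The only bookkeeping care needed is tracking the index $\last(\cdot)$ under the $t \mapsto t+1$ shift between the hint $\tildeg_{t+1}$ (indexed by $t+1$) and the summand indexed by $t$ in the \OAFTRL regret bound; this is immediate from the definitions. Notably, nothing in the argument requires $t - \last(t)$ to be bounded, so the same derivation covers the unbounded-delay regime without modification.
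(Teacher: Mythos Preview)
Your proposal is correct and follows essentially the same approach as the paper: the paper simply observes that \varODAFTRL is \OAFTRL with the hint $\tildeg_{t+1} = \h_{t+1} - \sum_{s=\last(t+1)+1}^t\g_s$ and invokes \cref{oaftrl_regret} directly, while you spell out the supporting algebra in more detail.
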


\subsection{Regret of \varDUB}
Consider the \DUB variable-delay generalization
\begin{align}
    \label[name]{var-dub}\tag{DUB with variable delays}
    \alpha \lambda_{t+1}
        = 
        2\max_{j\leq \last(t+1)-1} \abftrl{\last(j+1)+1:j} + \textstyle\sqrt{\sum_{i=1}^{\last(t+1)} \abftrl{i}^2 +2  \alpha \bbftrl{i}}.
\end{align}

\begin{theorem}[Regret of \varDUB] \label{var_dub_regret}
Fix $\alpha > 0$, and, 
for $\abftrl{t},\bbftrl{t}$ as in \cref{var_abf_def}, 
consider the \varDUB sequence.
If $\psi$ is nonnegative, then, for all $\uu \in \wset$, the \varODAFTRL iterates $\ww_t$ satisfy
\begin{talign}
&\regret_T(\uu) 
    \leq \big(\frac{\reg(\uu)}{\alpha}+1\big) \\
&\big(2\max_{t\in [T]} \abftrl{\last(t)+1:t-1} + \textstyle\sqrt{\sum_{t=1}^{T} \abftrl{t}^2 +2  \alpha \bbftrl{t}}\big)
\end{talign}
\end{theorem}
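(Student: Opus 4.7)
}

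The plan is to mirror the constant-delay proof in \cref{proof_dub_regret}, swapping the generic ``delay window'' $D$ for the variable window $[\last(t)+1, t-1]$ of unobserved indices, and to replace the key inductive lemma \cref{upper-bound-lam-bound} with a variable-delay analogue. First, I would invoke \cref{var_odaftrl_regret} (which the excerpt already establishes as the \varODAFTRL counterpart of \cref{odaftrl_regret}) to obtain
\begin{talign}
\regret_T(\uu) \leq \lam_T\reg(\uu) + \sum_{t=1}^T \min\!\big(\bbftrl{t}/\lam_t,\ \abftrl{t}\big),
\end{talign}
with the $\abftrl{t},\bbftrl{t}$ of \cref{var_abf_def}. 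Setting $a_t = \abftrl{t}$ and $b_t = \bbftrl{t}$, the theorem then reduces to showing
\begin{talign}
\Delta_T \;\defeq\; \sum_{t=1}^T \min(b_t/\lam_t, a_t)
\;\leq\; 2\max_{t\in[T]} a_{\last(t)+1:t-1} + \sqrt{\textstyle\sum_{t=1}^T a_t^2 + 2\alpha b_t}
\;\leq\; \alpha\lam_{T+1},
\end{talign}
because the final $\regret_T(\uu) \leq (\reg(\uu)/\alpha + 1)\cdot\alpha\lam_{T+1}$ bound then follows from $\lam_T \leq \lam_{T+1}$ exactly as in \cref{proof_dub_regret}.

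The core step is an induction on $t$ establishing the stronger statement $\Delta_t \leq \alpha\lam_{\first(t)+1}$, where $\first(t)$ is the first round at which $\g_{1:t}$ is observable, so that $\last(\first(t)+1)\geq t$. The base case is immediate when $\last(t+1) = 0$, since the \varDUB definition makes $\alpha\lam_{t+1}$ dominate $a_{1:t}$ directly. For the inductive step, following the telescoping identity
\begin{talign}
\Delta_{i+1}^2 - \Delta_i^2
    &= 2\Delta_i \min(b_{i+1}/\lam_{i+1},a_{i+1}) + \min(b_{i+1}/\lam_{i+1},a_{i+1})^2 \\
    &= 2\Delta_{\last(i+1)}\min(b_{i+1}/\lam_{i+1},a_{i+1})
     + 2(\Delta_i - \Delta_{\last(i+1)})\min(b_{i+1}/\lam_{i+1},a_{i+1})
     + \min(b_{i+1}/\lam_{i+1},a_{i+1})^2,
\end{talign}
I would use the inductive hypothesis $\Delta_{\last(i+1)} \leq \alpha\lam_{i+1}$ (which holds because $i+1 \geq \first(\last(i+1))+1$) and the trivial bound $\Delta_i - \Delta_{\last(i+1)} \leq a_{\last(i+1)+1:i}$ to conclude
\begin{talign}
\Delta_{i+1}^2 - \Delta_i^2 \leq 2\alpha b_{i+1} + a_{i+1}^2 + 2 a_{\last(i+1)+1:i}\min(b_{i+1}/\lam_{i+1},a_{i+1}).
\end{talign}
Summing over $i=0,\dots,T-1$, bounding the second sum by $2\max_{i\in[T]} a_{\last(i)+1:i-1}\cdot\Delta_T$, and solving the resulting quadratic in $\Delta_T$ yields the advertised inequality and, via the \varDUB definition, $\Delta_T \leq \alpha\lam_{T+1}$.

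The main obstacle I anticipate is purely combinatorial rather than analytical: keeping the index bookkeeping consistent when $\last$ and $\first$ are not translation invariant. In particular, the constant-delay proof exploited the identity $\last(t)+1 = t-D$ and the fact that the sum $\min(b_{j}/\lam_j,a_j)$ over any ``missing window'' telescopes against $a_{\cdot:\cdot}$; in the variable-delay case one must verify that the indices skipped between consecutive observed rounds are still accounted for in $a_{\last(i+1)+1:i}$ and that $\max_{t\in[T]} a_{\last(t)+1:t-1}$ dominates every cross term that appears after summing. Provided this indexing is handled carefully (and the ``dummy'' rounds with $\last(t)=\last(t-1)$ contribute zero to the telescoping differences), the rest of the argument is a line-for-line adaptation of \cref{proof_dub_regret,proof_upper-bound-lam-bound}.
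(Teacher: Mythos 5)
Your analytical skeleton is the same as the paper's: invoke the \varODAFTRL regret bound, set $a_t = \abftrl{t}$, $b_t = \bbftrl{t}$, and prove a variable-delay analogue of \cref{upper-bound-lam-bound} by telescoping $\Delta_{i+1}^2 - \Delta_i^2$, substituting the inductive bound for $\Delta_{\last(i+1)}$, and solving the resulting quadratic. That part is sound and mirrors \cref{proof_var-upper-bound-lam-bound}. However, your index bookkeeping is off in a way that is not merely cosmetic.

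First, your target inequality $\Delta_T \leq \alpha\lam_{T+1}$ is generally false. By the \varDUB definition, $\alpha\lam_{T+1}$ sums $a_i^2 + 2\alpha b_i$ only over $i \leq \last(T+1)$, which can be strictly less than $T$ when feedback is still outstanding at round $T+1$. The correct target (and what the paper proves via \cref{var-upper-bound-lam-bound}) is $\Delta_T \leq \alpha\lam_{\first(T)}$: by construction $\last(\first(T)) = T$, so the sum in $\lam_{\first(T)}$ reaches exactly $T$ and reproduces the theorem's right-hand side. You also need $\lam_T \leq \lam_{\first(T)}$ (from $T \leq \first(T)$ and monotonicity of $\lam_t$) to absorb the $\lam_T\reg(\uu)$ term; your $\lam_T \leq \lam_{T+1}$ does not deliver the stated bound.

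Second, your inductive hypothesis $\Delta_t \leq \alpha\lam_{\first(t)+1}$ carries an extra $+1$. With the paper's definition, $\first(t)$ is already the first round at which $\g_{1:t}$ is usable (so $\last(\first(t)) \geq t$, not merely $\last(\first(t)+1) \geq t$), and the hypothesis should be $\Delta_t \leq \alpha\lam_{\first(t)}$. Your version also breaks the inductive step: you need $\Delta_{\last(i+1)} \leq \alpha\lam_{i+1}$, which follows from $\first(\last(i+1)) \leq i+1$ (always true) and monotonicity, but your $+1$-shifted hypothesis would require the strictly stronger $\first(\last(i+1)) \leq i$, which fails whenever $\last(i+1) > \last(i)$. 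Finally, note that your stated goal $\Delta_T \leq \alpha\lam_{T+1}$ doesn't even follow from your own inductive claim unless $\first(T) = T$, so the chain is internally inconsistent. Replacing $T+1$ by $\first(T)$ and dropping the stray $+1$ recovers exactly the paper's argument.
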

\begin{proof}
Fix any $\uu\in\wset$.  By \cref{var_odaftrl_regret}, \varODAFTRL admits the regret bound 
\begin{talign}
\regret_T(\uu) 
    \leq 
        \lambda_{T} \reg(\uu) + \sum_{t=1}^T \min(\frac{1}{\lam_t} \bbftrl{t}, \abftrl{t}).
\end{talign}
To control the second term in this bound, we apply the following lemma proved in \cref{proof_upper-bound-lam-bound}.

\begin{lemma}[\varDUB-style tuning bound] \label{var-upper-bound-lam-bound}
Fix any $\alpha > 0$ and any non-negative sequences $(a_t)_{t=1}^T$, $(b_t)_{t=1}^T$.
If $(\lambda_{t})_{t\geq 1}$ is non-decreasing and 
\begin{talign} 
\Delta_{t+1}^* 
    \defeq
        2\max_{j\leq \last(t+1)-1} a_{\last(j+1)+1:j} + \sqrt{\sum_{i=1}^{\last(t+1)} a_{i}^2 +2  \alpha b_i} 
    \leq
        \alpha \lambda_{t+1}
        \qtext{for each} t
\end{talign}
then
\begin{talign}
\sum_{t=1}^T \min(b_t / \lambda_t, a_{t})
    \leq 
        \Delta_{\first(T)}^*
    \leq
        \alpha \lambda_{\first(T)}.
\end{talign}
\end{lemma}
\end{proof}
Since $T \leq \first(T)$,  
$\lam_T \leq \lam_{\first(T)}$, 
and 
$\last(\first(T)) = T$, 
the result now follows by setting $a_t = \abftrl{t}$ and $b_t = \bbftrl{t}$, so that \begin{talign}
\regret_T(\uu) \leq  \lambda_{T} \reg(\uu) + \alpha \lam_{\first(T)} \leq (\reg(\uu) + \alpha) \lam_{\first(T)}.
\end{talign}

\subsection{Proof of \cref{var-upper-bound-lam-bound}: \varDUB-style tuning bound}  \label{proof_var-upper-bound-lam-bound}
We prove the claim 
\begin{talign}
\Delta_t \defeq \sum_{i=1}^t \min(b_i / \lambda_i, a_i)
    \leq
\Delta_{\first(t)}^*
    \leq 
        \alpha \lambda_{\first(t)}
\end{talign}
by induction on $t$. 

\paragraph{Base case} 
For $t=1$, since $\last(\first(t)) \geq t$, we have
\begin{talign}
\sum_{i=1}^t \min(b_i / \lambda_i, a_{i}) 
    &\leq
        a_1
    \leq 
        2\max_{j\leq t-1} a_{\last(j+1)+1:j} + \sqrt{\sum_{i=1}^{t} a_i^2 + 2\alpha b_i} \\
    &\leq 
        2\max_{j\leq \last(\first(t))-1} a_{\last(j+1)+1:j} + \sqrt{\sum_{i=1}^{\last(\first(t))} a_{i}^2 + 2  \alpha b_i}
    = 
        \Delta_{\first(t)}^*
    \leq 
        \alpha\lam_{\first(t)}
\end{talign} 
confirming the base case. 

\paragraph{Inductive step}
Now fix any $t + 1 \geq 2$ and suppose that 
\begin{talign}
\Delta_{i} 
\leq 
    \Delta_{\first(i)}^*
\leq 
    \alpha \lambda_{\first(i)}
\end{talign}
for all $1\leq i \leq t$. 
Since $\first(\last(i+1)) \leq i+1$ and $\lam_s$ is non-decreasing in $s$, 
we apply this inductive hypothesis to deduce that, for each $0 \leq i \leq t$, 
\begin{align}
\Delta_{i+1}^2 - \Delta_i^2
    &= \left(\Delta_i + \min(b_{i+1} / \lambda_{i+1} ,a_{i+1}) \right)^2 - \Delta_i^2
    = 2 \Delta_i \min(b_{i+1} / \lambda_{i+1},a_{i+1}) + \min(b_{i+1} / \lambda_{i+1},a_{i+1})^2\\
    &= 2 \Delta_{\last(i+1)} \min(b_{i+1} / \lambda_{i+1},a_{i+1}) + 2 (\Delta_i-\Delta_{\last(i+1)})\min(b_{i+1} / \lambda_{i+1},a_{i+1}) + \min(b_{i+1} / \lambda_{i+1},a_{i+1})^2 \\
    &= 2 \Delta_{\last(i+1)} \min(b_{i+1} / \lambda_{i+1},a_{i+1}) + 2 \sum_{j=\last(i+1)+1}^i \min(b_j/\lam_j, a_j)\min(b_{i+1} / \lambda_{i+1},a_{i+1}) + \min(b_{i+1} / \lambda_{i+1},a_{i+1})^2 \\
    &\leq 2 \alpha \lam_{\first(\last(i+1))} \min(b_{i+1} / \lambda_{i+1},a_{i+1}) + 2 a_{\last(i+1)+1:i}\min(b_{i+1} / \lambda_{i+1},a_{i+1}) + a_{i+1}^2 \\    
    &\leq 2 \alpha \lam_{i+1} \min(b_{i+1} / \lambda_{i+1},a_{i+1}) + 2 a_{\last(i+1)+1:i}\min(b_{i+1} / \lambda_{i+1},a_{i+1}) + a_{i+1}^2 \\    
    &\leq 2 \alpha b_{i+1} + a_{i+1}^2 + 2 a_{\last(i+1)+1:i}\min(b_{i+1} / \lambda_{i+1},a_{i+1}). %
\end{align}
Now, we sum this inequality over $i=0, \dots, t$, to obtain
\begin{talign}
\Delta^2_{t+1} 
    &\leq 
        \sum_{i=0}^{t} (2  \alpha b_{i+1} + a_{i+1}^2) + 2  \sum_{i=0}^{t} a_{\last(i+1)+1:i}\min(b_{i+1} / \lambda_{i+1},a_{i+1}) \\
    &=
        \sum_{i=1}^{t+1} (2  \alpha b_{i} + a_{i}^2) + 2  \sum_{i=1}^{t+1} a_{\last(i+1):i-1}\min(b_{i} / \lambda_{i},a_{i}) \\
    &\leq 
        \sum_{i=1}^{t+1} (a_{i}^2 + 2  \alpha b_i) + 2  \max_{j\leq t} a_{\last(j+1)+1:j} \sum_{i=1}^{t+1} \min(b_i / \lambda_{i}, a_{i}) \\
    &=
        \sum_{i=1}^{t+1} (a_{i}^2 + 2  \alpha b_i) + 2  \Delta_{t+1} \max_{j\leq t} a_{\last(j+1)+1:j} .
\end{talign}
We now solve this quadratic inequality, apply the triangle inequality, and invoke the relation 
$\last(\first(t+1)) \geq t+1$
to conclude that
\begin{talign}
\Delta_{t+1} 
    &\leq 
        \max_{j\leq t} a_{\last(j+1)+1:j}+ \half \sqrt{(2\max_{j\leq t} a_{\last(j+1)+1:j})^2 + 4\sum_{i=1}^{t+1} a_{i}^2 + 2  \alpha b_i} \\
    &\leq 
        2\max_{j\leq t} a_{\last(j+1)+1:j} + \sqrt{\sum_{i=1}^{t+1} a_{i}^2 + 2  \alpha b_i} \\
    &\leq 
        2\max_{j\leq \last(\first(t+1))-1} a_{\last(j+1)+1:j} + \sqrt{\sum_{i=1}^{\last(\first(t+1))} a_{i}^2 + 2  \alpha b_i}
    = 
        \Delta_{\first(t+1)}^*
    \leq 
        \alpha \lambda_{\first(t+1)}.
\end{talign}

\subsection{Regret of \varAdaHedgeD}
Consider the \AdaHedgeD variable-delay generalization
\begin{talign}
\label[name]{var-adahedged}\tag{AdaHedgeD with variable delays}
&\lam_{t+1} 
    = \frac{1}{\alpha}\sum_{s=1}^{\last(t+1)} \delta_s 
    \qtext{for}
\delta_t \qtext{defined in \cref{def-deltat}.}
\end{talign}

\begin{theorem}[Regret of \varAdaHedgeD] \label{var_adahedged_regret}
Fix $\alpha > 0$, and consider the
\varAdaHedgeD sequence.
If $\psi$ is nonnegative, then, for all $\uu \in \wset$, the \varODAFTRL iterates satisfy
\begin{talign}
&\regret_T(\uu) 
    \leq \big( \frac{\reg(\uu)}{\alpha} + 1 \big) \\
& \big(2\max_{t\in[T]} \abftrl{\last(t+1)+1:t} 
            + \sqrt{\sum_{t=1}^{T} \abftrl{t}^2 + 2 \alpha \bbftrl{t}}  \big).
\end{talign}
\end{theorem}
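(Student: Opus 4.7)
The plan is to adapt the proof of \cref{adahedged_regret} to the variable-delay setting, replacing the constant-delay increment $\lam_{t+1}-\lam_t = \delta_{t-D}/\alpha$ with the increment $\lam_{t+1}-\lam_t = \frac{1}{\alpha}\sum_{s=\last(t)+1}^{\last(t+1)}\delta_s$. The starting point is that \varODAFTRL is an instance of \OAFTRL with auxiliary hint $\tildeg_{t+1} = \h_{t+1} - \sum_{s=\last(t+1)+1}^t \g_s$, so \cref{oaftrl_regret}, together with the monotonicity of $\lam_t$ (assured by $\delta_s \geq 0$), yields
\begin{talign}
\regret_T(\uu) \leq \lam_T \reg(\uu) + \sum_{t=1}^T \delta_t.
\end{talign}
Since $T \leq \first(T)$ and $\last(\first(T)) \geq T$, we have $\lam_T \leq \lam_{\first(T)}$ and $\sum_{t=1}^T \delta_t \leq \sum_{s=1}^{\last(\first(T))}\delta_s = \alpha\lam_{\first(T)}$, so $\regret_T(\uu) \leq (\reg(\uu)+\alpha)\lam_{\first(T)}$.

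The heart of the argument is bounding $\alpha\lam_{\first(T)}$ via a quadratic telescoping. Writing $T^* = \first(T)$ and $M = \last(T^*) \geq T$, introduce the partial sums $\alpha\nu_s \defeq \sum_{r=1}^s \delta_r$ for $0\leq s\leq M$, so that $\alpha\nu_M = \alpha\lam_{T^*}$ and $\alpha\nu_0 = 0$. Telescoping and using $(a+c)^2-a^2 = 2ac+c^2$ with $a=\alpha\nu_{s-1}$, $c = \delta_s$ produces
\begin{talign}
(\alpha\lam_{T^*})^2 = \sum_{s=1}^M \big(2\alpha\nu_{s-1}\delta_s + \delta_s^2\big).
\end{talign}
The upper estimate $\delta_s \leq \min(\bbftrl{s}/\lam_s,\,\abftrl{s})$ from \cref{delta_min_bound} (which is agnostic to the hint choice and hence still applies) immediately bounds $\delta_s^2 \leq \abftrl{s}^2$ and $\lam_s\delta_s \leq \bbftrl{s}$.

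For the cross term $\nu_{s-1}\delta_s$, I would decompose $\nu_{s-1} = \lam_s + (\nu_{s-1}-\lam_s)$ and use the identity $\nu_{s-1}-\lam_s = \frac{1}{\alpha}\sum_{r=\last(s)+1}^{s-1}\delta_r$, valid because $\lam_s$ aggregates exactly the $\delta_r$ with $r \leq \last(s) \leq s-1$. Bounding each $\delta_r \leq \abftrl{r}$ and extracting a maximum yields $\nu_{s-1}-\lam_s \leq \frac{1}{\alpha}\max_{t\in[T]}\abftrl{\last(t+1)+1:t}$ after the reindexing $t=s-1$. Combining the three estimates with $\sum_{s=1}^M\delta_s = \alpha\lam_{T^*}$ gives the quadratic inequality
\begin{talign}
(\alpha\lam_{T^*})^2 \leq \sum_{s=1}^M \big(\abftrl{s}^2 + 2\alpha\bbftrl{s}\big) + 2\alpha\lam_{T^*}\max_{t\in[T]}\abftrl{\last(t+1)+1:t}.
\end{talign}

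The final step is to solve this quadratic via the standard formula and the inequality $\sqrt{a^2+b}\leq a+\sqrt{b}$, obtaining
\begin{talign}
\alpha\lam_{T^*} \leq 2\max_{t\in[T]}\abftrl{\last(t+1)+1:t} + \sqrt{\sum_{s=1}^M \big(\abftrl{s}^2 + 2\alpha\bbftrl{s}\big)},
\end{talign}
and then multiplying by $(\reg(\uu)/\alpha + 1)$. The main obstacle is the careful bookkeeping of variable-delay indices: verifying that the maximum term recovered from the telescoping matches $\max_{t\in[T]}\abftrl{\last(t+1)+1:t}$, checking that the telescoping handles ``bursts'' where multiple $\delta_s$ become computable at a single time step (so $\last(t+1)-\last(t) > 1$), and confirming that the sum $\sum_{s=1}^M$ can be identified with $\sum_{s=1}^T$ (which holds in the typical case $M=T$, and otherwise yields an equivalent bound by a minor reindexing).
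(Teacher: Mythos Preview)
Your approach is essentially the same as the paper's: telescope the square of the cumulative $\delta$-sum, split the cross term into a $\lam_s$ piece (bounded via $\lam_s\delta_s \leq \bbftrl{s}$) and a lag piece $\nu_{s-1}-\lam_s$ (bounded via $\delta_r \leq \abftrl{r}$), then solve the resulting quadratic. Your $\nu_{s-1}$ is exactly the paper's auxiliary sequence $\lam'_s \defeq \frac{1}{\alpha}\sum_{r=1}^{s-1}\delta_r$.

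The one wrinkle is your choice to pass through $\lam_{\first(T)} = \nu_M$ with $M = \last(\first(T))$. This forces the telescoping sum to run to $M$, which can exceed $T$; your closing remark that the discrepancy is resolved ``by a minor reindexing'' is not right---if $M > T$ you pick up genuine extra nonnegative terms $\sum_{s=T+1}^{M}(\abftrl{s}^2 + 2\alpha\bbftrl{s})$ and potentially extra candidates in the max, so the bound no longer matches the theorem statement. The paper sidesteps this cleanly: rather than bounding $\lam_{\first(T)}$, it observes directly that $\sum_{t=1}^T \delta_t = \alpha\lam'_{T+1}$ and $\lam_T \leq \lam'_{T+1}$ (since $\last(T) \leq T$), giving $\regret_T(\uu) \leq (\reg(\uu)+\alpha)\lam'_{T+1}$. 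Then the telescoping runs exactly over $[T]$ and everything lines up. Your own $\nu_T$ is precisely this $\lam'_{T+1}$, so the fix is a one-line change to your first paragraph.
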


\begin{proof}
Fix any $\uu\in\wset$, and for each $t$, define $\lam'_{t+1} = \frac{1}{\alpha}\sum_{s=1}^{t} \delta_s$ 
so that $\alpha(\lam'_{t+1} - \lam'_t) = \delta_{t}$.
Since the \varAdaHedgeD regularization sequence $(\lam_t)_{t\geq 1}$ is non-decreasing, $\last(T) \leq T$, and hence $\lam_T \leq \lam'_{T+1}$, \cref{oaftrl_regret} gives the regret bound
\begin{talign}
\regret_T(\uu) 
    &\leq 
        \lam_{T} \reg(\uu) + \sum_{t=1}^T \delta_t 
    \leq
        \lam_{T} \reg(\uu) + \alpha\lam'_{T+1}
    \leq
        (\reg(\uu) + \alpha)\lam'_{T+1}
\end{talign}
and the proof of \cref{oaftrl_regret} gives the upper estimate \cref{delta_min_bound}:
\begin{talign}
\label{var_delta_a_b_bound}
\delta_t 
    \leq 
        \min\Big(\frac{\bbftrl{t}}{ \lam_t}, \abftrl{t} \Big)
    \qtext{for all} t \in [T].
\end{talign}
Hence, it remains to bound $\lam'_{T+1}$. 
We have
\begin{talign}
\alpha {\lam'_{T+1}}^2 
    &= 
        \sum_{t=1}^{T} \alpha({\lam'_{t+1}}^2 - {\lam'_{t}}^2) 
    = 
        \sum_{t=1}^{T} \left(\alpha(\lam'_{t+1} - \lam'_{t})^2 + 2\alpha(\lam'_{t+1} - \lam'_{t}) \lam'_{t} \right)\\
    & = 
        \sum_{t=1}^{T} \left(\delta_t^2/\alpha + 2 \delta_t \lam'_{t} \right)  \qtext{by the definition of $\lam'_{t+1}$}\\
    &= 
        \sum_{t=1}^{T} \left(\delta_t^2/\alpha 
        + 2 \delta_t \lam_{t}
        + 2 \delta_t (\lam'_{t} - \lam_{t})
        \right) \\%
    &\leq
        \sum_{t=1}^{T}\left(\delta_t^2/\alpha 
        + 2 \delta_t \lam_{t}
        + 2 \delta_t \max_{t\in[T]} (\lam'_{t} - \lam_{t})
        \right) \\
    &=
        \sum_{t=1}^{T}\left(\delta_t^2/\alpha 
        + 2 \delta_t \lam_{t}\right)
        + 2\alpha\lam'_{T+1} \max_{t\in[T]} (\lam'_{t} - \lam_{t}) \\   
    &=
        \sum_{t=1}^{T}\left(\delta_t^2/\alpha 
        + 2 \delta_t \lam_{t}\right)
        + 2\lam'_{T+1} \max_{t\in[T]} \delta_{\last(t+1)+1:t} \\  
    &\leq
        \sum_{t=1}^{T}\left(\abftrl{t}^2/\alpha 
        + 2 \bbftrl{t}\right)
        + 2\lam'_{T+1} \max_{t\in[T]} \abftrl{\last(t+1)+1:t} 
        \qtext{by \cref{var_delta_a_b_bound}.}
\end{talign}
Solving the above quadratic inequality for $\lam'_{T+1}$ and applying the triangle inequality, we find
\begin{talign}
\alpha\lam'_{T+1}
    &\leq \max_{t\in[T]} \abftrl{\last(t+1)+1:t} 
    + \half\sqrt{4(\max_{t\in[T]} \abftrl{\last(t+1)+1:t} )^2 + 4\sum_{t=1}^{T} \abftrl{t}^2 + 2 \alpha \bbftrl{t}} \\
    &\leq 2\max_{t\in[T]} \abftrl{\last(t+1)+1:t}  
    + \sqrt{\sum_{t=1}^{T} \abftrl{t}^2 + 2 \alpha\bbftrl{t}}.
\end{talign}
\end{proof}

\end{document}